\def\eqref#1{equation~\ref{#1}}
\def\1{\bm{1}}
\newcommand{\Rcal}{\mathcal{R}}
\newcommand{\iprod}[2]{\langle #1, #2 \rangle}
\newcommand{\order}[1]{O\left(#1\right)}
\newcommand{\abs}[1]{\left|#1\right|}
\newcommand{\ipbar}{\overline{ip}}
\newcommand{\expbar}{\overline{exp}}
\newcommand{\defeq}{:=}
\def\va{{\bm{a}}}
\def\ve{{\bm{e}}}
\def\vf{{\bm{f}}}
\def\vg{{\bm{g}}}
\def\vx{{\bm{x}}}
\DeclareMathAlphabet{\mathsfit}{\encodingdefault}{\sfdefault}{m}{sl}
\SetMathAlphabet{\mathsfit}{bold}{\encodingdefault}{\sfdefault}{bx}{n}
\newcommand{\E}{\mathbb{E}}
\newcommand{\R}{\mathbb{R}}
\DeclareMathOperator*{\argmax}{arg\,max}
\DeclareMathOperator*{\argmin}{arg\,min}
\setlist[itemize]{nosep,leftmargin=*,labelwidth=0pt}
\setlist[enumerate]{nosep, leftmargin=*}
\setlist[description]{nosep,leftmargin=.8em}
\setlist{nosep}
\pgfplotsset{compat=1.14}
\pgfplotsset{nice/.style={grid=major, major grid style={line width=.2pt,draw=gray!30},}}
\newtcolorbox{insight}[1][default]{boxsep=.5mm, left=0mm, right=0mm, top=0mm, bottom=0mm, colback=gray!4, colframe=gray!25, boxrule=.2mm, }
\newcolumntype{L}[1]{>{\raggedright\let\newline\\\arraybackslash\hspace{0pt}}m{#1}}
\newcolumntype{C}[1]{>{\centering\let\newline\\\arraybackslash\hspace{0pt}}m{#1}}
\newcolumntype{R}[1]{>{\raggedleft\let\newline\\\arraybackslash\hspace{0pt}}m{#1}}
\def\ignore#1{}
\DeclareMathOperator*{\stability}{stability}
\DeclareMathOperator*{\wscore}{\mathit{R}}  
\DeclareMathOperator*{\sscore}{\mathit{Q}}  
\renewcommand*\backref[1]{\ifx#1\relax \else (Cited on #1) \fi}
\newcommand{\vek}[1]{{\bf {#1}}}
\newcommand{\DT}{{{\mathcal D_T}}}
\newcommand{\DS}{{{\mathcal D_S}}}
\newcommand{\ES}{{{\mathcal E}}}
\def\R{\mathbb{R}}
\newcommand{\tgt}{{{Tgt}}}
\newcommand{\src}{{{Src}}}
\newcommand{\srcinit}{{{SrcTune}}}
\newcommand{\concat}{{{Concat}}}
\newcommand{\avg}{{{Avg}}}
\newcommand{\yangC}{{{RegFreq}}}
\newcommand{\srcselReg}{{{RegSense}}}
\newcommand{\srcselR}{{{\shortname:R}}}
\newcommand{\srcselC}{{{\shortname:c}}}
\newcommand{\srcsel}{{{\shortname}}}
\definecolor{LightGreen}{rgb}{0.8,1,0.8}
\newcommand{\dro}{Group-DRO}
\newcommand{\erm}{ERM}
\newcommand{\cg}{CGD}
\newcommand{\ermuw}{ERM-UW}
\newcommand{\pgi}{PGI}
\def\jac{\mathbb{J}}
\def\R{\mathbb{R}}
\newcommand{\vgmap}{\hat{\vek{g}}}
\newcommand{\vgmapcoord}{\hat{g}}
\newcommand{\cD}{{\mathcal{D}}}
\newcommand{\cY}{{\mathcal{Y}}}
\newcommand{\dan}{\textsc{DAN}\xspace}
\newcommand{\goodfellow}{\textsc{LabelGrad}\xspace}
\newcommand{\crossgrad}{\textsc{CrossGrad}\xspace}
\newcommand{\mypara}[1]{\medskip\noindent\textbf{#1}~}
\newtheorem{theorem}{Theorem}[chapter]
\newtheorem{defn}{Definition}
\newtheorem{lemma}[theorem]{Lemma}
\newcommand{\mos}{CSD}
\newcommand{\mosdescr}{Common-specific Decomposition.}
\newcommand{\numC}{C}
\newcommand{\rank}{k}
\newcommand{\Span}[1]{\textrm{Span}\left({#1}\right)}
\newcommand{\Rank}[1]{\textrm{rank}\left({#1}\right)}
\newcommand{\What}{\widehat{W}}
\newcommand{\vtilde}{\widetilde{v}}
\newcommand{\ones}{\mathds{1}}
\newcommand{\frob}[1]{\left\|{#1} \right\|_F}
\newcommand{\trans}[1]{{#1}^{\top}}
\newcommand{\wtilde}{\widetilde{w}}
\newcommand{\Utilde}{\widetilde{U}}
\newcommand{\Stilde}{\widetilde{\Sigma}}
\newcommand{\Vtilde}{\widetilde{V}}
\newcommand{\Wtilde}{\widetilde{W}}
\begin{document}

\abovedisplayskip=8mm
\abovedisplayshortskip=8mm
\belowdisplayskip=8mm
\belowdisplayshortskip=8mm


\pagenumbering{gobble} 

\title{Robustness, Evaluation and Adaptation of Machine Learning Models in the Wild}
\author{Vihari Piratla}
\date{2022}

\rollnum{173050039} 

\iitbdegree{Doctor of Philosophy}

\thesis

\department{Department of Computer Science and Engineering}

\setguide{Prof. Sunita Sarawagi}
\setcoguide{Prof. Soumen Chakrabarti}

\maketitle



\makeapproval

\clearpage
\thispagestyle{empty}

\begin{center}
\Large  {\bf Declaration }
\end{center}
\vspace{-6in}
I declare that this written submission represents my ideas in my own words and where others ideas or words have been included, I have adequately cited and referenced the original sources. I also declare that I have adhered to all principles of academic honesty and integrity and have not misrepresented or fabricated or falsified any idea/data/fact/source in my submission. I understand that any violation of the above will be cause for disciplinary action by the Institute and can also evoke penal action from the sources which have thus not been properly cited or from whom proper permission has not been taken when needed.
\vspace{0.5in}


%
 \begin{table}[h]
 \begin{flushleft}

\vspace{-3.2in} 
 \begin{tabular}{ccccc}
 \rule[5ex]{0pt}{-10ex}&& Date: 2023-02-13 && \\ 
 \end{tabular}
\end{flushleft}

\vspace{-0.5in} 
\begin{flushright}
 \begin{tabular}{ccccc}
 
 \hline 	\rule[5ex]{0pt}{-10ex}&& Vihari Piratla&& \\ 
 \rule[5ex]{0pt}{-10ex}&& Roll No. 173050039&& \\ \\
 \end{tabular}
\end{flushright}
\end{table}

\pagebreak
\joiningdate{July 2017}
\begin{coursecertificate}
\addcourse{CS 752}{System Dynamics: Modeling \& Simulation for Development}{6}
\addcourse{CS 754}{Advanced Image Processing}{6}
\addcourse{DE 403}{Studio Project I}{6}
\addcourse{CS 691}{R \& D Project}{6}
\addcourse{CS 694}{Seminar}{4}
\addcourse{CS 726}{Advanced Machine Learning}{6}
\addcourse{CS 728}{Organization of Web Information}{6}	
\addcourse{CS 792}{Communication Skills -II}{4}
\addcourse{HS 791}{Communication Skills -I}{2}

\addcourse{CS 601}{Algorithms and Complexity}{6}
\addcourse{CS 635}{Information Retrieval \& Mining for Hypertext \& the Web}{6}
\addcourse{CS 699}{Software Lab}{8}
\addcourse{CS 725}{Foundations of Machine Learning}{6}	
\addcourse{CS 747}{Foundations of Intelligent and Learning Agents}{6}
\end{coursecertificate}


\begin{abstract}

Our goal is to improve reliability of Machine Learning (ML) systems deployed in the wild. ML models perform exceedingly well when test examples are similar to train examples. However, real-world applications are required to perform on any distribution of test examples. For example, ML in healthcare applications can encounter any distribution of patients or hospital systems. Current ML systems can fail silently on test examples with distribution shifts. In order to improve reliability of ML models due to covariate or domain shift, we propose algorithms that enable models to: (a) generalize to a larger family of test distributions, (b) evaluate accuracy under distribution shifts, (c) adapt to a target/test distribution.


We study causes of impaired robustness to domain shifts and present algorithms for training domain robust models. A key source of model brittleness is due to domain overfitting, which our new training algorithms suppress and instead encourage domain-general hypotheses. While we improve robustness over standard training methods for certain problem settings, performance of ML systems can still vary drastically with domain shifts. It is crucial for developers and stakeholders to understand model vulnerabilities and operational ranges of input, which could be assessed on the fly during the deployment, albeit at a great cost. Instead, we advocate for proactively estimating accuracy {\it surfaces} over any combination of prespecified and interpretable domain shifts for performance forecasting. We present a label-efficient Bayesian estimation technique to address estimation over a combinatorial space of domain shifts. Further, when a model's performance on a target domain is found to be poor, traditional approaches adapt the model using the target domain's resources. Standard adaptation methods assume access to sufficient computational and labeled target-domain resources, which may be impractical for deployed models. We initiate a study of lightweight adaptation techniques with only unlabeled data resources with a focus on language applications. Broadly, our methods infuse context from selected portions of unlabeled data for better representations without extensive parameter tuning.  
\end{abstract}

\tableofcontents



\include{abbreviations}
\include{symbols}

\setlength{\parskip}{2.5mm}
\titlespacing{\chapter}{0cm}{55mm}{10mm}
\titleformat{\chapter}[display]
  {\normalfont\huge\bfseries\centering}
  {\chaptertitlename\ \thechapter}{20pt}{\Huge}
  
  \titlespacing*{\section}
  {0pt}{8mm}{8mm}
  \titlespacing*{\subsection}
  {0pt}{8mm}{8mm}
\pagebreak
\pagenumbering{arabic}

\makeatletter
\def\cleardoublepage{\clearpage\if@twoside \ifodd\c@page\else
	\hbox{}
	\vspace*{\fill}
	\begin{center}
		This page was intentionally left blank.
	\end{center}
	\vspace{\fill}
	\thispagestyle{empty}
	\newpage
	\if@twocolumn\hbox{}\newpage\fi\fi\fi}
	
\newcommand\placeholder[2]{%
  \@ifundefined{r@#1}{%
    #2%
  }{%
    \ref{#1}%
  }%
}
\makeatother

\newpage
\pagebreak
\cleardoublepage
\chapter{Introduction}
\label{chap:intro}

In the past few decades, machine-learned models have provided significant improvements on various cognitive tasks, surprisingly even beating humans at times. Machine learning (ML) is now being hailed as ``the new electricity''\footnote{\url{https://www.gsb.stanford.edu/insights/andrew-ng-why-ai-new-electricity}} because of the vast number of real-world situations where it is being deployed. 
ML innovations fuel inspiring applications such as early diagnosis of blindness due to diabetic retinopathy, predicting protein folding, discovering new planets and detecting gravitational waves. Further, ML has huge potential in improving health care, personalised education, and automation. 

The success of ML is primarily due to statistical learning methods that train end-to-end on large datasets.
Despite progress on standard benchmarks and leaderboards, ML systems often perform poorly when deployed. 
They are known to be brittle and can fail in inexplicable ways on unfamiliar inputs~\citep{szegedy14,perspective_shift,Zech2018MetalTokenxray,Gururangan18,Beery18,david20,bandi18}. 
Standard training and testing algorithms assume that train and test examples are sampled from the same underlying distribution. This assumption, however, is often violated in the wild. Real-world performance of an ML model, therefore, could be far worse than what is expected through an identically distributed test split~\citep{plank2016non,Geirhos20}. Poor performance due to train-test distribution mismatch is referred to as the data shift problem. 

Several studies illustrate data shift challenges for ML in practice.
\citet{perspective_shift} found that the average object recognition accuracy of Faster-RCNN~\citep{faster-rcnn} varies dramatically with simple changes in viewpoint as shown in Figure~\ref{fig:chap1:perspective_shift}. 
\citet{Awasthi21} found that the word error rate (WER) of state-of-art Automatic Speech Recognition (ASR) systems varies with accent type (4\% WER for American-English and 11\% to 55\% WER for Indian-English accent). 
These failures can be attributed to training data bias towards certain perspective angles or accent types respectively. 



\begin{figure}[htb]
    \centering
    \includegraphics[height=2.8cm]{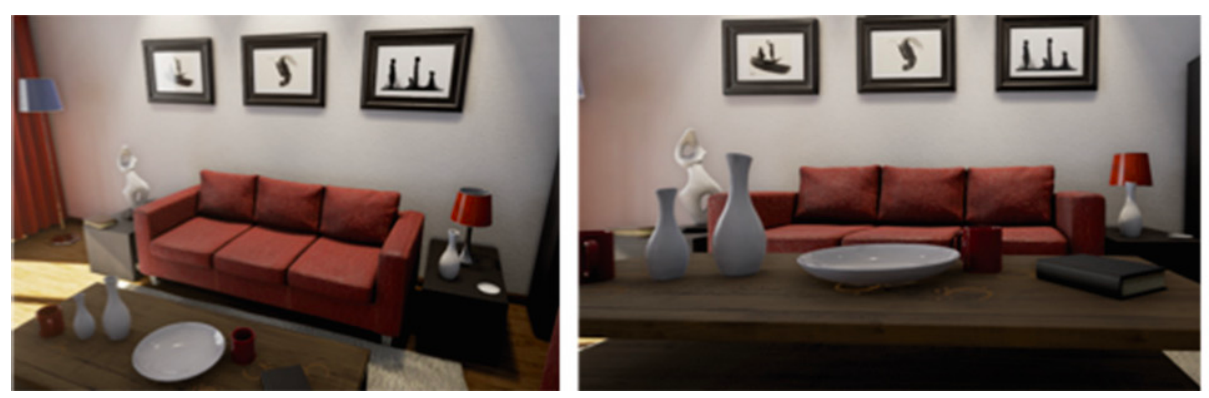}
    \includegraphics[height=2.8cm]{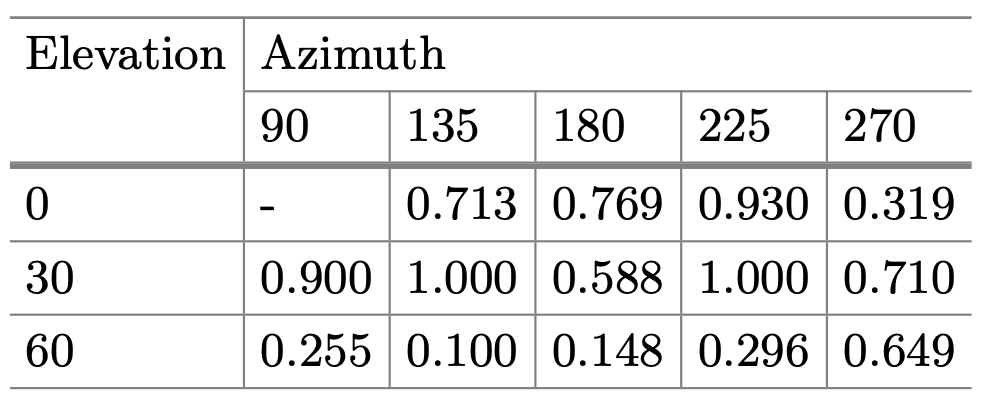}
    \caption{Figure borrowed from \citet{perspective_shift}. Average Precision (AP) of Faster-RCNN~\citep{faster-rcnn} detection of the sofa was found to vary from 0.1 to 1 when the perspective angle is changed.}
    \label{fig:chap1:perspective_shift}
\end{figure}

Here is another interesting anecdote\footnote{\url{https://www.jefftk.com/p/detecting-tanks}} on the lack of generalization of ML models in practice: a team trained a model to discriminate pictures of war tanks in the forest from normal forest pictures. Despite the high performance of the model on held-out test split, they found the model to be no better than the random baseline when deployed on the ground. This lack of generalization was later found to be because the model exploited a co-occurring feature: the brightness of an image; war-tank pictures in the training data were all photographed during a cloudy day while the pictures without war-tanks were from clear days. The training process latched on the easy brightness feature for classification, and thus failed to generalize in the wild where such bias was absent. 


Such lack of generalization in the wild is emerging as a major concern in ML applications, particularly since failures go unnoticed in embedded ML deployments. The central motivation behind this thesis is: 

\begin{quote}
    {\it How can we improve the reliability of ML models deployed in the wild?}
\end{quote}

\section{Background}
\label{sec:intro:background}
Unreliability of a deployed model arises out of distribution shifts between train and test examples. To understand various kinds of shifts, let us establish some notation. Denote by $\vx$, the input to an ML model for predicting the label $y$. 
Denote by $\Pr_{tr}(\vx, y)=\Pr_{tr}(\vx)\Pr_{tr}(y\mid \vx)$ the training distribution and $\Pr_{te}(\vx, y)$ the test distribution. The training and test distribution are sometimes referred to as the source and the target distribution respectively. The test distribution can shift due to three types of shifts: (1)~input distribution  $\Pr(\vx)$ called covariate shift, (2) label distribution $\Pr(y)$ called label shifts, (3)~label conditional distribution $\Pr(y\mid \vx)$ called concept shift. These shifts are summarized in Table~\ref{tab:chap1:shifts}, where highlighted region shows the shifting factor. We refer to~\citet{Castro2020Causality,Jin2021Causality,pml2Book} for a more thorough treatment of data shifts. While any combination of shifts is possible in the real world, this thesis deals only with covariate shift, also called domain shift in this thesis. 

\begin{table}[htb]
    \centering
    \begin{tabular}{l|c|c}
         \thead{Name} & \thead{Source} & \thead{Target} \\\hline 
         Covariate or domain shift & $\Pr_{tr}(\vx)\Pr_{tr}(y\mid \vx)$ & \colorbox{yellow}{$\Pr_{te}(\vx)$}$\Pr_{tr}(y\mid \vx)$\\
         Concept shift & $\Pr_{tr}(\vx)\Pr_{tr}(y\mid \vx)$ & $\Pr_{tr}(\vx)$\colorbox{yellow}{$\Pr_{te}(y\mid \vx)$} \\
         Label (prior) shift & $\Pr_{tr}(y)\Pr_{tr}(\vx\mid y)$ & \colorbox{yellow}{$\Pr_{te}(y)$}$\Pr_{tr}(\vx\mid y)$
    \end{tabular}
    \caption{Dataset shifts. Highlighted regions shows the shifting factor. }
    \label{tab:chap1:shifts}
\end{table}

Domain shifts arise out of shifts in the input distribution ($\vx$), but {\emph what characterizes shifts in a high-dimensional input distribution?} 
The inputs can be seen as being generated by a latent set of factors. Some examples of such latent factors are fonts in an optical character dataset, speakers in a speech utterances dataset, and writers in a review dataset.
Domain refers to a collection of coherent inputs with shared latent factors~\citep{Ramponi20}.
Some examples of domain are Newswire articles, Twitter tweets, Amazon product reviews, or Yelp reviews, because they all share the style and topic of the product platform. 
Domain captures similarities in real-world examples and is therefore understandably overloaded.
For this reason, domain is best understood in the context of an application. If the application is to predict pulmonary diseases from chest x-ray scans that is to be used in other hospitals anywhere in the world, then the domain can be defined by patient groups with the same age or sex, or hospital groups with the same location or scanning device. Consequently, domain shift, thereby, refers to the shift in latent factors that influence the input distribution.  

\noindent
{\bf Why is domain shift a problem?} As shown in Table~\ref{tab:chap1:shifts}, we assume that the label conditional distribution: $\Pr(y\mid \vx)$ does not shift when the domain shifts; why then does an ML model underperform on unseen domains? This is because the stable conditional distribution assumption may only hold over a latent subset of input features ($\vx_c\in \vx$).  
The input may also contain domain-specific features ($\vx_s$, $\vx=\vx_c\cup \vx_s$) with incidental correlations with the label, which do not generalize to other domains. When training using finite data that is likely biased, the estimated model owing to its dependence on incidental domain-specific features may underperform on unseen domains. We will discuss multiple problem scenarios in this thesis, which share the objective of discovering shift stable features ($\vx_c$).

\section{Related Work}
\noindent
Addressing domain shifts is a long-standing and well-studied challenge. We present an overview in this section. 

\subsection*{Domain Adaptation}
Domain adaptation is a popular approach for mitigating domain shift, which assumes access to labeled or unlabeled data from the target/test domain. The objective is to leverage labeled instances from the source domain to learn a predictor suited for the target domain. Approaches vary based on whether we are adapting from single or multiple source domains, or have access to labeled or unlabeled target data~\citep{Ramponi20,Daume2007,Saenko2010,Zhuang20,Csurka17,Sankaranarayanan18}. With ever-increasing model sizes, conventional adaptation techniques incur significant (re-)training costs. An emerging approach to reduce adaptation cost in text applications is to simply prompt a pretrained model by describing the target task in text and providing some labeled data~\citep{FLAN,gpt3}. Although promising, the scenario where prompt based methods work is not well-understood~\citep{Reynolds21, Webson21prompt, Razeghi22}. Moreover, in many applications, target domain labeled or unlabeled resources is not readily available. For example, in healthcare applications where domain shifts between patients, it is impractical to procure data from each patient in advance before we deploy the model~\citep{MuandetBS13}.




\subsection*{Domain Robustness}
Instead of post-facto adaptation, one may wish to train models that are robust by design to domain shifts.
This research question is extensively studied from several fronts. We will now present popular methods for training domain-robust models. 
\begin{itemize}[leftmargin=*]
    \item {\bf Regularized training} techniques that are targeted to reduce overfitting on training data can discourage learning irrelevant features with incidental correlations with the label, thereby improving robustness. These techniques include augmentation~\citep{augmix,VihariSSS18} of training data with a transformation of original examples such that a transformed example has the same label but different domain, and regularization losses~\citep{Zhao20,Kim21selfreg}.
    \item {\bf Inductive biases.} Imposing appropriate inductive biases that align human and machine cognition can generalize to unfamiliar inputs similar to humans. For example, \citet{Geirhos18,Shi20} argued with psychophysical experiments that humans rely on shape more than texture to recognize objects unlike convolutional networks (when trained on ImageNet) that rely on texture. Further, they introduced a method for overcoming the texture bias of CNN, and found it to have improved robustness to unseen image manipulations. However, discovering inductive biases of humans and enforcing them in to the model is non-trivial, which had only limited success in the past. 
    \item {\bf Diverse training data.} The performance of ML model is known to deteriorate with increasing shift between the train and test domains~\citep{HendrycksD19}. When the training data covers examples from many domains, the shift between the test domain and its closest training domain is reduced, with simultaneous gains in performance on the test domain. Some of our results in Chapter~\placeholder{chap:dg}{2} and~\citet{bridgedata} corroborate this claim through empirical results. \citet{bridgedata} further recommend improving diversity of training data for improving domain robustness.  
    \item {\bf Adversarial training} methods focus training on the most difficult parts of the training distribution as per the current state of the model to prepare it for the worst possible test distribution. Robust training~\citep{goodfellow2014,Bai21} methods minimize training risk over adversarial examples in the $\epsilon$-neighbourhood of the original examples; Distributionally Robust Optimization~\citep{duchi18} minimize training risk over adversarial distribution drawn from a predefined family of input distributions. 
    \item  {\bf Domain Generalization}, a practically motivated problem formulation, assumes that the training and the test domains are drawn from a latent distribution over domains. Under this assumption, domain-shifts between the training domains captures anticipated domain-shift between the train and the test domains. Methods for domain generalization, therefore, exploit domain-induced variation during train time~\citep{Zhou2021domain}, which we will discuss in more detail later.
    \item {\bf Causality.} In causal representation learning, causality based invariance principles instead of statistical correlations drive feature learning. This technique holds the promise of human-like generalization. Nevertheless, isolating causal from correlated features with limited data and supervision is challenging~\citep{Scholkopf2021toward}.
    In the same vein, \citet{Lake2015} argue that the key to generalization is to train a generative model that explains the label through a composition of its parts. Although the causal promise for domain robustness is well-founded, to date they have not been scaled to learn end-to-end from large data. 
    \item {\bf Data and Model Scaling.} Some researchers rely on simply scaling data and models to their limits (if there is one)~\citep{FLAN, gpt3, Radford21clip} for superior generalization and robustness results. Although their results are impressive, the underpinnings of their improved performance is yet to be understood. 
\end{itemize}

\subsection*{Evaluation}
Evaluation under domain shifts, on the other hand, is under-studied. Given that the performance of a model varies when domain shifts, several studies suggested that we rethink evaluation~\citep{modelcards19,Yuille21}. A popular approach is to estimate accuracy when provided unlabeled data from the target domain~\citep{sawade10,katariya12,Karimi2020,Garg22,Deng21}. Such evaluation measures that rely on resources from the target domain, however, cannot inform the developers and stakeholders of the model vulnerabilities in advance. 



\section{Our Contribution}

ML systems deployed in the wild may encounter any input. They are trained on a small subset of the input space with unknown behaviour on unseen inputs. Training and evaluating ML models that work in any context form the reliability challenge~\citep{Varshney22}. 
This thesis envisions tackling the reliability challenges due to domain shifts through addressing the following sub-problems. 
\begin{itemize}
    \item[$\bullet$] Training for domain robustness: algorithms for training domain-robust models.
    \item[$\bullet$] Evaluation methods for ML in the wild: techniques suitable for performance queries on any domain. 
    \item[$\bullet$] Unlabeled adaptation: algorithms that can adapt the model with user-provided unlabeled resources. 
\end{itemize}

\noindent
The thesis is organized to discuss contributions in the order presented above. 

\subsection{Training for Domain Robustness}

The goal of training domain-robust models is to recover the latent set of input features that are stable under domain shifts; but, can we have models that can generalize to any domain?
The set of useful features may shrink with increasing number of domains.
A universally robust model may therefore compromise accuracy when compared with a model designed for robustness in a limited setting. 
Our focus is to train models that are robust over a distribution of domains. For most practical applications, it is possible to define such distributions in terms of characteristics of the anticipated usage scenarios of the model.
For example, a global tool for classifying gender given a subject's portrait should generalize to subjects of any race under any lighting condition.

We study algorithms for domain-robust models on multi-domain training data under two challenging scenarios: (1)~the number of training domains is small (2)~the training population size of some of the domains is small. When training data covers a sufficiently large number of domains, features with incidental correlations with label that hold only for a subset of domains are naturally suppressed, thereby generalizing better to unseen domains with increasing diversity of training data. 
Moreover, since large datasets are often collated from multiple sources, training domains can have arbitrary domain population size.
Standard learning algorithms are influenced by majority domains and overfit to minority domains.
As a result, fewer training domains and disproportionate domain population sizes both cause overfitting to the training domains. 
We discuss our solutions to both these challenges below.

\subsubsection*{Domain Generalization}

Domain generalization poses the following question. {\it If the training data includes examples from multiple domains sampled from a latent distribution of domains, can we generalize better to any seen or unseen domain also sampled from the domain distribution?}


In the domain generalization task, we are given domain-demarcated data from multiple domains during training, and our goal is to create a model that will generalize to instances from new domains during testing. 
Domain generalization requires zero-shot generalization to instances from unseen domains. It is of particular significance in large-scale deep learning networks because large training sets often entail aggregation from multiple distinct domains, on which today's high-capacity networks easily overfit. Standard methods of regularization only address generalization
to unseen instances sampled from training domains, and have been shown to perform poorly on instances from unseen domains.




We made one of the early contributions to the domain generalization problem~\citep{VihariSSS18}. We argue with empirical findings that domain generalization is improved by simply increasing the number of train domains. Acting on this observation, we propose a simple data augmentation strategy. Because augmenting examples from new domains is challenging, we first design an (imperfect) domain classifier network to be trained with a suitable loss function. Given an example, we search in its $\epsilon$-neighbourhood for an input that perturbs domain but preserves label using the domain and label classifiers. 

Generalization through domain augmentation can be limiting because the space of inputs is large and synthesizing new examples is a non-trivial exercise. 
With the objective of explicitly decomposing out non-generalizing classifier components of the classifier, we design an algorithm called CSD (Common Specific Decomposition) that decomposes model parameters into a common component and a low-rank domain-specific component. CSD, which only modifies the final linear classification layer, improved domain generalization with only a slight computation overhead.
Through analysis with a simple generative setting, we provide principled understanding of lack of domain generalization due to domain-specific component and merits of CSD for domain generalization.


\noindent
We discuss these solutions more thoroughly in Chapter~\placeholder{chap:dg}{2}. 


\subsubsection*{Subpopulation Shift}
The objective of subpopulation shift problem is to learn models that minimize generalization disparity between training domains. Standard training methods generalize poorly to domains with low training population size~\citep{Sagawa20}. 
Poor generalization accuracy on minority domains, despite being represented in the training data, raises practical concerns; as an example, \citet{gendershades} reported that state-of-art gender classification models performed far worse on portraits of colored subjects perhaps because they are under-represented in training data. Subpopulation shift robustness, therefore, can address data costs and fairness concerns. 



Subpopulation shift is closely related to domain generalization where different population mixture of training domains define the distribution of domains. 
We extend our best algorithm for domain generalization (CSD) to propose CGD (Common Gradient Descent) algorithm, which recovers domain-generalizing common component of the parameter gradients without any decomposition assumption of CSD. CGD avoids minority overfitting by updating parameters with only gradients from common domains, which enables near uniform generalization to all domains. Theoretically, we show that the proposed algorithm is a descent method and finds first-order stationary points of smooth nonconvex functions. This work is described in Chapter~\placeholder{chap:cgd}{3}.


\subsection{Evaluation of Deployed Models}

The performance of an ML model can vary under domain shifts. Different users may need to deploy the model on very different data distributions, with possibly widely different accuracy. Yet, traditional evaluation methods report a single aggregate number on benchmarks or through leaderboards. Alternative practical evaluation measures are desired, especially with the increasing prevalence of ML models~\citep{modelcards19}. 

We propose to evaluate black-box classification model, not as one or few aggregate numbers, but as a {\it surface} defined on a space
of input instance attributes that capture the variability of user expectations. Indoor/outdoor,
day/night, urban/rural may be attributes of input images for visual object recognition tasks. Speaker
age, gender, ethnicity/accent may be attributes of input audio for speech recognition tasks. We call
a combination of attributes in their Cartesian space an arm\footnote{borrowing from bandit terminology}.
We present a Gaussian Process (GP)-based probabilistic estimator of the accuracy surface. Each arm is associated with a Beta density from which the model's accuracy is sampled.  We expect the GP to smooth the parameters of the Beta density over related arms to mitigate sparsity.
We show that the obvious application of GPs cannot address the challenge of heteroscedastic uncertainty over a huge attribute space that is sparsely and unevenly populated.
In response, we present two enhancements: pooling sparse observations, and regularizing the scale parameter of the Beta densities. After introducing these innovations, we establish the effectiveness through extensive experiments. This work is  described in Chapter~\placeholder{chap:aaa}{4}.

\subsection{Unlabeled Domain Adaptation}

When the space of domains is extremely large, such as in language applications, it is impossible to train accurate models that are robust to any domain shift. A better strategy is to instead adapt the model to the target domain if the model's performance is found inferior. Fine-tuning using labeled data from the target domain is known to improve performance. Users, though, may have no labeled data or access to computational resources to fine-tune giant state-of-the-art models. We study lightweight adaptation approaches, with a focus on text applications, using only unsupervised data.

In text applications, words are represented using low-dimensional dense vectors called word embeddings that encode their meaning for end-to-end learning using statistical models~\citep{MikolovSCCD2013word2vec,PenningtonSM2014GloVe}. When topic distribution shifts due to domain shifts, word meaning could drift. Even if pretrained word embeddings are trained on a large `universal' corpus, considerable sense shift may exist in the meaning of polysemous words and their cooccurrences and similarities with other words.  In a corpus about Unix, `cat' and `print' are more related than in a generic English corpus such as Wikipedia; in a corpus about Physics, `Charge' and `potential' are more related than in a generic English corpus. 
Topic shifts can as a result lead to misinterpretation and poor performance in the new domain.
On the other hand, available unlabeled data from the target domain may be too small to train word embeddings to sufficient quality from scratch.
Adapting word embeddings can therefore positively influence many downstream language tasks.
A thorough comparison across ten topics, spanning three tasks, reveals that even the best embedding adaptation strategies provide small gains beyond well-tuned baselines. 
We then reach the surprising conclusion that even limited corpus augmentation of target unlabeled data is more useful than adapting embeddings, which suggests that non-dominant sense information may be irrevocably obliterated from pretrained embeddings and cannot be salvaged by adaptation.

Adaptation of any kind requires parameter tuning that can be computationally expensive given the ever-increasing model complexity. 
A common source of domain shift in text applications is mismatch of word salience~\citep{paik2013novel} between source and target domains~\citep{Ruder2019Neural}.
In this respect, our multi-domain training setting presents a new opportunity.
Users are numerous and form natural clusters, e.g., healthcare, sports, politics. We want the model to exploit commonalities in train domains, and provide some level of generalization to new domains without re-training or fine-tuning.
We investigate practical protocols for lightweight domain adaptation of natural language processing (NLP) models. We propose a system in which each domain registers with the model using a simple sketch derived from its (unlabeled) corpus. The model then processes target domain inputs along with its sketch. Our model design provides accuracy benefits to new domains immediately.
We show that a simple late-stage intervention in the model network gives visible accuracy benefits, and provide diagnostic analyses and insights.

\noindent
Chapter~\placeholder{chap:adaptation}{5} describes our adaptation methods in more detail.


\chapter{Domain Generalization}
\label{chap:dg}

In this chapter, we study algorithms for training models that are robust to domain shifts. 
We assume that the train and test domains are sampled from the same underlying distribution of domains.
Domain-shifts between the training domains, therefore, also capture domain-shifts between the train and test domains. For example, multi-domain training data for a gender classification task could represent shifts in the lighting or subject, of the portrait, which are also the latent factors that are expected to shift between the train and test distributions. For this reason, many algorithms exploit robustness to domain-shifts during the train time, which in effect also renders robustness to domain-shifts during the test time.


\section{Problem Statement}

Let $\cD$ be a space of domains. During training we get labeled data from a proper subset $D \subset \cD$ of these domains.  Each labeled example during training is a triple $(\vx, y, d)$ where $\vx\in \mathbb{R}^m$ is the m-dimensional input, $y \in \cY$ is the true class label from a finite set of labels $\cY$ and $d \in D$ is the domain from which this example is sampled. Each example is assigned an explicit, discrete domain id, which we refer to as $d$. We must train a classifier to predict the label $y$ for examples sampled from all domains, including the subset $\cD \setminus D$ not seen in the training set.  Our goal is high accuracy for both in-domain (i.e., in $D$) and out-of-domain  (i.e., in $\mathcal{D}\setminus D$) test instances.

We make the following assumption. {\it There exist common features whose correlation with the label is consistent across domains and domain-specific  features whose correlation with the label varies 
across domains}. 
Therefore, label classifiers that rely on common features are likely to generalize to unseen domains better than those that also rely on domain-specific  features. 


Let us consider the following simple example:
\begin{align}\label{eqn:dg:simple}
    \vx_d = \vx_c + \vx_s + \mathbf{n}_d\\
    \vx_c = y\ve_c \nonumber\\
    \vx_s = \beta_d y\ve_s \nonumber \\
    \mathbf{n}_d \sim \mathcal{N}(0,\Sigma_d) \nonumber
\end{align}

The input $\vx_d$ of a domain $d$ is a superposition of common features $\vx_c$, domain-specific  feature $\vx_s$, and normally distributed noise $\mathbf{n}_d$. For a binary task $y\in \{-1, +1\}$, the feature $\vx_c$ is formed out of the sampled label $y$ and a constant vector $\ve_c$. Thus, $\vx_c$ is positively correlated with the label in any domain. The part $\vx_s$ is the product of a domain-specific  scalar $\beta_d$, sampled label $y$, and the constant vector $\ve_s$. Thus, $\vx_s$ correlation with the label varies with domain. With $\ve_c=[1, 0]^T, \ve_s=[0, 1]^T$, Figure~\ref{fig:dg:a} shows two train and one test domain for $\beta=-1, 2, -4; \sigma=0.2, 0.5, 0.4$ respectively, where $\Sigma=\sigma I_{2\times 2}$. Observe how, in all domains, positive classes are to the right and negative to the left, but positive classes are sometimes above or below negative examples depending on the domain. Therefore classification based on $\vx_c$, i.e. Y axis is the optimal classifier that can generalize to domain shifts. 

We fit a linear classifier using standard training (ERM), which is shown in Figure~\ref{fig:dg:b}. It learns a solution that perfectly classifies the two training domains, but generalizes poorly to the test domain shown in highlighted oval regions. Standard training methods predict based on any label correlated feature and therefore learn the predictor that is a function of both the coordinates as shown. 

\begin{figure}
     \centering
     \begin{subfigure}[b]{0.48\textwidth}
         \centering
         \includegraphics[width=\textwidth]{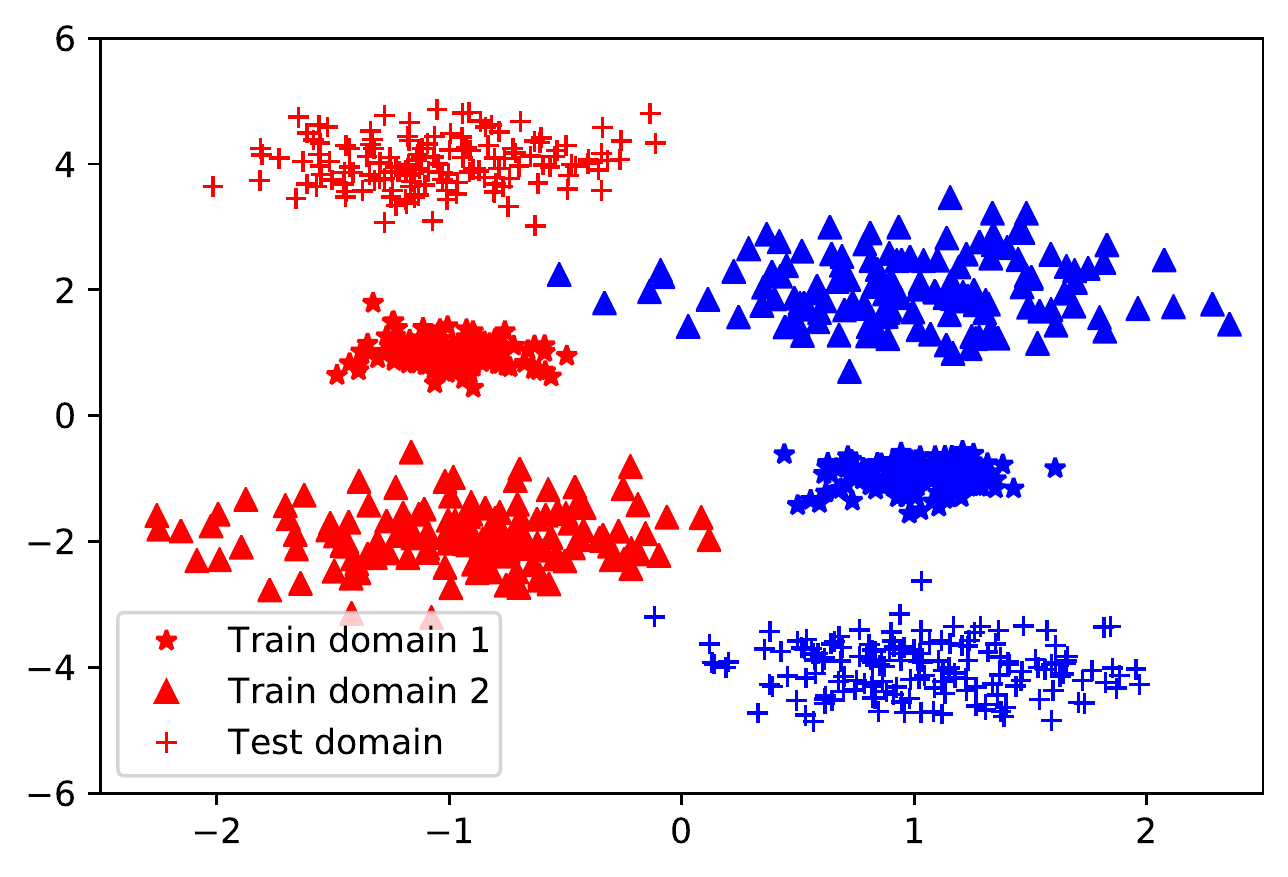}
         \caption{Train and test data}
         \label{fig:dg:a}
     \end{subfigure}
     \hfill
     \begin{subfigure}[b]{0.48\textwidth}
         \centering
         \includegraphics[width=\textwidth]{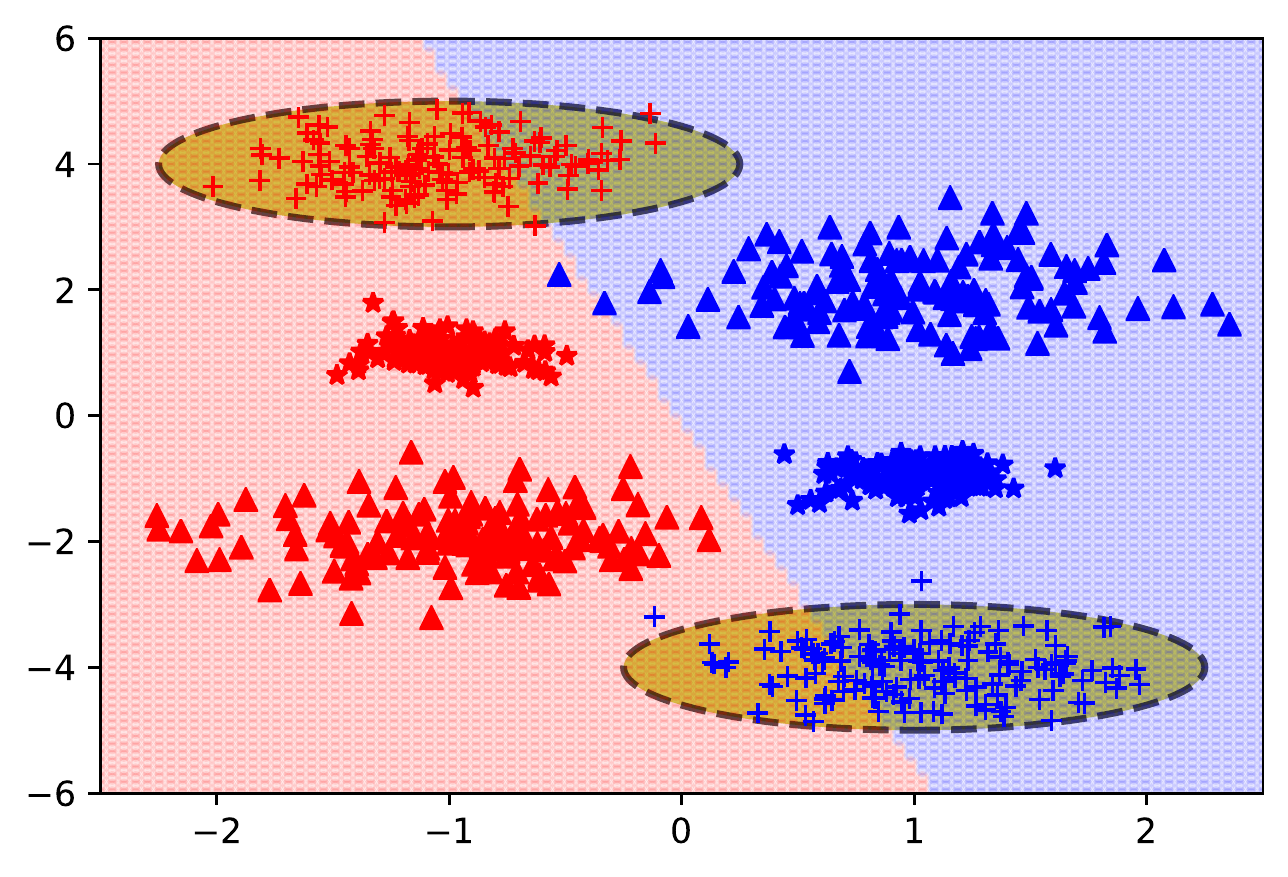}
         \caption{ERM fitted classifier}
         \label{fig:dg:b}
     \end{subfigure}\\
     \begin{subfigure}[b]{0.48\textwidth}
         \centering
         \includegraphics[width=\textwidth]{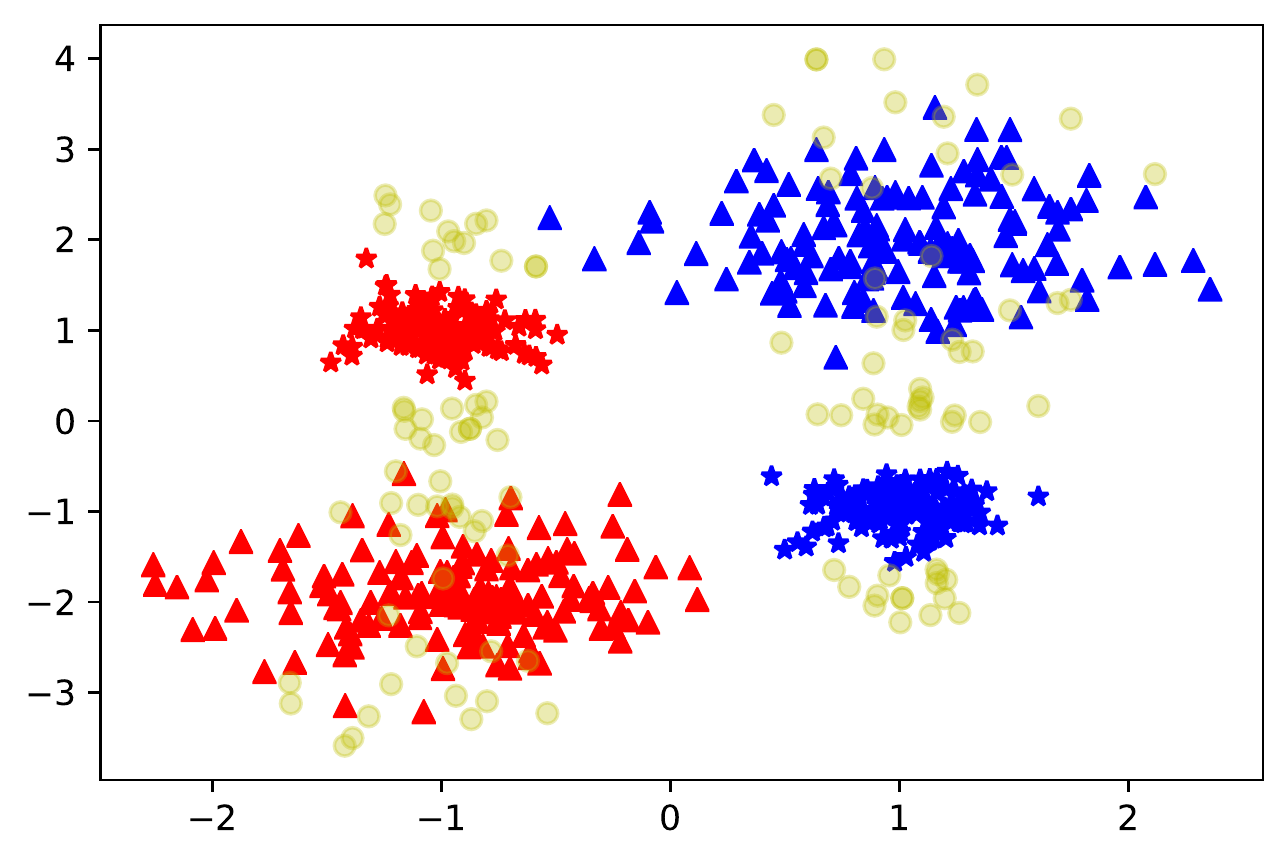}
         \caption{Augmentation of \crossgrad{} in circles}
         \label{fig:dg:c}
     \end{subfigure}
     \hfill
     \begin{subfigure}[b]{0.48\textwidth}
         \centering
         \includegraphics[width=\textwidth]{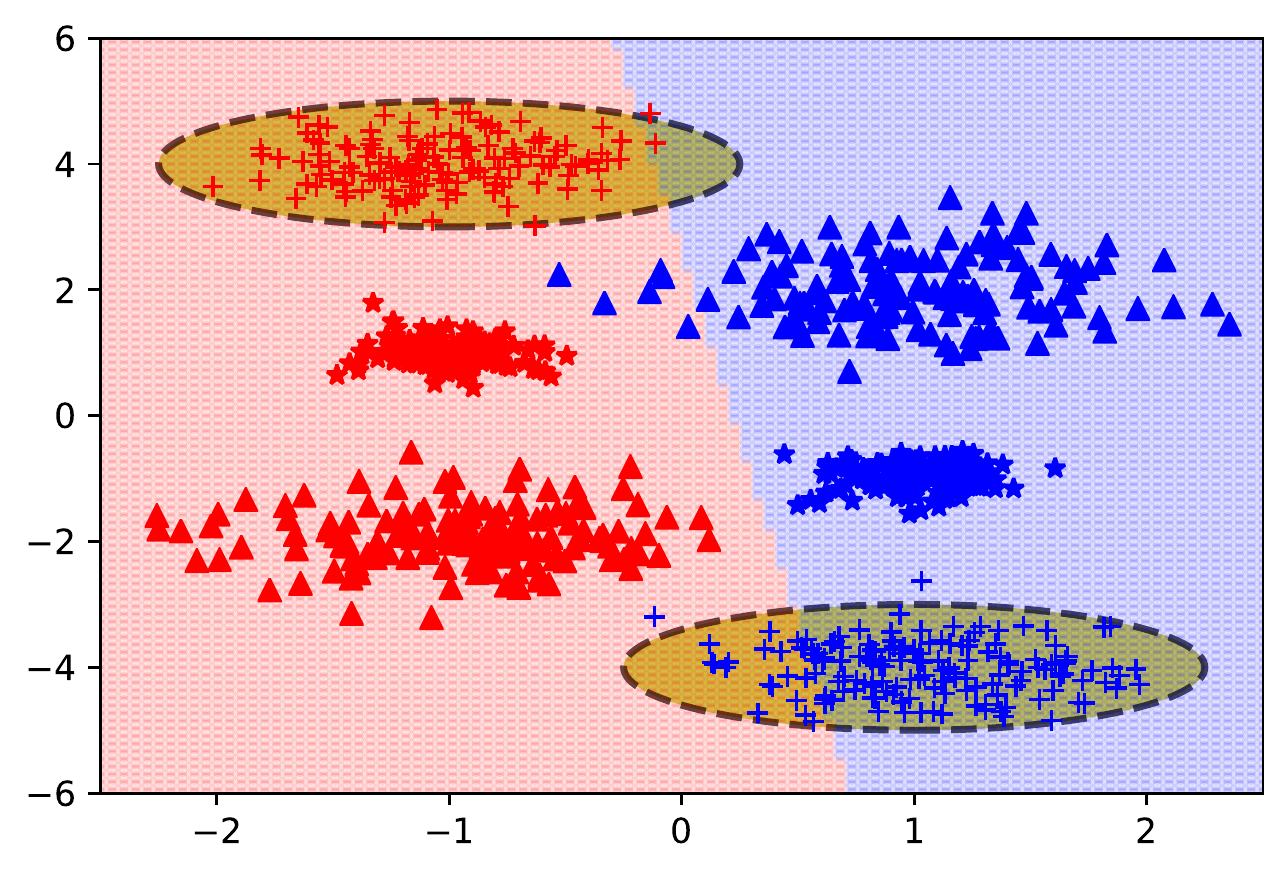}
         \caption{\crossgrad{} fitted classifier}
         \label{fig:dg:d}
    \end{subfigure}\\
    \begin{subfigure}[b]{0.48\textwidth}
         \centering
         \includegraphics[width=\textwidth]{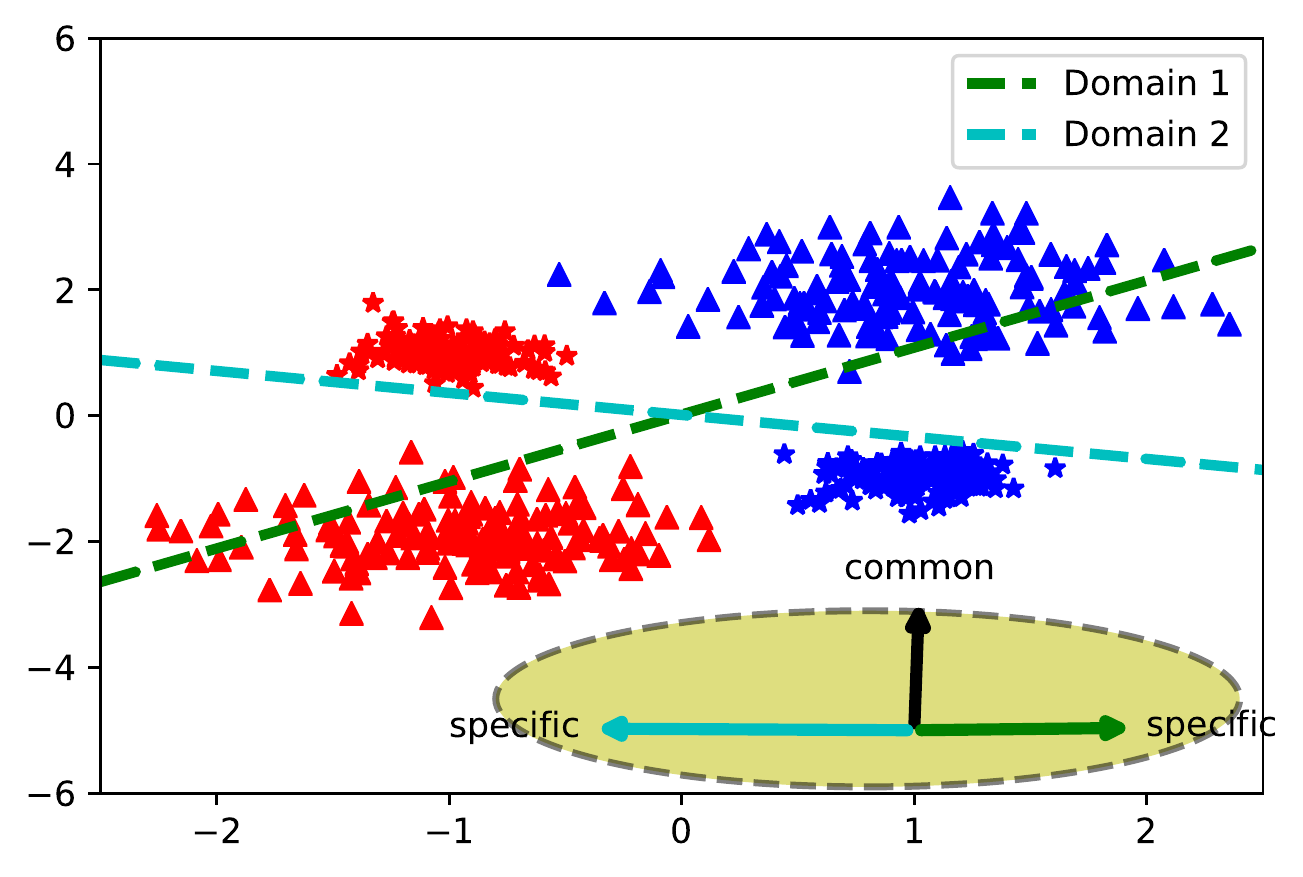}
         \caption{Common-specific decomposition of CSD}
         \label{fig:dg:e}
     \end{subfigure}
     \hfill
     \begin{subfigure}[b]{0.48\textwidth}
         \centering
         \includegraphics[width=\textwidth]{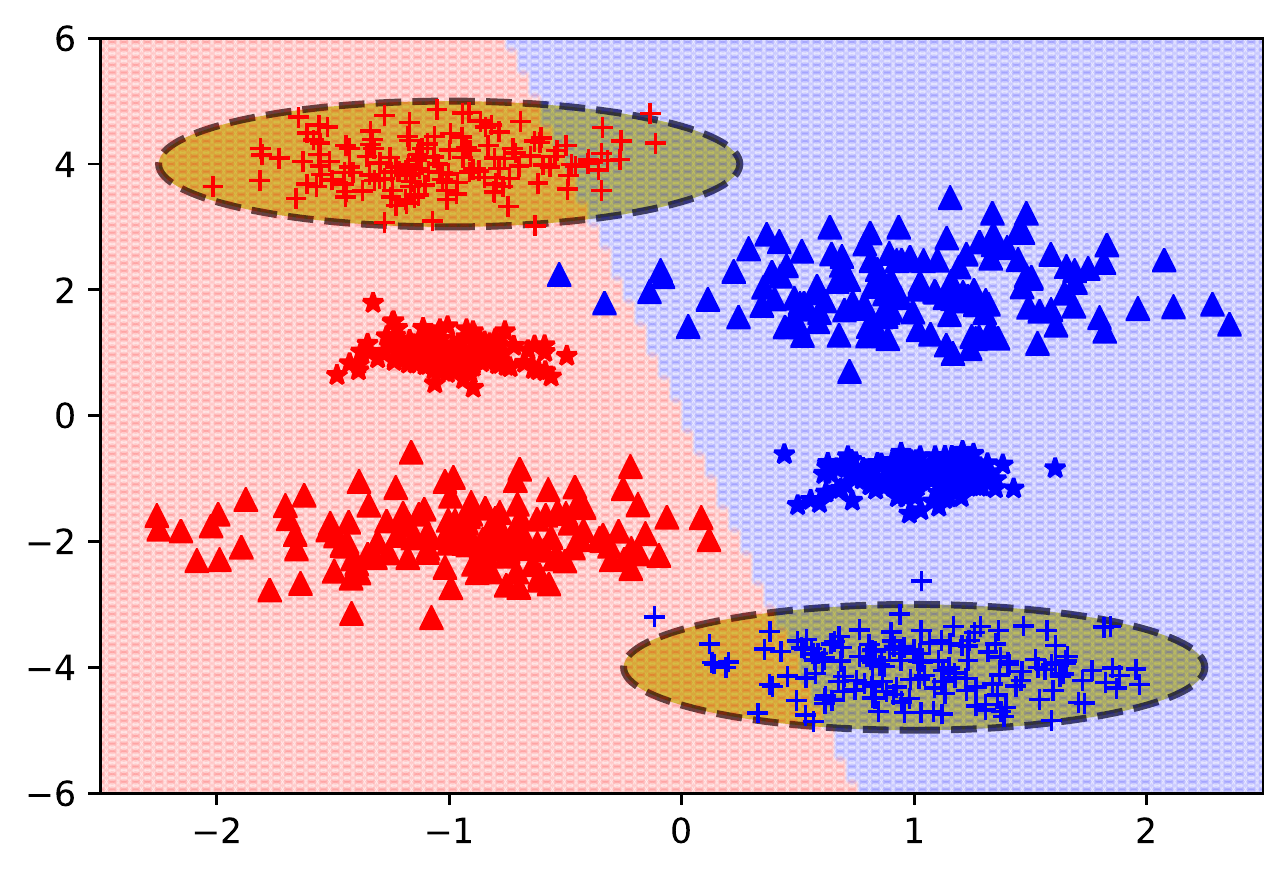}
         \caption{CSD fitted classifier}
         \label{fig:dg:f}
     \end{subfigure}
    \caption{When $\ve_c=[1, 0]^T, \ve_s=[0, 1]^T$, figure~\ref{fig:dg:a} shows two train and one test domain for $\beta=-1, 2, -4; \sigma=0.2, 0.5, 0.4$ respectively, where $\Sigma=\sigma I_{2\times 2}$. Figure~\ref{fig:dg:b},~\ref{fig:dg:d},~\ref{fig:dg:f} shows the linear classifier fitted by the ERM, \crossgrad{}, CSD algorithms and highlights the test domain in dotted regions. \crossgrad{} augments examples from new domains (along $e_s$) as shown by yellow circles in figure~\ref{fig:dg:c}. CSD decomposes common component from per-domain classifiers, figure~\ref{fig:dg:e}. \crossgrad{}, CSD fitted classifiers are closer to the common component when compared with ERM's fit. Plots generated using a \href{https://colab.research.google.com/drive/1KwzIdfjqFWoycTGnMJydxZmg2k3M7Sd2?usp=sharing}{colab notebook}.}
    \label{fig:csd:dg_example}
\end{figure}

We present two algorithms for recovering the generalizing common component. Our first algorithm, called \crossgrad{}~\citep{VihariSSS18} described in Section~\ref{sec:crossgrad:probmodel}, recovers the common component by suppressing the specific component implicitly through data augmentation from new domains, as illustrated in Figure~\ref{fig:dg:c}; augmented examples are shown as yellow circles. Intuitively, augmentations along the Y axis can learn a classifier close to the generalizing classifier as presented in Figure~\ref{fig:dg:d}. Our second algorithm, called CSD~\citep{VihariNS2020} described in Section~\ref{sec:csd}, explicitly recovers the common component through decomposition of per-domain classifiers as illustrated in Figure~\ref{fig:dg:e}. The fitted classifier with CSD is shown in Figure~\ref{fig:dg:f}. We describe each of the algorithms in detail next, present empirical results on real-world datasets and conclude the chapter with a discussion.

\section{{\crossgrad}: Cross-Gradient Training}
\label{sec:crossgrad:probmodel}

\noindent
We describe our first algorithm {\crossgrad} in this section. 


We use a Bayesian network to model the dependence among the label $y$, domain $d$, and input $\vx$ as shown in Figure~\ref{fig:crossgrad:bn}.  Variables $y \in \cY$ and $d \in \cD$ are discrete and $\vg \in \R^q,\vx \in \R^r$ lie in continuous multi-dimensional spaces.
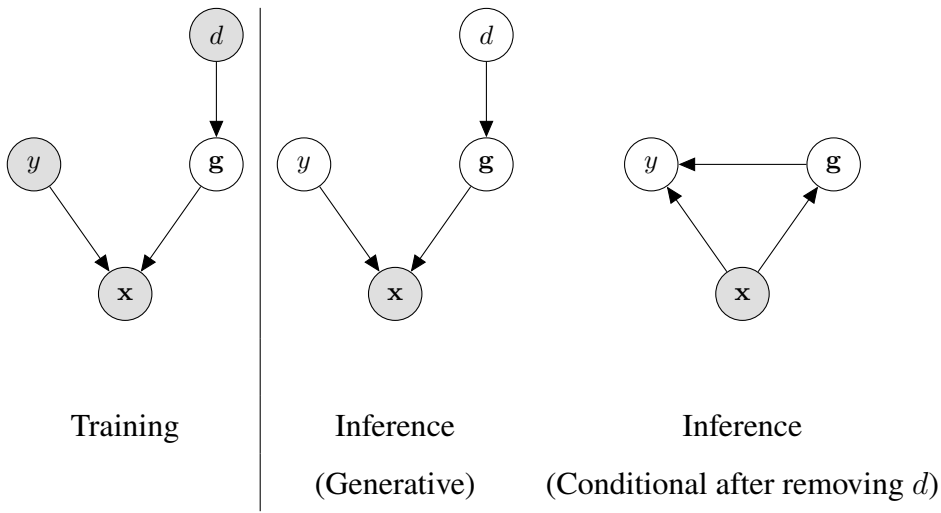
\begin{figure}[ht]
  \begin{center}
  \begin{tabular}{c|cc}
\begin{tikzpicture}
  \node[obs]                               (x) {$\mathbf{x}$};
  \node[latent, above=of x, xshift=1.2cm]  (g) {$\mathbf{g}$};
  \node[obs, above=of x, xshift=-1.2cm]  (y) {$y$};
  \node[obs, above=of g]            (d) {$d$};
  \edge {g, y} {x} ; 
  \edge {d} {g} ;
\end{tikzpicture}
&
\begin{tikzpicture}
  \node[obs]                               (x) {$\mathbf{x}$};
  \node[latent, above=of x, xshift=1.2cm]  (g) {$\mathbf{g}$};
  \node[latent, above=of x, xshift=-1.2cm]  (y) {$y$};
  \node[latent, above=of g]            (d) {$d$};
  \edge {g, y} {x} ; %
  \edge {d} {g} ;
\end{tikzpicture}
&
\begin{tikzpicture}
  \node[obs]                               (x) {$\mathbf{x}$};
  \node[latent, above=of x, xshift=1.2cm]  (g) {$\mathbf{g}$};
  \node[latent, above=of x, xshift=-1.2cm]  (y) {$y$};
  \edge {x} {g, y} ; %
  \edge {g} {y} ;
\end{tikzpicture}\\
& \\
Training & Inference & Inference \\
 & (Generative) & (Conditional after removing $d$)
\end{tabular}
\end{center}
  \caption{\label{fig:crossgrad:bn} Bayesian network to model dependence between label $y$, domain $d$, and input $\vx$.}
\end{figure}
The domain $d$ induces a set of latent domain features $\vg$.  The input $\vx$ is obtained by a complicated, un-observed mixing\footnote{The dependence of $y$ on $\vx$ could also be via continuous hidden variables but our model for domain generalization is agnostic of such structure.} of $y$ and $\vg$. In the training sample $L$, nodes $y,d,\vx$ are observed but $L$ spans only a proper subset $D$ of the set of all domains $\cD$.  During inference, only $\vx$ is observed and we need to compute the posterior $\Pr(y|\vx)$.
As reflected in the network, $y$ is not independent of $d$ given $\vx$.  However, since $d$ is discrete and we observe only a subset of $d$'s during training, we need to make additional assumptions to ensure that we can generalize to a new $d$ during testing.
The assumption we make is that integrated over the training domains the distribution $\Pr(\vg)$ of the domain features is well-supported in $L$.  More precisely, generalizability of a training set of domains $D$ to the universe of domains $\cD$ requires that the training data cover the $\vg$ distribution well. 
\begin{align}
\label{eq:crossgrad:dg}
\Pr(\vg) &= \sum_{d \in \cD} \Pr(\vg | d) \Pr(d)
\approx \sum_{d \in D} \Pr(\vg | d) \frac{\Pr(d)}{\Pr(D)} \tag{A1}
\end{align}
Under this assumption $\Pr(\vg)$ can be modeled during training, so that during inference we can infer $y$ for a given $\vx$ by estimating
\begin{equation}
\Pr(y|\vx) = \sum_{d\in \cD}\Pr(y|\vx,d)\Pr(d|\vx) = \int_\vg \Pr(y|\vx,\vg)\Pr(\vg|\vx) \approx \Pr(y|\vx,\hat{\vg})
\end{equation}
where $\hat{\vg} = \argmax_\vg \Pr(\vg|\vx)$ is the inferred continuous representation of the domain of $\vx$. 

Existence of low-dimensional continuous representation of the domain is key to our being able to claim generalization to new domains even though most real-life domains are discrete.  For example, domains like fonts and speakers are discrete, but their variation can be captured via latent continuous features (e.g. slant, ligature size etc. of fonts; speaking rate, pitch, intensity, etc. for speech).  The assumption states that as long as the training domains span the latent continuous features we can generalize to new fonts and speakers. 

We next elaborate on how we estimate $\Pr(y|\vx,\hat{\vg})$ and $\hat{\vg}$ using the domain-labeled data \mbox{$L=\{(\vx, y, d)\}$}.
The main challenge in this task is to ensure that the model for $\Pr(y|\vx,\hat{\vg})$ is not over-fitted on the inferred $\vg$ values of the training domains. In many applications, the per-domain $\Pr(y|\vx,d)$ is significantly easier to train.  So, an easy local minima is to choose a different $\vg$ for each training $d$ and generate separate classifiers for each distinct training domain.  We must encourage the network to stay away from such easy solutions.
We strive for generalization by moving along the continuous space $\vg$ of domains to sample new training examples from hallucinated domains.
Ideally, for each training instance $(\vx_i, y_i)$ from a given domain $d_i$, we wish to generate a new $\vx'$ by transforming its (inferred) domain $\vg_i$ to a random domain sampled from
$\Pr(\vg)$, keeping its label $y_i$ unchanged.   Under the domain continuity assumption \eqref{eq:crossgrad:dg}, a model trained with such an ideally augmented dataset is expected to generalize to domains in $\cD\setminus D$.

However, there are many challenges to achieving such ideal augmentation.  To avoid changing $y_i$, it is convenient to draw a sample $\vg$ by perturbing $\vg_i$.  But $\vg_i$ may not be reliably inferred, leading to a distorted sample of $\vg$.   For example, if the $\vg_i$ obtained from an imperfect extraction contains label information, then big jumps in the approximate $\vg$ space could change the label too.  We propose a more cautious data augmentation strategy that perturbs the input to make only small moves along the estimated domain features, while changing the label as little as possible.  We arrive at our method as follows.

\mypara{Domain inference.}
We create a model $G(\vx)$ to extract domain features $\vg$ from an input $\vx$.  
We supervise the training of $G$ to predict the domain label $d_i$ as $S(G(\vx_i))$ where $S$ is a softmax transformation.
We use $J_{d}$ to denote the cross-entropy loss function of this classifier.  Specifically, 
$J_d(\vx_i,d_i)$ is the domain loss at the current instance.

\noindent
{\bf Domain perturbation.}
Given an example $(\vx_i,y_i,d_i)$, we seek to sample a new example $(\vx'_i,y_i)$ (i.e., with the same label $y_i$), whose domain is as ``far'' from $d_i$ as possible. 
\begin{wrapfigure}{r}{0.45\textwidth}
\centering
\includegraphics[scale=.4]{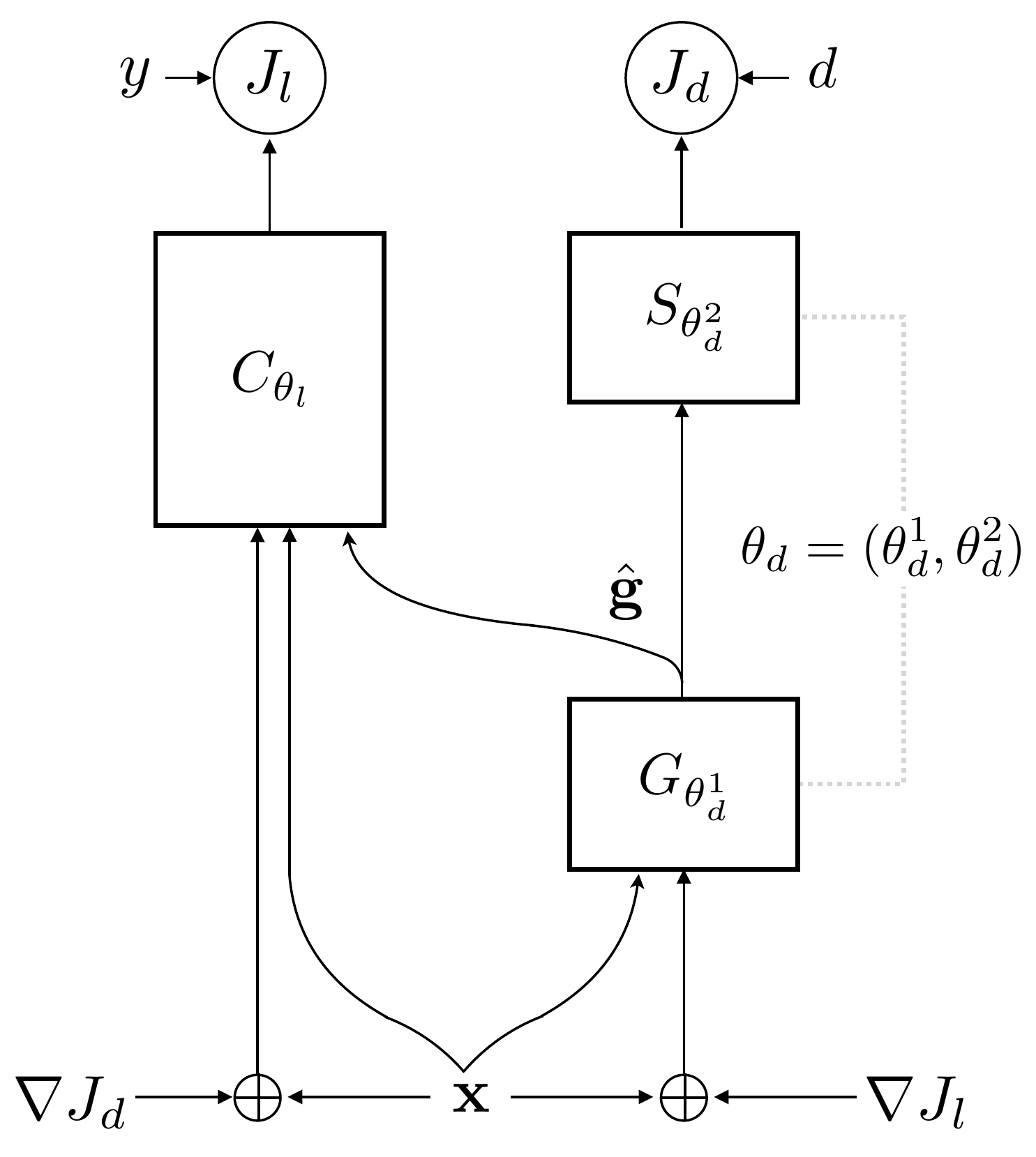}
\caption{{\crossgrad} network design.}
\label{fig:crossgrad:network}
\end{wrapfigure}
To this end, consider setting $\vx'_i = \vx_i + \epsilon \nabla_{\vx_i} J_d(\vx_i, d_i)$. Intuitively, this perturbs the input along the direction of greatest domain change\footnote{We use $\nabla_{\vx_i} J_d$ as shorthand for the gradient $\nabla_{\vx} J_d$ evaluated at $\vx_i$.}, for a given budget of $||\vx'_i-\vx_i||$. However, this approach presupposes that the direction of domain change in our domain classifier is 
not highly correlated with the direction of label change. To enforce this in our model, we shall train the domain feature extractor $G$ to avoid domain shifts when the data is perturbed to cause label shifts.

\begin{algorithm}[htb]
  \caption{\crossgrad\ training pseudocode.}\label{alg}
  \begin{algorithmic}[1]
    \State {\bf Input:} Labeled data $\{(\vx_i, y_i, d_i)\}_{i=1}^M$, step sizes $\epsilon_l, \epsilon_d$, learning rate $\eta$, data augmentation weights $\alpha_l, \alpha_d$, number of training steps $n$.
    \State {\bf Output:} Label and domain classifier parameters $\theta_l,\theta_d$
    \State Initialize $\theta_l,\theta_d$
    \Comment {$J_l, J_d$ are loss functions for the label and domain classifiers, respectively}
    \For{$n$ training steps}
      \State Sample a labeled batch $(X,Y,D)$
      \State $X_d := X + \epsilon_l \cdot \nabla_{X} J_d(X, D; \theta_d)$
      \State $ X_l  := X + \epsilon_d \cdot \nabla_{X} J_l(X,Y;\theta_l)$
      \State $\theta_l \leftarrow \theta_l - \eta\nabla_{\theta_l} ((1 - \alpha_l) J_l(X, Y;\theta_l) + \alpha_l J_l(X_d, Y;\theta_l))$
      \State $\theta_d \leftarrow \theta_d - \eta\nabla_{\theta_d} ((1 - \alpha_d) J_d(X, D; \theta_d) + \alpha_d J_d(X_l, D; \theta_d))$
    \EndFor
  \end{algorithmic}
\end{algorithm}

\subsection{Algorithm}
The above development leads to the network sketched in Figure~\ref{fig:crossgrad:network}, and
an accompanying training algorithm, \crossgrad, shown in Algorithm~\ref{alg}. Here $X,Y,D$ correspond to a minibatch of instances. Our proposed method integrates data augmentation and batch training as an alternating sequence of steps. In line 6, 7, we obtain perturbations of the input that change domain and label in the direction of greatest change using domain and label classifier respectively. The label classifier is trained with original and domain perturbed examples. The domain classifier is simultaneously trained with the perturbations from the label classifier network so as to be robust to label changes. 
Thus, we construct cross-objectives $J_{l}$ and $J_{d}$, and update their respective parameter spaces. We found this scheme of simultaneously training both networks to be empirically superior to independent training even though the two classifiers do not share parameters.


In practice, we may see some variations to the Bayesian network shown in Figure~\ref{fig:crossgrad:bn}. For example, we may have that $y$ is independent of $d$ given $\vx$ as shown to the left of Figure~\ref{fig:crossgrad:bn_alternate}. Therefore, conditioning on $\hat{\vg}$ in $\Pr(y\mid \mathbf{x}, \hat{\vg})$ may not help. However, practical applications could be more complicated as shown to the right of Figure~\ref{fig:crossgrad:bn_alternate}. That is, input is an unknown mixture of its latent features: ($\mathbf{x}_c, \mathbf{x}_s$), where we may have mix of dependencies. As a result, $d$ is correlated to varying extent with the label $y$. \crossgrad{} is designed to handle the whole spectrum of correlations that emerge in applications. 

\begin{figure}[htb]
\centering
    \begin{tabular}{c|c}
        \begin{tikzpicture}
          \node[obs]                               (x) {$\mathbf{x}$};
          \node[obs, above=of x, xshift=-1.8cm]  (y) {$y$};
          \node[obs, above=of x, xshift=1.8cm]  (d) {$d$};
          \edge {x} {y} ; 
          \edge {d} {x} ;
        \end{tikzpicture} &
        \begin{tikzpicture}
          \node[latent, above=of x, xshift=-1.2cm] (xc) {$\mathbf{x}_c$};
          \node[latent, above=of x, xshift=1.2cm]  (xs) {$\mathbf{x}_s$};
          \node[obs]                               (x) {$\mathbf{x}$};
          \node[obs, above=of xc, xshift=-1.2cm](y) {$y$};
          \node[obs, above=of xs, xshift=1.2cm] (d) {$d$};
          \edge {xc} {y} ;
          \edge {d} {xc} ;
          \edge {xc} {x} ;
          \edge {xs} {x} ;
          \edge {y} {xs} ;
          \edge {d} {xs} ;
        \end{tikzpicture} \\
        Label C.I. domain & General model
    \end{tabular}
    \caption{}
    \label{fig:crossgrad:bn_alternate}
\end{figure}

\subsection{Why does \crossgrad\ work?}
\label{sec:crossgrad:why}

We present insights on the working of \crossgrad\ via experiments on the Rotated-MNIST dataset. In this dataset, the training data contains MNIST digits rotated by varying angles from 0$^\degree$ to 75$^\degree$ in the interval of 15$^\degree$. The domains corresponding to image rotations are easy to interpret. 

In Figure~\ref{fig:crossgrad:304560}, we show PCA projections
of the $\vg$ embeddings for images from three different domains, corresponding to rotations by \textit{30, 45, 60} degrees in green, blue, and yellow respectively. The $\vg$ embeddings of domain \textit{45} (blue) lies in between the $\vg$ of domains \textit{30} (green) and \textit{60} (yellow) showing that the domain classifier has successfully extracted continuous representation of the domain even when the input domain labels are categorical. Figure~\ref{fig:crossgrad:01530} shows the same pattern for domains \textit{0, 15, 30}.  Here again we see that the embedding of domain \textit{15} (blue) lies in-between that of domain \textit{0}~(yellow) and \textit{30}~(green).

Next, we show that the $\vg$ perturbed along gradients of domain loss does manage to generate images that substitute for the missing domains (rotations) during training.
For example, the embeddings of the domain \textit{45}, when perturbed, scatters towards the domain \textit{30} and \textit{60} as can be seen in Figure~\ref{fig:crossgrad:304560p}: note the scatter of \textit{ perturbed  45} (red) points inside the \textit{30} (green) zone, without any \textit{45} (blue) points. Figure~\ref{fig:crossgrad:01530p} depicts a similar pattern with perturbed domain embedding points (red) scattering towards domains \textit{30} and \textit{0} more than unperturbed domain \textit{15} (blue).  For example, between x-axis -1 and 1 dominated by the green domain (domain \textit{30}) we see many more red points (perturbed domain \textit{15}) than blue points  (domain \textit{15}).  Similarly in the lower right corner of domain \textit{0} shown in yellow.
%
This highlights the mechanism of \crossgrad\ working; that it is able to augment training with samples closer to unobserved domains.

Finally, we observe in Figure~\ref{fig:crossgrad:embed_comparison_label} that the embeddings are not correlated with labels. For both domains \textit{30} and \textit{45} the colors corresponding to different labels are not clustered. This is a consequence of \crossgrad's symmetric training of the domain classifier via label-loss perturbed images.

\begin{figure}[thb]
\begin{subfigure}[b]{0.5\linewidth}
\begin{center}
  \includegraphics[width=200pt]{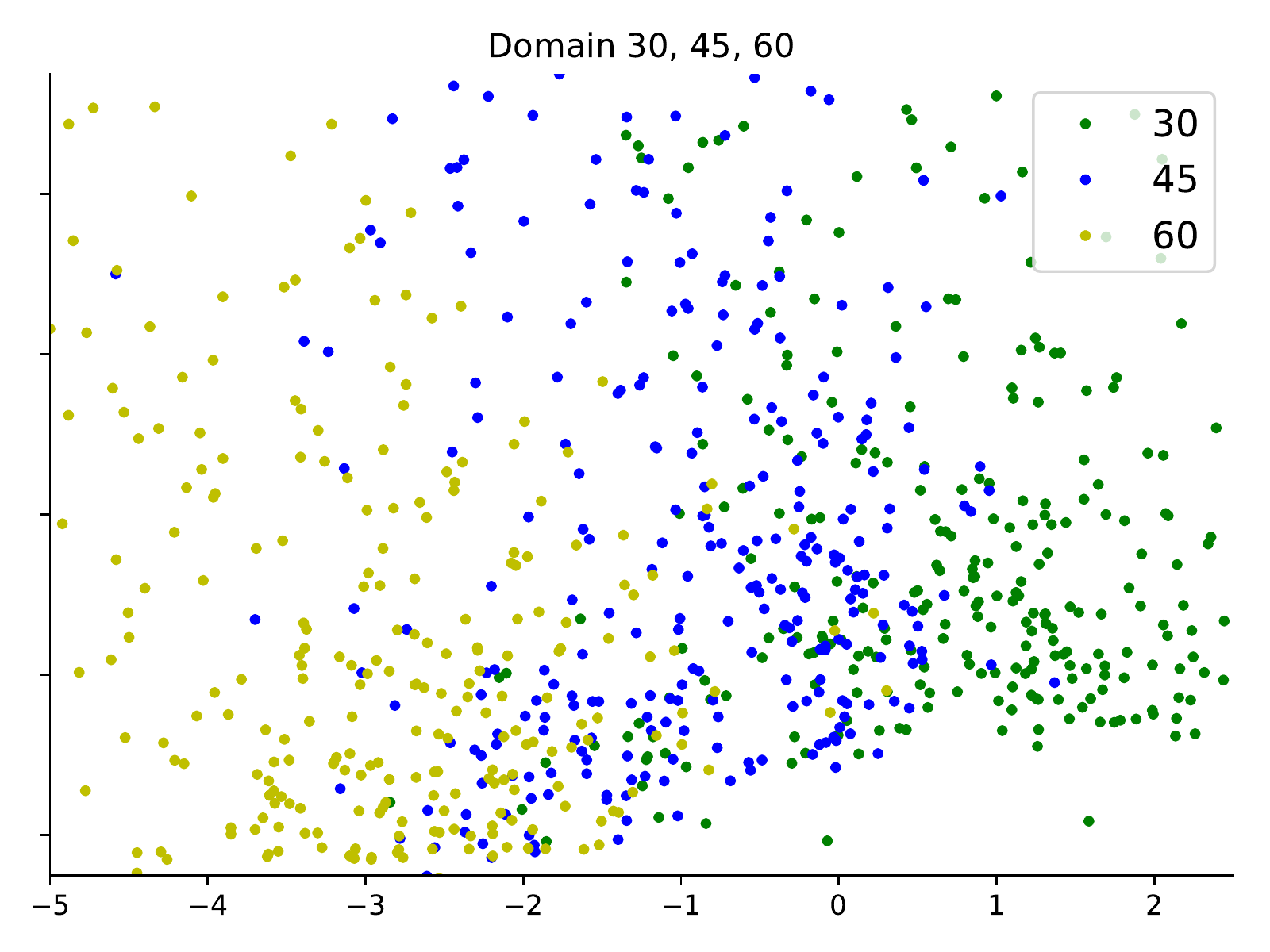}
  \caption{$\vg$ of domain \textit{45} lies between $\vg$ of \textit{60} and \textit{30}.}
  \label{fig:crossgrad:304560}
  \end{center}
 \end{subfigure}
 \begin{subfigure}[b]{0.5\textwidth}
\begin{center}
  \includegraphics[width=200pt]{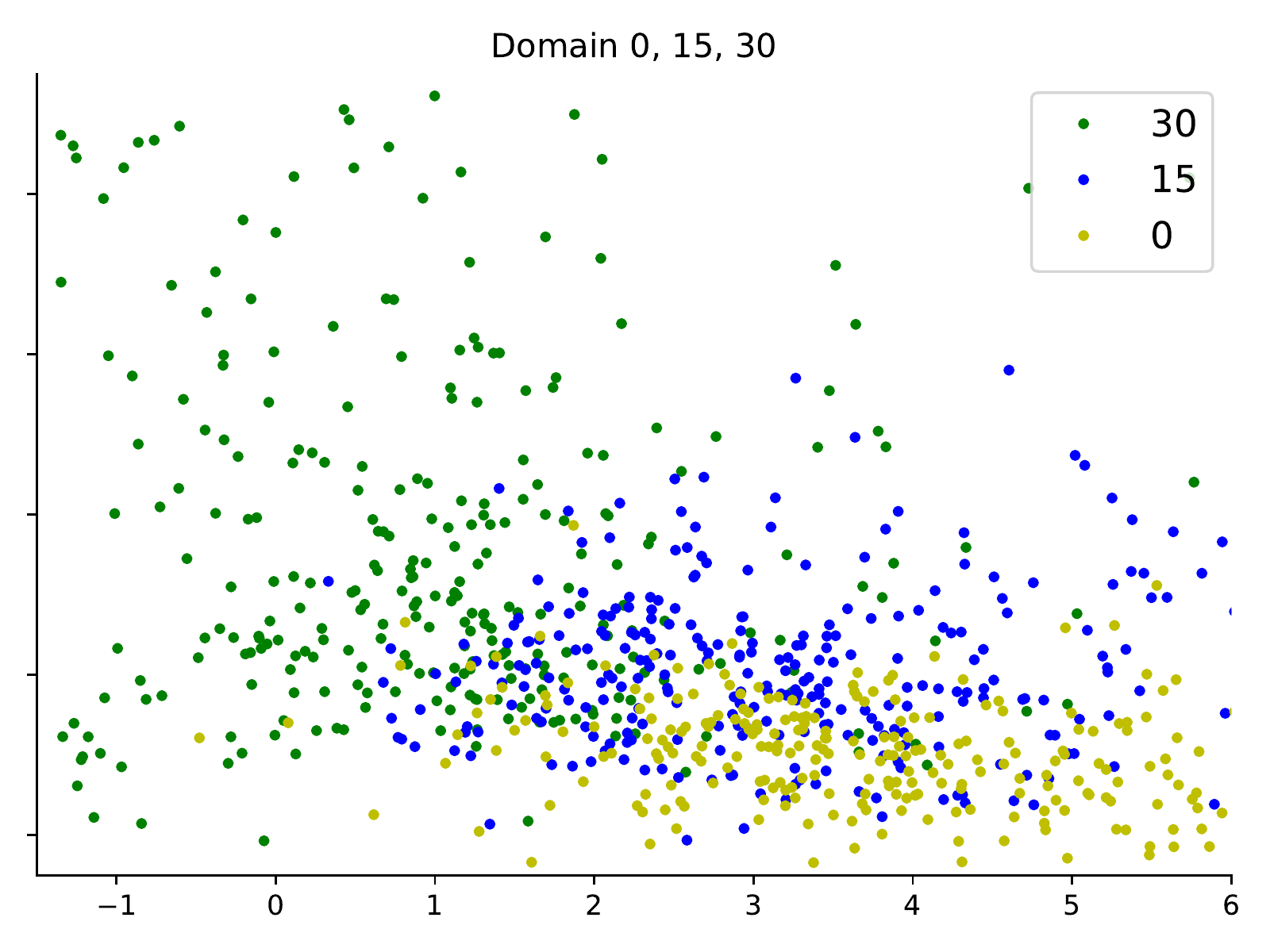}
  \caption{$\vg$ of domain \textit{15} lies between $\vg$ of \textit{0} and \textit{30}.}
  \label{fig:crossgrad:01530}
  \end{center}
 \end{subfigure}
 \begin{subfigure}[b]{0.5\textwidth}
 \begin{center}
  \includegraphics[width=200pt]{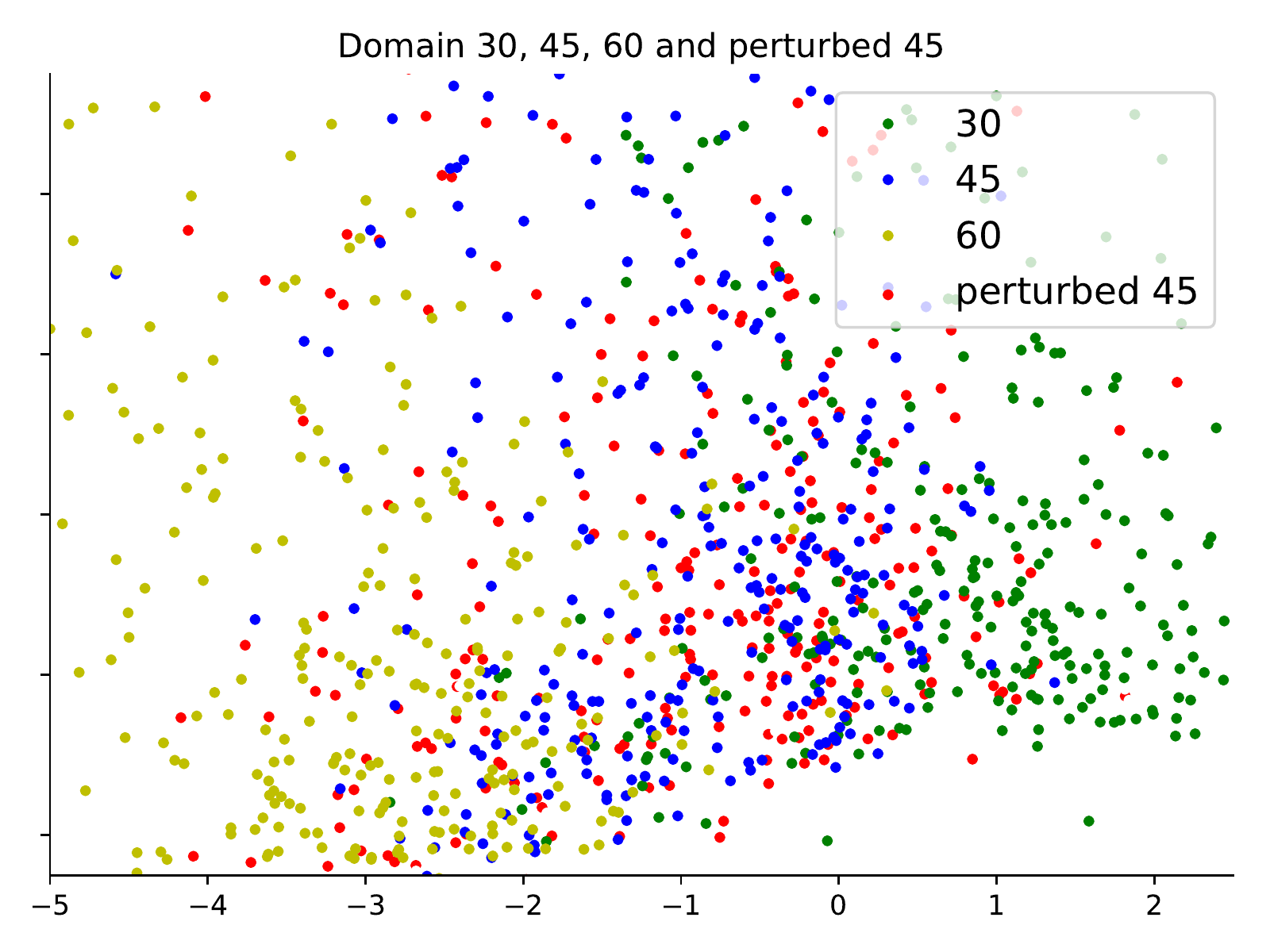}
  \caption{Adding $\vg$ of perturbed domain \textit{45} to above.}
  \label{fig:crossgrad:304560p}
  \end{center}
 \end{subfigure}
 \begin{subfigure}[b]{0.5\textwidth}
 \begin{center}
  \includegraphics[width=200pt]{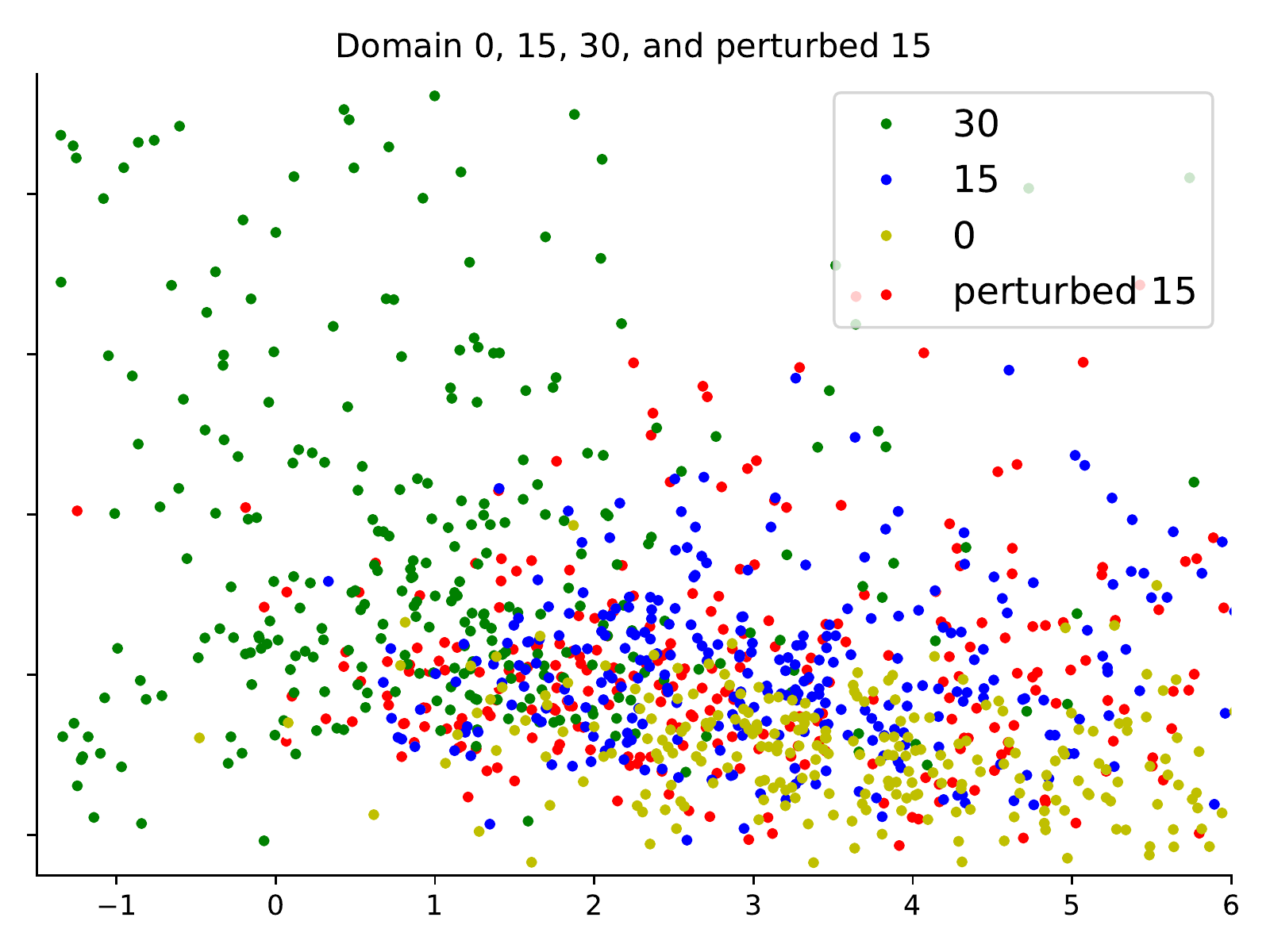}
  \caption{Adding $\vg$ of perturbed domain \textit{15} to above.}
  \label{fig:crossgrad:01530p}
  \end{center}
 \end{subfigure}
\caption{Comparing domain embeddings ($\vg$) across domains. Each color denotes a domain.}
\label{fig:crossgrad:embed_comparison}
\end{figure}

\begin{figure}[htb]
\begin{subfigure}[b]{0.48\textwidth}
\begin{center}
  \includegraphics[width=210pt]{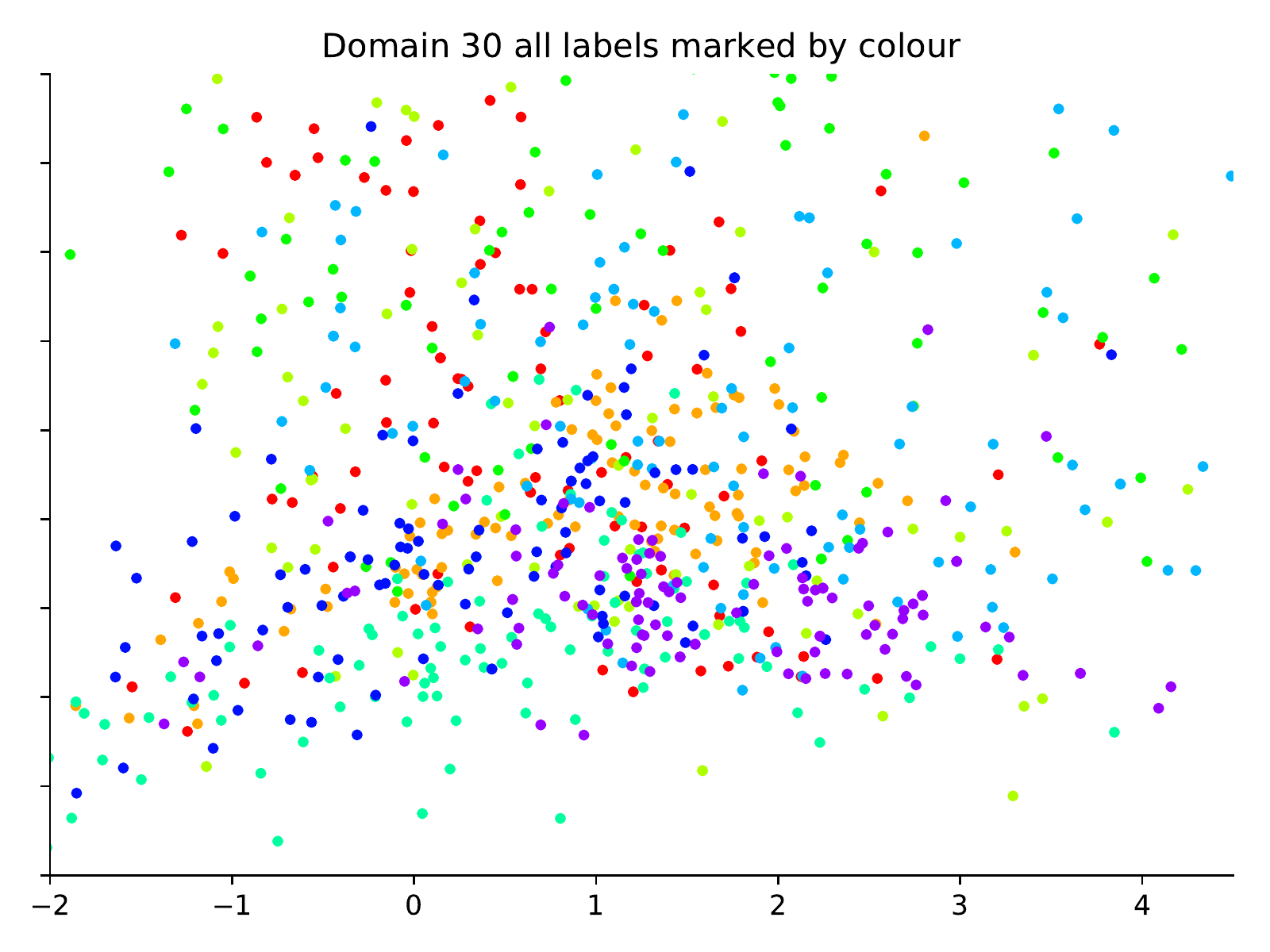}
  \caption{Domain M30}
  \label{fig:crossgrad:30r}
  \end{center}
 \end{subfigure}
 \begin{subfigure}[b]{0.48\textwidth}
 \begin{center}
  \includegraphics[width=210pt]{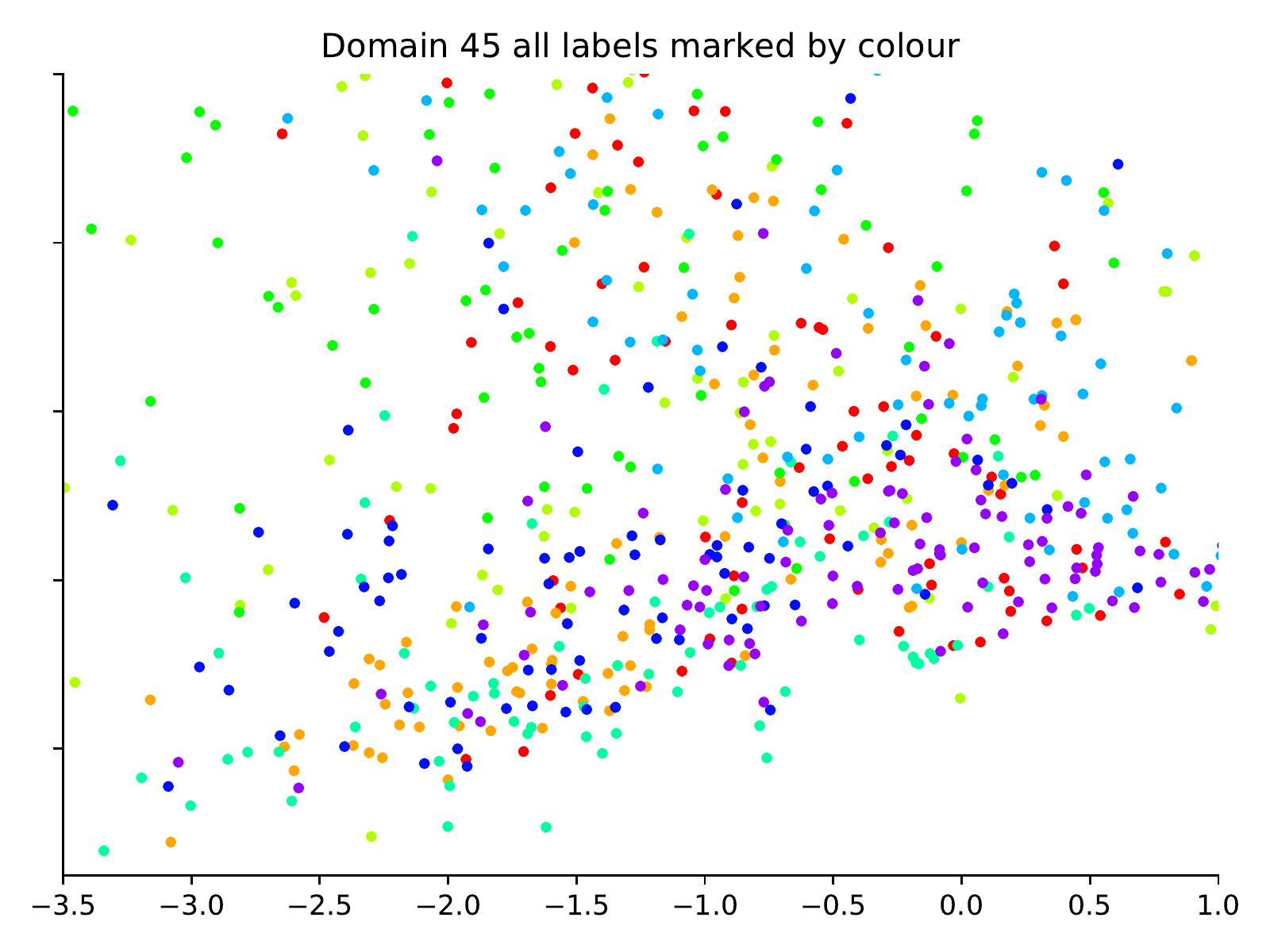}
  \caption{Domain M45}
  \label{fig:crossgrad:45r}
  \end{center}
 \end{subfigure}
\caption{Comparing domain embeddings ($\vg$) across labels. Each color denotes a label. Unclustered colors indicates that label information is mostly absent from $\vg$.}
\label{fig:crossgrad:embed_comparison_label}
\end{figure}

\section{CSD: Common-specific Decomposition}
\label{sec:csd}
In order to generate examples that only perturb the domain but not its label, \crossgrad{} searches in a small neighbourhood around the original example. And as a consequence, it cannot generalize far beyond the training domains. In this section, we will discuss an alternate algorithm that recovers domain generalizing solution by decomposing parameters in to common and specific components. 

Recall the simple setting from~\eqref{eqn:dg:simple} where m-dimensional input ($\vx\in \mathbb{R}^m$) is a combination of common features ($\vx_c$) whose correlation with label is constant across domains, domain-specific  ($\vx_s$) features whose correlation with the label varies from domain to domain, and a noise component as shown below for domain $d$:
\begin{align}\label{eqn:csd:synth-example}
    \vx = y\ve_c + y\beta_d \ve_s + \mathcal{N}(0,\Sigma_d) \in \mathbb{R}^m
\end{align}
where $y = \pm 1$ with equal probability, $e_c \in \R^m, e_s\perp e_c\in \R^m$. 
Suppose for each train domain $d$, $\beta_d \sim \textrm{Unif}\left[-1,2\right]$. Note that though $\beta_{d}$ vary from positive to negative across various domains, there is a net positive correlation between $\vx$ and the label $y$. 
$\mathcal{N}(0,\Sigma_d)$ denotes a  standard normal random variable with mean zero and covariance matrix $\Sigma_d$. Since $\Sigma_d$ varies across domains, every feature  
captures some domain information. 
Since during the test time, there is the possibility of seeing $\beta_d \notin [-1,2]$, the only classifier that generalizes to new domains is the one that depends solely on $\vx_c$.\footnote{Note that this last statement relies on the assumption that $e_c \perp e_s$. 
If this is not the case, the correct domain generalizing classifier is the component of $e_c$ that is orthogonal to $e_s$ i.e., $e_c - \frac{\iprod{e_c}{e_s}}{\norm{e_s}^2} \cdot e_s$. See~\eqref{eq:proj}.} 
Consider training a linear classifier on this dataset. We will describe the issues faced by existing domain generalization methods.

\begin{itemize}
    \item {\textbf{Empirical risk minimization (ERM)}: When we empirically train a linear classifier using ERM with cross entropy loss on all of the training data,
    the resulting classifier puts significant nonzero weight on the domain-specific  component $e_s$. The reason for this is that there is a bias in the training data which gives an overall positive correlation between $e_s$ and the label. Figure~\ref{fig:csd:dg_example} illustrates the same with an example.}
    \item {\textbf{Domain erasure}~\citep{Ganin16}: Domain erasure methods seek to extract features that have the \emph{same distribution} across different domains and construct a classifier using those features. However, since the noise covariance matrix in~\eqref{eqn:csd:synth-example} varies across domains, none of the features have the \emph{same distribution} across domains. We further empirically verified that domain erasure methods do not improve much over ERM.
    }
    \item {\textbf{Meta-learning}:
    Meta-learning based DG approaches such as \citep{DouCK19} work with pairs of domains. Parameters updated using gradients on loss of one domain, when applied on samples of both domains in the pair should lead to similar class distributions.  If the method used to detect similarity is robust to domain-specific noise, meta-learning methods could work well in this setting.  
    But meta-learning methods could be expensive to implement in practice either because they require the second order gradient updates or because they have a quadratic dependence on the number of domains and thereby not scale well to a large number of train domains. 
    }
    \item {\textbf{decomposition-based approaches}~\citep{ECCV12_Khosla,LiYSH17} assume the domain-specific  classifier for each domain is composed of common component that generalizes to any domain and a specific component that transfers poorly to other domains. They then isolate the common generalizing component as the domain generalizing solution. We shall discuss these in more detail later.} 
\end{itemize}

\noindent
Existing decomposition-based approaches~\citep{ECCV12_Khosla,LiYSH17} rely on the observation that for problems like~\eqref{eqn:csd:synth-example}, there exist a good domain-specific  classifiers $w_d$, one for each domain $d$, such that:
    $\tilde{w}_d = e_c + \gamma_d e_s$,
where $\gamma_d$ is a function of $\beta_d$. Note that all these domain-specific  classifiers share the common component $e_c$ which is the domain generalizing classifier that we are looking for! If we are able to find domain-specific  classifiers of this form,
we can extract $e_c$ from them. This idea can be extended to a generalized version of~\eqref{eqn:csd:synth-example}, where the latent dimension of the domain space is $k$ i.e., say
\begin{align}\label{eqn:csd:synth-example-k}
    \vx = y(\ve_c + \sum_{j=1}^k \beta_{d,j} \ve_{s_j}) + \mathcal{N}(0,\Sigma_d).
\end{align}
$e_{s_j} \perp e_c \in \R^m$, {and $e_{s_j} \perp e_{s_\ell}$ for $j,\ell \in [k], j \neq \ell$} are domain-specific  features whose correlation with the label, given by the coefficients $\beta_{d,j}$, varies from domain to domain. In this setting, there exist good domain-specific  classifiers $\tilde{w}_d$ such that:
    $\tilde{w}_d = \ve_c + E_s \gamma_d$,
where $\ve_c \in \R^m$ is a domain generalizing classifier, $E_s = \begin{bmatrix} \ve_{s_1} & \ve_{s_2} & \cdots & \ve_{s_k} \end{bmatrix} \in \R^{m \times k}$ consists of domain-specific  components
and $\gamma_d \in \R^k$ is a domain-specific  combination of the domain-specific  components that depends on $\beta_{d,j}$ for $j=1,\cdots,k$. 
With this observation, the algorithm is simple to state: train domain-specific  classifiers $\tilde{w}_d$ that can be represented as
    $\tilde{w}_d = w_c + W_s \gamma_i \in \R^m$.
Here the training variables are $w_c \in \R^m, W_s \in \R^{m \times k}$ and $\gamma_d \in \R^k$. After training, discard all the domain-specific  components $W_s$ and $\gamma_d$ and return the common classifier $w_c$.
Note that this can equivalently be written as
\begin{align}\label{eqn:csd:classifier-decomp}
    W = w_c \trans{\ones} + W_s \trans{\Gamma},
\end{align}
where $W \defeq \begin{bmatrix} \tilde{w}_1 & \tilde{w}_2 & \cdots & \tilde{w}_D \end{bmatrix}$, $\ones \in \R^D$ is the all ones vector and $\trans{\Gamma} \defeq \begin{bmatrix} \gamma_1 & \gamma_2 & \cdots & \gamma_D \end{bmatrix}$. With a slight abuse of notation, we use $D$ to denote the number of training domains hereafter. 

This framing of the decomposition approach, in the context of simple examples as in~\eqref{eqn:csd:synth-example} and~\eqref{eqn:csd:synth-example-k}, lets us understand the three main aspects that are not properly addressed by prior works: 1) identifiability of $w_c$, 2) choice of $k$ and 3) extension to non-linear models such as neural networks.

\noindent
\textbf{Identifiability of the common component $w_c$}: None of the prior decomposition-based approaches investigate identifiability of $w_c$. In fact, given a general matrix $W$ which can be written as $w_c \trans{\ones} + W_s \trans{\Gamma}$, there are multiple ways of decomposing $W$ into this form, so $w_c$ cannot be uniquely determined by this decomposition alone. For example, given a decomposition~\eqref{eqn:csd:classifier-decomp}, for any $(k+1) \times (k+1)$ invertible matrix $R$, we can write $W = \begin{bmatrix} w_c & W_s \end{bmatrix} R^{-1} R \trans{\begin{bmatrix} \ones & \Gamma \end{bmatrix}}$. As long as the first row of $R$ is equal to $\begin{bmatrix}1 & 0 & \cdots & 0 \end{bmatrix}$, the structure of the decomposition~\eqref{eqn:csd:classifier-decomp} is preserved while $w_c$ might no longer be the same. Out of all the different $w_c$ that can be obtained this way, which one is the \emph{correct domain generalizing classifier}? In the setting of~\eqref{eqn:csd:synth-example-k}, where $e_c \perp E_s$, we saw that the correct domain generalizing classifier is $w_c = e_c$. In the setting where $e_c \not\perp E_s$,
the correct domain generalizing classifier is the projection of $e_c$ onto the space orthogonal to $\Span{E_s}$ i.e.,
\begin{align}
\label{eq:proj}
    w_c = e_c - P_{E_s} e_c,
\end{align}
where $P_{E_s}$ is the projection matrix onto the span of the domain-specific  vectors $e_s$. The next lemma shows that~\eqref{eq:proj} is equivalent to $w_c \perp \Span{W_s}$ and so we train classifiers~\eqref{eqn:csd:classifier-decomp} satisfying this condition.
\begin{lemma}\label{lem:char}
Suppose $W \defeq e_c \trans{\ones} + E_s \trans{\hat{\Gamma}} = w_c \trans{\ones} + W_s \trans{\Gamma}$ is a rank-$(k+1)$ matrix, where $E_s \in \R^{m \times k}, \hat{\Gamma} \in \R^{D \times k}, W_s \in \R^{m \times k}$ and $\Gamma \in \R^{D \times k}$ are all rank-$k$ matrices with $k < m, D$. Then, $w_c = e_c - P_{E_s} e_c$ if and only if $w_c \perp \Span{W_s}$.
\begin{proof}[Proof of Lemma~\ref{lem:char}]
\textbf{If direction}: 
Suppose $w_c \perp \Span{W_s}$. Then, $\trans{W} w_c = \iprod{e_c}{w_c} \cdot \ones + \hat{\Gamma} \cdot \left( \trans{E_s} w_c\right) = \norm{w_c}^2 \cdot \ones$. Since $W$ is a rank-$(k+1)$ matrix, we know that $\ones \notin \Span{\hat{\Gamma}}$ and so it has to be the case that $\iprod{e_c}{w_c} = \norm{w_c}^2$ and $\trans{E_s} w_c=0$. Both of these together imply that $w_c$ is the projection of $e_c$ onto the space orthogonal to $E_s$ i.e., $w_c = e_c - P_{E_s} e_c$.

\noindent
\textbf{Only if direction}: Let $w_c = e_c - P_{E_s} e_c$. Then $e_c \trans{\ones} - w_c \trans{\ones} + E_s \trans{\hat{\Gamma}} = P_{E_s} e_c \trans{\ones} + E_s \trans{\hat{\Gamma}}$ is a rank-$k$ matrix and can be written as $W_s \trans{\Gamma}$ with $\Span{W_s} = \Span{E_s}$. Since $w_c \perp \Span{E_s}$, we also have $w_c \perp \Span{W_s}$.
\end{proof}
\end{lemma}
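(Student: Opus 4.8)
The plan is to prove the two directions separately, after first extracting the structural consequences of the rank hypothesis, which will turn out to be essential. Writing $W = \begin{bmatrix} e_c & E_s\end{bmatrix}\trans{\begin{bmatrix}\ones & \hat{\Gamma}\end{bmatrix}}$, the assumption $\Rank{W}=k+1$ forces both factors to have full rank $k+1$; in particular $e_c \notin \Span{E_s}$ and $\ones \notin \Span{\hat{\Gamma}}$. Applying the same reasoning to the second factorization gives $w_c \notin \Span{W_s}$ (so $w_c \neq 0$) and shows that the column space of $W$ equals both $\Span{e_c, E_s}$ and $\Span{w_c, W_s}$. I will use the consequence $w_c\in\Span{e_c, E_s}$ crucially in the forward direction.

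For the only-if direction I would assume $w_c = e_c - P_{E_s}e_c$ and subtract $w_c\trans{\ones}$ from both expressions for $W$. Since $e_c - w_c = P_{E_s}e_c \in \Span{E_s}$, the matrix $W - w_c\trans{\ones} = P_{E_s}e_c\,\trans{\ones} + E_s\trans{\hat{\Gamma}}$ has all its columns in $\Span{E_s}$; but it also equals $W_s\trans{\Gamma}$, whose column space is exactly $\Span{W_s}$ because $W_s$ has full column rank. Matching a $k$-dimensional column space sitting inside $\Span{E_s}$ gives $\Span{W_s}=\Span{E_s}$, and then $w_c\perp\Span{E_s}$ immediately yields $w_c\perp\Span{W_s}$.

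For the if direction I would start from $\trans{W_s}w_c = 0$ and evaluate $\trans{W} w_c$ using each factorization. The second gives $\trans{W} w_c = \norm{w_c}^2\ones$, while the first gives $\iprod{e_c}{w_c}\ones + \hat{\Gamma}(\trans{E_s}w_c)$. Equating the two isolates $\hat{\Gamma}(\trans{E_s}w_c) = (\norm{w_c}^2 - \iprod{e_c}{w_c})\ones$; since the left side lies in $\Span{\hat{\Gamma}}$ while $\ones\notin\Span{\hat{\Gamma}}$, both sides must vanish, yielding $\iprod{e_c}{w_c}=\norm{w_c}^2$ and, since $\hat{\Gamma}$ has full column rank, $\trans{E_s}w_c = 0$.

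The main obstacle, and the step I expect to require the most care, is that $w_c\perp\Span{E_s}$ together with $\iprod{e_c}{w_c}=\norm{w_c}^2$ is not by itself enough to conclude $w_c = e_c - P_{E_s}e_c$ (these two conditions admit other solutions in general). Here I would invoke the structural fact $w_c\in\Span{e_c, E_s}$ established at the outset: writing $w_c = \alpha e_c + E_s\beta$ and imposing $\trans{E_s}w_c=0$ solves the $E_s$-component in terms of the $e_c$-component and forces $w_c = \alpha(e_c - P_{E_s}e_c)$ for a scalar $\alpha$. Substituting this into $\iprod{e_c}{w_c}=\norm{w_c}^2$ and using $\iprod{e_c}{e_c - P_{E_s}e_c} = \norm{e_c - P_{E_s}e_c}^2$ reduces the identity to $\alpha = \alpha^2$, so $\alpha\in\{0,1\}$; the rank hypothesis rules out $w_c=0$, leaving $\alpha=1$ and hence $w_c = e_c - P_{E_s}e_c$.
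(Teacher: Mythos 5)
Your proof is correct and follows the same skeleton as the paper's. The only-if direction is essentially identical: subtract $w_c\trans{\ones}$, identify the remainder as a rank-$k$ matrix whose column space is $\Span{E_s}=\Span{W_s}$, and conclude. The if direction also begins the same way, computing $\trans{W}w_c$ via both factorizations and using $\ones\notin\Span{\hat{\Gamma}}$ to extract $\iprod{e_c}{w_c}=\norm{w_c}^2$ and $\trans{E_s}w_c=0$. Where you diverge is in the final step, and your version is the more complete one. The paper asserts that these two conditions ``together imply'' $w_c=e_c-P_{E_s}e_c$, but as you correctly observe they do not by themselves: within the orthogonal complement of $\Span{E_s}$, the condition $\iprod{e_c}{w_c}=\norm{w_c}^2$ is equivalent to $\iprod{e_c-w_c}{w_c}=0$, which describes a whole sphere of solutions whenever $m>k+1$. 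What excludes the spurious solutions is precisely the structural fact you establish at the outset --- that the column space of $W$ equals both $\Span{e_c,E_s}$ and $\Span{w_c,W_s}$, hence $w_c=\alpha e_c+E_s\beta$ --- after which $\trans{E_s}w_c=0$ forces $w_c=\alpha\left(e_c-P_{E_s}e_c\right)$, the inner-product condition gives $\alpha\in\{0,1\}$, and $\alpha=0$ is ruled out because $W$ would then have rank $k$. So your proposal closes a genuine (if repairable) gap in the published argument; the lemma remains true because the constraint that $w_c$ lies in the column space of $W$ is always implicitly available from the hypothesis.
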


\noindent
\textbf{Why low rank?}: An important choice in decomposition approaches is the rank in decomposition~\eqref{eqn:csd:classifier-decomp}, which in prior works was justified heuristically, by appealing to number of parameters.
We prove the following result, which gives us a more principled reason for the choice of low rank parameter $k$.
\begin{theorem}\label{thm:csd:main}
Given any matrix $W \in \R^{m \times D}$, the minimizers of the function $f(w_c, W_s, \Gamma) = \frob{W - w_c \trans{\ones} - W_s \trans{\Gamma}}^2$, where $W_s \in \R^{m \times k}$ and $w_c \perp \textrm{Span}\left(W_s\right)$ can be computed by a simple SVD based algorithm (Algorithm~\ref{alg:csd:svdbased} in Appendix~\ref{app:csd:proof}). In particular, we have:
\begin{itemize}
    \item For $k=0$, $w_c = \frac{1}{D} W \cdot \ones$, and
    \item for $k=D-1$, $w_c = W^+ \ones/\norm{W^+ \ones}^2$.
\end{itemize}
\end{theorem}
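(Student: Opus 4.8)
The plan is to peel the optimization apart in the order $\Gamma$, then $W_s$ (equivalently its column space), then $w_c$, reducing everything to a principal-subspace problem that the SVD solves in closed form. Write $S \defeq \Span{W_s}$, a subspace of dimension at most $k$, and let $P_S$ and $P_S^\perp = I - P_S$ be the orthogonal projectors onto $S$ and $S^\perp$. The constraint $w_c \perp \Span{W_s}$ means precisely $w_c \in S^\perp$. The first observation I would make is that, as $\Gamma$ and the basis of $W_s$ vary, the matrix $W_s\trans{\Gamma}$ ranges over \emph{all} matrices whose columns lie in $S$, freely and independently per column. Hence for a fixed $S$ and fixed $w_c \in S^\perp$, the best $W_s\trans{\Gamma}$ simply cancels the $S$-component of each column of $W - w_c\trans{\ones}$, leaving residual $\frob{P_S^\perp(W - w_c\trans{\ones})}^2 = \frob{P_S^\perp W - w_c\trans{\ones}}^2$, where the last step uses $P_S^\perp w_c = w_c$.

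What remains, $\min_{w_c \in S^\perp}\frob{P_S^\perp W - w_c\trans{\ones}}^2$, is exactly the $k=0$ instance applied to the projected matrix $P_S^\perp W$, so the optimal common vector is the mean column $w_c = \tfrac1D P_S^\perp W\ones$ (which automatically lies in $S^\perp$), with optimal value $\frob{P_S^\perp W}^2 - \tfrac1D\norm{P_S^\perp W\ones}^2$. Expanding this using $\frob{W}^2 = \frob{P_S W}^2 + \frob{P_S^\perp W}^2$, and the analogous split for $\norm{P_S^\perp W\ones}^2$, I would rewrite it as a constant minus $\Tr\!\big(P_S\, W J \trans{W}\big)$, where $J \defeq I - \tfrac1D \ones\trans{\ones}$ is the centering matrix. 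Thus minimizing $f$ over everything is equivalent to \emph{maximizing} $\Tr(P_S C)$ over $k$-dimensional subspaces $S$, where $C \defeq W J \trans{W} = (WJ)\trans{(WJ)} \succeq 0$.

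Because $C$ is positive semidefinite, the Ky Fan (Rayleigh--Ritz) theorem finishes the general case: the maximizer $S^\star$ is the span of the top-$k$ eigenvectors of $C$, equivalently the top-$k$ left singular vectors of the column-centered matrix $WJ = W - \bar w\trans{\ones}$ with $\bar w \defeq \tfrac1D W\ones$. This is precisely the advertised SVD-based procedure (center $W$, take the leading $k$ left singular vectors as $W_s$, and set $w_c = \bar w - P_{S^\star}\bar w$), and for $k=0$ it gives $P_{S^\star}=0$, hence $w_c = \tfrac1D W\ones$. For $k = D-1$ I would use that $J$ has rank $D-1$, so $C$ has rank at most $D-1$ and (generically, when $\Rank{W} = D \le m$) exactly $D-1$; then $S^\star = \Span{WJ}$ exhausts the range of $C$, the residual vanishes, and the decomposition is exact. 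The optimum $w_c = P_{S^\star}^\perp\bar w$ satisfies $w_c \perp \Span{WJ}$, i.e. $J\trans{W}w_c = 0$, so $\trans{W}w_c \in \Span{\ones}$; and since every column of $W$ equals $w_c + W_s\gamma_d$ we have $w_c \in \Span{W}$. Writing $w_c = Wa$ and using $\trans{W}w_c = \norm{w_c}^2\ones$ (which holds because $w_c\perp\Span{W_s}$, as in Lemma~\ref{lem:char}) gives $a = \norm{w_c}^2(\trans{W}W)^{-1}\ones$; normalizing then pins the scale and yields $w_c = (\trans{W})^{+}\ones/\norm{(\trans{W})^{+}\ones}^2$, the stated formula.

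The hard part will be the reduction in the first two paragraphs, not the SVD step: I must justify carefully that eliminating $\Gamma$ and $W_s$ for a fixed column space $S$ produces \emph{exactly} the projected residual (so the nested minimizations genuinely decouple and no cross terms survive), and then algebraically coax the result into the $\Tr(P_S C)$ form in which the centering matrix $J$ emerges. Once $C$ is recognized as positive semidefinite the maximization is routine; the only remaining care is the genericity/rank hypothesis needed to collapse the $k=D-1$ case to an exact decomposition and the scale-normalization of $w_c$ there.
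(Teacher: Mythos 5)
Your proof is correct, but it reaches the result by a genuinely different route than the paper. The paper's proof recasts the problem as a constrained low-rank approximation --- minimize $\frob{W-\Wtilde}^2$ over $\Wtilde$ with $\Rank{\Wtilde}\le k+1$ and $\ones\in\Span{\trans{\Wtilde}}$ --- and then reruns the Eckart--Young--Mirsky argument (separately in operator and Frobenius norm, via singular-value comparison inequalities) to show that ``subtract the column mean, then take the top-$k$ SVD'' is optimal; the orthogonality $w_c\perp\Span{W_s}$ is only restored afterwards by the pseudoinverse post-processing step, using the uniqueness argument of Lemma~\ref{lem:char}. You instead eliminate variables in stages: for a fixed column space $S=\Span{W_s}$ the inner minimization over $\Gamma$, $W_s$ and $w_c\in S^\perp$ collapses in closed form to a constant minus $\Tr\bigl(P_S\,(WJ)\trans{(WJ)}\bigr)$ with $J=I-\tfrac1D\ones\trans{\ones}$, and Ky Fan/Rayleigh--Ritz then identifies $S^\star$ as the top-$k$ left singular subspace of the centered matrix $W-\bar w\trans{\ones}$. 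This buys you two things: the decoupling of the nested minimizations is explicit (no need to verify the equivalence with the rank-$(k{+}1)$, $\ones$-in-row-space formulation, which is itself slightly delicate), and the optimizer emerges already in orthogonalized form $w_c=P_{S^\star}^\perp\bar w$, so the paper's third and fourth algorithm steps become a corollary rather than a separate argument. What the paper's route buys in exchange is the additional operator-norm optimality statement and the direct link to the classical approximation theorem. Your derivations of the two endpoint cases agree with the paper's (your $(\trans{W})^+\ones$ is the type-correct reading of the paper's $W^+\ones$), and both arguments need the same implicit full-rank/genericity hypothesis to make the $k=D-1$ pseudoinverse formula well-defined, which you flag explicitly.
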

The proof of this theorem is similar to that of the classical low rank approximation theorem of Eckart-Young-Mirsky, and is presented in Appendix~\ref{app:csd:proof}.
When $W = w_c \trans{\ones} + W_s \Gamma + N$, where $N$ is a noise matrix (for example due to finite samples), both extremes $k=0$ and $k=D-1$ have different advantages/drawbacks:
\begin{itemize}
    \item \textbf{$k=0$}: Averaging effectively reduces the noise component $N$ but ends up retaining some domain-specific  components if there is net correlation with the label in the training data, similar to ERM.
    \item \textbf{$k=D-1$}: The pseudoinverse effectively removes domain-specific  components and retains only the common component.
    However, the pseudoinverse does not reduce noise to the same extent as a plain averaging would (since empirical mean is often asymptotically the best estimator for mean).
\end{itemize}
In general, the sweet spot for $k$ lies between $0$ and $D-1$ and its precise value depends on the dimension and magnitude of the domain-specific  components as well as the magnitude of noise.
In our implementation, we perform cross validation to choose a good value for $k$ but also note that the performance of our algorithm is relatively stable with respect to this choice in Section~\ref{sec:csd:ablation}.

\noindent
\textbf{Extension to neural networks}:
Finally, prior works extend parameter decomposition to non-linear models such as neural networks by imposing decomposition of the form of~\eqref{eqn:csd:classifier-decomp} for parameters in all layers separately. This increases the size of the model significantly and leads to worse generalization performance. Further, it is not clear whether any of the insights we gained above for linear models continue to hold when we include non-linearities and stack them together. So, we propose the following two simple modifications instead:
\begin{itemize}
    \item enforcing~\eqref{eqn:csd:classifier-decomp} only in the final linear layer, to replace the standard single softmax layer, and
    \item including a loss term for predictions of common component, in addition to the domain-specific  losses,
\end{itemize}
both of which encourage learning of \emph{features} with common-specific structure.

Our experiments (Section~\ref{sec:csd:ablation}) show that these modifications (orthogonality, changing only the final linear layer and including common loss) are instrumental in making decomposition methods state of the art for domain generalization. Our overall training algorithm follows below.

\subsection{Algorithm}
Our method of training neural networks for domain generalization appears as Algorithm~\ref{alg:csd:mos} and is called CSD for {\mosdescr} The analysis above was for the binary setting, but we present the algorithm for the multi-class case with  $\numC=$ \# classes.
The only extra parameters that CSD requires, beyond normal feature parameters $\theta$ and softmax parameters $w_c \in \R^{C \times m}$, are the domain-specific low-rank parameters $W_s \in \R^{C \times m \times k}$  and $\gamma_d \in \R^k$, for $d \in [D]$. Here $m$ is the representation size in the penultimate layer. Thus, $\trans{\Gamma}=[\gamma_1,\ldots,\gamma_D]$ can be viewed as a domain-specific embedding matrix of size $k \times D$.  Note that unlike a standard mixture of softmax, the $\gamma_d$ values are not required to be on the simplex.  
Each training instance consists of an input $x$, true label $y$, and a domain identifier $d$ from 1 to $D$.  Its domain-specific softmax parameter is computed by $w_d = w_c + W_s \gamma_d$.

Instead of first computing the full-rank parameters and then performing SVD (Thm~\ref{thm:csd:main}), we directly compute the low-rank decomposition along with training the network parameters $\theta$. 
For this we add a weighted combination of these three terms in our training objective:

\noindent
(1) Orthonormality regularizers to make $w_c[y]$ orthogonal to domain-specific $W_{s}[y]$ softmax parameters for each label $y$ and to avoid degeneracy by  controlling the norm of each softmax parameter to be close to 1.
    
\noindent
(2) A cross-entropy loss between $y$ and distribution computed from the $w_d$ parameters to train both the common and low-rank domain-specific parameters.
    
\noindent
(3) A cross-entropy loss between $y$ and distribution computed from the $w_c$ parameters. This loss might appear like normal ERM loss but when coupled with the orthogonality regularizer above it achieves domain generalization.

\begin{algorithm}[htb]
\caption{Common-Specific Low-Rank Decomposition (\mos\ ) }
\label{alg:csd:mos}
\begin{algorithmic}[1]
\State {\bf Given:} $D,m,\rank,\numC,\lambda,\kappa$,train-data
\State {Initialize params $w_c \in \R^{\numC\times m}, W_s \in \R^{\numC\times m \times k}$}
\State {Initialize  $\gamma_d \in \R^\rank: d \in [D]$}
\State {Initialize params $\theta$ of feature network $G_\theta:{\mathcal X} \mapsto \R^m$}
\State{$\hat{W} = [w_c^T, W_s^T]^T $}
\State{$\mathcal{R} \gets \sum_{y=1}^\numC \|I_{k+1}-\hat{W}[y]^T\hat{W}[y]\|_F^2$} 
     \Comment{Orthonormality constraint}
\For{$(x,y,d) \in $ train-data}
   \State{$w_d \gets w_c + W_s \gamma_d$}
    \State{loss $ \mathrel{+}=  \mathcal{L}(G_\theta(x), y; w_d) + \lambda  \mathcal{L}(G_\theta(x), y; w_c)$}
\EndFor

\State{Optimize loss$+\kappa \mathcal{R}$ wrt $\theta,w_c,W_s,\gamma_d$}
\State{\bf Return $\theta,w_c$ \Comment{for inference}}
  \end{algorithmic}
\end{algorithm}

\subsection{Why does \mos\ work?}
\begin{figure*}[htb]
  \centering
  \begin{subfigure}[b]{0.42\textwidth}
    \includegraphics[width=\textwidth]{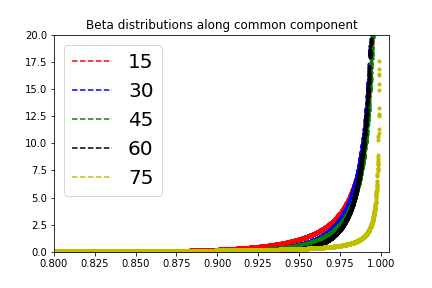}
    \caption{Beta fit on estimated probabilities of correct class using common component.}
    \label{fig:csd:rmnist:1}
 \end{subfigure}
  \qquad
  \begin{subfigure}[b]{0.42\textwidth}
    \includegraphics[width=\textwidth]{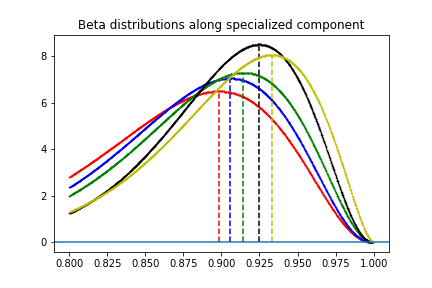}
    \caption{Beta fit on estimated probabilities of correct class using specialized component.}
    \label{fig:csd:rmnist:2}
  \end{subfigure}
  \caption{Distribution of probability assigned to the correct class using common or specialized components alone.}
  \label{fig:csd:rmnist}
\end{figure*}

We provide empirical evidence in this section that \mos\ effectively decomposes common and low-rank specialized components. Consider the Rotated-MNIST task trained on ResNet-18 as discussed in Section~\ref{sec:crossgrad:why}. Since each domain differs only in the amount of rotation, we expect $W_s$ to be of rank 1 and so we chose $k=1$ giving us one common and one specialized component. We are interested in finding out if the common component is agnostic to the domains and see how the specialized component varies across domains. 

We look at the probability assigned to the correct class for all the train instances using only common component $w_c$ and using only specialized component $W_s$. For probabilities assigned to examples in each domain using each component, we fit a Beta distribution. Shown in Figure~\ref{fig:csd:rmnist:1} is fitted beta distribution on probability assigned using $w_c$ and Figure~\ref{fig:csd:rmnist:2} for $w_s$ (the only specialized component). Note how in Figure~\ref{fig:csd:rmnist:1}, the colors are largely overlapping. However in Figure~\ref{fig:csd:rmnist:2}, notice how modes corresponding to each domain are widely spaced, moreover the order of modes and spacing between them cleanly reflects the underlying degree of rotation from 15\degree\ to 75$\degree$. 

These observations support our claims on utility of \mos\ for low-rank decomposition.

\section{Experiments}
\label{sec:crossgrad:Expt}
We compare {\crossgrad}, CSD with contemporary and strong domain generalization methods: (1)~{\bf ERM}, Standard training with the empirical risk that discards domain annotations (2)~{\bf MASF}~\citep{DouCK19} is a recently proposed meta-learning based strategy to learn domain-invariant features, and (3)~{\bf LRD}: the low-rank decomposition approach of \citet{LiYSH17} but only at the last softmax layer. 
We also compare with other strong contemporary methods: JiGen~\citep{CarlucciAS2019} and EpiFCR~\citep{LiZY2019} on some of the datasets. 

We evaluate on six different datasets spanning image and speech data types and varying number of training domains. We assess quality of domain generalization as accuracy on a set of test domains that are disjoint from the set of training domains.

\noindent
\textbf{Experiment setup details:}
We use ResNet-18 to evaluate on rotated image tasks, LeNet for Handwritten Character datasets, and a multi-layer convolution network for speech tasks. 
We added a layer normalization just before the final layer in all these networks since it helped generalization error on all methods, including the baseline. \mos\ is relatively stable to hyper-parameter choice, we set the default rank (k) to 1, and parameters of weighted loss to $\lambda=1$ and $\kappa=1$. These hyper-parameters along with learning rates of all other methods as well as number of meta-train/meta-test domains for MASF and step size of perturbation in \crossgrad{} are all picked using a task-specific development set.  

\subsection{Overall Comparison}
\noindent
\textbf{Handwritten character datasets:}
In these datasets we have characters  written by many different people, where the person writing qualifies a domain and generalizing to new writers is a natural requirement. Handwriting datasets are challenging since it is difficult to disentangle a person's writing style from the character (label), thereby domain-invariant and some of the meta-learning methods that attempt to erase domains are unlikely to succeed.
We have two such datasets.

\noindent
(1) The LipitK dataset\footnote{\url{http://lipitk.sourceforge.net/datasets/dvngchardata.htm}} is a Devanagari Character dataset which has classification over 111 characters (label) collected from 106 people (domain). We train three different models on each of 25, 50, and 76 domains, and test, validate on a disjoint set of 20, 10 domains.

\noindent
(2) Nepali Hand Written Character Dataset (NepaliC)\footnote{\url{https://www.kaggle.com/ashokpant/devanagari-character-dataset}} contains data collected from 41 different people on consonants as the character set which has 36 classes. Since the number of available domains is small, in this case we create a fixed split of 27 domains for training, 5 for validation and remaining 9 for testing. 

\noindent
We use LeNet as the base classifier on both the datasets. 

\begin{table*}[htb]
    \centering
    \begin{tabular}{|l|lll|l|}
    \hline
     & \multicolumn{3}{|c|}{LipitK} & \multicolumn{1}{c|}{NepaliC} \\
    \hline
Method & 25 & 50 & 76 & 27 \\
\hline
ERM~(Baseline) & 74.5 (0.4) & 83.2 (0.8) & 85.5 (0.7) & 83.4 (0.4) \\
LRD~\citep{LiYSH17} & 76.2 (0.7) & 83.2 (0.4) & 84.4 (0.2) & 82.5 (0.5) \\
MASF~\citep{DouCK19} & {\bf 78.5} (0.5) & 84.3 (0.3) & 85.9 (0.3) & 83.3 (1.6) \\
\crossgrad{} & 75.3 (0.5) & 83.8 (0.3) & 85.5 (0.3) & 82.6 (0.5) \\
\mos & 77.6 (0.4) & {\bf 85.1} (0.6) & {\bf 87.3} (0.4) & {\bf 84.1} (0.5) \\

    \hline
    
    \end{tabular} 
    \caption{Comparison of our method on two handwritting datasets: LipitK and NepaliC.  For LipitK, since the number of available training domains is large we also report results with increasing number of domains. The numbers are average (and standard deviation) from three runs. }
    \label{tab:csd:image}
\end{table*}

In Table~\ref{tab:csd:image} we show the accuracy using different methods for different number of training domains on the LipitK dataset, and on the NepaliC dataset. \crossgrad{} improves over ERM somewhat consistently but its gains diminish quickly with increasing number of train domains. We observe that across all four models \mos\ provides significant gains in accuracy over the baseline (ERM), and all three existing methods LRD, \crossgrad{} and MASF.  The gap between prior decomposition-based approach (LRD) and ours, establishes the importance of our orthogonality regularizer and common loss term.   MASF is better than \mos\ only for 25 domains and as the number of domains increases to 76, \mos's accuracy is 87.3 whereas MASF's is 85.9.  

\noindent
\textbf{PACS}:
PACS\footnote{\url{https://domaingeneralization.github.io/}} is a popular domain generalization benchmark. The dataset in aggregate contains around 10,000 images from seven object categories collected from four different sources: Photo, Art, Cartoon and Sketch. Evaluation using this dataset trains on three of the four sources and tests on the left out domain. This setting is challenging since it tests generalization to a radically different target domain. 

We present comparisons in Table~\ref{tab:csd:expt:pacs} of our method with JiGen~\citep{CarlucciAS2019}, EpiFCR~\citep{LiZY2019} and ERM baseline. In order for a fair comparison, we use the implementation of JiGen and made comparisons with only those methods that have provided numbers using a consistent implementation. We report numbers for the case when the baseline network is ResNet-18 and AlexNet. We perform almost the same or slightly better than image specific JiGen. We stay away from comparing with MASF~\citep{DouCK19}, EpiFCR on AlexNet, since their reported numbers from different implementation have different baseline number.

\begin{table}[htb]
    \centering
    \begin{tabular}{|l|r|r|r|r|r|}
    \hline
    Alg. & Photo & Art & Cartoon & Sketch & Avg. \\
    \hline
    & \multicolumn{5}{|c|}{ResNet-18}\\\hline
         ERM & 95.7 & 77.8 & 74.9 & 67.7 & 79.0 \\
         JiGen & 96.0 & 79.4 & 75.2 & 71.3 & 80.5 \\
         EpiFCR & 93.9 & 82.1 & 77.0 & 73.0 & 81.5 \\
         CSD & 94.1 (0.2) & 78.9 (1.1) & 75.8 (1.0) & 76.7 (1.2) & 81.4 (0.3) \\
         \hline
        & \multicolumn{5}{|c|}{AlexNet}\\\hline
        ERM & 90.0 & 66.7 & 69.4 & 60.0 & 71.5 \\
        JiGen & 89.0 & 67.6 & 71.7 & 65.2 & 73.4 \\
        CSD & 90.2 (0.2) & 68.3 (1.2) & 69.7 (0.3) & 63.4 (1.8) & 72.9 (0.6) \\
         \hline
    \end{tabular}\hfill
    \caption{Comparison of CSD with  JiGen~\citep{CarlucciAS2019} and EpiFCR~\citep{LiZY2019} on PACS datset with ResNet-18 and AlexNet architecture. The header of each column identifies the target domain with last column showing average accuracy. Also shown are the standard deviation for CSD in parenthesis.}
    \label{tab:csd:expt:pacs}
\end{table}

\begin{table}[htb]
    \centering
    \begin{tabular}{|l|r|r|r|r|}
    \hline
    Method & 50 & 100 & 200 & 1000 \\
    \hline
    ERM & 72.6 (.1) & 80.0 (.1) & 86.8 (.3) & 90.8 (.2) \\
    \crossgrad{} & 73.3 (.1) & 80.4 (.0) & 86.9 (.4) & 91.2 (.2) \\
    \mos & {\bf 73.7} (.1) & {\bf 81.4} (.4) & {\bf 87.5} (.1) & {\bf 91.3} (.2) \\
    \hline
    \end{tabular}
    \caption{Accuracy comparison on speech utterance data with varying number of training domains. The numbers are average (and standard deviation) from three runs.}
    \label{tab:csd:speech}
\end{table}

\noindent
\textbf{Speech utterances dataset:}
We use the utterance data released by Google that was collected from thousands of speakers\footnote{Can be found at this \href{https://ai.googleblog.com/2017/08/launching-speech-commands-dataset.html}{link}}. The base classifier and the preprocessing pipeline for the utterances are borrowed from the implementation provided in the Tensorflow examples\footnote{\href{https://github.com/tensorflow/tensorflow/tree/r1.15/tensorflow/examples/speech_commands}{link}}. We used the default ten (of the 30 total) classes for classification. We use ten percent of the total number of domains for each of validation and test splits. 

The accuracy comparison for each of the methods on varying number of training domains is shown in Table~\ref{tab:csd:speech}. We could not compare with MASF since their implementation is only made available for image tasks.  Also, we skip comparison with LRD since earlier experiments established that it can be worse than even the baseline. From the results shown in Table~\ref{tab:csd:speech}, both \crossgrad{} and \mos{} improve over the ERM baseline consistently with \mos{} better than \crossgrad{}.  

We can also observe from the table that the domain generalization algorithms are most effective when the number of train domains is small. When the number of domains is very large (for example, 1000 in the table), even standard training can suffice since the training domains could {\it cover} unseen test domains.

\begin{table}[htb]
\centering
    \begin{tabular}{|l|l|l|l|l|}
    \hline
     & \multicolumn{2}{|c|}{MNIST} & \multicolumn{2}{|c|}{Fashion-MNIST} \\
     & in-domain & out-domain & in-domain & out-domain \\
    \hline
    ERM & 98.3 (0.0) & 93.6 (0.7) & 89.5 (0.1) & 75.8 (0.7)\\
    MASF & 98.2 (0.1) & 93.2 (0.2) &  86.9 (0.3) & 72.4 (2.9)\\
    \mos & \textbf{98.4} (0.0) & \textbf{94.7} (0.2) & \textbf{89.7} (0.2) & \textbf{78.0} (1.5)\\
    \hline
    \end{tabular}
    \caption{Performance comparison on rotated MNIST and rotated Fashion-MNIST, shown are the in-domain and out-domain accuracies averaged over three runs along with standard deviation in the brackets.}
    \label{tab:csd:toy:results}
\end{table}
\begin{table}[htb]
\centering
    \begin{tabular}{|l|r|r|}
    \hline
        & \multicolumn{2}{|c|}{MNIST} \\
         & in-domain & out-domain \\\hline
         ERM & 97.7 (0.) & 89.0 (.8) \\ 
         MASF & 97.8 (0.) & 89.5 (.6) \\
         \mos\ & \textbf{97.8} (0.) & \textbf{90.8} (.3) \\\hline
    \end{tabular}
    \caption{In-domain and out-domain accuracies on rotated MNIST without batch augmentations. Shown are average and standard deviation from three runs.}
    \label{tab:csd:expt:rmnist:woaug}
\end{table}

\noindent
{\bf Training time:} MASF is 5--10 times slower than \mos, and \crossgrad{} is 3--4 times slower than \mos. In contrast \mos\ is just 1.1 times slower than ERM.  Thus, the increased generalization of \mos\ incurs little additional overheads in terms of training time compared to existing methods. 

\paragraph{Rotated MNIST and Rotated Fashion-MNIST:}
\label{sec:csd:expt:rotation}
Rotated image datasets are a popular tool for evaluating domain generalization where the angle by which images are rotated is the proxy for domain.
We randomly select\footnote{ The earlier work on this dataset however lacks standardization of splits, train sizes, and baseline network across the various papers~\citep{VihariSSS18}~\citep{WangZZ2019}. Hence we rerun experiments using different methods on our split and baseline network.
} 
a subset of 2000 images for Rotated MNIST and 10,000 images for Rotated Fashion-MNIST, the original set of images is considered to have rotated by 0{\degree} and is denoted as $\mathcal{M}_0$. Each of the images in the data split when  rotated by $\theta$ degrees is denoted $\mathcal{M}_\theta$. The training data is union of all images rotated by 15{\degree} through 75{\degree} in intervals of 15{\degree}, creating a total of 5 domains.  We evaluate on $\mathcal{M}_0, \mathcal{M}_{90}$.  In that sense only in this artificially created domains, are we truly sure of the test domains being outside the span of train domains.
Further, we employ batch augmentations such as flip left-right and random crop since they significantly improve generalization error and are commonly used in practice. We train using the ResNet-18 architecture.  

Table~\ref{tab:csd:toy:results} compares the baseline, MASF, and \mos\ on rotated MNIST and rotated Fashion-MNIST. We show accuracy on test set from the same domains as training (in-domain) and test set from 0{\degree} and 90{\degree} that are outside the training domains. 
Note how the \mos's improvement on in-domain accuracy is insignificant, while gaining substantially on out of domain data.  This shows that CSD specifically targets domain generalization.  Surprisingly MASF does not perform well at all, and is significantly worse than even the baseline.  One possibility could be that the domain-invariance loss introduced by MASF conflicts with the standard data augmentations used on this dataset. To confirm, we repeated the experiment again without using batch augmentations in Table~\ref{tab:csd:expt:rmnist:woaug}. We found that although MASF is now better than ERM, the trend of superior performance with CSD continued. 




\paragraph{Hyper-parameter optimization and Validation set:}
The hyper-parameters in our implementation includes algorithm specific hyper-parameter such as rank of decomposition (k) for \mos{} and optimization specific parameters such as number of training epochs, learning rate, optimizer. We use a validation set to pick the optimal values for these parameters. Construction of the validation set varies slightly between each task. 

In handwritten character and speech tasks, we construct validation set from test data of unseen users. The set of users considered for validation are mutually exclusive with users (or domains) of train and test splits. 

In PACS and rotated experiments, the validation set is the validation split of train data thereby the validation set only reflects the performance on the train domains.

\subsection{Ablation Study}
In this section, we evaluate the stability and the effect of model design choices for our algorithms. 
\subsubsection{\crossgrad}
In order to make sure that \crossgrad{} improvements carry to other model architectures, we compare different methods with 2-block ResNet~\citep{he16deepresidual} and LeNet \citep{lenet97} for the Fonts and LipitK dataset in  Table~\ref{tab:crossgrad:arch}. 
Fonts too is a character recognition dataset where different domains or users are simulated using different system fonts. 

For both the datasets, the ResNet model is significantly better than the LeNet model.  But even for the higher capacity ResNet model, \crossgrad\ surpasses the baseline accuracy as well as other methods like \goodfellow{}~\citep{goodfellow14}, DAN~\citep{Ganin16}.
Note that the numbers shown for LipitK dataset in Table~\ref{tab:crossgrad:arch} differ from what is shown in Table~\ref{tab:csd:image} due to implementation differences. Nevertheless Table~\ref{tab:crossgrad:arch} serves to understand the effect of model architecture on \crossgrad{}'s performance. 

\begin{table}[htb]
\begin{center}
\begin{tabular}{|l|r|r|r|r|} \hline
 & \multicolumn{2}{|c|}{Fonts} & \multicolumn{2}{|c|}{LipitK}  \\ \hline
  Method Name          & LeNet & ResNet & LeNet & ResNet \\ \hline
Baseline  &  68.5 &    80.2       & 82.5& 91.5 \\
\dan &  68.9 &      81.1         & 83.8 & 88.5 \\
\goodfellow & 71.4 &   80.5      & 86.3 & 91.8 \\
\crossgrad  & \textbf{72.6} &\textbf{82.4} & \textbf{88.6}  &   \textbf{92.1}\\ \hline
\end{tabular}
\end{center}
\caption{\label{tab:crossgrad:arch} Accuracy with varying model architectures.}
\end{table}

\subsubsection{CSD}\label{sec:csd:ablation}

In this section we study the importance of each of the three terms in CSD's final loss: common loss computed from $w_c$ ($\mathcal{L}_c$), specialized loss  ($\mathcal{L}_s$) computed from $w_i$ that sums common ($w_c$) and domain-specific parameters ($W_s, \Gamma$), orthonormal loss ($\mathcal{R}$) that makes $w_c$ orthogonal to domain-specific softmax (Refer: Algorithm~1). In Table~\ref{tab:csd:ablation}, we demonstrate the contribution of each term to \mos\ loss by comparing accuracy on LipitK with 76 domains.

\begin{table}[htb]
    \centering
    \begin{tabular}{|c|c|c|r|}
    \hline
    Common  & Specialized  & Orthonormality & Accuracy \\
    loss $\mathcal{L}_c$ & loss $\mathcal{L}_s$ & regularizer $\mathcal{R}$ &  \\
    \hline
    Y & N & N & 85.5 (.7) \\
    N & Y & N & 84.4 (.2) \\
    N & Y & Y & 85.3 (.1) \\
    Y & N & Y & 85.7 (.4) \\
    Y & Y & N & 85.8 (.6) \\
    Y & Y & Y & 87.3 (.3) \\
    \hline
    \end{tabular}
    \caption{Ablation analysis on \mos\ loss using LipitK (76)}
    \label{tab:csd:ablation}
\end{table}

The first row is the baseline with only the common loss.  The second row shows prior
decomposition methods that imposed only the specialized loss without any orthogonality or a separate common loss. This is worse than even the baseline (first row).  
This can be attributed to decomposition without identifiability guarantees thereby losing part of $w_c$ when the specialized $W_s$ is discarded. Using orthogonal constraint, third row, fixes this ill-posed decomposition however, understandably, just fixing the last layer does not gain big over baseline. Using both common and specialized loss even without orthogonal constraint showed some merit, perhaps because feature sharing from common loss covered up for bad decomposition. Finally, fixing this bad decomposition with orthogonality constraint and using both common and specialized loss constitutes our \mos\ algorithm and is significantly better than any other variant.

This empirical study goes on to show that both $\mathcal{L}_c$ and $\mathcal{R}$ are important. Imposing $\mathcal{L}_c$ with $w_c$ does not help feature sharing if it is not devoid of specialized components from bad decomposition. A good decomposition on final layer without $\mathcal{L}_c$ does not help generalize much. 

\subsection*{Importance of Low-Rank}
Table~\ref{tab:csd:speech:k} shows accuracy on various speech tasks with increasing k controlling the rank of the domain-specific component.  Rank-0 corresponds to the baseline ERM without any domain-specific part.  We observe that accuracy drops with increasing rank beyond 1 and the best is with $k=1$ when number of domains $D \le 100$.  As we increase D to 200 domains, a higher rank (4) becomes optimal and the results stay stable for a large range of rank values.  This matches our analytical understanding resulting from Theorem~\ref{thm:main} that we will be able to successfully disentangle only those domain-specific components which have been observed in the training domains, and using a higher rank will increase noise in the estimation of $w_c$.


\begin{table}[htb]
    \centering
    \begin{tabular}{|l|r|r|r|}
    \hline
    Rank $k$ & 50 & 100 & 200 \\
    \hline
    0  &   72.6 (.1) & 80.0 (.1) & 86.8 (.3)  \\           
    1 & \textbf{74.1} (.3) &  \textbf{81.4} (.4) & 87.3 (.5) \\
    4 & 73.7 (.1) & 80.6 (.7) &  \textbf{ 87.5} (.1) \\
    9 & 73.0 (.6) & 80.1 (.5) &  \textbf{  87.5} (.2) \\
    24 & 72.3 (.2) & 80.5 (.4) & 87.4 (.3) \\
    \hline
    \end{tabular}
    \caption{Effect of rank constraint (k) on test accuracy for Speech task with varying number of train domains.}
    \label{tab:csd:speech:k}
\end{table}

\section{Related Work}
The work on Domain Generalization is broadly characterized by four major themes:

\paragraph{Domain Erasure}
Many early approaches attempted to repair the feature representations so as to reduce divergence between representations of different training domains. \citet{MuandetBS13} learns a kernel-based domain-invariant representation. \citet{GhifaryBZB15} estimates shared features by jointly learning multiple data-reconstruction tasks.   \citet{Li2018DomainGW} uses MMD to maximize the match in the feature distribution of two different domains. The  idea of domain erasure is further specialized in \citet{WangZZ2019} by trying to project superficial (say textural) features out using image specific kernels. 
Domain erasure is also the founding idea behind many domain adaptation approaches, example~\citep{Ganin16,Ben-David:2006:ARD:2976456.2976474,HoffmanMN18} to name a few.

\paragraph{Augmentation} The idea behind these approaches is to train the classifier with instances obtained by domains hallucinated from the training domains, and thus make the network `ready' for these neighboring domains. \citet{VolpiNSDM2018} extends \crossgrad{} based domain augmentations with only a single domain data.  Another type of augmentation is to simultaneously solve for an auxiliary task.  For example, \citet{CarlucciAS2019} (JiGen) achieves domain generalization for images by solving an auxiliary unsupervised jig-saw puzzle on the side.  

\paragraph{Meta-Learning/Meta-Training}
A recent popular approach is to pose the problem as a meta-learning task, whereby we update parameters using meta-train loss but simultaneously minimizing meta-test loss~\citep{liYY2018},~\citep{BalajiSR2018} or learn discriminative features that will allow for semantic coherence across meta-train and meta-test domains~\citep{DouCK19}.  More recently, this problem is being pursued in the spirit of estimating an invariant optimizer across different domains~ and solved by a form of meta-learning in \citet{ArjovskyLID19}. Meta-learning approaches are complicated to implement, and slow to train.  

\paragraph{Decomposition} In these approaches the parameters of the network are expressed as the sum of a common parameter and  domain-specific parameters during training.  \citet{Daume2007} first applied this idea for domain adaptation.  \citet{ECCV12_Khosla} applied decomposition to DG by retaining only the common parameter for inference. \citet{LiYSH17} extended this work to CNNs where each layer of the network was decomposed into common and specific low-rank components.  
Our work provides a principled understanding of when and why these methods might work and uses this understanding to design an improved algorithm \mos.  Three key differences are:  \mos\ decomposes only the last layer, imposes loss on both the common and domain-specific parameters, and constrains the two parts to be orthogonal.  We show that orthogonality is required for theoretically proving identifiability. 
As a result, this newer avatar of an old decomposition-based approach surpasses recent, more involved augmentation and meta-learning approaches. 

\paragraph{Other} Distributionally robust optimization ~\citep{Sagawa20} techniques deliver robustness to any mixture of the training distributions. This problem can be seen as a specific version of Domain Generalization, whose objective is to provide robustness to any distribution shift including the shift in population of the train domains. \\
There has been an increased interest in learning invariant predictors through a causal viewpoint ~\citep{ArjovskyLID19,AhujaSV20} for better out-of-domain generalization. 

\section{Discussion}
We considered a natural multi-domain setting and looked at how standard classifier could overfit on domain signals and delved on efficacy of several other existing solutions to the domain generalization problem. 
We proposed \crossgrad{}, which provides a new data augmentation scheme based on the label (respectively, domain) predictor using the gradient of the domain (respectively, label) predictor over the input space, to generate perturbations.  \crossgrad\ is most useful when number of training domains is small and do not directly cover test domains well. 

Domain generalization through data augmentation of \crossgrad{}, however, can be limiting because new examples are sampled from a small neighbourhood of original examples. 
Instead of regularizing domain overfitting component implicitly through data augmentation, we developed a new algorithm called CSD that explicitly recovers the domain generalizing classifier. 
CSD decomposes classifier parameters into a common part and a low-rank domain-specific part.  
We presented a principled analysis to provide identifiability results of CSD and analytically studied the effect of rank in trading off domain-specific noise suppression and domain generalization, which in earlier work was largely heuristics-driven. Apart from the proposed algorithms, our contribution also lies in understanding of lack of domain generalization through simple synthetic settings.  
 
While both our algorithms improved domain generalization over standard methods, the problem is far from solved given the large in-domain and out-of-domain performance disparity. We will discuss subsequent research and potential future directions in the next section. 


\subsection*{Subsequent work}
\vspace{-20pt}

\noindent
{\bf Troubling interpretations of domain generalization.}
A significant fraction of the research community interprets the domain generalization problem as to train models that generalize to any domain without qualifying the relation between train and test domains.
For instance, popular datasets such as PACS, VLCS, DomainNet, OfficeHome contain collection of examples from unrelated and distant domains; PACS, DomainNet contain examples from different renditions of an object: sketch, clipart, photo, cartoon etc.; VLCS, OfficeHome contain examples from four different datasets. Since algorithms are evaluated using leave-one-domain-out splits, the nature of domain shifts among the train domains is unrelated to shifts between the train and the test domains. It is unclear though if we can generalize to domain shifts not seen during training. For example, a model trained on data with rotation shifts need not generalize to blur or noise shifts. Unjustified objectives of popular benchmarks may have contributed to perceived stagnation of progress on this problem~\citep{Gulrajani20}.



\noindent
{\bf New benchmarks.}
Several new datasets that better encapsulate the problem goals in the real-world are being released.
Many new datasets evaluate robustness to shifts of a certain kind: common image corruptions~\citep{Hendrycks19}, image renditions~\citep{Hendrycks21}, abstract forms of objects~\citep{Rusak21}, independent and (almost) identically sampled ImageNet test split~\citep{Recht19}. \citet{WILDS20} released WILDS that contain multiple datasets from the real-world, such as dataset for prediction of disease using stained micrographs where color, shape, scale of the micrograph can vary between hospitals. Further work in developing more datasets that are representative of the domain shifts in the wild is much needed. 

\noindent
{\bf Test-time training.} A complementary new paradigm of zero-shot generalization has emerged called test-time training~\citep{tent,arm,ttt++}. These methods, which are shown to be somewhat effective for certain image tasks, adapt parameters to minimize a self-supervised loss on a single example or a batch of examples drawn from an unknown distribution. Albeit, more research is needed on how or why such self-supervised losses can improve robustness. 

\noindent
{\bf Future work.}
Most work in domain generalization focused on loss engineering and data augmentation, but can we solve the problem by continuing to research in this direction?
Can end-to-end training on multi-domain data learn models robust to domain shifts? We will evaluate this question with a historical parallel of training models that are robust to translation shifts in images. Existing approaches to domain generalization would simply characterize the ideal solution in the space of all possible feed-forward networks. Nevertheless, there could be multiple dense networks that explain training data and satisfy any additional solution constraints. Although, the solution with convolutional neural network (CNN) like symmetries is realizable with dense networks of sufficient width, the amount of data or model constraints required to recover it could be impractically large. The search for CNN like solution from multiple zero training error solutions~\citep{zhang21} is like trying to find a needle in a haystack.
In the same spirit, domain generalization problem also attempts to find the ideal hypothesis that is robust to domain shifts from multi-domain data while many approaches only qualify desired attributes of the ideal solution: feature invariance~\citep{Ganin16}, classifier invariance~\citep{ArjovskyIRM19}, decomposition~\citep{LiYSH17,VihariNS2020}. This argument leads us to the sober takeaway that simply engineering loss objectives on multi-domain training data in itself may not realize human-level robustness. We also need innovations on alternate forms of human supervision, model architectures, optimization algorithms, representation learning etc. to train models that are robust to domain shifts.


\chapter{Subpopulation Shift}
\label{chap:cgd}
In the preceding chapter, we studied training algorithms for generalization to any domain drawn from a latent distribution of domains. In this chapter, we consider generalization to any subpopulation of the training distribution. For instance, if the training data contains examples drawn from multiple domains, we wish to generalize to any mixture of the train domains. 
For example, a training dataset could contain portraits collected from different demographics (domains) in proportion (0.9, 0.1), and we wish to generalize to test datasets obtained from any other proportions, say (0.2, 0.8). 

Standard methods based on empirical risk minimization (ERM) could sacrifice generalization on minority training domains in order to achieve high overall (average) generalization performance. 
One of the key reasons for this behavior of ERM is the existence of spurious correlations between labels and some features on majority domains that are either nonexistent, or worse oppositely correlated, on the minority domains~\citep{Sagawa19}. In such cases, ERM exploits these spurious correlations to achieve high accuracy on majority domains, thereby suffering from poor performance on minority domains. Consequently, this is a form of unfairness, where accuracy on minority subpopulation or domain is being sacrificed to achieve high accuracy on majority domains. 
Inspired by ideas from the closely related domain generalization problem, we present a simple training algorithm for subpopulation shift robustness, which explicitly encourages learning of features that are shared across various domains.
Work in this chapter is based on~\citet{Piratla22}.

The subpopulation shift problem, was popularized by~\citet{Sagawa19}, and is well-studied in the literature~\citep{Sagawa19, SagawaExacerbate20, MenonOverparam21, DORO21, Goel20, Bao21}. Among all algorithms proposed for this problem, Group Distributionally Robust Optimization (\dro) is a popular method~\citep{Sagawa19}, which at every update step focuses on the domain with the highest regularized loss. 
While \dro~has been shown to obtain reduction in worst-domain error over ERM on \emph{some} benchmark datasets, it has obvious but practically relevant failure modes such as when different domains have different amounts of label noise. In such a case, \dro~ends up focusing on the domain(s) with the highest amount of label noise, thereby obtaining worse performance on all domains. In fact, on \emph{several other} datasets with subpopulation shift, \dro~performs poorly compared to ERM~\citep{KohWilds20}. 

The key issue with \dro's strategy of just focusing on the domain with the highest training loss is that, it is uninformed of the inter-domain interactions. Consequently, the parameter update using the domain with the highest training loss may increase the loss on other domains. 
In fact, this observation also reveals that {\dro} is not properly addressing the issue of spurious correlations, which have been identified by prior work as a key reason behind the poor performance of ERM~\citep{Sagawa19}.
Inspired by ideas from the closely related problem of \emph{domain generalization}~\citep{CORAL,PiratlaCG,ArjovskyIRM19}, we hypothesize that modeling inter-domain interactions is essential for properly addressing spurious correlations and subpopulation shift problem. More concretely, in each update step, we propose the following simple strategy to decide which domain to train on:

\noindent
{\it Train on that domain whose gradient leads to largest decrease in average training loss over all domains.}

In other words, our strategy focuses on the domain that contributes to the common good.
We call the resulting algorithm \emph{Common Gradient Descent} (\cg). 
We show that \cg~is a sound optimization algorithm as it monotonically decreases the macro/domain-average loss, and consequently finds first order stationary points. \cg's emphasis on common gradients that lead to improvements across all domains, makes it robust to the presence of spurious features, enabling it to achieve better generalization across all domains compared to usual gradient descent.
We present insights on why \cg\ may perform better than \dro\ by studying simple synthetic settings. Subsequently, 
through empirical evaluation on seven real-world datasets---which include two text and five image tasks with a mix of subpopulation and domain shifts---we demonstrate that {\cg} either matches or obtains improved performance over several existing algorithms for robustness that include: ERM, {\pgi}~\citep{Ahmed21}, IRM~\citep{ArjovskyIRM19}, and {\dro}. 
\paragraph{Problem statement}
\label{sec:cgd:problem}
Let $\mathcal{X}$ and $\mathcal{Y}$ denote the input and label spaces respectively.
We assume that the training data comprises of $k$ domains from a set $\mathcal{G}$ where each  $i \in \mathcal{G}$ include $n_i$ instances from a probability distribution $P_i(\mathcal{X}, \mathcal{Y})$. In addition to the label $y_j \in \mathcal{Y}$, each training example $x_j \in \mathcal{X}$ is also annotated by the domain/subpopulation $i \in \mathcal{G}$ from which it comes. 
The domain annotations are available only during training and not during testing time, hence the learned model is required to be domain-agnostic, i.e., it may not use the domain label at test time.
The number of examples $n_i$ could vary in each domain in the train data.
We refer to domains $\{i\}$ with (relatively) large $n_i$ as majority domains and those with small $n_i$ as minority domains. 
We use the terms domain and subpopulation interchangeably. 
%
%
Our goal is to learn a model that performs well on all the domains in $\mathcal{G}$.

Following prior work~\citep{Sagawa19}, we use two metrics to quantify performance: \emph{worst domain accuracy} denoting the minimum test accuracy across all domains $i \in \mathcal{G}$ and \emph{average accuracy} denoting micro averaged test accuracy across \emph{all examples belonging to all domains}. 

We denote with $\ell_i$, the  average loss over examples of domain $i$ using a predictor $f_\theta$ (with parameters $\theta$), $\ell_i(\theta)=\mathbb{E}_{(x, y)\sim P_i(\mathcal{X}, \mathcal{Y})}\mathcal{L}(x, y; f_\theta)$, for an appropriate classification loss $\mathcal{L}$. We refer to the parameters at step `t' of training by $\theta^t$.

\section{CGD: Common Gradient Descent}
\label{sec:cgd:method}
The {\erm} training algorithm learns parameters over the joint risk, which is the population weighted sum of the losses: $\sum_i n_i\ell_i/\sum_i n_i$. However, when there are spurious correlations of certain features in the majority domains, {\erm}'s loss may learn these spurious features. This leads to poor performance on minority domains if these features are either nonexistent or exhibit opposite behavior on the minority domains. A simple alternative, called {\ermuw}, is to reshape the risk function so that the minority domains are up-weighted, for some predefined domain-specific weights $\alpha$.
{\ermuw}, however, could overfit to the up-weighted minority domains. The {\dro} algorithm of \citet{Sagawa19} at any update step trains on the domain with the highest loss, as shown below. 
\begin{align*}
\mbox{\dro~update step:} \quad j^* &= \argmax_j \ell_j(\theta) \quad \mbox{ and } \quad 
\theta^{t+1} = \theta^{t} - \eta\nabla_\theta\ell_{j^*}.
\end{align*}

Training on only high loss domains avoids (quickly) overfitting on the minority domain since we avoid zero training loss on any domain while the average training loss is non-zero. However, this approach of focusing on the worst domain is subject to failure when domains have heterogeneous levels of noise and transfer. In Section~\ref{sec:cgd:qua}, we will illustrate these failure modes. 

Instead, we consider an alternate strategy for picking the training domain: the domain when trained on, most minimizes the overall loss across {\emph{all}} domains. Let $g_i=\nabla_\theta \ell_i(\theta^t)$ (although $g_i$ also depends on `t', we drop it to avoid clutter) represent the gradient of the domain $i$, we pick the desired domain as:
\begin{align}
    j^*=&\argmin_j \sum_i \ell_i(\theta^t - \eta\nabla_\theta\ell_j(\theta^t))
    &\approx \argmax_j \sum_i g_i^Tg_j \quad \text{[First-order Taylor]} \label{eqn:cgd:firstorder}
\end{align}
The choice of the training domain based on gradient inner-product can be noisy. To counteract, we incorporate three strategies to reduce variance of the above selection:
\begin{itemize}[noitemsep]
    \item Replace a hard selection with regularized soft selection. 
    \item Replace inner-product with a scaled cosine similarity.
    \item Choice adjustment to bias selection choice toward prespecified domains.
\end{itemize}
We describe each of them below. 

\noindent
{\bf Regularized soft selection.} We smooth the choice of domain with a weight vector $\alpha^t\in \Delta^{k-1}$, at step t, where $\alpha^t$ is regularized between consecutive time steps. The amount of regularization is controlled by a hyperparameter: $\eta_\alpha$.
The weight vector at step $t+1$ $\alpha^{t+1}$, therefore, maximizes gradient inner product while being similar to previous time step and takes the following form. 

\begin{align}
  \alpha^{t+1} &= \argmax_{\alpha \in \Delta^{k-1}} \sum_i \alpha_i \langle g_i, \sum_j g_j \rangle - \frac{1}{\eta_\alpha} KL(\alpha, \alpha^t)\label{eq:cgd:alpha_opt}\\ 
  \theta^{t+1} &= \theta^{t} - \eta\sum_j \alpha_j^{t+1}g_j(\theta_t). \nonumber
\end{align}

The update of $\alpha$ in \eqref{eq:cgd:alpha_opt} can be solved in closed form using KKT first order optimality conditions and rearranged to get:
\begin{align}
    \alpha^{t+1}_i = \frac{\alpha^t_i \cdot \exp\left(\eta_\alpha \langle g_i, \sum_j g_j \rangle \right)}{\sum_s \alpha^t_s \cdot \exp\left(\eta_\alpha \langle g_s, \sum_j g_j \rangle \right)}.
    \label{eqn:cgd:alpha_update}
\end{align}

\subsection*{Algorithm}
Pseudocode for the overall algorithm is shown in Algorithm~\ref{alg:cgd:cg}.
\begin{algorithm}
\caption{{\cg} Algorithm}\label{alg:cgd:cg}
\begin{algorithmic}[1]
\State{{\bf Input:} Number of domains: $k$, Training data: $\left\{(x_j, y_j, i): i \in [k], j \in [n_i]\right\}$, Step sizes: $\eta_\alpha, \eta$}
\State{Initialize $\theta^0$, $\alpha^0 = \left(\frac{1}{k}, \cdots, \frac{1}{k}\right)$}
\For{$t=1,2,\cdots,$}
\For{$i \in \{1,\cdots,k\}$}
\State{$ \alpha_i^{t+1}\leftarrow \alpha_i^t \exp(\eta_\alpha \nabla \ell_i(\theta^t)^\top \sum_{s \in [k]} \nabla \ell_s(\theta^t))$} \label{alg:cgd:cg:loss} 
\EndFor
\State{$\alpha_i^{t+1} \leftarrow \alpha_i^{t+1}/\|\alpha^{t+1}\|_1 \quad\forall i\in [1\ldots k]$} \Comment{Normalize}
\State{$\theta^{t+1} \leftarrow \theta^t - \eta\sum_{i\in \{1,\cdots,k\}} \alpha_i^{t+1} \nabla \ell_i(\theta^t)$} \Comment{Update parameters} 
\EndFor
\end{algorithmic}
\end{algorithm}

\paragraph{Scaled cosine similarity.}
The scale of the gradients can vary widely depending on the task, architecture and the loss landscape. As a result, the $\alpha$ update through ~\eqref{eqn:cgd:alpha_update} can be unstable and tuning of the $\eta_\alpha$ hyperparameter tedious. Since we are only interested in capturing if a domain transfers positively or negatively to others, we retain the cosine-similarity of the gradient dot-product but control their scale through $\ell(\theta)^p$ for some $p>0$.
That is, we set the gradient: $\nabla \ell_i(\theta^t)$ to $\frac{\nabla \ell_i(\theta^t)}{\norm{\nabla \ell_i(\theta^t)}}\ell_i(\theta)^p$. 
In our implementation, we use $p=1/2$. In Appendix~\ref{appendix:cgd:grad_approx}, we discuss in more detail the range of gradient norm, why we scale the gradient by loss, and how we pick the value of p.
Finally, the $\alpha^{t+1}$ update of~\eqref{eqn:cgd:alpha_update} is replaced with~\eqref{eqn:cgd:final_alpha_update}. When we assume the domains do not interact, i.e. $\langle g_i, g_j \rangle=0\quad\forall i\neq j$, then our update (\eqref{eqn:cgd:final_alpha_update}) matches {\dro}.
\begin{align}
  \alpha_i^{t+1} = \frac{\alpha_i^t\exp(\eta_\alpha\sum_j\sqrt{\ell_i\ell_j}\cos(g_i, g_j))}{\sum_s\alpha_s^t\exp(\eta_\alpha\sum_j\sqrt{\ell_s\ell_j}\cos(g_s, g_j))}.
  \label{eqn:cgd:final_alpha_update}
\end{align}
%
%

\noindent
\paragraph{Choice Adjustment.} The empirical train loss underestimates the true loss by an amount that is inversely proportional to the population size. Owing to the large domain population differences, we expect varying generalization gap per domain. \citet{Sagawa19} adjusts the loss value of the domains to correct for these generalization gap differences as  $   \ell_i = \ell_i + C/\sqrt{n_i}, \quad \forall i.$, where $C>0$ is a hyper-parameter. 
We apply similar 
choice adjustments for {\cg} as well. The corrected loss values simply replace the $\ell_i$ of Line~\ref{alg:cgd:cg:loss} in Algorithm~\ref{alg:cgd:cg}. choice adjustments improved results on only a small subset of the datasets (WaterBirds, CelebA), albeit we tuned this hyperparameter over all standard subpopulation shift datasets for a fair comparison with \dro{}. 

\section{Convergence Analysis: Descent Nature of CGD}
\label{sec:cgd:thm}
In this section, we will show that \cg~is a sound optimization algorithm by proving that it monotonically decreases the function value and finds first order stationary points (FOSP) for bounded, Lipschitz and smooth loss functions.
We now define the notion of $\epsilon$-FOSP which is the most common notion of optimality for general smooth nonconvex functions.
\begin{defn}\label{defn:fosp}
A point $\theta$ is said to be an $\epsilon$-FOSP of a differentiable function $f(\cdot)$ if $\norm{\nabla f(\theta)}\leq \epsilon$.
\end{defn}
In the context of our paper, we consider the following cumulative loss function: $\Rcal(\theta)\defeq \frac{1}{k}\sum_i \ell_i(\theta)$. This is called the macro/domain-average loss.
We are now ready to state the convergence guarantee for our algorithm.
\begin{theorem}\label{thm:main}
Suppose that (i) each $\ell_i(\cdot)$ is $G$-Lipschitz, (ii) $\Rcal(\cdot)$ is $L$-smooth and (iii) $\Rcal(\cdot)$ is bounded between $-B$ and $B$. Suppose further that Algorithm~\ref{alg:cgd:cg} is run with $\eta = 2 \sqrt{\frac{B}{LG^2T}}$ and $\eta_\alpha = \sqrt{\frac{BL}{G^6T}}$. Then, Algorithm~\ref{alg:cgd:cg} will find an $\epsilon$-FOSP of $\Rcal(\theta)$ in $\order{\frac{BLG^2}{\epsilon^4}}$ iterations.
\end{theorem}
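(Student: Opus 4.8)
The plan is to treat \cg{} as a descent method on the macro-average objective $\Rcal(\theta) \defeq \frac{1}{k}\sum_i \ell_i(\theta)$, following the standard descent-lemma template for nonconvex first-order methods, but with the nonstandard feature that the update direction is the adaptively reweighted combination $d^t \defeq \sum_i \alpha_i^{t+1} g_i$ (writing $g_i \defeq \nabla\ell_i(\theta^t)$) rather than $\nabla\Rcal(\theta^t)$ itself. First I would invoke $L$-smoothness of $\Rcal$ at the step $\theta^{t+1} = \theta^t - \eta d^t$ to get
$$\eta\iprod{\nabla\Rcal(\theta^t)}{d^t} \le \Rcal(\theta^t) - \Rcal(\theta^{t+1}) + \frac{L\eta^2}{2}\norm{d^t}^2 .$$
Since each $\ell_i$ is $G$-Lipschitz we have $\norm{g_i}\le G$, and as $d^t$ is a convex combination of the $g_i$ (because $\alpha^{t+1}\in\Delta^{k-1}$), $\norm{d^t}\le G$. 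Summing over $t=1,\dots,T$ telescopes the function values, which are trapped in $[-B,B]$, yielding $\sum_t \iprod{\nabla\Rcal(\theta^t)}{d^t} \le \frac{2B}{\eta} + \frac{L\eta G^2 T}{2}$.

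The crux is to convert the left-hand side, which involves the reweighted direction $d^t$, into the quantity we actually want, $\sum_t\norm{\nabla\Rcal(\theta^t)}^2$. Writing $u \defeq (1/k,\dots,1/k)$ for the uniform weights (so $\nabla\Rcal(\theta^t) = \sum_i u_i g_i$) and $v_i^t \defeq \iprod{g_i}{\sum_s g_s}$ for the inner-product scores driving the $\alpha$-update, the relations $\iprod{\nabla\Rcal(\theta^t)}{g_i} = \frac{1}{k} v_i^t$ give the exact identity
$$\iprod{\nabla\Rcal(\theta^t)}{d^t} = \norm{\nabla\Rcal(\theta^t)}^2 + \frac{1}{k}\iprod{\alpha^{t+1}-u}{v^t} .$$
So it remains to lower-bound $\sum_t \iprod{\alpha^{t+1}-u}{v^t}$. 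I would recognize $\alpha^{t+1}_i \propto \alpha^t_i\exp(\eta_\alpha v_i^t)$ as exponentiated-gradient ascent on the linear rewards $v^t$ over the simplex, and run the standard multiplicative-weights/KL-potential analysis: tracking $\Phi_t \defeq \mathrm{KL}(u\Vert\alpha^t)$, a Hoeffding bound on the log-partition term $\log\sum_i\alpha_i^t e^{\eta_\alpha v_i^t}$ together with $\Phi_1 = 0$ (initialization at uniform) and $\Phi_{T+1}\ge 0$ gives $\sum_t\iprod{u-\alpha^t}{v^t} \le \frac{\eta_\alpha T R^2}{8}$, where $R$ bounds the range of the scores (here $\abs{v_i^t}\le kG^2$, so $R = 2kG^2$).

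There is one gap between this textbook regret bound, stated with the pre-update iterate $\alpha^t$, and what the identity needs, the post-update iterate $\alpha^{t+1}$. I would close it with a short tilting observation: since $\alpha^{t+1}$ is the exponential tilt of $\alpha^t$ toward large $v^t$, the tilted mean dominates the untilted one, i.e. $\iprod{\alpha^{t+1}-\alpha^t}{v^t}\ge 0$ (formally, $\lambda\mapsto\frac{d}{d\lambda}\log\sum_i\alpha_i^t e^{\lambda v_i^t}$ is nondecreasing, being a variance). Hence $\sum_t\iprod{\alpha^{t+1}-u}{v^t}\ge \sum_t\iprod{\alpha^t-u}{v^t}\ge -\frac{\eta_\alpha T R^2}{8}$. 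Substituting back and dividing by $T$ gives
$$\frac{1}{T}\sum_t\norm{\nabla\Rcal(\theta^t)}^2 \le \frac{2B}{\eta T} + \frac{L\eta G^2}{2} + \frac{\eta_\alpha k G^4}{2} .$$

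Finally I would plug in the prescribed step sizes: $\eta = 2\sqrt{B/(LG^2T)}$ balances the first two terms to $2\sqrt{BLG^2/T}$, and $\eta_\alpha = \sqrt{BL/(G^6T)}$ makes the regret term of the same order, so the average squared gradient is $\order{\sqrt{BLG^2/T}}$ (with a constant absorbing the $k$-dependence). Because $\min_t\norm{\nabla\Rcal(\theta^t)}^2$ is at most this average, setting the bound equal to $\epsilon^2$ and solving for $T$ produces an $\epsilon$-FOSP within $\order{BLG^2/\epsilon^4}$ iterations, as claimed. I expect the genuinely delicate part to be the crux above: getting the exact descent identity right and supplying the tilting inequality that lets the clean pre-update regret bound stand in for the post-update quantity, which is precisely where the adaptive reweighting could have derailed monotone descent; the smoothness telescoping and the step-size algebra are otherwise routine bookkeeping.
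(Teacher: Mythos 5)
Your proposal is correct and follows essentially the same route as the paper's proof: the smoothness descent step, the mirror-descent/multiplicative-weights analysis of the $\alpha$-update against the uniform comparator (your Hoeffding bound on the log-partition term plays the role of the paper's $e^x\le 1+x+x^2$ estimate), and the exponential-tilting inequality $\iprod{\alpha^{t+1}-\alpha^t}{v^t}\ge 0$ are all present in the paper, where the last appears as the statement that the covariance of $X$ and $\exp(X)$ is nonnegative. The only substantive difference is bookkeeping: you analyze the update with the unnormalized scores $\iprod{g_i}{\sum_s g_s}$, which leaves a factor of $k$ in the regret term that the paper avoids by writing the update with $g=\frac{1}{k}\sum_i g_i$; modulo that normalization convention your final bound and step-size algebra match the paper's.
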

In other words, the above result shows that Algorithm~\ref{alg:cgd:cg} finds an $\epsilon$-FOSP of the cumulative loss in $\order{\epsilon^{-4}}$ iterations. We present a high level outline of the proof here and present the complete proof of Theorem~\ref{thm:main} in Appendix~\ref{app:cgd:proof}.
\begin{proof}[Proof outline of Theorem~\ref{thm:main}]
Considering the $t^\textrm{th}$ iteration where the iterate is $\theta^t$ and mixing weights are $\alpha^t$.
Using the notation in Section~\ref{sec:cgd:method}, let us denote $\Rcal(\theta)= \frac{1}{k} \sum_i \ell_i(\theta)$, $g_i = \nabla \ell_i(\theta^t)$ and $g = \frac{1}{k} \sum_i g_i$. The update of our algorithm is given by:
\begin{align}
    \alpha_i^{t+1} &= \frac{\alpha_i^t \cdot \exp\left(\eta_\alpha \iprod{g_i}{g}\right)}{Z} \label{eqn:cgd:algo1-main} \\
    \theta^{t+1} &= \theta^t - \eta \sum_i \alpha_i^{t+1} g_i, \label{eqn:cgd:algo2-main}
\end{align}
where $Z = \sum_j \alpha_j^t \cdot \exp\left(\eta_\alpha \iprod{g_j}{g}\right)$.
Let us fix $\alpha^* = \left(1/k,\cdots,1/k\right) \in \R^k$ and use $KL(p,q) = \sum_i p_i \log \frac{p_i}{q_i}$ to denote the KL-divergence between $p$ and $q$. Noting that the update~\eqref{eqn:cgd:algo1-main} on $\alpha$ corresponds to mirror descent steps on the function $\iprod{g}{\sum_i \alpha_i g_i}$ and using mirror descent analysis, we obtain:
\begin{align}
    KL(\alpha^*, \alpha^{t+1}) &\leq KL(\alpha^*, \alpha^{t}) + \left(\sum_i \alpha_i^{t} \eta_\alpha \iprod{g_i}{g}\right) + \left(\eta_\alpha G^2\right)^2 - \eta_\alpha \norm{g}^2 \nonumber \\
    \Rightarrow - \sum_i \alpha_i^{t} \iprod{g_i}{g} &\leq - \norm{g}^2 + \frac{KL(\alpha^*, \alpha^{t}) - KL(\alpha^*, \alpha^{t+1})}{\eta_\alpha} + \eta_\alpha G^4.\nonumber
\end{align}
Using monotonicity of the $\exp$ function, we further show that $\sum_i \alpha_i^{t} \iprod{g_i}{g} \leq \sum_i \alpha_i^{t+1} \iprod{g_i}{g}$.

Using smoothness of $\Rcal$ and update~\eqref{eqn:cgd:algo2-main}, we then show:
\begin{align*}
    \Rcal(\theta^{t+1}) &\leq \Rcal(\theta^t) + \eta \left( - \norm{g}^2 + \frac{KL(\alpha^*, \alpha^{t}) - KL(\alpha^*, \alpha^{t+1})}{\eta_\alpha} + \eta_\alpha G^4 \right)+ \frac{\eta^2 L G^2}{2} \\
    \Rightarrow \norm{g}^2 &\leq \frac{\Rcal(\theta^t) - \Rcal(\theta^{t+1})}{\eta} + \frac{KL(\alpha^*, \alpha^{t}) - KL(\alpha^*, \alpha^{t+1})}{ \eta_\alpha} + {\eta_\alpha G^4}{} + \frac{\eta LG^2}{2}.
\end{align*}
Summing the above inequality over timesteps $t=1,\cdots,T$, and using the parameter choices for $\eta$ and $\eta_\alpha$ proves the theorem.
\end{proof}

\section{Qualitative Analysis}
\label{sec:cgd:qua}
In this section, we present simple multi domain training scenarios to derive insights on the difference between \dro{} and \cg{}.
For all the settings, we carry two sets of experiments: (a) with linear models on simple data (b) deep models on MNIST images.\\
{\it For experiments using toy data:} the number of input dimensions if not otherwise mentioned is two and the two features $x_1,x_2$ are sampled from a standard normal, with the binary label  $y=\mathbb{I}[x_1+x_2>0]$, and the number of domains is three. We train a linear binary classifier model for 400 epochs with batch SGD optimizer and learning rate 0.1.\\
{\it For experiments using MNIST examples:} we train a Resnet-18 model with standard SGD, pick the best model based on the validation set, and report the performance on the test of the best model.  We binarized the label space by grouping the first five digits in to one class and the next five in to another. We create three domains of examples by randomly partitioning the training set into 4,900, 4,900, and 200 examples each.

\noindent
In each setting, we inspect the training weight ($\alpha_i^t$) assigned by each algorithm per domain $i$ at epoch $t$ when training on the toy data (Figure~\ref{fig:cgd:noisesimple},~\ref{fig:cgd:rotsimple},~\ref{fig:cgd:spusimple}). The plots are then used to draw qualitative judgements.
See Appendix~\ref{appendix:cgd:synth} for descriptive plots.

\subsection{Label noise}
\label{sec:cgd:qua:noise}
\vspace{-20pt}
\begin{wrapfigure}{r}{0.5\textwidth}
  \centering
  \vspace{-20pt}
  \begin{minipage}{0.47\linewidth}
    \includegraphics[width=\linewidth]{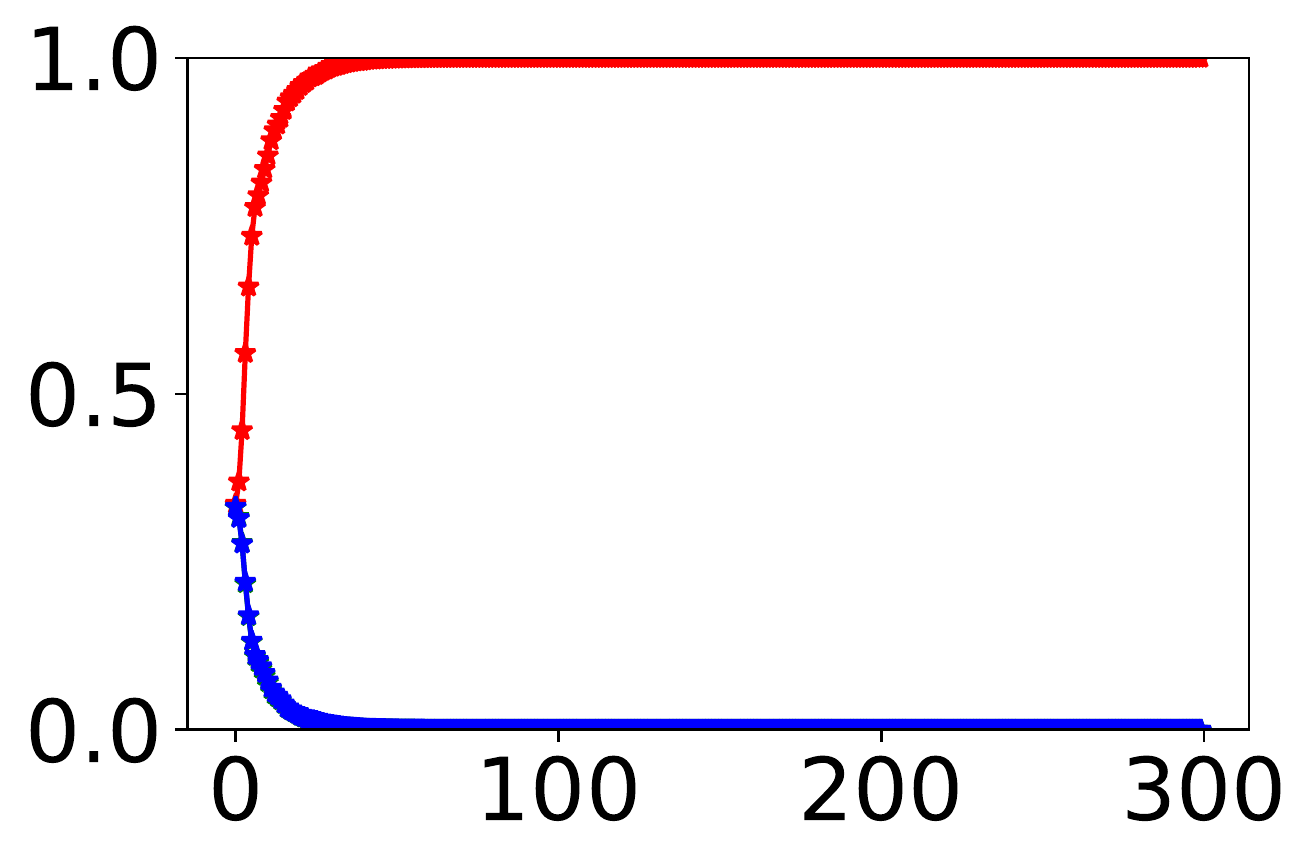}
    \subcaption{{\dro}}
    \label{fig:cgd:noisesimple:1}
  \end{minipage}\hfill
  \begin{minipage}{0.47\linewidth}
    \centering
    \includegraphics[width=\linewidth]{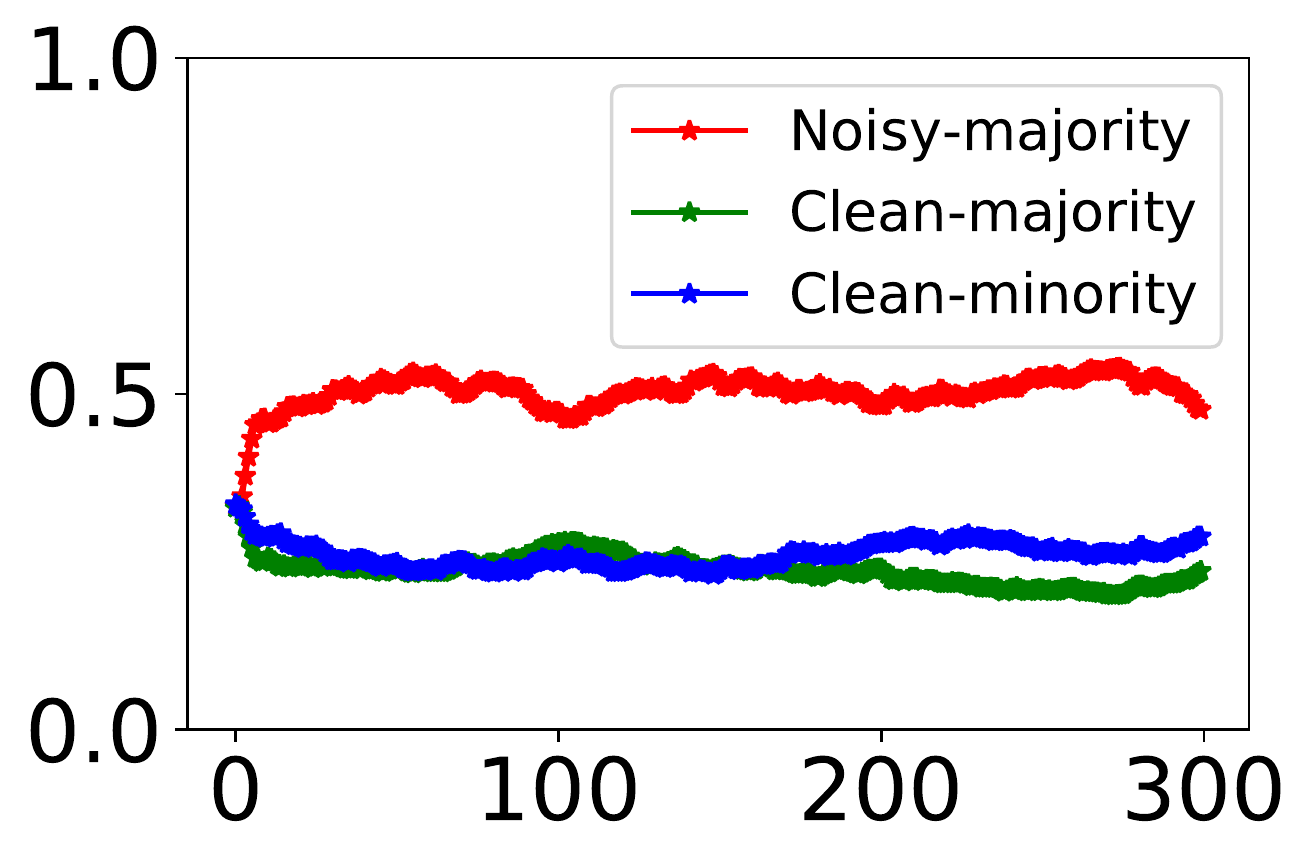}
    \subcaption{\cg}
    \label{fig:cgd:noisesimple:2}
  \end{minipage}
  \caption{Noise-Simple. Training weight ($\alpha$) of domains vs number of epochs.}
  \label{fig:cgd:noisesimple}
  \vspace{-20pt}
\end{wrapfigure}
\emph{{\dro} focuses on noisy domains.}
We induced label noise in the first domain (only during training) by flipping labels for randomly picked 20\% examples. The first and second domain formed the majority with a population of 450 examples each and the minority last
domain had 100 examples. The noisy first domain and the two subsequent domains are referred to as Noisy-Majority, Clean-Majority, and Clean-Minority. 
Due to noise in the first domain, and since we cannot overfit in this setup, the loss on the first domain was consistently higher than the other two. As a result, {\dro} trained only on the first domain (Fig~\ref{fig:cgd:noisesimple:1}), while {\cg} avoided overly training on the noisy first domain (Fig~\ref{fig:cgd:noisesimple:2}). This results in not only lower worst-case error but more robust training (Noise-Simple column, Table~\ref{tab:cgd:simple}). The variance of the learned parameters across six runs drops from 1.88 to 0.32 using {\cg}.

Table~\ref{tab:cgd:mnist} (Noise-MNIST column) shows the same trend under similar setup on the MNIST dataset where we randomly flip labels for all the examples in the training split of the first domain.
These experiments demonstrate that \dro's method of focusing on the domain with the worst training loss is sub-optimal  when domains have heterogeneous label noise or training difficulty. \cg's weights that are informed by inter-domain interaction provide greater in-built robustness to domain-specific noise.

\subsection{Uneven inter-domain similarity}
\label{sec:cgd:qua:rot}
\vspace{-20pt}
\emph{{\cg} focuses on central domains.}
Here we simulated a setting such that the optimal classifier for the first, third domain is closest to the second. 
The label (y) was set to a domain-specific deterministic function of the input ($x_1, x_2$); for the first domain the mapping function was $y=\mathbb{I}[x_1>0]$ (which was rotated by 30 degrees for the two subsequent domains), for the second domain it was $y=\mathbb{I}[0.87x_1 + 0.5x_2>0]$, and for the third domain it was $y=\mathbb{I}[0.5x_1+0.87x_2>0]$. The angle between the classifiers is a proxy for the distance between the
\begin{wrapfigure}{r}{0.5\textwidth}
    \begin{subfigure}[b]{0.47\linewidth}
       \centering
       \includegraphics[width=\linewidth]{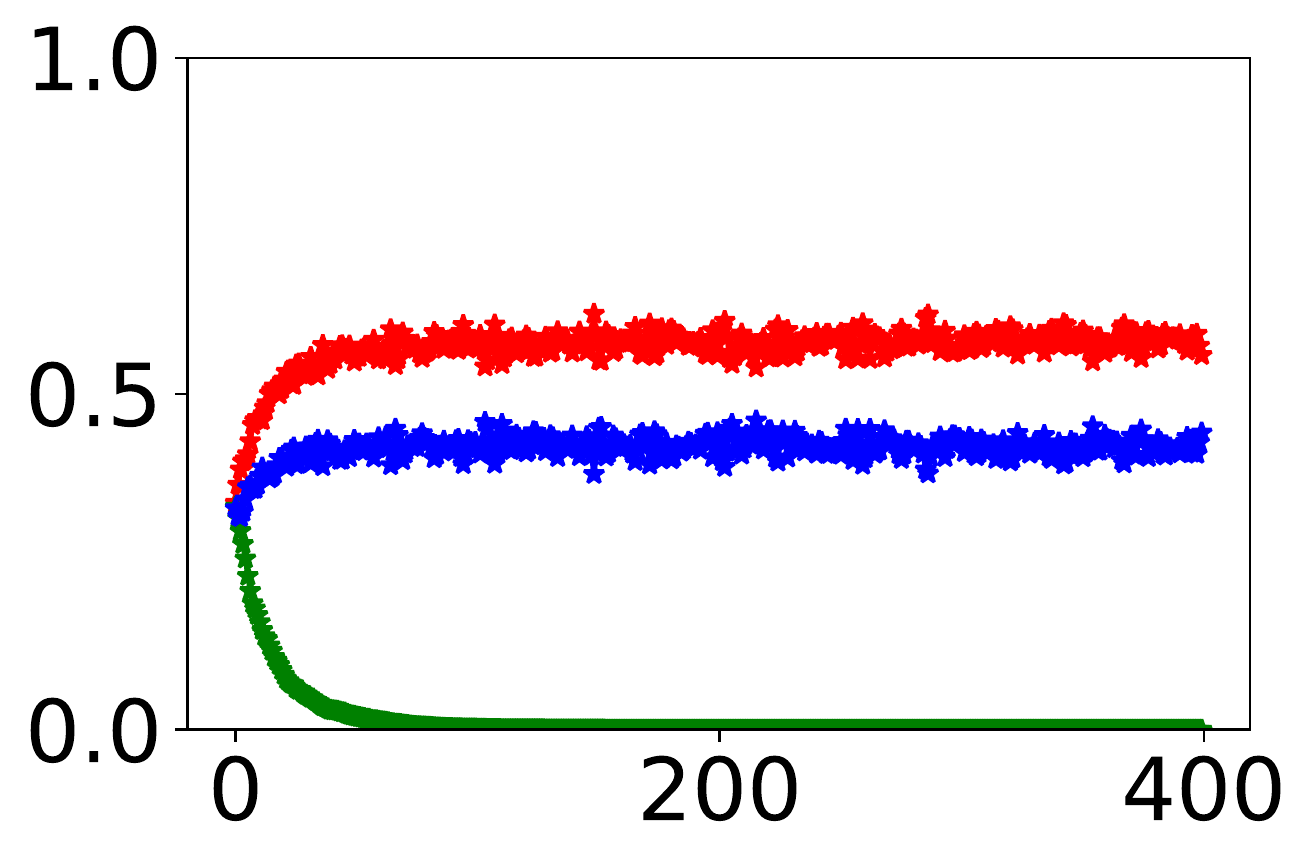}
       \caption{\dro}   
       \label{fig:cgd:rotsimple:1}
     \end{subfigure}
     \begin{subfigure}[b]{0.47\linewidth}
       \centering
       \includegraphics[width=\linewidth]{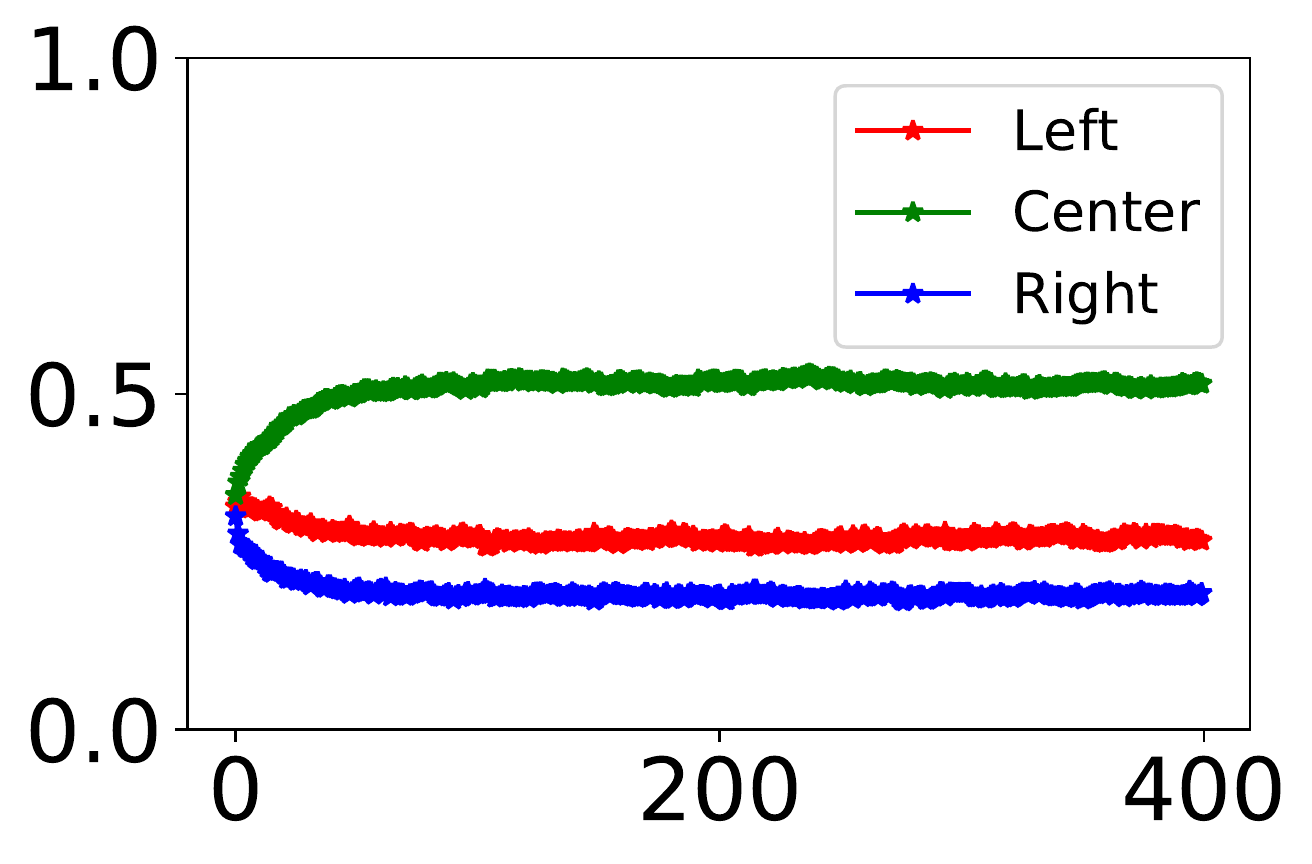}
       \subcaption{\cg}
       \label{fig:cgd:rotsimple:2}
     \end{subfigure}
     \captionof{figure}{Rot-Simple. Training weight ($\alpha$) of domains vs number of epochs.}
     \label{fig:cgd:rotsimple}
\end{wrapfigure}
domains. We refer to the domains in the order as {\it Left, Center, Right}, and their respective training population was: 499, 499, 2.

The optimal classifier that generalizes equally well to all the domains is the center classifier. 
{\dro}, {\cg}, differed in how they arrive at the center classifier: {\dro} assigned all the training mass equally to the Left and Right domains (Figure~\ref{fig:cgd:rotsimple:1}), while {\cg} trained, unsurprisingly, mostly on the Center domain that has the highest transfer (Figure~\ref{fig:cgd:rotsimple:2}). {\dro} up-weighted the noisy minority domain (2 examples) and is inferior when compared with {\cg}'s strategy, which is reflected by the superior performance on the test set shown in Rotation-Simple column of Table~\ref{tab:cgd:simple}.  
On the MNIST dataset we repeated a similar experiment by rotating digits by 0, 30, and 60 degrees, and found \cg\ to provide significant gains over \dro\ (Rotation-MNIST column of Table~\ref{tab:cgd:mnist}). 
These experiments demonstrate the benefit of focusing on the central domain even for maximizing worst case accuracy.

\subsection{Spurious correlations} 
\label{sec:cgd:qua:neg}
\vspace{-20pt}
\emph{{\cg} focuses on domains without spurious correlations.}
Here we create spurious correlations by adding a third feature that takes different values across the three different domains whose sizes are  490, 490, and 20 respectively.
A third feature is introduced and was set to the value of the label y on the first domain, 1-y on the third domain, and set to y or 1-y w.p. 0.6 on the second domain. We 
\begin{wrapfigure}{r}{0.5\textwidth}
  \begin{minipage}{0.47\textwidth}
     \centering
     \hspace{5pt}
     \begin{subfigure}[b]{0.47\linewidth}
       \centering
       \includegraphics[width=\linewidth]{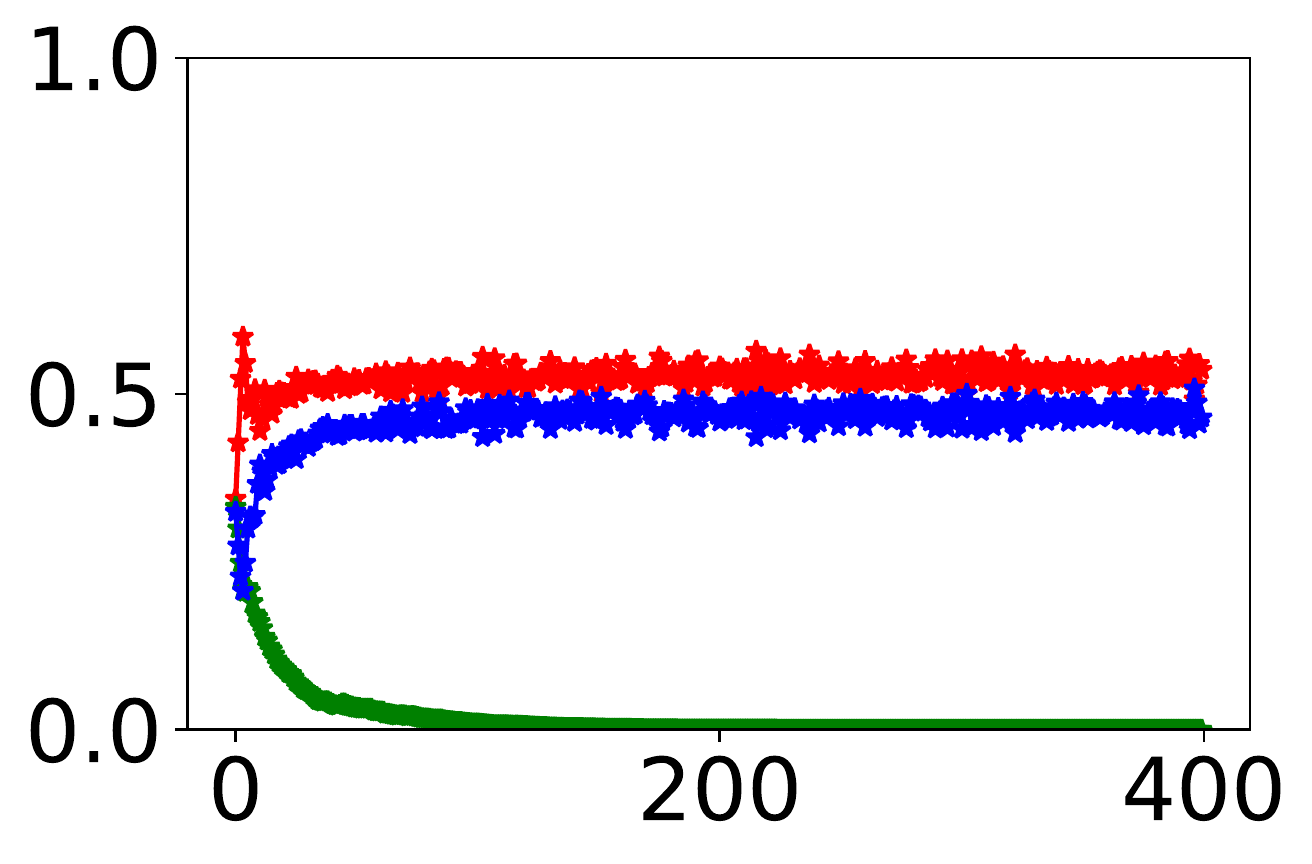}
       \subcaption{\dro}
       \label{fig:cgd:spusimple:1}
     \end{subfigure}
     \begin{subfigure}[b]{0.47\linewidth}
       \centering
       \includegraphics[width=\linewidth]{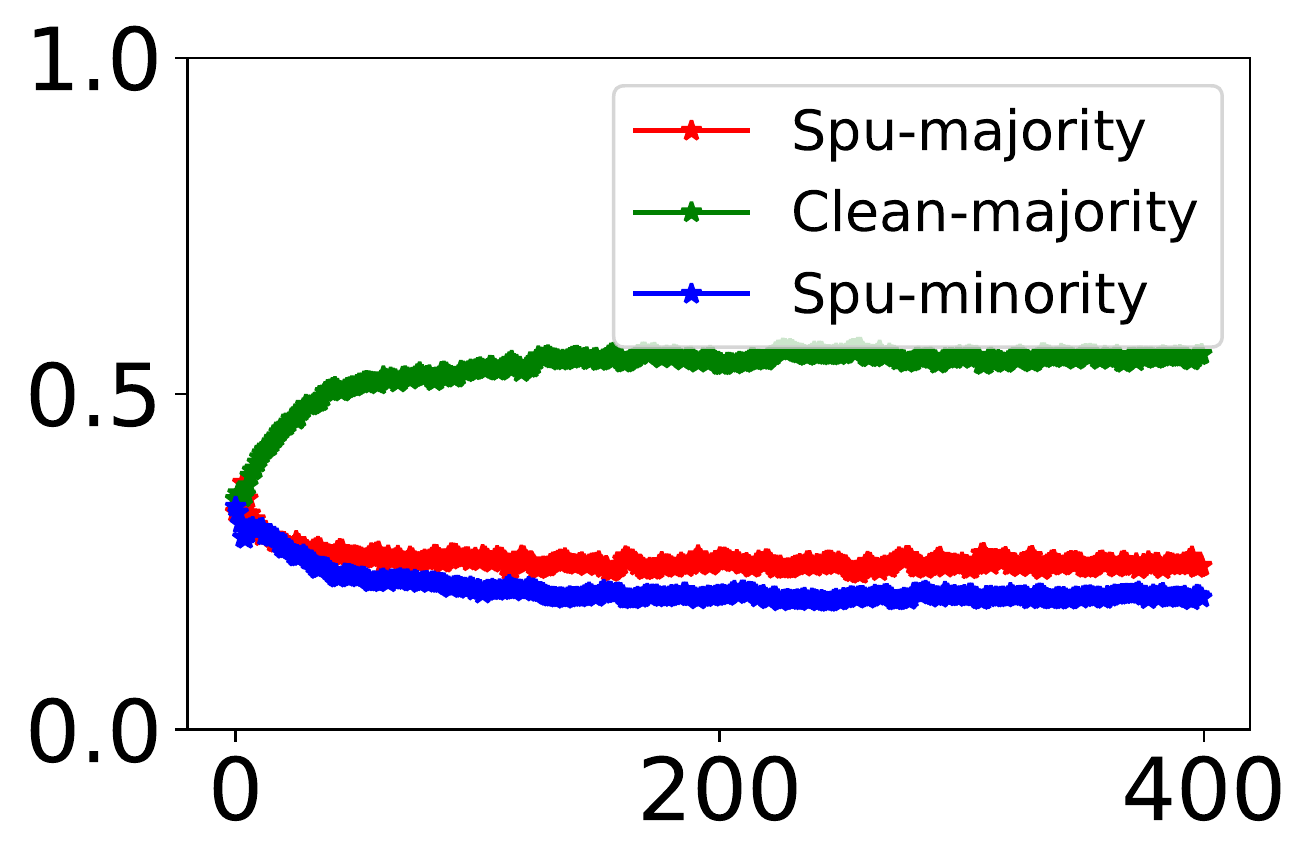}
       \subcaption{\cg}
       \label{fig:cgd:spusimple:2}
     \end{subfigure}
     \captionof{figure}{Spurious-Simple. Training weight ($\alpha$) of domains vs number of epochs.}
     \label{fig:cgd:spusimple}
  \end{minipage}
\end{wrapfigure}
perturbed the first two features of the first domain examples such that they predict label correctly only for 60\% of the examples. Therefore, the net label correlation of the first two and the third feature is about the same: 0.8. We refer to the first and last domains as Spurious-majority, Spurious-minority---since they both contain an active spurious third feature---and the second as the clean majority. 

{\dro} balances training between the two spurious domains (Figure~\ref{fig:cgd:spusimple:1}), while {\cg} focuses mostly on the clean domain (Figure~\ref{fig:cgd:spusimple:2}). Both the training strategies could avoid the spurious third feature, however, the training strategy of {\dro} is sub-optimal since we up-weight the small, hence noisy, spurious minority domain. The better test loss of {\cg} in Table~\ref{tab:cgd:simple} (Spurious-Simple column) further verifies the drawback of {\dro}'s training strategy.  On the MNIST dataset too we observed similar trends by creating spurious correlation using colors of the digit (Spurious-MNIST column of Table~\ref{tab:cgd:mnist}); digits in the first domain are colored red for label 0 and blue for label 1,  digits are randomly colored blue or red in the second domain, and colored blue for label 0 and red for label 1 in the third domain. 

More generally, whenever a relatively clean domain with low spurious correlation strength is present, and when the train domains represent both positive and negative spurious correlations, we expect {\cg} to focus on the domain with the least spurious correlation since it is likely to have greater inter-domain similarity. 

It is worth noting that for all the experiments shown in this section (including linear classifier), the train loss of \cg\ is comparable with \dro{} and \ermuw{} confirming our convergence result of Section~\ref{sec:cgd:thm}. All the algorithms converged to zero training loss when using Resnet-18 on MNIST digits, and approximately similar train loss for toy experiments as shown in Table~\ref{tab:cgd:simple:train_loss}. However, worst domain test loss varies by a large amount between the different algorithms as shown in Table~\ref{tab:cgd:simple},~\ref{tab:cgd:mnist}. 

\begin{table}[htb]
    \centering
    \begin{tabular}{|c|c|c|c|}
        \hline
         Task $\rightarrow$ & Spurious-Simple & Rotation-Simple & Noise-Simple \\\hline
         \cg{} & 0.43 (0.01) & 0.24 (0.04) & 0.36 (0.01) \\
         \dro{} & 0.45 (0.01) & 0.25 (0.04) & 0.41 (0.02) \\
         \erm{} & 0.42 (0.01) & 0.23 (0.05) & 0.34 (0.02) \\
        \hline
    \end{tabular}
    \caption{Macro averaged train loss for different tasks and algorithms in the simple toy setting with a linear classifier model. All algorithms converged almost to the same loss value but to different solutions.}
    \label{tab:cgd:simple:train_loss}
\end{table}

\begin{table}[htb]
    \centering
    \begin{tabular}{|c|c|c|c|c|c|c|}
    \hline
     {\bf Task} $\rightarrow$ &
     \multicolumn{2}{|c|}{\bf Noise-Simple} & \multicolumn{2}{|c|}{\bf Rotation-Simple} &
     \multicolumn{2}{|c|}{\bf Spurious-Simple}\\
     Alg$\downarrow$ & Variance & Worst Loss & Variance & Worst Loss & Variance & Worst Loss\\\hline
    {\dro} & 1.88 & 0.35 (0.03) & 0.41 & 0.77 (0.14) & 0.17 & 0.70 (0.16)\\
    \rowcolor{LightGreen} {\cg} & 0.32 & 0.25 (0.02) & 0.08 & 0.59 (0.05) & 0.04 & 0.43 (0.06) \\\hline
    \end{tabular}
    \caption{Worst loss for different tasks and algorithms in the simple toy setting with a linear classifier model. Worst loss is the worst domain binary cross entropy loss on the test set, and averaged over six seeds, shown in parenthesis is the standard deviation. Variance column shows the variance of $L_\infty$ normalized solution across the six runs. {\cg} has lower variance and test loss when compared with {\dro}.}
    \label{tab:cgd:simple}
\end{table}
\begin{table}[tbh]
    \centering
    \begin{tabular}{|c|c|c|c|}
    \hline
         Alg. $\downarrow$ & Noise-MNIST & Rotation-MNIST & Spurious-MNIST \\\hline
         {\dro} & 86.0 (1.0), 85.9 (1.0) &  90.5 (0.2), 80.6 (1.1) & 95.4 (0.2), 95.0 (0.4) \\
         \rowcolor{LightGreen} {\cg} & 88.9 (1.0), 88.8 (1.0) & 92.1 (0.6), 85.0 (0.7) & 96.7 (0.1), 96.4 (0.3) \\\hline
    \end{tabular}
    \captionof{table}{Average, worst domain accuracy on the test split. Shown in parenthesis is the standard deviation. All numbers are aggregated over three runs.}
    \label{tab:cgd:mnist}
\end{table}

\section{Experiments}
\label{sec:cgd:expt}
In this section, we enumerate datasets, discuss baselines and present implementation details, and finally present our results in Section~\ref{sec:cgd:results}.

\noindent
{\bf \large Datasets}\\
We evaluated on eight datasets, which include two synthetic datasets with induced spurious correlations: ColoredMNIST, WaterBirds; two real-world datasets with known spurious correlations: CelebA, MultiNLI; four WILDS~\citep{KohWilds20} datasets with a mix of subpopulation and domain shift. Table~\ref{tab:cgd:dataset:all} summarises all the eight datasets.  We describe all the datasets in more detail below. For more details on WaterBirds, CelebA, MultiNLI, we point the reader to the original paper~\citep{Sagawa19}, and~\citet{KohWilds20} for details on the WILDS datasets. We discuss how some of the real-world datasets relate to our synthetic setting in the next section.

\begin{table}[htb]
    \centering
    \setlength{\tabcolsep}{3pt}
    \begin{tabular}{|l|c|c|c|c|c|c|}
    \hline
         \thead{Dataset} & \thead{\# labels} & \thead{\# domains} & \thead{Grouping type} & \thead{Type} & \thead{Size} & \thead{Worst ratio} \\\hline
         Colored MNIST &  2 & 2 & Label$\times$domain & Image & 50K & 1000 \\
         WaterBirds & 2 & 2 & Label$\times$domain & Image & 4.8K & 62.5\\
         CelebA & 2 & 2 & Label$\times$domain & Image & 162K & 51.6\\
         MultiNLI & 3 & 2 & Label$\times$domain & Text & 200K & 44.3 \\
         CivilComments-WILDS & 2 & 2 & Label$\times$domain & Text & 270K & 74.5 \\
         PovertyMap-WILDS & real & 13 & domain & Image & 10K & 5.9 \\
         FMoW-WILDS & 62 & 11 & domain & Image & 80K & 16.0 \\
         Camelyon17-WILDS & 2 & 3 & domain & Image & 3K & 2.5 \\\hline
    \end{tabular}
    \caption{Summary of the datasets we used for evaluation. The Size column shows the number of training instances. The Worst ratio column shows the ratio of the size of the largest train domain to the smallest; worst ratio reflects loosely the risk of minority domain overfitting.}
    \label{tab:cgd:dataset:all}
\end{table}

\noindent
{\bf Colored-MNIST:} Using MNIST~\citep{MNIST} digits, we create a dataset where the foreground color  is spuriously correlated with the label. We split 50,000 examples into majority, minority with 1000:1 domain size ratio, and binarize the label space to predict digits 0-4, 5-9 as the two classes. Test data has equal proportion of each domain.
In the majority domain, the foreground is red for examples of label 0 and blue for label 1; the minority domain examples were colored in reverse. Color based prediction of the label cannot generalize, however, color is spuriously and strongly correlated with the label in the training data.  Many previous work~\citep{ArjovskyIRM19,Ahmed21} adopted Colored MNIST for probing generalization.  

\noindent
{\bf WaterBirds} task is to predict if an image contains water-bird or a land-bird and contains two domains: majority, minority in ratio 62:1 in training data. The minority domain has reverse background-foreground coupling compared to the majority. Test/validation data has equal proportion of each domain. 

\noindent
{\bf CelebA}~\citep{CelebA} task is to classify a portrait image of a celebrity as blonde/non-blonde, examples are grouped based on the gender of the portrait's subject. The training data has only 1,387 male blonde examples compared to 200K total examples. Test/validation data follows same domain-class distribution as train. 

\noindent
{\bf MultiNLI}~\citep{MultiNLI} task is to classify a pair of sentences as one of {\it entailment, neutral, contradiction}. \citet{Gururangan18} identified that negation words in the second sentence are spuriously correlated with the contradiction labels. Accordingly, examples are grouped based on if the second sentence contains any negation word. Test/validation data follows same domain-class distribution as train.


\noindent
{\bf CivilComments-WILDS}~\citep{CivilComments} task is to classify comments as toxic/non-toxic. Examples come with eight demographic annotations: {\it white, black, gay, muslim}, based on if the comment mentioned terms that are related to the demographic. Label distribution varies across demographics. In the bechmark train examples are grouped on {\it black} demographic, and testing is on the worst domain accuracy among all 16 combinations of the binary class label and eight demographics.  

\noindent
{\bf PovertyMap-WILDS}~\citep{PoverertMap} task is to classify satellite images in to a real valued wealth index of the region. 
The rural and urban sub-population from different countries make the different train domains. Bechmark evaluates on worst-region Pearson correlation between predicted and true wealth index in two settings: An in-domain setting that measures sub-population shift to seen regions, and an out-domain setting that evaluates generalization to new countries.

\noindent
{\bf FMoW-WILDS}~\citep{FMoW} task is to classify RGB satellite images to one of 62 land use categories.  Land usage differs across countries and evolves over years. Training examples are stratified in to eleven domains based on the year of satellite image acquisition. We report out-domain evaluation using test data from regions of seen countries but from later years, and measures worst accuracy among the five geographical regions: Africa, Americas, Oceania, Asia, and Europe. The in-domain evaluation with worst accuracy on the eleven years is unavailable for other algorithms, so we stick to only out-domain evaluation. 

\noindent
{\bf Camelyon17-WILDS}~\citep{Camelyon17} is a binary classification task of predicting from a microscope image of a tissue if it contains a tumour. The training data contains scans from three hospitals, and the test data contains scans from multiple unseen hospitals. The evaluation metric is average accuracy on the test set hospitals. 


\noindent
{\bf \large Experiment Details}\\
{\bf Baselines}\\
{\it \erm}: Simple descent on the joint risk.\\
{\it \ermuw}, {\it \dro}: risk reshaping baselines described in Section~\ref{sec:cgd:method}.
With {\ermuw}, instead of simply up-weighting all the domains equally, we set the domain weighting parameter $\alpha_i$ of a domain $i$ as a function of its size. For some non-negative constant C, $\alpha_i\propto\exp(C/\sqrt{n_i})$. \\
{\it \pgi}~\citep{Ahmed21}, predictive domain invariance, penalizes the divergence between predictive probability distributions among the training domains, and was shown to be effective for in-domain as well as out-domain generalization. We provide more details about the algorithm in Appendix~\ref{appendix:cgd:technical}.

On the WILDS dataset, we also compare with two algorithms from Domain Generalization literature: {\it CORAL}~\citep{CORAL}, {\it IRM}~\citep{ArjovskyIRM19}. Domain Generalization methods are relevant since they could generalize to any domain including the seen train domains.

Yet another simple baseline is to train a a classifier to recognize the domain an example belongs and use its corresponding domain-specific  classifier. For instance, in the waterbirds dataset, we could first determine if an example belongs to majority/minority domain, i.e. if the background is positively or negatively correlated with the foreground, and use majority or minority domain-specific  classifier that could potentially exploit spurious correlations. 
This algorithm requires a good domain classifier and a good domain-specific label classifier. We eliminated this algorithm without further comparison due to the following reasons:
(a) On the Waterbirds dataset, majority/minority specific label classifiers perform extremely well, but we could not train a majority/minority classifier (for classifying if an example belongs to majority or minority) that is better than a random predictor.
(b) The CelebA dataset, similar to waterbirds, has two domains: male/female. domain-specific  classifiers that are trained exclusively on male or female subjects do not improve the minority (male blond) performance over a model that is trained on all genders using ERM, violating the second requirement: good domain-specific classifier.


\noindent
{\bf Implementation Details and Evaluation Metric: }
We use the codestack\footnote{\url{https://github.com/p-lambda/wilds}} released with the WILDS~\citep{KohWilds20} dataset as our base implementation. We report the average and standard deviation of the evaluation metric from at least three runs. If not otherwise stated explicitly, we report worst accuracy over the domains evaluated on the test split. 

\noindent
{\bf Hyperparameters: }
We search $C$: the choice adjustment parameter for {\dro}, {\cg}, and domain weighting parameter for {\ermuw}, over the range $[0, 20]$. We tune $C$ only for WaterBirds, CelebA and MultiNLI datasets for a fair comparison with {\dro}. For other dataset, we set C to 0 because choice adjustment tuning did not yield large improvements on many datasets. 
The step size parameter of {\dro}, {\cg}, and $\lambda$ penalty control parameter of {\pgi}, is picked from \{1, 0.1, 1e-2, 1e-3\}. We follow the same learning procedure of \citet{Sagawa19} on Colored-MNIST, WaterBirds, CelebA, MultiNLI, datasets, we pick the best learning rate parameter from \{1e-3, 1e-5\}, weight decay from \{1e-4, .1, 1\}, use SGD optimizer, and set the batch size to 1024, 128, 64, 32 respectively.
On WILDS datasets, we adopt the default optimization parameters\footnote{\href{https://github.com/p-lambda/wilds/blob/747b774a8d7a89ae3bde3bc09f3998807dfbfea5/examples/configs/datasets.py}{WILDS dataset parameter configuration Github URL.}}, and only tune the step-size parameter.\\
We report the test set's performance corresponding to the best hyperparameter(s) and epoch on the validation split. We use standard train-validation-test splits for all the datasets when available.

\noindent
{\bf Base Model: }
We followed closely the training setup of~\citet{Sagawa19,KohWilds20}. 
On WaterBirds, CelebA dataset, we used Resnet-50; on Colored-MNIST, PovertyMap, we used Resnet-18; on Cemlyon17, FMoW, we used DenseNet-121. All the image models except Colored-MNIST (which trains on $28\times 28$ images), PovertyMap (which deals with multi-spectral version of Resnet-18), are pretrained on ImageNet. MultiNLI, CivilComments, use a pretrained uncased DistilBERT-base model. 

\noindent
{\bf Additional Details:}
We model inter-domain transfer characteristics using inner products of first order gradients (\eqref{eqn:cgd:final_alpha_update}). Since per-domain gradient computation for all the parameters is expensive, we use only a subset of parameters. We use only the last three layers for ResNet-50, Densenet-101 and DistilBERT, and all the parameters for gradient computation for any other network. 

\noindent
\subsection*{Results}
\label{sec:cgd:results}
Table~\ref{tab:cgd:results} shows the worst domain test split accuracy for the four standard (non-WILDS) subpopulation shift datasets. For all the tasks shown, {\erm} performs worse than a random baseline on the worst domain, although the average accuracy is high. {\ermuw} is a strong baseline, and improves the worst domain accuracy of {\erm} on three of the four tasks, without hurting much the average accuracy. {\pgi} is no better than {\erm} or {\ermuw} on all the tasks except on Colored-MNIST. {\dro} improves worst accuracy on most datasets, however, {\cg} fares even better. {\cg} improves worst-domain accuracy on all the datasets over {\erm}. Except on MultiNLI text task, the gains of {\cg} are significant over other methods, and on MultiNLI task the worst domain accuracy of {\cg} is at least as good as {\dro}. On Colored-MNIST, which has the highest ratio of majority to minority domain size, the gains of {\cg} are particularly large. 

We report comparisons using the four WILDS datasets in Table~\ref{tab:cgd:results:wilds}. We show both in-domain (ID) and out-domain (OOD) generalization performance when appropriate, they are marked in the third row. All the results are averaged over multiple runs; FMoW numbers are averaged over three seeds, CivilComments over five seeds, Camelyon17 over ten seeds, and PovertyMap over the five data folds. Confirming with the WILDS standard, we report worst domain accuracy for FMoW, CivilComments, worst region Pearson correlation for PovertyMap, average out-of-domain accuracy for Camelyon17. We make the following observations from the results.
{\erm} is surprisingly strong on all the tasks except CivilComments. Strikingly, {\dro} is worse than {\erm} on four of the five tasks shown in the table, including the in-domain (subpopulation shift) evaluation on PovertyMap task. {\cg} is the only algorithm that performs consistently well across all the tasks.
The results suggest {\cg} is significantly robust to subpopulation shift, and performs no worse than {\erm} on domain shifts. Further, we study Colored-MNIST dataset under varying ratio of majority to minority domain sizes in Appendix~\ref{appendix:cgd:hetero} and demonstrate that {\cg} is robust to subpopulation shifts even under extreme training population disparity. 

\begin{table}[tbh]
\centering
\setlength{\tabcolsep}{5pt}
\begin{tabular}{|l|c|c|c|c|}
\hline
Alg. $\downarrow$ & CMNIST & WaterBirds & CelebA & MultiNLI\\
\hline
{\erm} & 50/0.0, 0/0.0 & 85.2/0.8, 61.2/1.4 & {\bf 95.2/0.2)}, 45.2/0.9 & {\bf 81.8/0.4}, 69.0/1.2\\
{\ermuw} & 53.6/0.1, 7.2/2.1 & {\bf 91.8/0.2}, 86.9/0.5 & 92.8/0.1, 84.3/0.7 & 81.2/0.1, 64.8/1.6 \\
{\pgi} & 52.5/1.7, 44.4/1.5 & 89.0/1.8, 85.8/1.8 & 92.9/0.2, 83.0/1.3 & 80.8/0.8, 69.0/3.3 \\
G-DRO & 73.5/5.3, 48.5/11.5 & 90.1/0.5, 85.0/1.5 & 92.9/0.3, 86.1/0.9 & 81.5/0.1, {\bf 76.6/0.5} \\
{\cg} & {78.6/0.5, \bf 65.6/5.9} & 91.3/0.6, {\bf 88.9/0.8} & 92.5/0.2, {\bf 90.0/0.8} & 81.3/0.2, {\bf 76.1/1.5} \\\hline
\end{tabular}
\caption{Average and worst-domain accuracy in that order. All numbers are averaged over 3 seeds and presented in the format `average/std. dev.'. G-DRO is abbreviation for {\dro}.}
\label{tab:cgd:results}
\end{table}

\begin{table}[htb]
    \centering
    \setlength{\tabcolsep}{3pt}
    \begin{tabular}{|l|c|c|c|c|c|}
    \hline
    Algorithm$\downarrow$ & Camelyon17 & \multicolumn{2}{c|}{\bf PovertyMap} & FMoW & CivilComments \\
    Metric $\rightarrow$ & Avg. Acc. & \multicolumn{2}{c|}{Worst Pearson r} & Worst-region Acc & Worst-domain Acc. \\
    Eval. type $\rightarrow$ & OOD & ID & OOD & OOD & ID \\
    \hline
         CORAL & 59.5 (7.7) & {\bf 0.59 (0.03)} & {\bf 0.44 (0.06)} & 31.7 (1.2) & 65.6 (1.3) \\
         IRM & 64.2 (8.1) & 0.57 (0.08) & 0.43 (0.07) & 30.0 (1.4) & 66.3 (2.1) \\
         ERM & {\bf 70.3 (6.4)} & 0.57 (0.07) & {\bf 0.45 (0.06)} & {\bf 32.3 (1.2)} & 56.0 (3.6) \\
         \dro & 68.4 (7.3) & 0.54 (0.11) & 0.39 (0.06) & 30.8 (0.8) & {\bf 70.0 (2.0)} \\
         \cg  & {\bf 69.4 (7.8)} & {\bf 0.58 (0.05)} & 0.43 (0.03) & {\bf 32.0 (2.2)} & {\bf 69.1 (1.9)}\\ \hline
    \end{tabular}
    \caption{Evaluation on WILDS datasets: All numbers averaged over multiple runs, standard deviation is shown in parenthesis. Second row shows the evaluation metric, and the third shows the evaluation type: in-domain (ID) or out-of-domain (OOD). Two highest absolute performance numbers are marked in bold in each column.}
    \label{tab:cgd:results:wilds}
\end{table}

\section*{How do synthetic experiments relate to real-world datasets?}
\label{appendix:cgd:synth_to_real}
In this section we discuss the similarities between our synthetic settings of Section~\ref{sec:cgd:qua} and the real-world datasets~\ref{sec:cgd:expt}. 

The text benchmarks (MultiNLI, CivilComments-WILDS) and CelebA resemble our toy setup of Section~\ref{sec:cgd:qua:neg}. 
In MultiNLI, the examples are grouped based on whether or not they contain negation words. The examples from the domain with no negation words, therefore, do not contain any spuriously correlated features. 
Similarly, in CivilComments-WILDS the examples from black demographic (domain) contain spurious correlation (they contain tokens that identify the demographic, which can be easily exploited to classify examples from black domain as mostly toxic), while such spurious features are absent in the non-black demographic (domain). 
In the CelebA dataset too, male non-blonde (majority) negatively transfers to the male blonde (minority) since the classifier may learn to interpret short hair (male) to be non-blonde. On the other hand, the female blond/non-blond domains do not contain any known spurious correlation.
In all the cases, CGD could avoid learning spurious features by focussing training on domains with no or relatively low spurious correlation similar to what was demonstrated in Section~\ref{sec:cgd:qua:neg}, thereby learning a more robust solution with dampened strength of spurious features. 

FMoW-WILDS, PovertyMap-WILDS are similar to our label noise simple setup of Section~\ref{sec:cgd:qua:noise}.
FMoW-WILDS task is to classify a satellite image into one of 62 land-use categories. The dataset is annotated with human curators labeling if a marked region in an image contains, say a ``police station''~\citep{PoverertMap}. Depending on the demographic spread of the human curators, the label correctness is expected to vary from one region to another. Also, some land-use categories are far easier to classify than others (for eg. ``police station'' vs ``helipad''). Similarly, PovertyMap-WILDS task is to map a satellite image to its poverty index. Poverty index per region (urban/rural settlement) ground-truth was acquired through secondary sources such as asset index of the region from the national demographic surveys~\citep{FMoW}. The asset to wealth index per region was found to vary per country and hence the quality of the label. The non-uniform label noise of the two datasets is similar to our setup in Section~\ref{sec:cgd:qua:noise}. CGD focuses only on the difficult domains that transfer well to the rest, unlike group-DRO that only pursue the maximum loss domains.
 
\section{Subpopulation Shift vs Domain Generalization}
Both \crossgrad\ and CSD fail to perform in the subpopulation shift setting owing largely to data imbalance. Majority of domain generalization algorithms cannot handle data imbalance because their design was guided by standard benchmarks, which contain almost uniform domain population. We will elaborate on their limitations for subpopulation shift robustness in this section. 

Subpopulation shift can be viewed as a special case of the domain generalization problem where the distribution of domains is defined as mixture of train distributions. Albeit with the crucial difference that the objective of subpopulation shift problem is to alleviate worst domain generalization while that of domain generalization is average generalization. Given their similarities, it is worth discussing if domain generalization methods also render subpopulation shift robustness; we already addressed this question in the experiments section to some extent. In this section, we specifically comment on efficacy of the domain generalization algorithms: \crossgrad{}, CSD, proposed in Chapter~\ref{chap:dg} for subpopulation shift.  

\crossgrad\ depends on the domain classifier for augmenting examples from new domains. Training a good domain classifier is challenging for highly imbalanced subpopulation shift datasets. In this setting, domain classifier can be prone to overfitting just as the label classifier. In fact if we can predict domain well from an example, we can learn domain-specific  classifiers and address subpopulation shift problem directly. We also empirically evaluated domain classification performance on Waterbirds and CelebA dataset, and found them to have performance close random baseline. Owing to challenges in training a domain classifier, \crossgrad{} augmentations cannot address the minority domain overfitting of the subpopulation shift problem.

Empirical evaluation revealed that CSD too fails to learn models robust to subpopulation shifts. Due to the presence or absence of spurious correlations and population skew, the pace of learning varies between different domains. CSD assumes that the optimal classifier for each domain is a combination of a common component and a specific component, which is violated when domains learn at different paces. The following simple exercise illustrates this CSD limitation. Consider a linear classification example with three domains and the optimal classifier per domain is proportional to $[1, 1, -0.5]^T, [1, 1, 1]^T, [1, 1, 1]^T$ respectively. Since the last feature has unstable label correlation (spurious feature), the ideal generalizing classifier is $[1, 1, 0]^T$. When the pace of learning is uniform across domains, the analytically derived decomposition of per-domain classifiers into $w_c, w_s$ is as shown in~\eqref{eqn:cgd:csd_ideal_decomposition}. In the expression below, we draw on the notation of Section~\ref{sec:csd} to decompose the per domain classifier W as the sum of common and specific components: $W = \ones{}w_c^T + \gamma w_s^T, w_c\perp w_s$. 

\begin{align}
&\underbrace{\begin{bmatrix}
1 & 1 & -0.5\\
1 & 1 & 1 \\
1 & 1 & 1
\end{bmatrix}}_{W} 
= \begin{bmatrix}1\\1\\1\end{bmatrix}\underbrace{\begin{bmatrix}1 & 1 & 0\end{bmatrix}}_{w_c^T} 
+ \begin{bmatrix}-0.5\\1\\1\end{bmatrix}\underbrace{\begin{bmatrix}0 & 0 & 1\end{bmatrix}}_{w_s^T}
\label{eqn:cgd:csd_ideal_decomposition}
\end{align}
The decomposition shown above correctly recovered the ideal common and specific components. However in the subpopulation shift setting, domains could learn at a different pace. For instance, if the first domain learns twice as fast as the other two domains, the analytically obtained solution to per-domain classifier decomposition is as shown below. 
\begin{align}
\underbrace{\begin{bmatrix}
2 & 0 & 0\\
0 & 1 & 0 \\
0 & 0 & 1
\end{bmatrix}  
\begin{bmatrix}
1 & 1 & -0.5\\
1 & 1 & 1 \\
1 & 1 & 1
\end{bmatrix}}_{W} 
= \begin{bmatrix}1\\1\\1\end{bmatrix}\underbrace{\begin{bmatrix}1 & 1 & 1\end{bmatrix}}_{w_c^T} 
+ \begin{bmatrix}2\\0\\0\end{bmatrix}\underbrace{\begin{bmatrix}0.5 & 0.5 & -1\end{bmatrix}}_{w_s^T}
\label{eqn:cgd:csd_practice_decomposition}
\end{align}

From the values shown above, observe that we can no longer recover the true common ($w_c$) and specific component ($w_s$) as desired. This simple exercise illustrates why CSD fails in subpopulation shift setting.
It is worth noting that the scale issue of decomposition cannot be simply handled by normalizing. Since we do not know stable features a priori, we cannot normalize such that stable features remain stable post normalization.

\section{Related Work}
Distributionally robust optimization (DRO) methods~\citet{duchi18} seek to provide uniform performance across all examples, through focus on high loss domains. As a result, they are not robust to the presence of outliers~\citep{Hashimoto18,Hu18,DORO21}. \citet{DORO21} handles the outlier problem of DRO by isolating examples with high train loss.

\citet{Sagawa19} extend DRO to the case where training data contains demarcated domains like in our setting.
\citet{Dagaev21,Liu21,Creager21,Ahmed21} extend the sub-population shift problem to the case when the domain annotations are unknown. They proceed by first inferring the latent domains of examples that negatively interfere in learning followed by robust optimization with the identified domains. In the same vein,~\citet{Zhou21,Bao21}  build on {\dro} for the case when the supervised domain annotations cannot recover the ideal distribution with no spurious features in the family of training distributions they represent. \citet{Zhou21} maintains per-domain and per-example learning weights, in order to handle noisy domain annotations. \citet{Bao21} uses environment specific classifiers to identify example groupings with varying spurious feature correlation, followed by {\dro}'s worst-case risk minimization. 

\citet{Goel20} augment the minority domain with generated examples. However, generating representative examples may not be easy for all tasks. 
In contrast, \citet{SagawaExacerbate20} propose to sub-sample the majority domains to match the size of minority domains. In the general case, with more than two domains and when domain skew is large, such sub-sampling could lead to severe data loss and poor performance for the majority domains. \citet{MenonOverparam21} partially get around this limitation by pre-training on the majority domain first.   \citet{Ahmed21} considered as a baseline a modified version of {\dro} such that all the label classes have equal training representation.  However, these methods have only been applied on a restricted set of image datasets with two domains, and do not consistently outperform {\dro}. In contrast, we show consistent gains over DRO and ERM up weighing, and our experiments are over eight datasets spanning image and text modalities.


It is well established that example reweighting methods converge to the same solution as with \erm\ loss for overparameterized deep learning models~\citep{Byrd19, Sagawa19, Zhai22}. However, these empirical or theoretical results do not hold when regularizing or early stopping the model training. In all our experiments, we picked the best model through early stopping based on validation split performance. \ermuw\, \dro\, \cg\ too may converge to the same solution as \erm{}, however the path toward the solution, and as a result the best model (according to the validation performance) along the path could vary.  

\noindent
{\bf Domain Generalization} (DG)  algorithms~\citep{CORAL,PiratlaCG,ArjovskyIRM19,Ahmed21} train such that they generalize to all domains including the seen domains that is of interest in the sub-population shift problem. However, due to the nature of DG benchmarks, the DG algorithms are not evaluated on cases when the domain sizes are heavily disproportionate such as in the case of sub-population shift benchmarks (Table~\ref{tab:cgd:dataset:all}). We compare with two popular DG methods: CORAL~\citet{CORAL} and IRM~\citep{ArjovskyIRM19} on the WILDS benchmark, and obtain significantly better results than both.

\section{Discussion}
We present a simple, new algorithm {\cg} for training with domain annotations that models inter-domain interactions for minority domain generalization. 
We demonstrated the qualitative as well as empirical effectiveness of {\cg} over existing and relevant baselines through extensive evaluation using simple and real-world datasets. We also prove that {\cg} converges to first order stationary point of ERM objective even though it is not performing gradient descent on ERM. As part of future work, we could extend \cg\ to work in settings with noisy or unavailable domain annotations following recent work that extends {\dro} to  such settings. The merits of {\cg}, rooted in modeling inter-domain interactions, could be more generally applied to example weighting through modeling of inter-example interactions, which could train robust models without requiring domain annotations.

\subsection*{Subsequent work} 
\vspace{-20pt}
Identifying example groups in the train data on which standard training under-performs is a nontrivial task. Subpopulations could be naturally defined by grouping on example properties, such as portraits of male vs female, blond vs brunette, colored vs white, and are combinatorially large. Nevertheless, the subpopulations that require our attention due to divergent model behaviour may only be handful. 
A large chunk of subsequent work~\citep{Liu21,Creager21} is devoted to automatically identifying subpopulations with spurious correlations given a training dataset and a training algorithm. Few others~\citep{Sohoni20,Zhang22} proposed training methods for subpopulation shift robustness without any domain annotations.  




\chapter{Accuracy surfaces over Attribute spaces}
\label{chap:aaa}
\def\mlaas{MLaaS}
\def\match{\text{Agree}}
\newcommand{\service}{S}

\newcommand{\calFull}{Cal:Full}
\newcommand{\calRaw}{Cal:Raw}
\newcommand{\calGold}{Cal:Gold}
\newcommand{\calTemp}{Cal:Temp}
\newcommand{\mx}{$\sigma$}
\def\cX{\mathcal{X}}
\def\cY{\mathcal{Y}}
\def\hy{y}
\def\E{\mathbb{E}}
\def\bmean{{\phi}}
\def\bscale{{\psi}}

\newcommand{\dseed}{{D}}
\def\accgold{\rho}
\def\attspace{\mathcal{A}}
\newcommand{\arms}{$\mathcal{A}$}
\def\attsubspace{\bar{\attspace}}
\def\attpred{M}
\def\attlist{A}
\def\attrlist{\attlist}
\def\attarm{{\bm{a}}}
\def\accpred{\Pr}
\def\budget{B}
\def\xv{\bm{x}}
\def\acc{\rho}
\def\iacc{c}
\def\thresh{Thresh}

\def\Beta{\mathfrak{B}}
\newcommand{\f}{f}
\newcommand{\g}{g}
\def\fpost{\Pr(\f|\dseed)}
\def\gpost{\Pr(\g|\dseed)}
\def\bmeanpost{\bmean_\dseed}
\def\bscalepost{\bscale_\dseed}
\def\km{K_1}
\def\ks{K_2}
\def\Emb{\mathcal{V}}

\def\shortname{AAA}
\def\longname{Attributed Accuracy Assay}

\newcommand{\imdbmf}{MF-IMDB}
\newcommand{\celebamf}{MF-CelebA}

\def\PerArmBeta{Beta-I}  
\def\BetaOnly{Beta}  

\def\BernGP{BernGP}
\def\BetaGP{BetaGP}
\def\BetaAB{\mbox{BetaGP{$\alpha$}{$\beta$}}}
\def\cpred{CPredictor}
\def\DirGP{BetaGP-SL}
\def\DirGPR{BetaGP-SLP}

In the previous chapters, we discussed challenges in training models that are robust to domain-shifts. 
In this chapter, we introduce a new paradigm of evaluating deployed models. Our starting observation is that the ML model's performance varies under domain shift. Yet, deployed models follow the standard evaluation practice of reporting performance on benchmarks, which need not inform performance on diverse data regions of user's interest. The user, therefore, finds it difficult to choose the best model without extensive pilot trials~\citep{frugalml20}.
Different users may need to deploy the model on very different data distributions, with possibly widely different performance. Several recent studies have highlighted the variability in accuracy across data sub-populations \citep{Subbaswamy21eval,Sagawa19}, specifically in the context of fairness \citep{gendershades,modelcards19,JiSS20neurips}, and also proposed active estimation techniques of sub-population accuracy~\citep{JiLR20,Miller21}. Setting the performance expectations is crucial for building trust and addressing reliability challenge when deploying a model.

In this chapter, we discuss an evaluation approach suitable for models deployed in the wild. Our objective is to enable the end-user to query the performance on their data without any data acquisition effort. 
We propose that a model provider, or a model standardization agency, publish the accuracy of the classification model, not as one or few aggregate numbers, but as a \emph{surface} defined on a space of input instance \emph{attributes} that capture the variability of consumer expectations.  Indoor/outdoor, day/night, urban/rural may be attributes of input images for visual object recognition tasks. Speaker age, gender, ethnicity/accent may be attributes of input audio for speech recognition tasks.  We call a combination of attributes in their Cartesian space an \emph{arm} (borrowing from bandit terminology)\footnote{\figurename~\ref{fig:aaa:task:stats1c} shows an example of diverse accuracy over arms.}. 
The accuracy surface estimation is a trivial task if we have access to labeled data with sufficient support per arm.  However, assuming access to labeled data spanning combinatorially large number of arms is impractical.
We are interested in estimating the accuracy surface up to a certain level of confidence while minimizing the amount of data to be labeled. 
Estimation over arms that grow combinatorially inevitably leads to extreme sparsity of labeled instances for many arms. A central challenge is how to smooth the estimate across related arms while faithfully representing the uncertainty for active exploration. 

\noindent
{\bf Summary of our contributions}

In Section~\ref{sec:aaa:problem}, we describe the estimation problem. In Section~\ref{sec:aaa}, we present \longname{} (\shortname) --- a practical system that estimates accuracy, together with the uncertainty of the estimate, as a function of the attribute space.  \shortname{} uses these estimates to drive the sampling policy for each attribute combination. Gaussian Process (GP) regression is a natural choice to obtain smooth probabilistic accuracy estimates over arm attributes. However, a straightforward GP model fails to address the challenge of heteroscedasticity that we face with uneven and sparse supervision across arms.  We model arm-specific accuracy as a Beta density that is characterized by mean and scale parameters, which are sampled from two GPs that are informed by suitable trained kernels over the attribute space.  The second GP allows us to estimate the variance for each arm independently to handle heteroscedasticity. We propose two further enhancements to the training of this model.
First, we recognize an over-smoothing problem with GP's estimation of the Beta scale parameters, and propose  
a Dirichlet likelihood to supervise the relative values of scale across arms. Second, we recognize that arms with very low support interfere with learning the kernel parameters of the GPs.  We mitigate this by pooling observations across related arms.
With these fixes, \shortname{} achieves the best estimation performance among competitive alternatives.

Another practical challenge in our setting is that some attributes of instances are not known exactly. For example, attributes, such as camera shutter speed or speaker gender, may be explicitly provided as meta information attached with instances.  
But other attributes, such as indoor/outdoor, or speaker age, may have to be estimated noisily via another (attribute) classifier, because accurate human-based acquisition of attributes would be burdensome.   \shortname{} also tackles uncertain attribute inference.  Its attribute classifiers are trained on a small amount of labeled data and their error rates are modeled in a probabilistic framework.

In Section~\ref{sec:aaa:Expt}, we report on extensive experiments using several real data sets. Comparison with several estimators based on Bernoulli arm parameters, Beta densities per arm, and even simpler forms of GPs on the arm Beta distributions, shows that \shortname{} is superior at quickly cutting down arm accuracy uncertainty.

\noindent
Work in this chapter is based on~\citet{PiratlaAAA21}.

\section{Setup}
\label{sec:aaa:problem}
Our goal is to evaluate a given pretrained model $S$ used by a diverse set of consumers. 
The model $S:\cX\mapsto\cY$ could be any predictive model that, for an input instance $\vx \in \cX$, assigns an output label $\hat{y} \in \cY$, where $\cY$ is a discrete label space. Let $y$ denote the true label of $\vx$ and $\match(y,\hat{y})$ denote the match between the two labels.  For scalar classification, $\match(y,\hat{y})$ is binary \{0,1\}. For structured outputs, e.g., sequences, we could use measures like BLEU scores in [0,1].   Classifiers are routinely evaluated on their expected accuracy on a data distribution $\Pr(\mathcal{X}, \mathcal{Y})$: 
\begin{align}
\accgold = \E_{\Pr(\vx,y)}[\match(y, S(\vx)] \label{eq:OneGamma}
\end{align}
We propose to go beyond this single measure and define accuracy as a surface over a space of attributes of the input instances.  
Let $\attrlist$ denote a list of $K$ attributes that capture the variability of consumer expectation.  For instance, visual object recognition is affected by the background scene, and facial recognition is affected by demographic attributes.  We use $\attrlist(\vx) \in \attspace$ to denote the vector of values of attributes of input $\vx$ and $\attspace$ to denote the Cartesian product of the domains of all attributes.  An attribute could be discrete, e.g., the ethnicity of a speaker; Boolean, e.g., whether a scene is outdoors/indoors; or continuous, e.g., the age of the speaker in speech recognition. Some of the attributes of $\vx$, for example the camera settings of an image, may be known exactly, and others may only be available as a distribution $\attpred_k(a_k|\vx)$ for an attribute $a_k \in \attrlist$, obtained from a pre-trained probabilistic classifier.

Generalizing from a single global expected accuracy \eqref{eq:OneGamma}, we define the accuracy surface  $\accgold:\attspace \rightarrow [0,1]$ of a pretrained model $S$ at each attribute combination $\attarm \in \attspace$, given a data distribution $\Pr(\cX,\cY)$, as
\begin{equation}
\accgold(\attarm) = \E_{\Pr(\vx,y|A(\vx)=\attarm)}[\match(y, S(\vx)]
\end{equation}
Our goal is to provide an estimate of $\accgold(\attarm)$ given two kind of  data sampled from $\Pr(\cX,\cY)$: a 
small labeled sample $\dseed$, and a large unlabeled sample~$U$.  
In addition, we are given a budget of $\budget$ instances for which we can seek labels $y$ from a human by selecting them from~$U$.  Applying $M_k$ to all of $U$ is, however, free of cost.

We aim to design a probabilistic estimator for~$\accgold(\attarm)$, which we denote as $\accpred(\acc|\attarm)$ where $\acc \in [0,1]$ and $\attarm \in \attspace$.  This is distinct from active learning, which selects instances to train the learner toward greater accuracy, and also active accuracy estimation~\citep{JiLR20}, which does not involve a surface over the~$\attarm$s.  We also show that standard tools to regress from $\attarm$ to $\accgold$ are worse than our proposal.

We measure the quality of our estimate as the square error between the gold accuracy $\accgold(\attarm)$ and the mean of the estimated accuracy distribution $\accpred(\acc|\attarm)$.  Our estimator distribution naturally gives an idea of the posterior variance of accuracy estimate of each attribute combination, which we use for uncertainty-based exploration.  

The overall setup of the estimator pipeline is shown in Figure~\ref{fig:aaa:setup:a}. The accuracy surface estimating agent takes as input the four resources shown in orange: (a) human specified interpretable attributes $\attrlist$ (b) attribute classifiers $\{M_k, k\in \attrlist\}$ (c) (un)labeled data $U, \dseed$  (d) access to model/service under scrutiny $S$. At any moment, the estimator picks an example and seeks human feedback of its correctness $\text{Agree}(S(\vx), y)$. Finally, upon reaching the desired confidence level, the accuracy surface is emitted. Figure~\ref{fig:aaa:setup:b} shows the estimation pipeline. The example~$\vx$ and its observation (1/0---correctly predicted or not) are mapped to its corresponding arm $A(\vx)$, which is discussed in Section~\ref{sec:aaa:AttribNoise}. The accuracy surface table is then updated with the arm, observation pair as presented broadly in Section~\ref{sec:aaa}. We then pick the most uncertain arm, which further picks the next example $\vx^\prime$ as explained in Section~\ref{sec:aaa:Explore}.

\begin{figure}
    \centering
    \begin{subfigure}[b]{0.48\textwidth}
        \includegraphics[width=\textwidth]{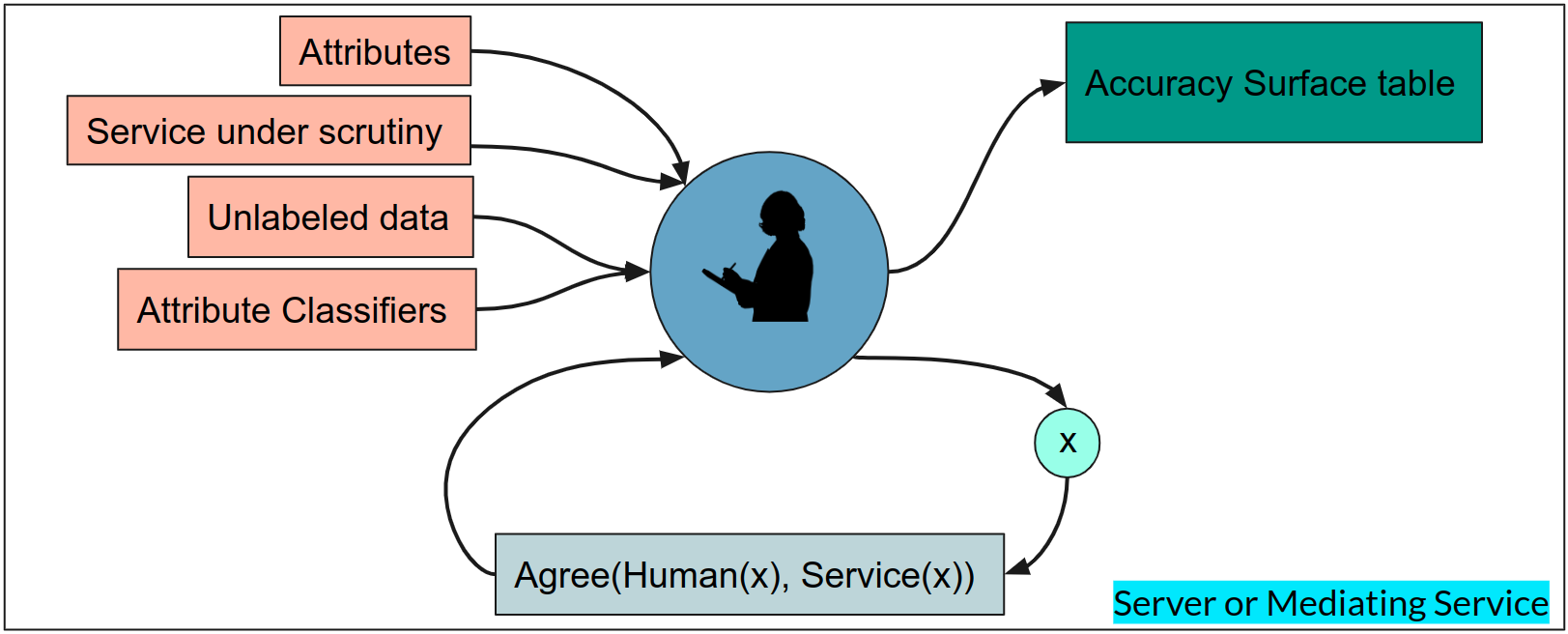}
        \caption{Overall setup.}
        \label{fig:aaa:setup:a}
    \end{subfigure}
    \hfill
    \begin{subfigure}[b]{0.48\textwidth}
        \includegraphics[width=\textwidth]{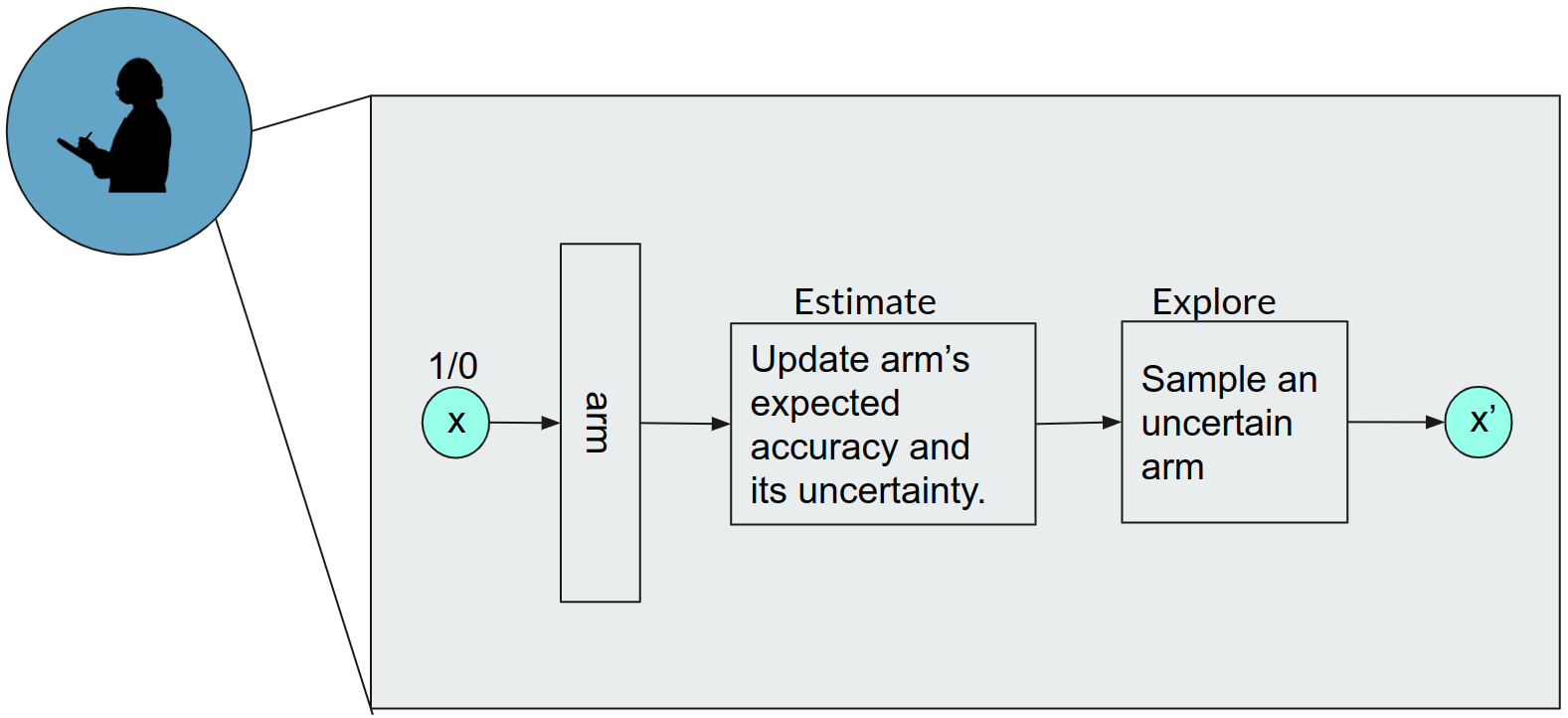}
        \caption{Estimation pipeline. }
        \label{fig:aaa:setup:b}
    \end{subfigure}
    \caption{See text for description.}
    \label{fig:aaa:setup}
\end{figure}

\section{Proposed Estimator}
\label{sec:aaa}

We will first review recent work that leads to candidate solutions to our problem, discuss their limitations, and finally present our solution.
Initially, to keep the treatment simple, we assume arm mapping for an example $\vx$: $\attlist(\vx)\in \attspace$ and gold $y$ (hence $c=\match(S(x), y)$, the model correctness bit)
is known for all instances.  Later in this section, we remove these assumptions.

The simplest option is to ignore any relationship between arms, and, for each arm $\attarm$, fit a suitable density over $\accgold(\attarm)$.  When this density is sampled, we get a number in $[0,1]$, which is like a coin head probability used to sample correctness bits~$c$.
For representing uncertainty of accuracy values (which are ratios between two counts), the \href{https://en.wikipedia.org/wiki/Beta_distribution}{Beta distribution} $\Beta(\cdot,\cdot)$ is a natural choice.  
We call this baseline method \textbf{\PerArmBeta}.

The variance of the estimated Beta density can be used for actively sampling arms.  \citet{JiLR20} describe a related scenario, stressing on active sampling.  However, this approach cannot share observations or smooth the estimated density at a sparsely-populated arm with information from similar arms.  In our real-life scenario, we expect accuracy surface smoother and the number of arms to be large enough that many arms will get very few, if any, instances.

The second baseline method, which we call \textbf{\BernGP}, is to view the $(\attarm,\iacc)$ instances in $\dseed$ as a standard classification data set with the binary $\iacc$ values as class label and $\attarm$ as input features.  Given the limited data, we can use the well-known GP classification approach \citep{HensmanMG15} for fitting smooth values $\acc$ as a function of~$\attarm$. %
Suppose the arms $\attarm$ can be embedded to $\Emb(\attarm)$ in a suitable space induced by some similarity kernel.  In this embedding space, we expect the accuracy of $S$ to vary smoothly.
Given a kernel $K_1(\attarm, \attarm')$ to guide the extent of sharing of information across arms, a standard form of this GP would be
\begin{align}
\Pr(\iacc|\attarm) &= 
\text{Bernoulli}(\iacc; \operatorname{sigmoid}(f_\attarm));
\quad f \sim GP(0, K_1).
\label{eq:bernLL}\\
\E(\rho|\attarm) &= \E_{f_\attarm\sim GP(0, K_1)} 
\left[ \textstyle \operatorname{sigmoid}(f_\attarm)
       \right] \nonumber\\
\mathbb{V}(\rho|\attarm) &= \E_{f_\attarm\sim GP(0, K_1)} 
\left[ \textstyle  (\operatorname{sigmoid}(f_\attarm)-\E(\rho|\attarm))^2
      \right]
\label{eqn:bernLL:var}
\end{align}
Where $f_\attarm, g_\attarm$ denotes the mean and variance at $\attarm$ as modeled by GP model. The GP can give estimates of uncertainty of $\accgold(\attarm)$ (\eqref{eqn:bernLL:var}), which may be used for active sampling of arms.

As we will demonstrate, such GP-imposed estimate of uncertainty of $\accgold(\attarm)$ is inadequate, because it loses sight of the number of supporting observations at each arm, which could be very diverse.
This is because the standard GP assumption of homoscedasticity, that is, identical noise around each arm is violated when observations per arm differ significantly.  We therefore need a mechanism to separately account for the uncertainty at each arm, even the unexplored ones, to guide the strategy for actively collecting more labeled data. 

\subsection{Heteroscedastic Modeling}
\label{sec:aaa:betagp}
We model arm-specific noise by allowing each arm to represent the uncertainty of $\rho_a$, not just by an underlying GP as in \BernGP\ above, but also by a separate scale parameter.  Further, the scale parameter is smoothed over neighboring arms using another GP. The influence of this scale on the uncertainty of $\rho_a$ is expressed by a Beta distribution as follows: 
\begin{align}
\accpred(\acc|\attarm) &\sim \Beta(\acc; \bmean(\f_\attarm), \bscale(\g_\attarm)) \label{eq:model}\\
\bmean(\f_\attarm) &= \operatorname{sigmoid}(\f_\attarm),
\qquad \f \sim GP(0, \km),  \label{eq:priorm}  \\
\bscale(\g_\attarm) &= \log(1+e^{\g_\attarm}),
\qquad \g \sim GP(0, \ks), \label{eq:priors}
\end{align}
where we use $\bmean(f_\attarm), \bscale(g_\attarm)$ to denote the parameters of the Beta distribution at arm $\attarm$.
The Beta distribution is commonly represented via $\alpha,\beta$ parameters whereas we chose the less popular mean ($\bmean$) and scale ($\bscale$) parameters.  While these two forms are functionally equivalent with $\bmean = \frac{\alpha}{\alpha+\beta}, \bscale=\alpha+\beta$, we preferred the second form because imposing GP smoothness across arms on the mean accuracy and scale seemed more meaningful.  We validate this empirically in the Appendix~\ref{sec:aaa:appendix:betaab}.  

Two kernel functions $\km(\attarm,\attarm')$, $\ks(\attarm,\attarm')$ defined over pairs of
arms $\attarm, \attarm' \in \attspace$ 
control the degree of smoothness among the Beta parameters across the arms.  We use an RBF kernel defined over learned shared embeddings $\Emb(\attarm)$:
\begin{align}
    \km(\attarm, \attarm') = s_1\exp\left[-\tfrac{\|\Emb(\attarm) - \Emb(\attarm')\|^2}{l_1}\right],
    \qquad
     \ks(\attarm, \attarm') = s_2\exp\left[-\tfrac{\|\Emb(\attarm) - \Emb(\attarm')\|^2}{l_2}\right]
    \label{eq:kernel}
\end{align}
where $s_1,s_2,l_1,l_2$ denote the scale and length parameters of the two kernels.  The scale and length parameters are learned along with the parameters of embeddings $\Emb(\bullet)$ during training.

Initially, we assume we are given a labeled dataset $\dseed = \{(\vx_i, \attarm_i, y_i):i=1\ldots,I\}$ with attribute information available.  Using predictions from the pretrained model $\service$, we associate a 0/1 accuracy $\iacc_i=\match(y_i, \service(\vx_i))$.  We can thus extend $\dseed$ to $\{(\vx_i, \attarm_i, y_i, c_i): i\in[I]\}$.

Let $\iacc_\attarm=\sum_{i: A(\vx_i)=\attarm} \iacc_i$ denote the total agree score in arm $\attarm$.  Let $n_\attarm$ denote the total number of labeled examples in arm~$\attarm$. 
The likelihood of all observations given functions $\f,\g$ decomposes as a product of Beta-binomial\footnote{The $\binom{n_a}{c_a}$ term does not apply since we are given not just counts but accuracy $c_i$ of individual points.} distributions at each arm as follows:
\begin{align}
\Pr(\dseed|\f,\g) &= \prod_\attarm 
\int_\rho \rho^{\iacc_a}(1-\rho)^{n_\attarm-\iacc_\attarm}\, \Beta(\rho|\bmean(\f_{\attarm}), \bscale(\g_{\attarm}))) \text{d}\rho.
\label{eq:pointLL} \\
&=  \prod_\attarm  \frac{\text{B}(\bmean(\f_{\attarm})\bscale(\g_\attarm) + \iacc_a,   (1-\bmean(\f_{\attarm}))\bscale(\g_\attarm)  + n_\attarm -\iacc_a) }{
\text{B}(\bmean(\f_{\attarm})\bscale(\g_\attarm),   (1-\bmean(\f_{\attarm}))\bscale(\g_\attarm))
},
\label{eq:pointLLSimple}
\end{align}
where \text{B} is the Beta function, and the second expression is a rewrite of the \href{https://en.wikipedia.org/wiki/Beta-binomial_distribution}{Beta-binomial likelihood}.

During training we calculate the posterior distribution of functions $f,g$ using the above data likelihood $\Pr(D|f,g)$ and GP priors given in~\eqref{eq:priorm} and~\eqref{eq:priors}.  The posterior cannot be computed analytically given our likelihood, so we use variational methods. Further, we reduce the $\mathcal{O}(|\attspace|^3)$ complexity of posterior computation, using the inducing point method of \citet{HensmanMG15}, where we approximate posterior distribution with $m$ inducing points. Thereby, jointly learning $m$ locations $\mathbf{u} \in \mathbb{R}^{d\times m}$, mean $\mu \in \mathbb{R}^m$, and covariance $\Sigma \in \mathbb{R}^{m\times m}$ of inducing points. Doing so brings down the complexity to $\mathcal{O}(m^2|\attspace|), m \ll |\attspace|$. These parameters are learned end to end with the parameters of the neural network used to extract embeddings $\Emb(\bullet)$, and kernel parameters $s_1,s_2,\ell_1,\ell_2$. 
We used off-the-shelf Gaussian process library: GPyTorch~\citep{gpytorch} to train the above likelihood with variational methods.
We provide further background on GPs and implementation details in Appendix~\ref{sec:aaa:appendix:gp}. We denote the posterior functions as $\fpost, \gpost$. 
Thereafter, the mean estimated accuracy for an arm $\attarm$ is computed as
\begin{equation}
\label{eq:our:est}
\E(\rho|\attarm) = \E_{\f\sim\fpost} [\bmean(\f_\attarm)].
\end{equation}
We call this setup \textbf{\BetaGP}.
Next, we will argue why \BetaGP{} still has serious limitations, and offer mitigation measures. The limitations are in part due to the well-known underestimation of posterior uncertainty when using variational inference methods to approximate the posterior~\citep{murphyml}(chap. 21). The zero-forcing nature of the order of KL divergence in variational inference causes the underestimation problem. Although there is some work addressing the underestimation problem of variational inference, we could mitigate the issue with simple additional supervision/regularization that we describe next. 




\subsection{Supervision for scale parameters}
\label{sec:aaa:sl}

We had introduced the second GP $g_\attarm$ to model arm-specific noise, and similar techniques have been proposed earlier by \citet{LzaroGredilla2011VariationalHG, Kersting2007MostLH,Goldberg1997}, but for heteroscedasticity in Gaussian observations.  
However, we found the posterior distribution of scale values $\psi(g_\attarm)$ at each arm tended to converge to similar values, even across arms with orders of magnitude difference in number of observations $n_\attarm$.  On hindsight, that was to be expected, because the data likelihood~\eqref{eq:pointLL} increases monotonically with scale $\psi_\attarm$. The only control over its converging to $\infty$ is the GP prior $g \sim GP(0,K_2)$. 
In Section~\ref{sec:aaa:appendix:synth}, we illustrate this phenomenon with an example. 
%
%
We propose a simple fix to the scale supervision problem. We expect the relative values of scale across arms to reflect the distribution of the proportion of observations $\frac{n_a}{n}$ across arms (with $n=\sum_\attarm n_\attarm$).  We impose a joint Dirichlet distribution using the scale of arms $\psi(g_\attarm)$ as parameters, and write the likelihood of the observed proportions as (with $\Gamma$ denoting \href{https://en.wikipedia.org/wiki/Gamma_function}{Gamma function}):
\begin{equation}
\log\Pr(\{n_\attarm\}|g) = \sum_\attarm ((\psi(g_\attarm)-1) \log \frac{n_a}{n} - \log \Gamma(\psi(g_\attarm)) + \log\Gamma(\textstyle\sum_\attarm \psi(g_\attarm))
\end{equation}
We call this \textbf{\DirGP}. With this as an additional term in the data likelihood, we obtained significantly improved uncertainty estimates at each arm, as we will show in the experiment section. 


\subsection{Pooling for sparse observations}
\label{sec:aaa:pool}

Recall that the observations are accumulation of 1/0 agreement scores for all instances that belong to an arm. Given the nature of our problem, arms have varying levels of supervision. 
Even when the available labeled data is large, many arms will continue to have sparse supervision because they represent rare attribute combinations.
%
The combination of high variance observations and sparse supervision could lead to learning of non-smooth kernel parameters. 
In Section~\ref{sec:aaa:appendix:synth}, we demonstrate with a simple setting that GP parameters under-represent the smoothness of the surface due to overfitting on noisy observations.
The situation is further aggravated when learning a deep kernel. 
This problem has resemblance to ``collapsing variance problem''~\citep{murphyml}(chap. 11) such as when Gaussian mixture models overfit on outliers or when topic models overfit a noisy document in the corpus.
Instead of depending purely on GP priors to smooth over these noisy observations, we found it helpful to also externally smooth noisy observations.  For each arm $\attarm$ with observations below a threshold, we mean-pool observations from some number of nearest neighbors, weighted by their kernel similarity with~$\attarm$.  We will see that such external smoothing resulted in significantly more accurate estimates particularly for arms with extreme accuracy values.  We call this method \textbf{\DirGPR} (note that this also includes the scale supervision objective described in the previous section).
Two other mechanisms take us to the full form of the \textbf{\shortname} system, which we describe next.

\subsection{Exploration}
\label{sec:aaa:Explore}
The variance estimate of an arm informs its uncertainty and is commonly used for efficient exploration~\citep{Schulz18tutorial}. 
Let $\fpost,\gpost$ denote the learned posterior distribution of the GPs.  Using these, the estimated variance at an arm is given as:
\begin{align}
\mathbb{V}(\rho|\attarm) = \E_{\f\sim\fpost,\g\sim\gpost} 
\left[ \textstyle \int_\rho (\rho-\E(\rho|\attarm))^2
      \Beta(\rho; \bmean(\f_\attarm ), \bscale(\g_\attarm ))
      \text{d}\rho \right]
\label{eqn:mean_variance}
\end{align}
where the expected value is given in~\eqref{eq:our:est}.  We use sampling to estimate the above expectation.  The arm to be sampled next is chosen as the one with the highest variance among unexplored arms. We then sample an unexplored example with highest affiliation ($\Pr(\va\mid\vx)$) with the chosen arm. 





\subsection{Modeling Attribute Uncertainty}
\label{sec:aaa:AttribNoise}
Recall that attributes of an instance $\vx$ are obtained from  models $M_k(a_k|\vx),~~k \in [K]$, which may be highly noisy for some attributes. Thus, we cannot assume a fixed attribute vector $A(\vx)$ for an instance $\vx$.  We address this by designing a model that can combine these noisy estimates into a joint distribution $\Pr(\va|\vx)$ using which, we can fractionally assign each instance $\vx_i$ across arms.   A baseline model for $\Pr(\va|\vx)$ would be just the product $\prod_{k=1}^K M_k(a_k|\vx)$. However, 
we expect values of attributes to be correlated (e.g. attribute `high-pitch' is likely to be correlated with gender `female'). Also, the probabilities  $M_k(a_k|\vx)$ may not be well-calibrated.  

We therefore propose an alternative joint model that can both recalibrate individual classifiers via temperature scaling~\citep{GuoPSW17}, and model their correlation.   We have a small seed labeled dataset $\dseed$ with gold attribute labels, independent noisy distributions from each attribute model $M_k(a_k|\vx)$, and an unlabeled dataset $U$. We prefer simple factorized models.
We factorize $\log\Pr(\va|\vx)$ as a sum of temperature-weighted logits and a joint (log) potential as shown in expression~\eqref{eqn:factor:assume} below. 
\begin{align}
\Pr(\va|\vx) = \Pr(a_1, a_2, \cdots ,a_K|\vx) \propto N(a_1, a_2, \cdots, a_K)\prod_{k=1}^K t_k M_k(a_k|\vx)
 \label{eqn:factor:assume}
\end{align}
Here $N$ denotes a dense network to model 
the correlation between attributes, and  $t_1,\ldots,t_K$ denote the temperature parameters used to rescale noisy attribute probabilities.
%
The maximum likelihood over $\dseed$ is
$\max_{t,N} \sum_{(\vx_i,\attarm_i) \in \dseed} \log \Pr(\va_i|\vx_i)$ 
%
%
\begin{align}
=\max_{t,N} \sum_{(\vx_i, \attarm_i) \in \dseed} \big\{ \textstyle
\sum_{k=1}^K t_k \log M_{k}(a_{ik}|\vx_i)+\log N(a_{i1}, \ldots a_{iK}) - \log (Z_i) \big\}    
  \label{eqn:max_likelihood}
\end{align}
$Z_i$ denotes the partition function for an example $\vx_i$ which requires summation over~$\attspace$.  Exact computation of $Z_i$ could be intractable especially when \arms\ is large. In such cases, $Z_i$ can be approximated by sampling. In our experiments, we could get exact estimates since |{\arms}| is not too large.

In addition to $\dseed$, we use the unlabeled instances~$U$ with predictions from attribute predictors filling the role of gold-attributes. Details on how we train the parameters on large but noisy $U$ and small but correct $\dseed$ are described in the next section. 

\subsection*{Calibration Training Details}


We use both labeled $\dseed$ and unlabeled $U$ for training calibration parameters that are expressed in the objective~\eqref{eqn:max_likelihood}.
On the unlabeled data $U$, we use attribute values predicted using the predictors: $\{\attpred_k \mid k \in \attlist\}$ as a proxy for true values. The use of predicted values as the replacement for true value under-represents the attribute prediction error rate and interferes in the estimation of temperature parameters $t$. However, if we use $U$ for training, we see a lot more attribute combinations and this can help identify more natural attribute combinations aiding in the learning of joint potential parameters of $N$. 
We mitigate this problem by up-sampling instances in $\dseed$ such that the loss in every batch contains equal contribution from $\dseed$ and~$U$. 
 
Recall that the MLE objective~\eqref{eqn:max_likelihood}, contains contribution from two terms: (a)~temperature scaled logits (b)~attribute combination potential. In practice, we found that the second term (b) dominates the first, this causes under-training of the temperature parameters. Ideally, the two terms should be comparable and replaceable. We address this issue by dropping the second term corresponding to the network-assigned edge potential term in the objective half the times, which estimates better the temperature parameters. Further, we use a small held out fraction of $\dseed$ for network architecture search on~$N$, and for early stopping.

\noindent
{\bf Pseudo-code.} The calibration training procedure is summarized in Alg.~\ref{alg:calibrate}.
The estimation method of \DirGPR{} with variance based exploration and calibration described here constitute our proposed estimator: \shortname{}. 

\begin{algorithm}[htb]
\caption{Attribute Model Calibration}
\begin{algorithmic}[1]
\Require $\dseed , U, \{\attpred_k, k\in \attlist\}, \eta$
\State Initialize $t, N$
\State converged = False, $\tau$ = $10^{-3}$
\While{not converged}
\State d, u$^\prime$ $\leftarrow$ batch(D), batch(U) \Comment{sample a subset for batch processing}
\State u = \{($\vx$, \{$M_k(\vx)\mid k\in A$\}) for $\vx$ in u$^\prime$\}
\State LL = \eqref{eqn:max_likelihood} using d, u
\State t, N = optimizer-update($\eta, \nabla_t LL, \nabla_N LL$)
\State converged = True if LL < $\tau$
\EndWhile
\State \Return $t, N$
\end{algorithmic}
\label{alg:calibrate}
\end{algorithm}

\section{Simple Setting}
\label{sec:aaa:appendix:synth}

In Section~\ref{sec:aaa:sl}, we describe how the objective of \BetaGP{} does not supervise the scale parameter. Further, in Section~\ref{sec:aaa:pool}, we posit that the presence of sparse observations leads to learning a non-smooth kernel. In this section, we illustrate these two observations using a simple setting. 

We consider a simple estimation problem with 10 arms, their true accuracies go from 0.1 to a large value of 0.9 and then back to a small value of 0.2 as shown in the Table~\ref{tab:aaa:simple:fit}. In Table~\ref{tab:aaa:simple:fit}, we also show the number of observations per arm; observe that the first three and the last three arms are sparsely observed.

\begin{table}[th]
    \centering
    \setlength{\tabcolsep}{4pt}
    \begin{tabular}{|l|r|r|r|r|r|r|r|r|r|r|}
        \hline
         \thead{Arm Index} &  \thead{1} & \thead{2} & \thead{3} & \thead{4} & \thead{5} & \thead{6} & \thead{7} & \thead{8} & \thead{9} & \thead{10}\\\hline
         Accuracy & 0.1 & 0.3 & 0.5 & 0.7 & 0.8 & 0.9 & 0.6 & 0.4 & 0.3 & 0.2 \\\hline
         N & 1 & 1 & 1 & 20 & 20 & 20 & 20 & 1 & 1 & 1 \\\hline
         \multicolumn{11}{c}{Estimated Scale Value}\\\hline
         \BetaGP{}  & 10.33 & 10.88 & 11.37 & 11.73 & 11.92 & 11.91 & 11.72 & 11.34 & 10.85 & 10.29 \\
         \DirGP{} &  1.49 & 1.42 & 1.59 & 9.64 & 10.40 & 10.42 & 9.58 & 1.59 & 1.42 & 1.49\\ 
         \DirGPR{} &  1.72 & 1.63 & 1.92 & 9.63 & 10.38 & 10.27 & 9.48 & 1.93 & 1.53 & 1.62\\\hline
    \end{tabular}
    \caption{Arms, their indices, accuracies and number of observations (N) in the simple setting are shown in first three columns in that order. Estimated scale parameter for each algorithm is shown in the last three columns, for each arm. Notice that \BetaGP{} fitted scale parameter does not reflect the underlying observation sparsity for the first and last three arms.}
    \label{tab:aaa:simple:fit}
\end{table}

We now present the fitted values by some of the methods we discussed in the previous section. The index of an arm is the input for any estimator with no feature learning. Toward these ends, we evaluate \BetaGP{},~\DirGP{},~\DirGPR{} methods on this setting. The fitted scale parameters for each arm by each of the estimators is shown in 
Table~\ref{tab:aaa:simple:fit}. Observe that \BetaGP{} fitted scale parameter does not reflect the underlying observation sparsity of the first, last three arms. 
\DirGP{},~\DirGPR{} fitted scale values more faithfully reflect the underlying number of observations. All the numbers reported here are averaged over 20 seed runs. 

\begin{table}[htb]
    \centering
    \begin{tabular}{|l|r|r|r|}
    \hline
         Method & Bias$^2$ & Variance & MSE \\\hline
         \BetaGP{} & 0.052 & 1.011 & 1.063 \\
         \DirGP{} & 0.051 & 1.010 & 1.061 \\
         \DirGPR{} & 0.095 & 0.221 & {\bf 0.316} \\\hline
    \end{tabular}
    \caption{Bias-variance decomposition of MSE in the simple setting. Code can be found at this \href{https://github.com/vihari/AAA/blob/main/notebooks/ToyGP.ipynb}{link}.}
    \label{tab:aaa:simple:bv}
\end{table}
In Table~\ref{tab:aaa:simple:bv}, we show the bias$^2$, variance decomposition of the mean squared error from 20 independent runs. Observe that ~\BetaGP{},~\DirGP{} have low bias but large variance and \DirGPR{} has much lower variance at a slight expense of bias, as a result the overall MSE value for \DirGPR{} is much lower than the other two. Moreover, we look at the fitted kernel length parameter (recall from~\eqref{eq:kernel}) as a proxy for smoothness of the fitted kernel. The average kernel length for \BetaGP{},~\DirGP{},~\DirGPR{} are 0.67, 0.68, 1.87 respectively. \DirGPR{} imposes long range smoothness, as a result, decreases the MSE value more effectively when compared with \DirGP{}.

\section{Experiments}
\label{sec:aaa:Expt}

Our exploration of various methods and data sets is guided by the following research questions.
\begin{itemize}
\item How do various methods for arm accuracy estimation compare?
\item To what extent do \BetaGP, scale supervision and pooled observations help beyond \BernGP?
\item For the best techniques from above, how 
do various active exploration strategies compare?
\item How well does our proposed model of attribute uncertainty work?
\end{itemize}

\paragraph{\large Data sets and tasks:} We experiment with two real data sets and tasks.  Our two tasks are male-female gender classification with two classes and animal classification with 10 classes.

\paragraph{Male-Female classification (MF):} 
CelebA \citep{CelebA} is a popular celebrity faces and attribute data set that identifies the gender of celebrities among 39 other binary attributes.  The label is gender.  The accuracy surface spans various demographic, style, and personality related attributes.  We hand-pick a subset of 12 attributes that include attributes that we deem important for gender classification among some other gender-neural attributes such as if the subject is young or wearing glasses (see Appendix~\ref{sec:aaa:appendix:task} for more details). We used a random subset of 50,000 examples from the dataset for training classifiers on each of the 12 attributes using a pretrained ResNet-50 model. The remaining 150,000 examples in the data set are set as the unlabeled pool from which we actively explore new examples for human feedback. The twelve binary attributes make up for $2^{12}=4,096$ attribute combinations.

\paragraph{Animal classification (AC):} 
COCO-Stuff \citep{COCOS} provides an image collection.  For each image, labels for foreground (cow, camel) and background (sky, snow, water) `stuff' are available.  Visual recognition models often correlate the background scene with the animal label such as camel with deserts and cow with meadows.  Thus, foreground labels are our regular $y$-labels while background stuff labels supply our notion of attributes.

We collapse fine background labels into five coarse labels using the dataset provided label hierarchy. These are: water, ground, sky, structure, furniture.  The Coco dataset has around 90 object (foreground) labels.  Here we use a subset of 10 labels corresponding to animals. We take special care to filter out images with multiple/no animals and adapt the pixel segmentation/classification task to object classification (see the Appendix~\ref{sec:aaa:appendix:task} for more details). The image is further annotated with the five binary labels corresponding to five coarse stuff labels. The scene descriptive five binary labels and ten object labels make up for $32{\times}10=320$ attribute combinations.  

\paragraph{\large Pretrained Models:}

For the MF task, we use two pretrained models~($S$).
\textbf{MF-CelebA} is a gender classification model trained on CelebA. To simulate separate $\dseed$ and $U$, it is trained on a random subset of CelebA with a ResNet50 model.  
\textbf{MF-IMDB} is a publicly available\footnote{\url{https://github.com/yu4u/age-gender-estimation}} classifier trained on IMBD-Wiki dataset, also using the ResNet50 architecture.  The attribute predictors are trained using ResNet50 on a subset of the CelebA dataset for both the models.

\begin{figure}[htb]
    \centering
    \includegraphics[width=0.7\linewidth]{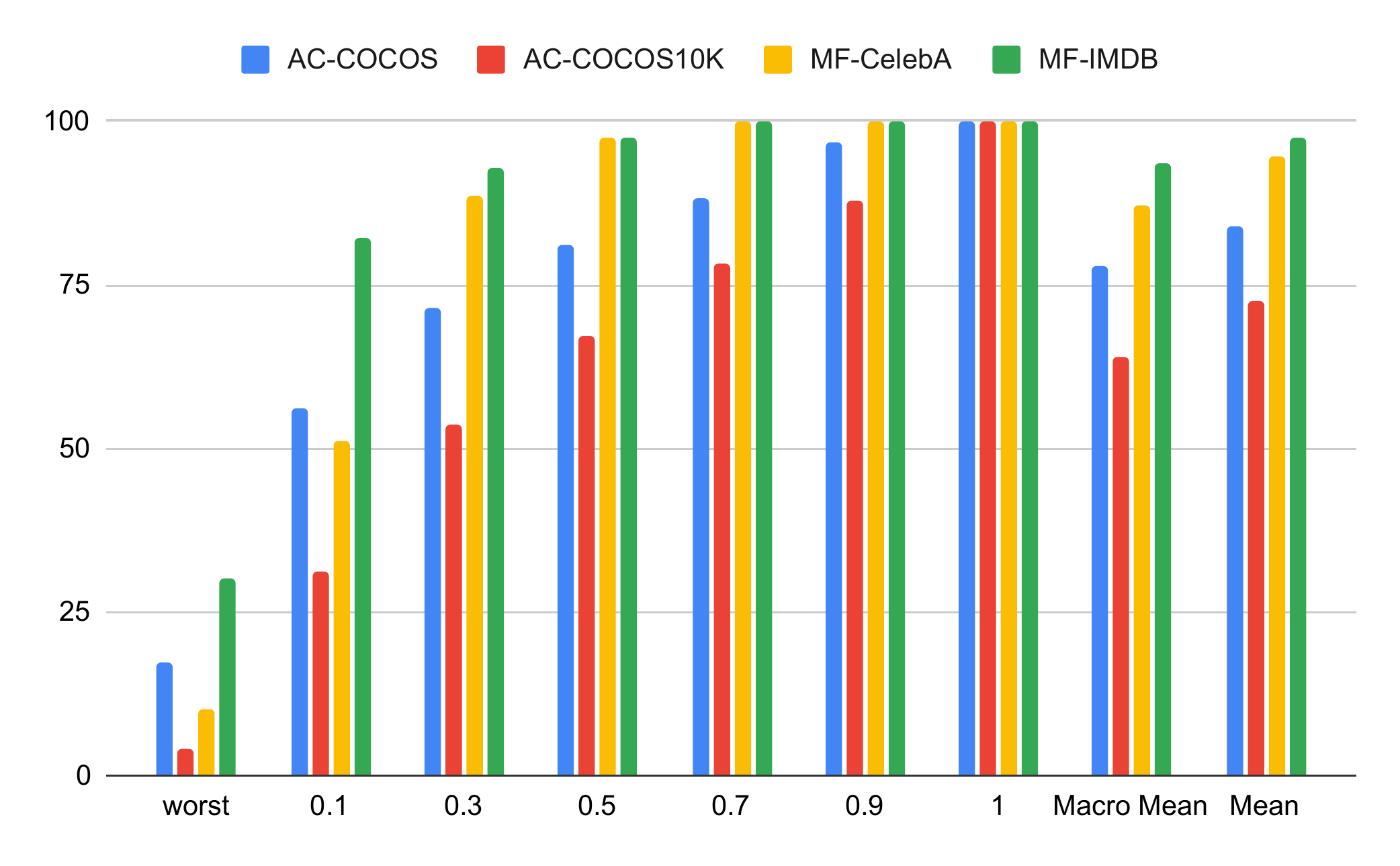}
    \caption{Macro and micro averaged accuracy (right most) and ten quantiles (x-axis) of per-arm accuracy (y-axis). Reporting only standard micro-averaged accuracy hides critical failure on 10\% of the worst arms or data regions.}
    \label{fig:aaa:task:stats1c}
\end{figure}

\begin{table}[htb]
\centering
\begin{tabular}{|l|c|c|c|c|} \hline
ML Models$\rightarrow$ & {\bf AC-COCOS10K} & {\bf AC-COCOS} & {\bf MF-IMDB} & {\bf MF-CelebA}\\ \hline
\cpred & 5.4 / 15.0 & 3.2 / 9.4 & 1.2 / 8.2 & 5.2 / 35.9 \\
\PerArmBeta & 7.0 / 15.6 & 4.3 / 10.0 & 1.6 / 8.4 & 4.7 / 30.3\\
\BernGP & 7.0 / 13.2 & 3.5 / 8.6 & 1.7 / 7.6 & 4.9 / 28.1\\
\BetaGP & 7.1 / 14.3 & 3.3 / 7.9 & 2.2 / 6.6 & 4.6 / 25.9\\
\DirGP & 5.3 / 11.7 & 2.8 / 6.8 & 1.4 / 4.4 & 4.1 / 22.6 \\
\rowcolor{green!10}
\DirGPR & 4.7 / 10.4 & 2.8 / 5.7 & 1.4 / 3.9 & 4.3 / 23.3\\\hline
 \end{tabular}
\caption{Comparing different estimation methods on labeled data  size 2000 across four tasks. No exploration is involved.  Each cell shows two numbers in the format ``macro MSE / worst MSE'' averaged over three runs.  \DirGPR{} generally gives the lowest MSE.}
\label{tab:aaa:est}
\end{table}

For the AC task, we use two publicly available\footnote{\url{https://github.com/kazuto1011/deeplab-pytorch/}} pretrained models~($S$).
\textbf{AC-COCOS} was trained on COCOS data set with 164K examples.
\textbf{AC-COCOS10k} was trained on COCOS10K, an earlier version of COCOS with only 10K instances. We use these architectures for both label and attribute prediction.
%
See Appendix~\ref{sec:aaa:appendix:task},~\ref{sec:aaa:appendix:surface_stats} for more details on accuracy surface, attribute predictor, pretrained models and their architecture. 
In Figure~\ref{fig:aaa:task:stats1c}, we illustrate some statistics of the shape of the accuracy surface for the four dataset-task combinations. Although $S$'s mean accuracy (rightmost bars) is reasonably high, the accuracy of the arms in the 10\% quantile is abysmally low, while arms in the top quantiles have near perfect accuracy. This further motivates the need for an accuracy surface instead of single accuracy estimate.  

\noindent
{\bf \large Methods Compared}\\
We compare the proposed estimation method \shortname{} against natural baselines, alternatives, and ablations.
Some of the methods, such as \textbf{\PerArmBeta}, \textbf{\BernGP} and \textbf{\BetaGP}, we have already defined in Section~\ref{sec:aaa}.  
We train methods \BernGP{} and \BetaGP{} using the default arm-level likelihood.  
We also separately evaluate the impact of our fixes on \BetaGP{} with only scale supervision:~{\bf \DirGP{}} and along with mean pooling:~{\bf \DirGPR{}}.
We also include a trivial baseline: {\bf \cpred{}} that fits all the arms with a global accuracy estimated using gold $\dseed$. We do not try sparse observation pooling with \PerArmBeta{} since there is no notion of per-arm closeness. We also skip it on \BernGP{} since it is worse than \BetaGP{} as we will show below. Recall that {\PerArmBeta} modeling is related to~\citet{JiLR20}.


\noindent
{\bf \large Other experimental settings}\\
\textbf{Gold accuracies~$\accgold(\attarm)$:}
We compute the oracular accuracy per arm using the gold attribute/label values of examples in $U$ which we treat as unlabeled during exploration. For every arm with at least five examples, we set its accuracy to be the empirical estimate obtained through the average correctness of all the examples that belong to the arm. We discard and not evaluate on any arms with fewer than five examples since their true accuracy cannot reliably be estimated. 

\noindent
\textbf{Warm start:} 
We start with 500 examples having gold attributes+labels to warm start all our experiments. The random seed also picks this random subset of 500 labeled examples. We calculate the overall accuracy of the classifier on these warm start examples as  $\hat{\accgold} = ({\sum_i c_i})/({\sum_i 1})$.
For all arms we warm start their observation with $c_\attarm = \lambda\hat{\accgold}, n_\attarm=\lambda$
where $\lambda=0.1$, a randomly picked low value.


Unless otherwise specified, we give equal importance to each arm and report MSE macroaveraged over all arms. Along with MSE, we also sometimes report MSE on the subset of 50 worst (true-)accuracy arms, referred to as worst MSE.  
We report other aggregate errors in the Appendix~\ref{sec:aaa:appendix:metrics}.
All the numbers reported here are averaged over three runs each with different random seed. The initial set of warm-start examples ($\dseed$) is also changed between the runs. In the case of \DirGPR{}, for any arm with observation count below 5, we mean pool from its three closest neighbours. 


In the following Sections:~\ref{sec:aaa:expt:est} and~\ref{sec:aaa:expt:exp}, we compare various estimation and exploration strategies with $\Pr(\va{\mid}\vx)$ noise calibrated as described in Section~\ref{sec:aaa:AttribNoise}. In Section~\ref{sec:aaa:expt:calib}, we study different forms of calibration and demonstrate the superiority of our proposed calibration technique of Equation~\eqref{eqn:factor:assume}.

\begin{figure*}[t]
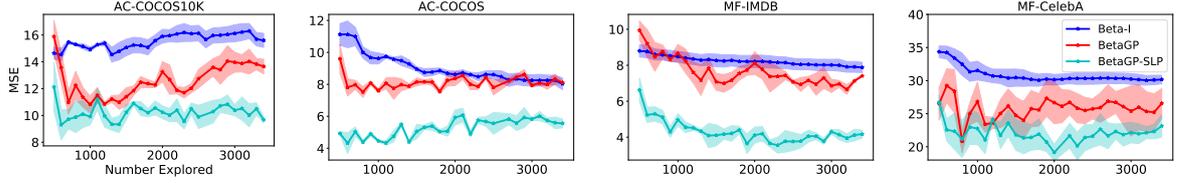

\centering
\begin{tabular}{cccc} \tabcolsep 1pt
\includegraphics[width=.22\hsize]%
{images/aaa/cocos_10k_ablation} &
\includegraphics[width=.22\hsize]%
{images/aaa/cocos_ablation} &
\includegraphics[width=.22\hsize]%
{images/aaa/celebap_ablation} &
\includegraphics[width=.22\hsize]%
{images/aaa/celeba_ablation}
\end{tabular}
\caption{Comparison of estimation methods using worst MSE metric. The shaded region shows standard error. \DirGPR{} consistently performs better than \BetaGP{}. \PerArmBeta{} is worse than its smoother counterparts.} 
\label{fig:aaa:estimation}
\end{figure*}

\begin{figure*}[t]
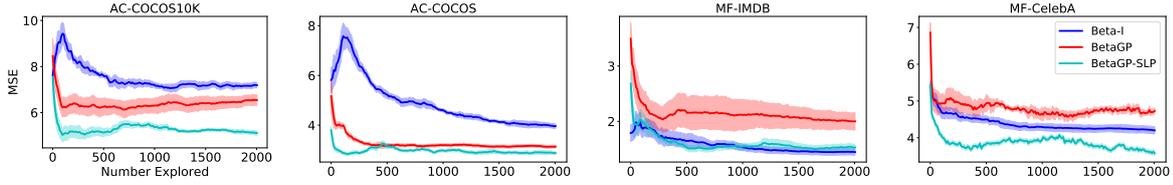

\centering
\begin{tabular}{cccc} \tabcolsep 1pt
\includegraphics[width=.22\hsize]%
{images/aaa/cocos_10k_explore} &
\includegraphics[width=.22\hsize]%
{images/aaa/cocos_explore} &
\includegraphics[width=.22\hsize]%
{images/aaa/celebap_explore} &
\includegraphics[width=.22\hsize]%
{images/aaa/celeba_explore}
\end{tabular}
\caption{Comparison of exploration methods. \DirGPR{} reduces macro MSE fastest most of the time. Shaded region shows standard error.} 
\label{fig:aaa:exploration}
\end{figure*}


\begin{figure*}
\centering
\begin{subfigure}{.24\textwidth}
  \centering
  \includegraphics[width=\linewidth]{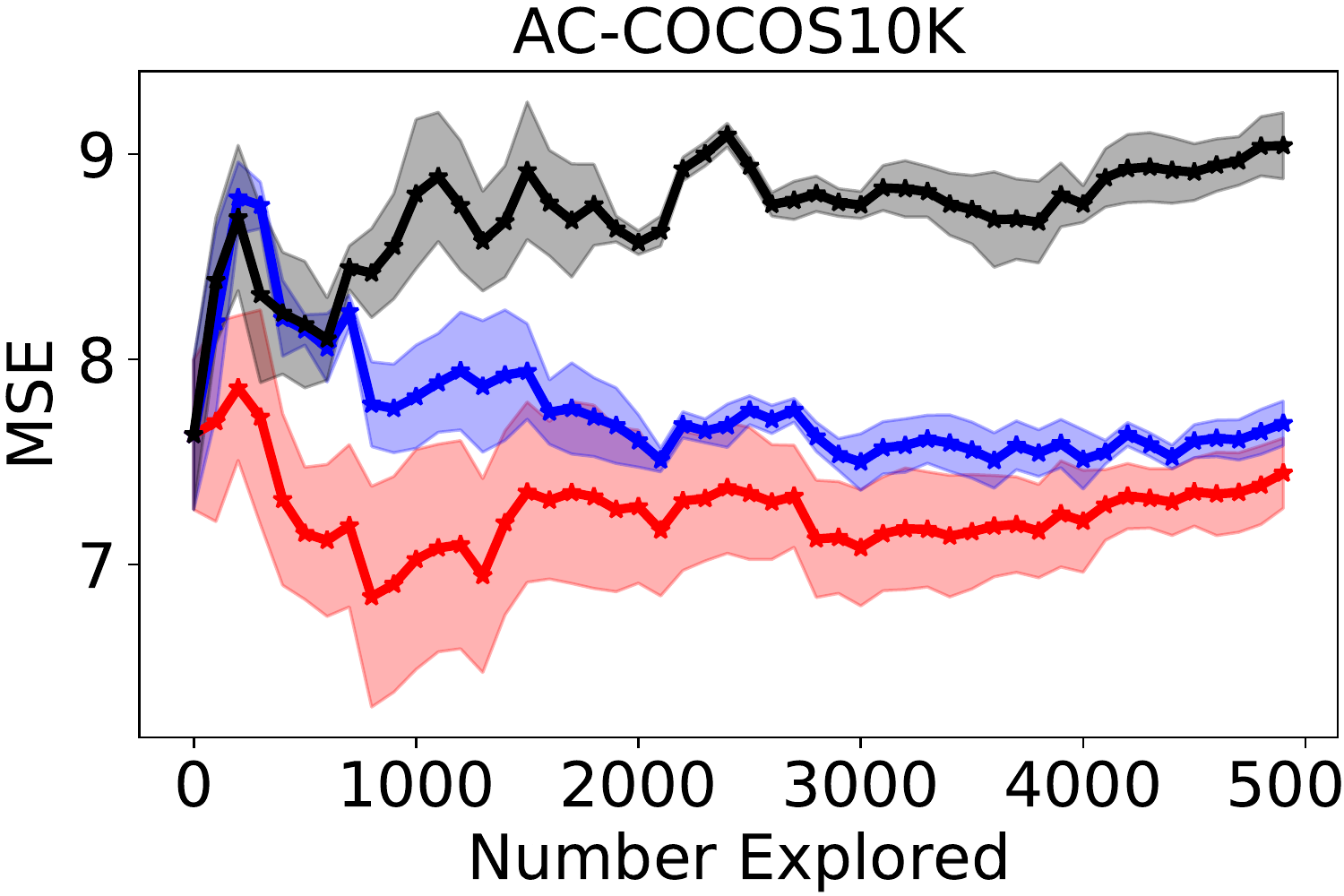}
\end{subfigure}
\begin{subfigure}{.25\textwidth}
  \centering
  \includegraphics[width=0.92\linewidth]{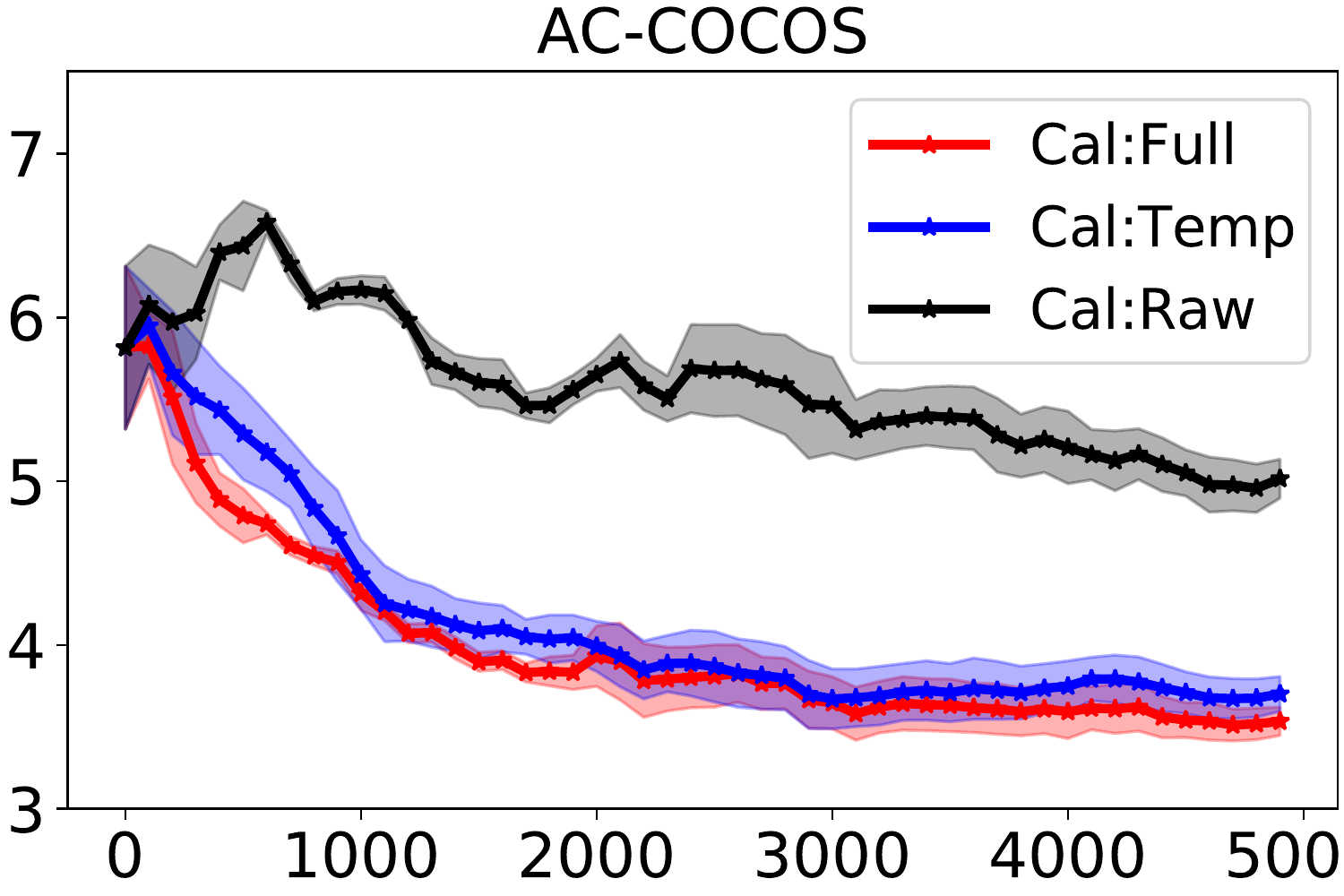}
\end{subfigure}
\begin{subfigure}{.24\textwidth}
  \centering
  \includegraphics[width=\linewidth]{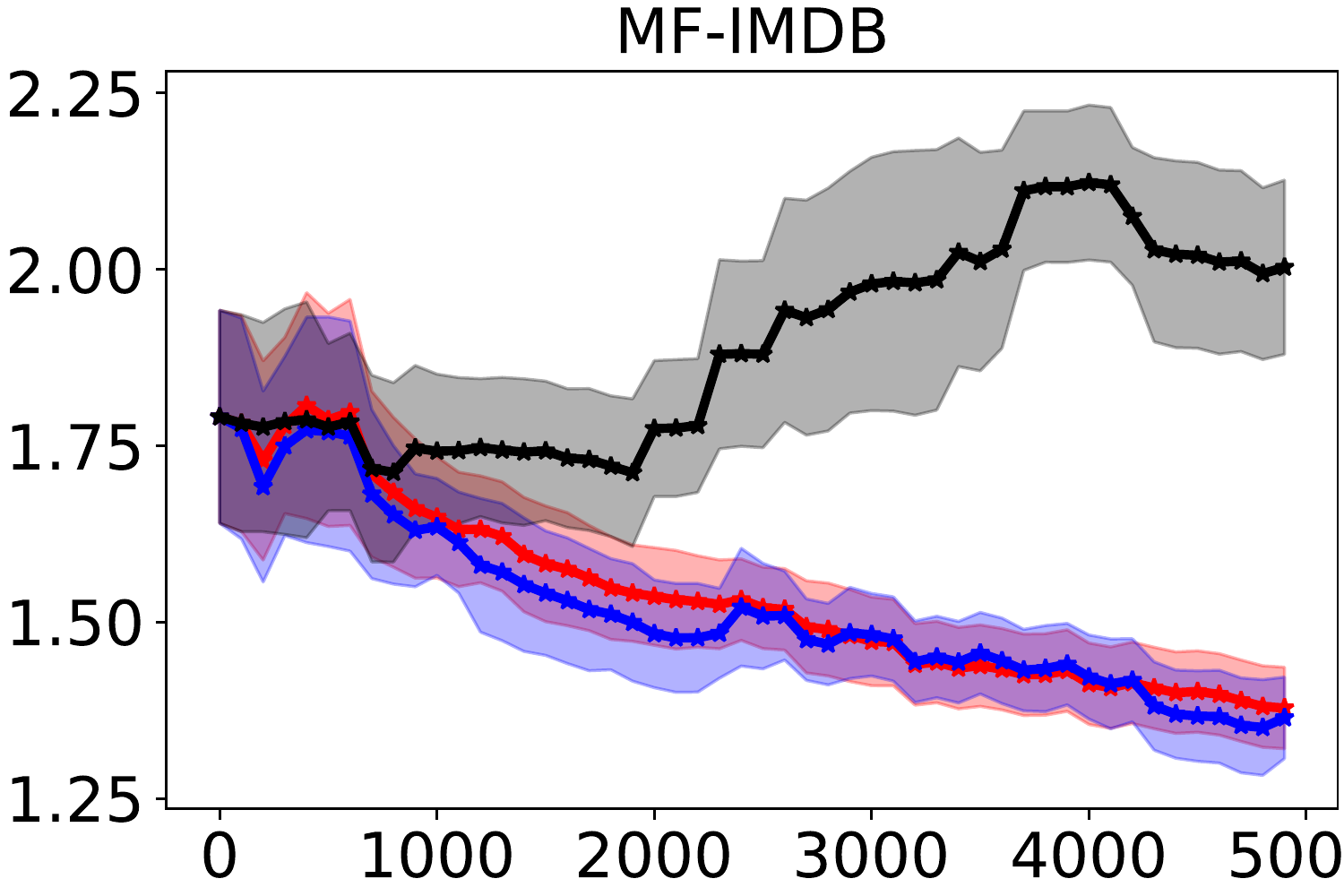}
\end{subfigure}
\begin{subfigure}{.24\textwidth}
  \centering
  \includegraphics[width=\linewidth]{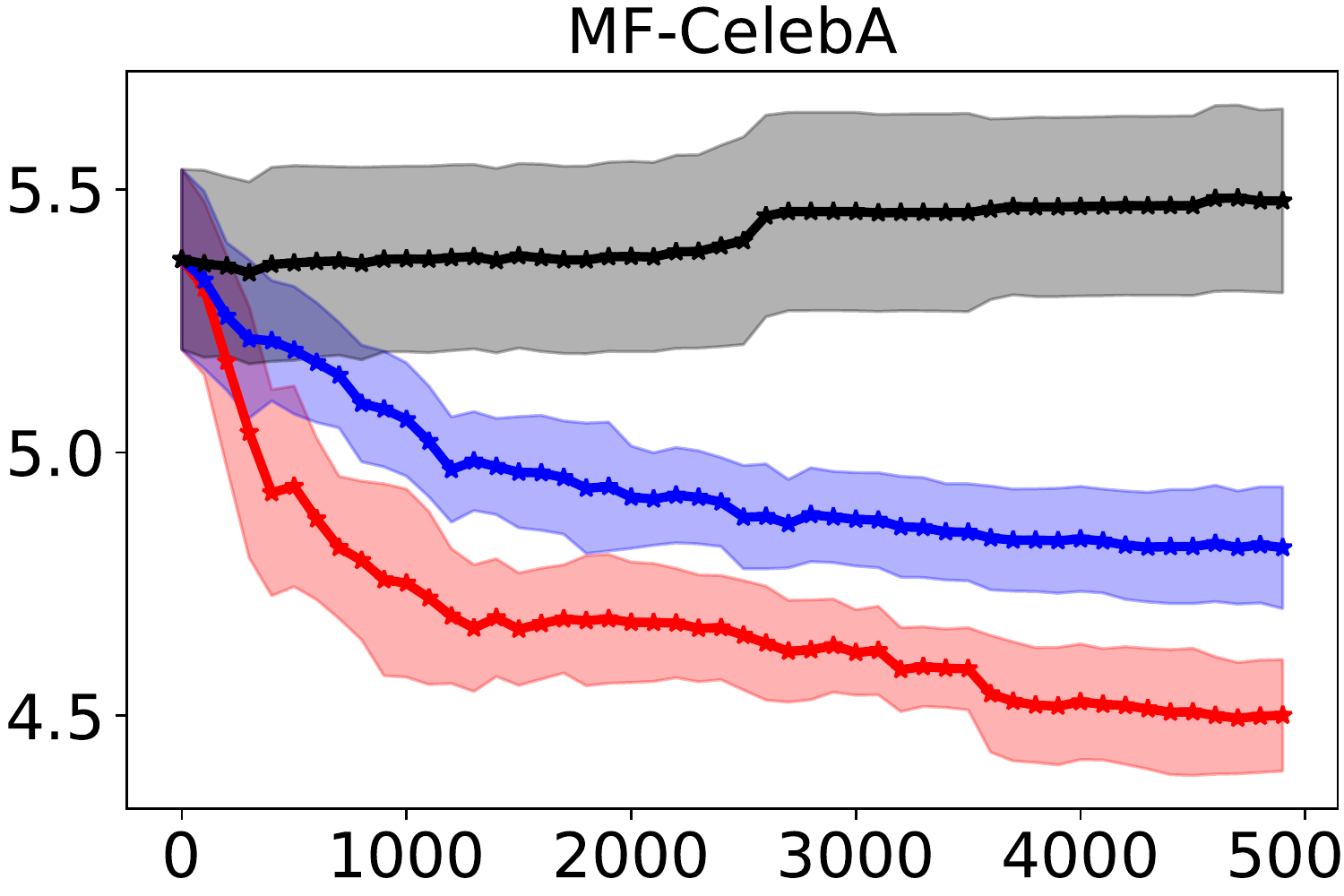}
\end{subfigure}%
\caption{Calibration methods compared on different tasks.  \calFull{} (red) includes temperature-based recalibration and correlation modeling with joint potential and gives the best macro MSE. Shaded region shows standard error.}
\label{fig:aaa:calibration}
\end{figure*}

\subsection{Estimation Quality}
\label{sec:aaa:expt:est}
\vspace{-0.2cm}
We evaluate methods on their estimation quality when each method is provided with exactly the same (randomly chosen) labeled set. We compare the four pretrained models when fitted on labeled data of size 2,000 and the results appear in Table~\ref{tab:aaa:est}. Note that we only have label supervision on $\cY$ in the labeled data. The table shows macro and worst MSE, standard deviation for each metric can be found in Appendix~\ref{sec:aaa:appendix:metrics}. In  Figure~\ref{fig:aaa:estimation}, we show worst MSE for a range of labeled data sizes along with their error bars. We make the following observations.\\
{\bf Smoothing helps:} Since we have a large number of arms, we expect \PerArmBeta{} to fare worse than its smooth counterparts (\BernGP{} and \BetaGP{}), especially on the worst arms. This is confirmed in the table.  In three out of four cases, \PerArmBeta{} method is worse than even the constant predictor \cpred\ on both metrics.\\
{\bf Modeling arm specific noise helps:} \BetaGP{} is better than \BernGP{} on almost all the cases in the table.  \\
{\bf Significant gains when the scale supervision problem of \BetaGP{} is fixed:} \DirGP{} is significantly better than \BetaGP{} in the table and figure. \\
{\bf Our pooling strategy helps:} \DirGPR{} improves \DirGP{} over worst MSE without hurting  macro MSE as seen in the table and figure.

\subsection{Exploration Efficiency}
\label{sec:aaa:expt:exp}
\vspace{-0.2cm}
We compare different methods that use their own estimated variance for exploring instances to label (Section~\ref{sec:aaa:Explore}), as a function of the number of explored examples --- see Figure~\ref{fig:aaa:exploration}. In most cases, \DirGPR{} gives the smallest macro MSE, beating \PerArmBeta{} and \BetaGP{}. 
We observe that \BetaGP\ provides very poor exploration quality, indicating that the uncertainty of arms is not captured well by just using two GPs. In fact, in many cases \BetaGP\ is worse than \PerArmBeta, even though we saw the opposite trend in estimation quality (Figure~\ref{fig:aaa:estimation}). These experiments brings out the significant role of  Dirichlet scale supervision and pooled observations in enhancing the uncertainty estimates at each arm.


\subsection{Impact of Calibration}
\label{sec:aaa:expt:calib}

\vspace{-0.2cm}
We consider two baselines along with our method explained in Section~\ref{sec:aaa:AttribNoise}: \textbf{\calRaw}, which uses the predicted attribute from the attribute models without any calibration and \textbf{\calTemp}, which  calibrates only the temperature parameters shown in eqn.~\eqref{eqn:factor:assume}, i.e., without the joint potential part. We refer to our method of calibration using temperature and joint potentials as \textbf{\calFull}.
%
%
We compare these on the four tasks with estimation method set to {\PerArmBeta} and random exploration strategy. Figure~\ref{fig:aaa:calibration} compares the three methods: \calRaw (Black), \calTemp (Blue), \calFull (Red).  
The X-axis is the number of explored examples beyond $\dseed$, and Y-axis is estimation error. Observe how \calTemp{} and \calFull\ are consistently better than \calRaw, and \calFull\ is better than~\calTemp.

\section{Related Work}
\label{sec:aaa:rel}

Our problem of actively estimating the accuracy {\em surface} of a classifier generalizes the more established problem of estimating a single accuracy {\em score} \citep{sawade10,Sawade2012,katariya12,druck11,bennett10,Karimi2020}.  For that problem, a known solution is stratified sampling, which partitions data into homogeneous strata and then seeks examples from regions with highest uncertainty and support. If we view each arm as a stratum, our method follows similar strategy. A key difference in our setting is that low support arms cannot be ignored. This makes it imperative to calibrate well the uncertainty under limited and skewed support distribution.  The setting of \citet{JiLR20} is the closest to ours. However, their work only considers a single attribute which they fit using independent Beta(\PerArmBeta) method, whereas we focus on the challenges of estimating accuracy over many sparsely populated attribute combinations.

\noindent
{\bf Sub-population performance:} Several recent papers have focused on identifying sub-populations with significantly worse accuracy than aggregated accuracy
\citep{Sagawa19,Oakden20hidden,WILDS20,JiSS20neurips,Miller21,Subbaswamy21eval}. Some of these have also proposed sample-efficient techniques~\citep{JiSS20neurips, Miller21} for estimation of performance on specific sub-groups, such as the ones defined by attributes like gender and race. Our accuracy surface estimation problem can be seen as a generalization where we need to estimate for all sub-groups defined in the Cartesian space of pre-specified semantic attributes. 
\citet{modelcards19} recommend reporting model performance under the influence of various relevant demographic/environmental factors as model cards--similar to the accuracy surface. 

\noindent
{\bf Experiment design:} Another related area is experiment design using active explorations with GPs \citep{SrinivasKK09}.  Their goal is to find the mode of the surface whereas our goal is to estimate the entire surface.  Further, each arm in our setting corresponds to multiple instances, which gives rise to a degree of heteroscedasticity and input-dependent noise that is not modeled in their settings. \citet{LzaroGredilla2011VariationalHG,Kersting2007MostLH} propose to handle heteroscedasticity by using a separate GP to model the variance at each arm. However, we showed the importance of additional terms in our likelihood and observation pooling to reliably represent estimation uncertainty.
\citet{WengerKT20} propose observation pooling for estimating smooth Betas but they assume a fixed kernel.


\noindent
{\bf Model debugging:} Testing deep neural network (DNN) is another related emerging area~\citep{Zhang2020}. 
\citet{PeiCY17DeepXplore,TianPJ18DeepTest,SunWR18Concolic, odena2019tensorfuzz} propose to generate test examples with good coverage over all activations of a DNN.
%
\citet{Ribeiro18Anchors,KimGP20} identify rules that explain the model predictions. 



\noindent
{\bf Performance prediction with target domain resources:} Related to our work is the work on estimating the performance of a pretrained models using labeled/unlabeled data from the target domain of test-user's interest. These methods investigate any correlation of distribution statistic derived from target data with its performance towards exploiting them for performance prediction~\citep{Chen21, Garg22, Deng21, Vedantam21}.

\section {Discussion}
\label{sec:aaa:end}
Evaluation of deployed models, despite its importance, is not well-studied. Knowledge of data regions covered by a pretrained model can help with model debugging, data acquisition, as well as for accuracy prediction on users' data. Some approaches in the past have addressed evaluating accuracy when provided resources from the test distribution. However, proactively declaring accuracy in all possible data regions can help understand better the model behaviour and also when acquiring target (un)labeled data is hard. Accuracy surfaces present a new paradigm of evaluating deployed models. 

We introduced \shortname, a new approach to estimate the accuracy of a pretrained model, not as a single number, but as a {\it surface} over a space of attributes (arms).  \shortname{} models uncertainty with a Beta distribution at each arm and regresses these parameters using two Gaussian Processes to capture smoothness and generalize to unseen arms. We proposed an additional Dirichlet likelihood to mitigate underestimation of GP fitted Beta distributions' scale parameters.
Further, to protect these high-capacity GPs from unreliable accuracy observations at sparsely populated arms, we propose to use  an observation pooling strategy. Finally, we show how to handle noisy attribute labels by an efficient joint recalibration method.
Evaluation on real-life datasets and pretrained models
show the efficacy of \shortname, both in estimation and exploration quality.

Further study of the following aspects can improve evaluation further. (1)~We have evaluated \shortname{} on the order of thousands of arms. Even larger attribute spaces could unearth more challenges. (2)~Identifying relevant attributes for an application can be non-trivial. Future work could devise strategies for attribute selection. 
(3)~Characterizing test-time data shifts could in itself be hard, particularly for text --- there could be subtle changes in word usage, style, or punctuation.  A more expressive attribute space needs to be developed for text applications.


\chapter{Unlabeled Domain Adaptation}
\label{chap:adaptation}
\colorlet{usercolorname}{green!10}
\def\shortname{SrcSel}

In the preceding chapters, we discussed challenges in training and evaluation of deployed models. When a model underperforms on a test domain, we may wish to adapt the model to the test/target domain's context. We could simply train a model from scratch if we have access to large training data from the target domain, which is rarely the case. However, unlabeled data could be more readily available. Nevertheless, state-of-art deep networks are notoriously label data hungry. \emph{How then can we adapt to a target domain with very limited labeled data and access to (potentially large) unlabeled data?} In this chapter, we present two approaches with slightly different motivations for the unlabeled adaptation problem for text applications. 

We will first discuss adaptation of word embeddings---dense representation of words that capture sense/meaning by tracking related words in a low-dimensional space. Word embeddings~\citep{MikolovSCCD2013word2vec,PenningtonSM2014GloVe} map discrete words (in the vocabulary) to a dense vector and benefit many natural language processing (NLP) tasks. We make a detailed study of adaptation of pretrained general-purpose word embeddings to a specific topical target domain when the available unlabeled data is insufficient to train high quality word embeddings from scratch. This work is based on~\citet{Piratla19}. 

We will then motivate on-the-fly or immediate adaptation methods, starting from Section~\ref{sec:kyc}, that can adapt without any labeled data or parameter fine-tuning. With the objective of in-context processing of the input, we learn a predictor model that additionally conditions on the corpus' sketch, from which the example originates, along with the example. Thereby eliminating adaptation on the unlabeled corpus text, this work is based on~\citet{Shah20}.

\section{Adaptation of Word Embeddings}

Often, usage scenario pertains a few focused topics, e.g., discussion boards on Physics, video games, or Unix, or a forum for discussing medical literature, with access to only a limited corpus (unlabeled data) $\DT$. 
Because $\DT$ may be too small to train word embeddings to sufficient quality, a prevalent practice is to harness general-purpose embeddings $\ES$ pretrained on a broad-coverage corpus, not tailored to the topics of interest.
The pretrained embeddings are sometimes used as-is (`pinned').  Even if $\ES$ is trained on a `universal' corpus, considerable sense shift may exist in the meaning of polysemous words and their cooccurrences and similarities with other words.  In a corpus about Unix, `cat' and `print' are more similar than in Wikipedia.  `Charge' and `potential' are more related in a Physics corpus than in Wikipedia.  Thus, pinning can lead to poor target task performance in the case of serious sense mismatch.  
Another popular practice is to initialize the target embeddings to the pretrained vectors, but then ``fine-tune'' using $\DT$ to improve performance in the target~\citep{MouPLXZJ15,min17,Howard2018}.
As we shall see, the number of epochs of fine-tuning is a sensitive knob --- excessive fine-tuning might lead to ``catastrophic forgetting'' \citep{kirkpatrick2017overcoming} of useful word similarities in ~$\ES$, and too little fine-tuning may not adapt to target sense.

Even if we are given development (`dev') sets for target tasks, the best balancing act between a pretrained $\ES$ and a topic-focused $\DT$ is far from clear.
Should we fine-tune (all word vectors) in epochs and stop when dev performance deteriorates?  
Or should we keep some words close to their pretrained embeddings (a form of regularization) and allow others to tune more aggressively?  On what properties of $\ES$ and $\DT$ should the regularization strength of each word depend?  Our first contribution is a new measure of semantic drift of a word from $\ES$ to $\DT$, which can be used to control the regularization strength.  In terms of perplexity, we show that this is superior to both epoch-based tuning, as well as regularization based on simple corpus frequencies of words~\citep{regemnlp}.  
Yet another option is to learn projections to align generic embeddings to the target sense~\citep{BollegalaMK15,Barnes2018,Sarma2018}, or to a shared common space~\citep{YinS16,CoatesB18,BaoB2018}
However, in carefully controlled experiments, none of the proposed approaches to adapting pretrained embeddings consistently beats the trivial baseline of discarding them and training afresh on~$\DT$!

Our second contribution is to explore other techniques beyond adapting generic embeddings $\ES$.  Often, we might additionally have easy access to a broad corpus $\DS$ like Wikipedia.  $\DS$ may span many diverse topics, while $\DT$ focuses on one or few, so there may be  large \emph{overall} drift from $\DS$ to $\DT$ too.  However, a judicious \emph{subset} $\widehat{\DS} \subset \DS$ may exist that would be excellent for augmenting~$\DT$.   The large size of $\DS$ is not a problem: we use an inverted index that we probe with documents from $\DT$ to efficiently identify~$\widehat{\DS}$.  Then we apply a novel perplexity-based joint loss over $\widehat{\DS}\cup\DT$ to fit adapted word embeddings. 
While most of recent research focus has been on designing better methods of adapting pretrained embeddings, we show that retraining with selected source \emph{text} is significantly more accurate than the best of embeddings-only strategy, while runtime overheads are within practical limits.

An important lesson is that non-dominant sense information may be irrevocably obliterated from generic embeddings; it may not be possible to salvage this information by post-facto adaptation. Furthermore, we found non-dominant sense information is lost even when using contextual embeddings~\citep{PetersNIGCKZ2018ELMo, Cer+2018UnivSentEncoder}.

\section{Background: related work and baselines}
\label{sec:srcsel:Intro}

\noindent
{\bf CBOW:}
We review the popular CBOW model for learning unsupervised word representations~\citep{MikolovSCCD2013word2vec}.  As we scan the corpus, we collect a \emph{focus} word $w$ and a set $C$ of \emph{context} words around it, with corresponding embedding vectors $\pmb{u}_w \in \R^n$ and $\pmb{v}_c \in \R^n$, where $c \in C$.  The two embedding matrices $\pmb{U}, \pmb{V}$ are estimated as:
\begin{align}
\max_{\pmb{U}, \pmb{V}} \hspace{-.7em}
\sum_{\langle w, C\rangle \in \mathcal{D}} \hspace{-.7em}
\sigma(\pmb{u}_w \cdot \pmb{v}_C)
+ \sum_{\bar w \sim \mathcal{D}} \sigma(-\pmb{u}_{\bar{w}} \cdot \pmb{v}_C)
\label{eq:srcsel:cbow}
\end{align}
Here $\pmb{v}_C$ is the average of the context vectors in~$C$. $\bar w$ is a negative focus word sampled from a slightly distorted unigram distribution of~$\mathcal{D}$, and $\sigma(\bullet)$ is the sigmoid function.  Usually downstream applications use only the embedding matrix $\pmb{U}$, with each word vector scaled to unit length. Apart from CBOW, \citet{MikolovSCCD2013word2vec} defined the related skipgram model, and \citet{PenningtonSM2014GloVe} proposed the Glove model, which can also be used in our framework.  We found CBOW to work better for our downstream tasks. 

\noindent
{\bf \src, \tgt{}, \concat{} and other simple baselines:}
In the `\src' option, pretrained embeddings $\pmb{u}^S_w$ trained only on a large corpus are used as-is.  The other extreme, called `\tgt', is to train word embeddings from scratch on the limited target corpus~$\DT$. In our experiments we found that \src\ performs much worse than \tgt, indicating the presence of significant drift in prominent word senses.   Two other simple baselines, are `\concat', that concatenates the source and target trained embeddings and let the downstream task figure out their relative roles, and '\avg' that following \citet{CoatesB18} takes their simple average.  Another option is to let the downstream task learn to combine multiple embeddings as in \citet{ZhangRW2016}.

As word embeddings have gained popularity for representing text in learning models, several methods have been proposed for enriching small datasets with pretrained embeddings.

\noindent
{\large \bf Adapting pretrained embeddings}

\noindent
{\bf \srcinit:}
A popular method~\citep{min17, WangHF17,Howard2018} is to use the source embeddings $\pmb{u}^S_w$ to initialize $\pmb{u}_w$ and thereafter train on~$\DT$.  We call this `\srcinit'.  Fine-tuning requires careful control of the number of epochs with which we train on~$\DT$.  Excessive training can wipe out any benefit of the source because of catastrophic forgetting.  Insufficient training may not incorporate target corpus senses in case of polysemous words, and adversely affect target tasks~\citep{MouPLXZJ15}. The number of epochs can be controlled using perplexity on a held-out $\DT$, or using downstream tasks.   \citet{Howard2018} propose to fine-tune a whole language model using careful differential learning rates.  However, epoch-based termination may be inadequate.  Different words may need diverse trade-offs between the source and target topics, which we discuss next.

\noindent
{\bf \yangC~(frequency-based regularization):}
\citet{regemnlp} proposed to train word embeddings using $\DT$, but with a regularizer to prevent a word $w$'s embedding from drifting too far from the source embedding ($\pmb{u}^S_w$).  The weight of the regularizer is meant to be inversely proportional to the concept drift of $w$ across the two corpus.  Their limitation was that corpus frequency was used as a surrogate for stability; high stability was awarded to only words frequent in both corpora.  As a consequence, very few words in a focused $\DT$ about Physics will benefit from a broad coverage corpus like Wikipedia.  Thousands of words like {\em galactic, stars, motion, x-ray,} and {\em momentum} will get low stability, although their prominent sense is the same in the two corpora.  We propose a better regularization scheme in this paper.  Unlike us, \citet{regemnlp} did not compare with fine-tuning.

\noindent
{\bf Projection-based methods:} attempt to project embeddings of one kind to another, or to a shared common space.  \citet{Bollegala2014} and \citet{Barnes2018} proposed to learn a linear transformation between the source and target embeddings.
\citet{YinS16} transform multiple embeddings to a common `meta-embedding' space. Simple averaging are also shown to be effective \citep{CoatesB18}, and a recent~\citet{BaoB2018} auto-encoder based meta-embedder (AEME) is the state of the art. \citet{Sarma2018} proposed CCA to project both embeddings to a common sub-space. Some of these methods designate a subset of the overlapping words as pivots to bridge the target and source parameters in various ways~\citep{BlitzerMP06, Ziser2018, BollegalaMK15}. Many such techniques were proposed in a cross-domain setting, and specifically for the sentiment classification task. Gains are mainly from effective transfer of sentiment representation across domains.  Our challenge arises when a corpus with broad topic coverage pretrains dominant word senses quite different from those needed by tasks associated with narrower topics.






\subsubsection*{Language models for task transfer}
Complementary to the technique of adapting individual word embeddings is the design of deeper sequence models for task-to-task transfer.  \citet{Cer+2018UnivSentEncoder,SubramanianTBP2018SentenceMTL} propose multi-granular transfer of sentence and word representations across tasks using Universal Sentence Encoders. 
ELMo \citep{PetersNIGCKZ2018ELMo} trains a multi-layer sequence model to build  a context-sensitive representation of words in a sentence. 
ULMFiT \citep{Howard2018} present additional tricks such as gradual unfreezing of parameters layer-by-layer, and exponentially more aggressive fine-tuning toward output layers. 
\citet{devlin2018bert} propose a deep bidirectional language model for generic contextual word embeddings.  We show that our topic-sensitive embeddings provide additional benefit even when used with contextual embeddings.

\section{Proposed approaches}
\label{sec:srcsel:Approach}

We explore two families of methods: (1)~those that have access to only pretrained embeddings (Sec~\ref{sec:srcsel:Approach:StabilityReg}), and (2)~those that also have access to a source corpus with broad topic coverage (Sec~\ref{sec:srcsel:Approach:WithSource}). 


\subsection{\srcselReg: Stability-based regularization}
\label{sec:srcsel:Approach:StabilityReg}

Our first contribution is a more robust definition of stability to replace the frequency-based regularizer of \yangC.
We first train word vectors on $\DT$, and assume the pretrained embeddings $\ES$ are available.
Let the focus embeddings of word $w$ in $\ES$ and $\DT$ be 
$\pmb{u}^S_w$ and $\pmb{u}^T_w$.
We overload $\ES \cap \DT$ as words that occur in both.
For each word $w \in \ES \cap \DT$, we
compute $N^{(K)}_S(w, \ES\cap\DT)$, the $K$ nearest neighbors of $w$ with respect to the generic embeddings, i.e.,
with the largest values of $\cos(\pmb{u}^S_w, \pmb{u}^S_n)$ from $\ES\cap\DT$.
Here $K$ is a suitable hyperparameter.
Now we define 
\begin{align}
\stability(w)=\frac{\sum_{n\in N^{(K)}_S(w,\ES\cap\DT)} \cos(\pmb{u}_w^T, \pmb{u}_n^T)}
{|N^{(K)}_S(w, \ES \cap \DT)|}
\label{eqn:srcsel:sim_score}
\end{align}
Where $\cos(\bullet, \bullet)$ is the cosine similarity measure. 
Intuitively, if we consider near neighbors $n$ of $w$ in terms of source embeddings, and most of these $n$'s have target embeddings very similar to the target embedding of $w$, then $w$ is stable across $\ES$ and $\DT$, i.e., has low semantic drift from $\ES$ to~$\DT$.

Finally, the word regularization weight is:
\begin{align}
\wscore(w) &= \max(0, \tanh\bigl(\lambda \, \stability(w))\bigr).
\label{eqn:srcsel:sem_drift}
\end{align}
Here $\lambda$ is a hyperparameter.
$\wscore(w)$ above is a replacement for the regularizer used by \citet{regemnlp}.  
If $\wscore(w)$ is large, it is regularized more heavily toward its source embedding, keeping $\pmb{u}_w$ closer to~$\pmb{u}^S_w$.  The modified CBOW loss is:
\begin{align}
\max_{\pmb{U}, \pmb{V}} \hspace{-.7em}
\sum_{\langle w, C\rangle \in \mathcal{D}} \hspace{-.7em}
\sigma(\pmb{u}_w \cdot \pmb{v}_C) 
+ \sum_{\bar w \sim \mathcal{D}} \sigma(-\pmb{u}_{\bar{w}} \cdot \pmb{v}_C) + \sum_w
\colorbox{green!8}{$\wscore(w)$}
\|\pmb{u}_w - \pmb{u}^S_w \|^2
\label{eq:srcsel:RegSense}
\end{align}
Our $\wscore(w)$ performs better than \citeauthor{regemnlp}'s.

While many other forms of $\stability$ can achieve the same ends, ours seems to be the first formulation that goes beyond mere word frequency and employs the topological stability of near-neighbors in the embedding space.  Here is why this is important.  Going from a  generic corpus like Wikipedia to the very topic-focused StackExchange (Physics) corpus $\DT$, the words {\em x-ray, universe, kilometers, nucleons, absorbs, emits, sqrt, anode, diodes}, and {\em km/h} have large stability per our definition above, but low stability according to \citeauthor{regemnlp}'s frequency method since they are (relatively) rare in source.  Using their method, therefore, these words will not benefit from reliable pretrained embeddings.


\subsection{Source selection and joint perplexity}
\label{sec:srcsel:Approach:WithSource}

To appreciate the limitations of regularization, consider words like {\em potential, charge, law, field, matter, medium,} etc.  These will get small stability ($\wscore(w)$) values because their dominant senses in a universal corpus do not match with those in a Physics corpus~($\DT$), but $\DT$ may be too limited to wipe that dominant sense for a subset of words while preserving the meaning of stable words.
However, there are plenty of high-quality broad-coverage sources like Wikipedia that includes many Physics documents that could gainfully supplement~$\DT$.  Therefore, we seek to include target-relevant documents from a generic source corpus $\DS$, even if the dominant sense of a word in $\DS$ does not match that in~$\DT$.  The goal is to do this without solving the harder problem of unsupervised, expensive and imperfect sense discovery in $\DS$ and sense tagging of~$\DT$, and using per-sense embeddings.

The main steps of the proposed approach, \shortname, are shown in 
\figurename~\ref{fig:srcsel:SrcSel}.  Before describing the steps in detail, we note that preparing and probing a standard inverted index \citep{BaezaYatesR1999MIR} are extremely fast, owing to decades of performance optimization.  Also, index preparation can be amortized over multiple target tasks.  (The granularity of a `document' can be adjusted to the application.)

\begin{figure}[th]
\centering
\begin{tcolorbox}[boxsep=0mm,colback=gray!4,colframe=gray!25,boxrule=.2mm]
\begin{algorithmic}[1] \raggedright
\State Index all source docs $\DS$ in a text retrieval engine.
\State Initialize a score accumulator $a_{s}$ for each source doc $s\in\DS$.
\For{each target doc $t \in \DT$}
\State Get source docs most similar to~$t$.
\State Augment their score accumulators.
\EndFor
\State $\widehat{\DS} \leftarrow \varnothing$
\For{each source doc $s \in \DS$}
\If{$a_{s}$ is ``sufficiently large''}
\State Add $s$ to $\widehat{\DS}$.
\EndIf
\EndFor
\State Fit word embeddings to optimize a joint objective over $\widehat{\DS}\cup\DT$.
\end{algorithmic}
\end{tcolorbox}
\caption{Main steps of \shortname.}
\label{fig:srcsel:SrcSel}
\end{figure}

\noindent
{\bf Selecting source documents to retain:}
Let $s \in \DS, t \in \DT$ be source and target documents.  Let $\text{sim}(s,t)$ be the similarity between them, in terms of the TFIDF cosine score commonly used in Information Retrieval \citep{BaezaYatesR1999MIR}.  The total vote of $\DT$ for $s$ is then $\sum_{t \in \DT}\text{sim}(s, t)$.  We choose a suitable cutoff on this aggregate score, to reduce $\DS$ to $\widehat{\DS}$, as follows.  Intuitively, if we hold out a randomly sampled part of $\DT$, our cutoff should let through a large fraction (we used 90\%) of the held-out part.  Once we find such a cutoff, we apply it to $\DS$ and retain the source documents whose aggregate scores exceed the cutoff.  Beyond mere selection, we design a joint perplexity objective over $\widehat{\DS}\cup\DT$, with a term for the amount of trust we place in a retained source document. This limits damage from less relevant source documents that slipped through the text retrieval filter. Since the retained documents are weighted based on their relevance to the topical target corpus $\DT$, we found it beneficial to also include a percentage (we used 10\%) of randomly selected documents from $\DS$. We refer to the method that only uses documents retained using text retrieval filter as \srcselR{} and only randomly selected documents from $\DS$ as \srcselC. \srcsel{} uses documents both from the retrieval filter and random selection. 

\noindent
{\bf Joint perplexity objective:}
Similar to~\eqref{eq:srcsel:cbow}, we will sample word and context $\langle w, C\rangle$ from $\DT$ and $\widehat{\DS}$.  Given our limited trust in $\widehat{\DS}$, we will give each sample from $\widehat{\DS}$ an alignment score $\sscore(w,C)$.  This should be large when $w$ is used in a context similar to contexts in~$\DT$.  We judge this based on the target embedding~$\pmb{u}^T_w$:
\begin{align}
\sscore(w, C) &=
\max\left\{0, \cos\left(\pmb{u}^T_w, \pmb{v}^T_C \right) \right\}.
\label{eq:srcsel:cscore}
\end{align}
We summarize context words into $\pmb{v}_C$ by averaging $v_{\bullet}$ vectors over all words in the context: $\pmb{v}_C=\sum_{w^\prime\in C}\pmb{v}_{w^\prime}/|C|$ just like in the CBOW algorithm.
Since $\pmb{u}_w$ represents the sense of the word in the target, source contexts $C$ which are similar will get a high score.  Similarity in source embeddings is not used here because our intent is to preserve the target senses.  We tried other forms such as dot-product or its exponential and chose the above form because it is bounded and hence less sensitive to gross noise in inputs. 

\noindent
The word2vec objective~\eqref{eq:srcsel:cbow} is enhanced to
\begin{multline}
\sum_{\langle w, C\rangle \in \DT}
\hspace{-1em} \left[
\sigma(\pmb{u}_w \cdot \pmb{v}_C) + \textstyle
\sum_{\bar{w} \sim \DT} \sigma(-\pmb{u}_{\bar{w}} \cdot \pmb{v}_C)
\right] 
\\\hspace{-2em}
+ \sum_{\langle w, C\rangle \in \widehat{\DS}} \!\!\!\!
\colorbox{red!7}{$\sscore(w,C)$} \, \Bigl[
\sigma(\pmb{u}_w \cdot \pmb{v}_C) +  
\textstyle \sum_{\bar{w} \sim \widehat{\DS}} \! 
\sigma(-\pmb{u}_{\bar{w}} \cdot \pmb{v}_C) \Bigr]. \label{eq:srcsel:SnippetSelect}
\end{multline}
The first sum is the regular word2vec loss over~$\DT$.  Word $\bar{w}$ is sampled from the vocabulary of $\DT$ as usual, according to a suitable distribution.  The second sum is over the retained source documents~$\widehat{\DS}$. Note that $\sscore(w,C)$ is computed using the pretrained target embeddings and does not change during the course of training.


\noindent
{\bf \shortname+\srcselReg{} combo:}
Here we combine objective \eqref{eq:srcsel:SnippetSelect} with the regularization term in \eqref{eq:srcsel:RegSense}, where $\wscore$ uses all of~$\ES$ as in \srcselReg.





\section{Experiments}
\label{sec:srcsel:Expt}

We compare the methods discussed thus far, with the goal of answering these research questions:
\begin{enumerate}[partopsep=0ex,topsep=0ex,leftmargin=*]
\item Can word-based regularization (\yangC\ and \srcselReg) beat careful termination at epoch granularity, after initializing with source embeddings (\srcinit)?
\item How do these compare with just fusing \src\ and \tgt\ via recent meta-embedding methods like AAEME~\citep{BaoB2018}\footnote{We used the implementation available at: \url{https://github.com/CongBao/AutoencodedMetaEmbedding}}?
\item Does \srcsel\ provide sufficient and consistent gains over \srcselReg\ to justify the extra effort of processing a  source corpus?
\item Do contextual embeddings obviate the need for adapting word embeddings?
\end{enumerate}
We also establish that initializing with source embeddings also improves regularization methods. (Curiously, \yangC\ was never combined with source initialization.)

\subsubsection*{Topics and tasks}
\label{sec:srcsel:DedupDataset}
We compare across 15 topic-task pairs spanning 10 topics and 3 task types: an unsupervised language modeling task on five topics, a document classification task on six topics, and a duplicate question detection task on four topics. 
In our setting, $\DT$ covers a small subset of topics in $\DS$, which is the 20160901\footnote{The target corpora in our experiments came from datasets that were created before this time.} version dump of Wikipedia.  Our tasks are different from GLUE-like multi-task learning \citep{WangSMHLB2019glue}, because our focus is on the problems created by the divergence between prominent sense-dominated generic word embeddings and their sense in narrow target topics.  We do not experiment on the cross-domain sentiment classification task popular in domain adaptation papers since they benefit more from sharing sentiment-bearing words, than learning the correct sense of polysemous words, which is our focus here.  All our experiments are on public datasets, and we will publicly release our experiment scripts and code. 


\noindent
{\bf StackExchange topics}\\
We pick four topics (Physics, Gaming, Android and Unix) from the CQADupStack\footnote{\protect\url{http://nlp.cis.unimelb.edu.au/resources/cqadupstack/}} dataset of questions and responses.  For each topic, the available response text is divided into $\DT$, used for training/adapting embeddings, and $\widetilde{\DT}$, the evaluation fold used to measure perplexity.
In each topic, the target corpus $\DT$ has 2000 responses totalling roughly 1 MB. We also report results with changing sizes of $\DT$. Depending on the method we use $\DT, \DS$, or $\pmb{u}^S$ to train topic-specific embeddings and evaluate them as-is on two tasks that train task-specific layers on top of these fixed embeddings.
%
%
\noindent 
The first is an {\bf unsupervised language modeling task} where we train a LSTM\footnote{\url{https://github.com/tensorflow/models/blob/master/tutorials/rnn/ptb/ptb_word_lm.py}} on the adapted embeddings (which are pinned) and report perplexity on~$\widetilde{\DT}$.
The second is a {\bf Duplicate question detection task.}
Available in each topic are human annotated duplicate questions (statistics in \tablename~\ref{tab:srcsel:stats}) which we partition across 
train, test and dev as 50\%, 40\%, 10\%.  For contrastive training, we add four times as much randomly chosen non-duplicate pairs.  The goal is to predict duplicate/not for a question pair, for which we use word mover distance \citep[WMD]{KusnerSKW15} over adapted word embeddings.  We found WMD more accurate than BiMPM~\citep{WangHF17}.  
We use three splits of the target corpus, and for each resultant embedding, measure AUC on three random (train-)dev-test splits of question pairs, for a total of nine runs.  For reporting AUC, WMD does not need the train fold.

\begin{table}[htb]
\centering
\begin{tabular}{|l|r|r|r|} \hline
 & Tokens & Vocab size & \# duplicates\\
\hline
Physics & 542K & 6,026 & 1981\\
Gaming & 302K & 6,748 & 3386\\
Android & 235K & 4,004 & 3190\\
Unix & 262K  & 6,358& 5312\\
\hline
\end{tabular}
\caption{Statistics of the Stack Exchange data used in duplicate question detection.  $\DS$ has a vocabulary of 300,000 distinct words.}
\label{tab:srcsel:stats}
\end{table}

\begin{figure*}[t]
\pgfplotstableread{images/srcsel/physics-ppl.dat} \physicsppl
\pgfplotstableread{images/srcsel/unix-ppl.dat} \unixppl
\pgfplotstableread{images/srcsel/android-ppl.dat} \androidppl
\pgfplotstableread{images/srcsel/gaming-ppl.dat} \gamingppl
\pgfplotstableread{images/srcsel/physics-auc.dat} \physicsauc
\pgfplotstableread{images/srcsel/unix-auc.dat} \unixauc
\pgfplotstableread{images/srcsel/android-auc.dat} \androidauc
\pgfplotstableread{images/srcsel/gaming-auc.dat} \gamingauc
\def\chartheight{50mm}
\centering
\begin{subfigure}[b]{0.48\textwidth}
\begin{tikzpicture}[font=\small]
\begin{axis}[nice, width=\hsize, height=\chartheight, xlabel=Epochs (Android), ylabel=LM Perplexity, legend pos=north east, xmin=5, xmax=100,ymax=125, legend columns=2]
\addplot table [y=tgt, x=epoch] from \androidppl;
\addlegendentry{\tgt}
\addplot table [y=srcinit, x=epoch] from \androidppl;
\addlegendentry{\srcinit}
\addplot table [y=yangC, x=epoch] from \androidppl;
\addlegendentry{\yangC}
\addplot table [y=regour, x=epoch] from \androidppl;
\addlegendentry{\srcselReg}
\addplot[color=green!40!black,mark=x] table [y=retr, x=epoch] from \androidppl;
\addlegendentry{\srcselR}
\end{axis}
\end{tikzpicture}
\end{subfigure}
\begin{subfigure}[b]{0.48\textwidth}
\begin{tikzpicture}[font=\small]
\begin{axis}[nice, width=\hsize, height=\chartheight, xlabel=Epochs (Gaming), legend pos=north east, xmin=5, xmax=100, ymax=160]
\addplot table [y=tgt, x=epoch] from \gamingppl;
\addplot table [y=srcinit, x=epoch] from \gamingppl;
\addplot table [y=yangC, x=epoch] from \gamingppl;
\addplot table [y=regour, x=epoch] from \gamingppl;
\addplot[color=green!40!black,mark=x] table [y=retr, x=epoch] from \gamingppl;
\end{axis}
\end{tikzpicture}
\end{subfigure}
\begin{subfigure}[b]{0.48\textwidth}
\begin{tikzpicture}[font=\small]
\begin{axis}[nice, width=\hsize, height=\chartheight, xlabel=Epochs (Physics), ylabel={\%AUC}, legend pos=south east, xmin=5, xmax=150, legend columns=2]
\addplot table [y=tgt, x=epoch] from \physicsauc;
\addlegendentry{\tgt}
\addplot table [y=srcinit, x=epoch] from \physicsauc;
\addlegendentry{\srcinit}
\addplot table [y=yangC, x=epoch] from \physicsauc;
\addlegendentry{\yangC}
\addplot table [y=regour, x=epoch] from \physicsauc;
\addlegendentry{\srcselReg}
\addplot[color=green!40!black,mark=x] table [y=retr, x=epoch] from \physicsauc;
\addlegendentry{\srcselR}
\end{axis}
\end{tikzpicture}
\end{subfigure}
\begin{subfigure}[b]{0.48\textwidth}
\begin{tikzpicture}[font=\small]
\begin{axis}[nice, width=\hsize, height=\chartheight, xlabel=Epochs (Unix), legend pos=south east, xmin=5, xmax=160]
\addplot table [y=tgt, x=epoch] from \unixauc;
\addplot table [y=srcinit, x=epoch] from \unixauc;
\addplot table [y=regour, x=epoch] from \unixauc;
\addplot table [y=yangC, x=epoch] from \unixauc;
\addplot[color=green!40!black,mark=x] table [y=retr, x=epoch] from \unixauc;
\end{axis}
\end{tikzpicture}
\end{subfigure}
\caption{Language model perplexity (top row) and AUC on duplicate question detection (bottom row).}
\label{fig:srcsel:auc}
\end{figure*}
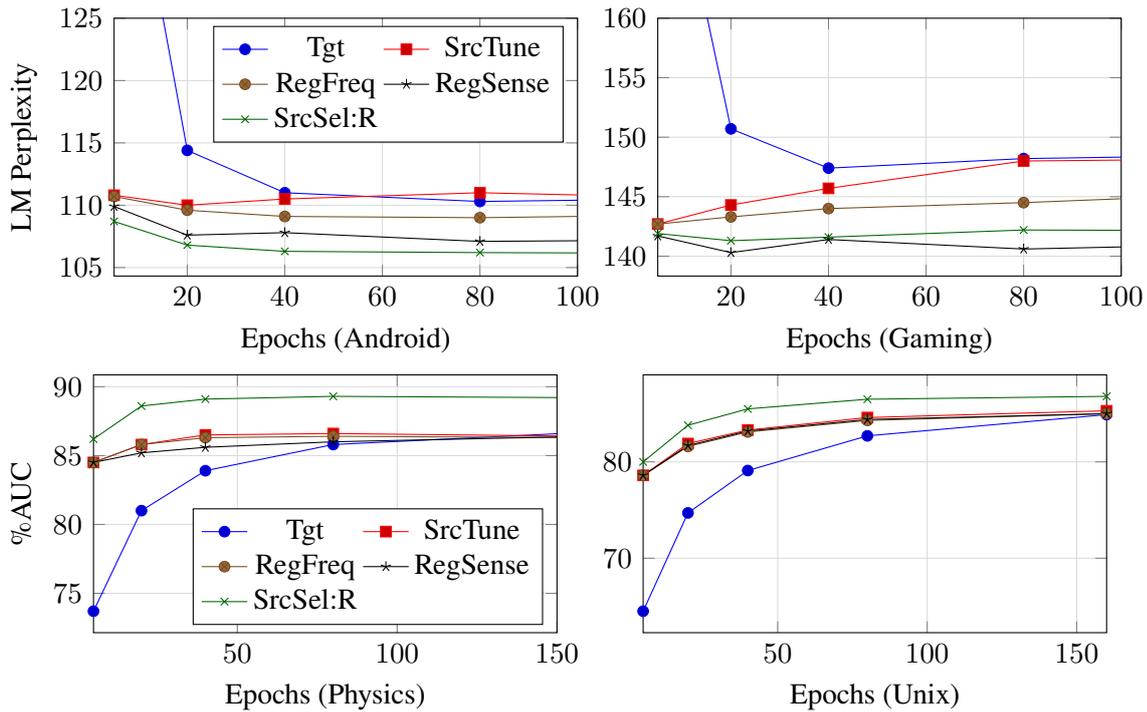

\noindent
{\bf Medical domain:}
This domain from the Ohsumed\footnote{\url{https://www.mat.unical.it/OlexSuite/Datasets/SampleDataSets-about.htm}} dataset has abstracts on cardiovascular diseases.  We sample 1.4\,MB of abstracts as target corpus~$\DT$.  We evaluate embeddings on two tasks: (1)~unsupervised language modeling on remaining abstracts, and (2)~supervised classification on 23 MeSH classes based on title.  We randomly select 10,000 titles with train, test, dev split as 50\%, 40\%, and 10\%.
Following~\citet{JoulinGBM17}, we train a softmax layer on the average of adapted (and pinned) word embeddings.

\noindent
{\bf Topics from 20~newsgroup:}
We choose the five top-level classes in the 20~newsgroup dataset\footnote{\url{http://qwone.com/~jason/20Newsgroups/}} as topics; viz.: \emph{Computer, Recreation, Science, Politics, Religion}.  The corresponding five downstream tasks are text classification over the 3--5 fine-grained classes under each top-level class.
Train, test, dev splits were 50\%, 40\%, 10\%.  We average over nine splits.  
The \texttt{body} text is used as $\DT$ and \texttt{subject} text is used for classification.

\noindent
Pretrained embeddings $\ES$  are trained on  Wikipedia using the default settings of word2vec's CBOW model.

\begin{table}[htb]
\centering
\begin{tabular}{|l|c|c|c|c|}
    \hline 
    Method & Physics & Gaming & Android & Unix \\
    \hline
    \tgt & 121.9 & 185.0 & 142.7 & 159.5 \\
    \tgt (unpinned) & -0.6 & -0.8 & 0.2 & 0.1 \\
    \hline
\end{tabular}
    \caption{Average reduction in perplexity, when embeddings are not pinned,
    on four Stackexchange topics.}
    \label{tab:srcsel:lm_ft}
\end{table}

\subsubsection*{Effect of fine-tuning embeddings on the target task}
We chose to pin embeddings in all our experiments, once adapted to the target corpus, namely the document classification task on medical and 20 newsgroup topics and language model task on five different topics. This is because we did not see any improvements when we unpin the input embeddings. We summarize in Table~\ref{tab:srcsel:lm_ft} the results when the embeddings are not pinned on language model task on the four StackExchange topics.  

\subsubsection*{Epochs vs.\ regularization results}
In Figure~\ref{fig:srcsel:auc} we show perplexity and AUC against training epochs.  Here we focus on four methods: \tgt, \srcinit, \yangC,  and \srcselReg.  First note that \tgt\ continues to improve on both perplexity and AUC metrics beyond five epochs (the default in word2vec code\footnote{\url{https://code.google.com/archive/p/word2vec/}} and left unchanged in \yangC{}\footnote{\url{https://github.com/Victor0118/cross_domain_embedding/}} \citep{regemnlp}).  In contrast, \srcinit, \srcselReg, and \yangC\ are much better than \tgt\ at five epochs, saturating quickly.  With respect to perplexity, \srcinit\ starts getting worse around 20 iterations and becomes identical to \tgt, showing catastrophic forgetting.   Regularizers in \yangC\ and \srcselReg\ are able to reduce such forgetting, with \srcselReg\ being more effective than \yangC.
These experiments show that any comparison that chooses a fixed number of training epochs across all methods is likely to be unfair.  Henceforth we will use a validation set for the stopping criteria.  While this is standard practice for supervised tasks, most word embedding code we downloaded ran for a fixed number of epochs, making comparisons unreliable.  We conclude that validation-based stopping is critical for fair evaluation.

\begin{table}[htb]
\centering
\begin{tabular}{|l|l|l|l|l|l|}
\hline 
Method & Physics & Gaming & Android & Unix & Medical \\
\hline
\tgt & 121.9 & 185.0 & 142.7 & 159.5 & 158.9 \\
\hline
\srcinit & $2.3_{\pm 0.7}$&$6.8_{\pm 0.3}$&$1.1_{\pm 0.3}$&$3.1_{\pm 0.0}$&$5.5_{\pm 0.8}$\\
\yangC &$2.1_{\pm 0.8}$&$7.1_{\pm 0.7}$&$1.8_{\pm 0.4}$&$3.4_{\pm 0.5}$&$6.8_{\pm 0.9}$\\
RegSens &$5.0_{\pm 0.1}$&$13.8_{\pm 0.3}$&$6.7_{\pm 0.8}$&$9.7_{\pm 0.3}$&$14.6_{\pm 1.0}$\\
\srcsel &$5.8_{\pm 0.9}$&$11.7_{\pm 0.6}$&$5.9_{\pm 1.2}$&$6.4_{\pm 0.1}$&$8.6_{\pm 3.0}$\\
\hline
\srcsel + RegSense &$6.2_{\pm 1.3}$&$12.5_{\pm 0.3}$&$7.9_{\pm 1.8}$&$9.3_{\pm 0.2}$&$10.5_{\pm 0.9}$\\
\hline
\end{tabular}
\caption{Average reduction ($\pm$ standard deviation) in language model perplexity over \tgt on five StackExchange Topics.}
\label{tab:srcsel:perplexity_with_sd}
\end{table}

\begin{table}[htb]
\centering
\begin{tabular}{|l|l|l|l|l|}
\hline
 & Physics & Gaming & Android & Unix 
 \\
\hline
\tgt & 86.7 & 82.6 & 86.8 & 85.4 
\\
\hline
\src & -2.3$_{\pm \text{0.5}}$ & 0.8$_{\pm \text{0.5}}$ & -3.7$_{\pm \text{0.5}}$ & -7.1$_{\pm \text{0.3}}$ 
\\
\concat & -1.1$_{\pm \text{0.5}}$ & 1.4$_{\pm \text{0.3}}$ & -2.1$_{\pm \text{0.3}}$ & -4.5$_{\pm \text{0.4}}$ 
\\
AAEME & 1.2$_{\pm \text{0.2}}$ & 4.6$_{\pm \text{0.0}}$ & -0.3$_{\pm \text{0.2}}$ & 0.0$_{\pm \text{0.2}}$ 
\\
\srcinit & -0.3$_{\pm \text{0.3}}$ & 1.9$_{\pm \text{0.2}}$ & 0.6$_{\pm \text{0.2}}$ & -0.0$_{\pm \text{0.2}}$ 
\\
\yangC & -0.4$_{\pm \text{0.2}}$ & 2.4$_{\pm \text{0.2}}$ & -0.5$_{\pm \text{0.5}}$ & -0.5$_{\pm \text{0.2}}$ 
\\
\srcselReg & -0.4$_{\pm \text{0.5}}$ & 2.2$_{\pm \text{0.1}}$ & -0.5$_{\pm \text{0.5}}$ & -0.5$_{\pm \text{0.4}}$ 
\\
\srcsel & 3.6$_{\pm \text{0.2}}$ & 3.0$_{\pm \text{0.2}}$ & 0.8$_{\pm \text{0.3}}$ & 2.1$_{\pm \text{0.2}}$ \\
\rowcolor{green!10}
\srcsel & 3.6$_{\pm \text{0.2}}$ & 3.1$_{\pm \text{0.5}}$ & 0.8$_{\pm \text{0.3}}$ &  2.1$_{\pm \text{0.2}}$ \\ 
\rowcolor{green!10} +\srcselReg & & & & \\
\hline
\end{tabular}
\caption{\label{tab:srcsel:auc}AUC gains over \tgt{} ($\pm$ standard deviation of difference) on duplicate question detection task on various target topics. AAEME is the auto-encoder meta-embedding of \citet{BaoB2018}.}
\end{table}

\begin{table}[htb]
\centering
\begin{tabular}{|l|l|l|l|c|}
\hline
       & \multicolumn{3}{c|}{Ohsumed} 
        & \multicolumn{1}{c|}{20NG Avg}
       \\
Method & Micro & Macro & Rare & 5 topics \\
\hline
\tgt & 26.3 & 14.7 & 3.0 & 88.9 \\ \hline
\src & -1.0$_{\pm \text{0.9}}$ & 0.$_{\pm \text{0.5}}$ & 0.$_{\pm \text{0.1}}$ & -3.9$_{\pm \text{1.2}}$ \\
AAEME & -1.0$_{\pm \text{0.9}}$ & 0.$_{\pm \text{0.5}}$ & 0.$_{\pm \text{0.1}}$ & -3.9$_{\pm \text{1.2}}$ \\
\srcinit & 1.7$_{\pm \text{1.0}}$ & 1.8$_{\pm \text{1.7}}$ & 1.5$_{\pm \text{2.0}}$ & 0.0$_{\pm \text{1.6}}$ \\
\yangC & 0.6$_{\pm \text{0.5}}$ & 1.8$_{\pm \text{2.3}}$ & 3.7$_{\pm \text{4.7}}$ & - \\
\srcselReg & 1.4$_{\pm \text{0.5}}$ & 2.5$_{\pm \text{1.2}}$ & 4.0$_{\pm \text{1.8}}$ & 0.4$_{\pm \text{1.3}}$ \\
\srcsel & 2.0$_{\pm \text{0.9}}$ & 2.6$_{\pm \text{1.5}}$ & 1.1$_{\pm \text{1.4}}$ & 0.5$_{\pm \text{1.5}}$ \\
\rowcolor{green!10}
\srcsel & \hl{2.3$_{\pm \text{0.7}}$} & \hl{3.4$_{\pm \text{1.3}}$} & \hl{4.3$_{\pm \text{1.2}}$} &  \hl{0.5$_{\pm \text{1.5}}$} \\ 
\rowcolor{green!10} +\srcselReg & & & & \\ 
\hline
\end{tabular} 
\caption{Average accuracy gains over \tgt{} ($\pm$ std-dev) on Ohsumed and 20NG datasets.  We show macro and rare class accuracy gains for Ohsumed because of its class population skew.  Per-topic 20NG gains are in \tablename~\ref{tab:srcsel:classify:ng_detailed} in Appendix.}
\label{tab:srcsel:classify} 
\end{table}

\noindent
We next compare \srcinit, \yangC, and \srcselReg{} on the three tasks: perplexity in \tablename~\ref{tab:srcsel:perplexity_with_sd}, duplicate detection in \tablename~\ref{tab:srcsel:auc}, and classification in \tablename~\ref{tab:srcsel:classify}. All three methods are better than baselines \src\ and \concat, which are much worse than \tgt\ indicating the presence of significant concept drift.  \citet{regemnlp} provided no comparison between \yangC{} (their method) and \srcinit; we find the latter slightly better.
\noindent
On the supervised tasks, \yangC\ is often worse than \tgt\ provided \tgt\ is allowed to train for enough epochs.  If the same number of epochs are used to train the two methods, one can reach the misleading conclusion that \tgt\ is worse.  \srcselReg\ is better than \srcinit\ and \yangC\ particularly with respect to perplexity, and rare class classification (Table~\ref{tab:srcsel:classify}).  We conclude that a well-designed word stability-based regularizer can improve upon epoch-based fine-tuning.

\begin{table}[htb]
\centering
\begin{tabular}{|l|l|l|l|l|}
\hline
 & Physics & Gaming & Android & Unix  \\
\hline
\multicolumn{5}{|c|}{\yangC's reduction in Perplexity over \tgt}   \\
\hline
Original & 1.1$_{\pm \text{1.1}}$ & 1.5$_{\pm \text{1.2}}$ & 0.9$_{\pm \text{0.1}}$ & 0.7$_{\pm \text{0.8}}$  \\
+SrcInit & \hl{2.1$_{\pm \text{0.9}}$} & \hl{5.7$_{\pm \text{0.8}}$} & \hl{1.1$_{\pm \text{0.5}}$} & \hl{2.1$_{\pm \text{0.8}}$}  \\
\hline
\multicolumn{5}{|c|}{\yangC's gain in AUC over \tgt}   \\
\hline
Original & -1.2$_{\pm \text{0.4}}$ & 0.1$_{\pm \text{0.1}}$ & \hl{-0.2$_{\pm \text{0.1}}$} & \hl{-0.4$_{\pm \text{0.1}}$} 
\\ 
+SrcInit & \hl{-0.4$_{\pm \text{0.2}}$} & \hl{2.4$_{\pm \text{0.2}}$} & -0.5$_{\pm \text{0.5}}$ & -0.5$_{\pm \text{0.2}}$ 
\\
\hline
\end{tabular}
\caption{Effect of initializing with source embeddings.  We show mean gains over \tgt\ over 9 runs ($\pm$ std-dev).}
\label{tab:srcsel:init}
\end{table}

\subsubsection*{Impact of source initialization}
\tablename~\ref{tab:srcsel:init} compares \tgt\ and \yangC\ with two initializers: (1)~random  as proposed by \citet{regemnlp}, and (2)~with source embeddings.  \yangC\ after source initialization is better in almost all cases.  \srcsel\ and \srcselReg\ also improve with source initialization, but to a smaller extent.  (More detailed numbers are in \tablename~\ref{tab:srcsel:init:all} of Appendix.)  We conclude that initializing with pretrained embeddings is helpful even with regularizers.

\subsubsection*{Comparison with Meta-embeddings } 
In Tables~\ref{tab:srcsel:auc} and ~\ref{tab:srcsel:classify} we show results with the most recent meta-embedding method AAEME.
AAEME provides gains over \tgt\ in only two out of six cases\footnote{On the topic classification datasets in Table~\ref{tab:srcsel:classify}, AAEME and its variant DAEME were worse than \src.  We used the dev set to select the better of \src\ and their best method.}.

\subsubsection*{Performance of \srcsel}

We next focus on the performance of \srcsel\ on all three tasks: perplexity in \tablename~\ref{tab:srcsel:perplexity_with_sd}, duplicate detection in \tablename~\ref{tab:srcsel:auc}, and classification in \tablename~\ref{tab:srcsel:classify}.  \srcsel\ is always among the best two methods for perplexity.  In supervised tasks, \srcsel\ is the only method that provides significant gains for all topics: AUC for duplicate detection increases by 2.4\%, and classification accuracy increases by 1.4\% on average. \srcsel+\srcselReg\ performs even better than \srcsel\ on all three tasks particularly on rare words. An ablation study on other variants of \srcsel\ appear in the Appendix.
\begin{table}[htb]
\centering
\begin{tabular}{l|l|l|l|l|l}
  Pair                &     Tgt & SrcTune & RegFreq & RegSense  &   SrcSel \\ \hline
Unix topic &  & & & & \\ \hline
nice, kill & 4.6 & 4.5 & 4.4 & 4.4 & 5.2 \\
vim, emacs & 5.7 & 5.8 & 5.7 & 5.8 & 6.4 \\                      
print, cat & 5.0 & 4.9 & 4.9 & 5.0 & 5.4 \\
kill, job & 5.2 & 5.1 & 5.2 & 5.3 & 5.8 \\
make, install & 5.1 & 5.1 & 5.3 & 5.7 & 5.8 \\
character, unicode & 4.9 & 5.1 & 4.7 & 4.6 & 5.8 \\ \hline

Physics topic &  & & & & \\ \hline
lie, group         & 5.2 & 5.0 & 4.4 & 5.1 & 5.8 \\
current, electron  & 5.3 & 5.3 & 4.7 & 5.3 & 5.7 \\
potential, kinetic & 5.8 & 5.8 & 4.5 & 5.9 & 6.1 \\
rotated, spinning & 5.0 & 5.7 & 6.0 & 5.1 & 5.6 \\
x-ray, x-rays & 5.3 & 7.0 & 6.1 & 5.5 & 6.4 \\
require, cost & 4.9 & 6.2 & 5.2 & 5.1 & 5.3 \\
cool, cooling & 5.6 & 6.0 & 6.4 & 5.7 & 5.7 \\
\hline
\end{tabular}
\caption{\label{tab:srcsel:anecdotes} Example word pairs and their normalized similarity across different methods of training embeddings.}
\end{table}

\subsubsection*{Additional Experiments}
\noindent
{\bf Word-pair similarity improvements:} 
In \tablename~\ref{tab:srcsel:anecdotes}, we show  normalized\footnote{We sample a set $S$ of 20 words based on their frequency. Normalized similarity between $a$ and $b$ is $\frac{\cos(a,b)}{\sum_{w \in (S \cup b)}\cos(a,w)}$. Set $S$ is fixed across methods.} cosine similarity of word pairs pertaining to the Physics and Unix topics.  Observe how word pairs like ({\em nice, kill}), ({\em vim, emacs})
in Unix and 
({\em current, electron}), ({\em lie, group})
in Physics are brought closer together as a result of importing the larger unix/physics subset from~$\DS$.  In each of these pairs,  words (e.g. {\em nice, vim, lie, current}) have a different prominent sense in the source (Wikipedia).  Hence, methods like \srcinit, and \srcselReg\ cannot help.  In contrast, word pairs like (cost, require), (x-ray, x-rays) whose sense is the same in the two corpus benefit significantly from the source across all methods.

\noindent
{\bf Running time:} \srcsel\ is five times slower than \yangC, which is still eminently practical. $\widehat{\DS}$ was within $3\times$ the size of $\DT$ in all domains.  If $\DS$ is available, \srcsel\ is a practical and significantly more accurate option than adapting pretrained source embeddings. \srcsel+\srcselReg\ complements \srcsel\ on rare words, improves perplexity, and is never worse than \srcsel.

\begin{table}[htb]
\centering
  \begin{tabular}{|l|r|r|r|r|r|}
    \hline
    & Physics & Game & Android & Unix & Med (Rare) \\
    \hline
    \tgt & 89.7 & 88.4 & 89.4 & 89.2 & 9.4 \\
    \hline
    \srcinit & $-0.2$ & 0.6 & $-0.4$ & $-0.2$
    & $-2.1$ \\
   \srcsel  & $1.9$ & 0.5 & 0.0 & $-0.2$
    & 1.1 \\
    \hline
  \end{tabular}
  \caption{
  \label{tab:srcsel:large:c}Performance with a larger target corpus size of 10MB on the four deduplication tasks (AUC score) and one classification task (Accuracy on rare class). More details in Table~\ref{tab:srcsel:large} of Appendix.}
\end{table}

\noindent
{\bf Effect of target corpus size:}
The problem of importing source embeddings is motivated only when target data is limited. When we increase target corpus 6-fold, the gains of \srcsel\ and \srcinit\ over \tgt\ was insignificant in most cases. However, infrequent classes continued to benefit from the source as shown in~\tablename~\ref{tab:srcsel:large:c}.

\begin{table}[!ht]
\centering
  \begin{tabular}{|l|l|l|l|l|l|}
    \hline
    & Physics & Gaming & Android & Unix & Medical \\
    \hline
    \tgt & 86.7 & 82.6 & 86.8 & 85.4 & 26.3 \\
    \hline
    Elmo & -1.0$_{\pm \text{0.4}}$ & 4.5$_{\pm \text{0.3}}$ & -1.5$_{\pm \text{0.8}}$ & -2.3$_{\pm \text{0.3}}$ 
    & 3.2$_{\pm \text{1.3}}$ \\
    +\tgt & -0.8$_{\pm \text{0.4}}$ & 3.8$_{\pm \text{0.4}}$ & 0.5$_{\pm \text{0.1}}$ & -0.0$_{\pm \text{0.1}}$ 
    & 4.1$_{\pm \text{1.5}}$ \\
    +ST & -0.5$_{\pm \text{0.3}}$ & 3.0$_{\pm \text{0.2}}$ & 0.3$_{\pm \text{0.5}}$ & 0.2$_{\pm \text{0.2}}$ 
    & 3.5$_{\pm \text{0.6}}$ \\
    +\srcsel & 2.6$_{\pm \text{0.5}}$ & 4.1$_{\pm \text{0.1}}$ & 1.1$_{\pm \text{0.4}}$ & 1.5$_{\pm \text{0.2}}$ 
    & 4.6$_{\pm \text{0.9}}$ \\
    \hline
  \end{tabular}
  \caption{Gains ($\pm$ std-dev) over \tgt\ with contextual embeddings on duplicate detection (columns 2--5) and classification (column 6).}
  \label{tab:srcsel:ctxt}
\end{table}

\subsubsection*{Do Contextual Embeddings Obviate Adaptation?}

We explore if contextual word embeddings obviate the need for adapting source embeddings, in the ELMo \citep{PetersNIGCKZ2018ELMo} setting, a contextualized word representation model, pretrained on a 5.5B token corpus\footnote{\url{https://allennlp.org/elmo}}. 
We compare ELMo's contextual embeddings as-is, and also after concatenating them with each of \tgt, \srcinit, and \srcsel\ embeddings in Table~\ref{tab:srcsel:ctxt}.  First, ELMo+\tgt\ is better than \tgt\ and ELMo individually.  This shows that contextual embeddings are useful but they do not eliminate the need for topic-sensitive embeddings.  Second, ELMo+\srcsel\ is better than ELMo+\tgt.  Although \srcsel\ is trained on data that is a strict subset of ELMo, it is still instrumental in giving gains since that subset is aligned better with the target sense of words.
We conclude that topic-adapted embeddings can be useful, even with ELMo-style contextual embeddings.

Recently, BERT~\citep{devlin2018bert} has garnered a lot of interest for beating contemporary contextual embeddings on all the GLUE tasks. 
We evaluate BERT on question duplicate question detection task on the four StackExchange topics. We use pretrained BERT-base, a smaller 12-layer transformer network, for our experiments. We train a classification layer on the final pooled representation of the sentence pair given by BERT to obtain the binary label of whether they are duplicates. This is unlike the earlier setup where we used EMD on the fixed embeddings.

To evaluate the utility of a 
relevant topic focused corpus, we fine-tune the pretrained checkpoint either on $\DT$ (\srcinit) or on $\DT \cup \widehat{\DS}$ (\srcselR) using BERT's masked language model loss. The classifier is then initialized with the fine-tuned checkpoint. Since fine-tuning is sensitive to the number of update steps, we tune the number of training steps using performance on a held-out dev set.  F1 scores corresponding to different initializing checkpoints are shown in table~\ref{tab:srcsel:bert_ft}. It is clear that  pretraining the contextual embeddings on relevant target corpus helps in the downstream classification task. However, the gains of \srcselR\ over \tgt\ is not clear. This could be due to incomplete or noisy sentences in $\widehat{\DS}$. There is need for more experimentation and research to understand the limited gains of \srcselR\ over \srcinit\ in the case of BERT. We leave this for future work.

\begin{table}[htb]
\centering
\begin{tabular}{|l|r|r|r|r|}
\hline
Method & Physics & Gaming & Android & Unix \\
\hline
BERT & 87.5 & 85.3 & 87.4 & 82.7 \\
\srcinit & {\bf 88.0} & {\bf 89.2} & 88.5 & 83.5 \\
\srcselR & 87.9 & 88.4 & {\bf 88.6} & {\bf 85.1} \\
\hline
\end{tabular}
\caption{F1 scores on question de-duplication task using BERT-base and when fine-tuned on Tgt only ($\DT$) and Tgt and selected source ($\DT \cup \widehat{\DS}$)}
\label{tab:srcsel:bert_ft}
\end{table}


\newcommand\our{KYC}
\def\longname{Know Your Client}

\newcommand{\lastE}{M}

\section{On-the-fly Adaptation}
\label{sec:kyc}
State-of-the-art NLP inference uses enormous neural architectures and models trained for GPU-months, well beyond the reach of most consumers of NLP.  This has led to one-size-fits-all public API-based NLP service models by major AI companies, serving large numbers of clients.  Neither (hardware deficient) clients nor (heavily subscribed) servers can afford traditional fine tuning.  Many clients own little or no labeled data.  We now describe our study of adaptation of centralized NLP services to clients.  



Distributional mismatch between the giant general-purpose corpus used to train the central service and the corpus from which a client's instances arise leads to lower accuracy.  A common source of trouble is mismatch of word salience \citep{paik2013novel} between client and server corpora~\citep{Ruder2019Neural}.  
%
In this respect, our setting also presents a new opportunity.  Clients are numerous and form natural clusters, e.g., healthcare, sports, politics.  We want the service to exploit commonalities in existing client clusters, without explicitly supervising this space, and provide some level of generalization to new clients without re-training or fine-tuning.  

In response to the above challenges and constraints, we initiate an investigation of practical protocols for lightweight client adaptation of NLP services.  We propose a system, \our~(``\longname''), in which each client registers with the service using a simple sketch derived from its (unlabeled) corpus.
The service network takes the sketch as an additional input with each instance later submitted by the client and provides accuracy benefits to new clients immediately.

What form can a client sketch take? How should the service network incorporate it?  While this will depend on the task, we initiate a study of these twin problems focused on predictive language modeling, sentiment labeling, and named entity recognition (NER).  We show that a simple late-stage  intervention in the server network gives visible accuracy benefits, and provide diagnostic analyses and insights.  

\section{Immediate Adaptation through Corpus Sketch Registration}
\label{sec:kyc:Proposal}
We formalize the constraints on the server and client in the API setting.
(1) The server is expected to scale to a large number of clients making it impractical to adapt to individual clients.
(2) After registration, the server is expected to provide labeling immediately and response latency per instance must be kept low implying that the server's inference network cannot be too compute-intensive.
(3) Finally, the client cannot perform complex pre-processing of every instance before sending to the server, and does not have any labelled data.

\subsection{Model Design}
\paragraph{Server network and model}
These constraints lead us to design a server model that \emph{learns to   compute} client-specific model parameters from the client sketch, and requires no client-specific 
\begin{wrapfigure}{r}{0.3\hsize}
\centering
\begin{tikzpicture}[>=stealth',align=center]
\node (jl) {loss};
\node [left=3mm of jl] (y) {$\bm{y}$};
\draw [->] (y) to (jl);
\node [rectangle, draw,
minimum width=9mm, below=3mm of jl] (Stheta) {$Y_\theta$};
\draw [->] (Stheta) to (jl);
\node [circle,draw, fill=yellow!15, 
inner sep=.1mm, below=2mm of Stheta] (plus) {$+$};
\draw [->] (plus) to (Stheta);
\node [rectangle, draw, anchor=center,
minimum width=9mm, minimum height=8mm,
below left=6mm and 2mm of plus] (Etheta) {$\lastE_\theta$};
\node [anchor=center, fill=yellow!15,
below right=3mm and 2mm of plus,
outer sep=.1mm, inner sep=0mm] (g) {$\bm{g}$};
\draw [->] (Etheta) to (plus);
\draw [->] (g) to (plus);
\node [rectangle, draw, anchor=center, fill=yellow!15,
minimum width=9mm, minimum height=8mm,
below=3mm of g] (Gphi) {$G_\phi$};
\draw [->] (Gphi) to (g);
\node [anchor=center,below=6mm of Etheta] (x) {$\bm{x}$};
\node [fill=yellow!15,
anchor=center,below=3mm of Gphi] (D) {$S_c$};
\draw [->] (x) to (Etheta);
\draw [->] (D) to (Gphi);
\draw [->, dotted] (D) to (x);
\end{tikzpicture}
\caption{\raggedright \our{} overview.}  \label{fig:kyc:overview}
\end{wrapfigure}
fine-tuning or parameter learning.
The original server network is written as $\hat{\bm{y}} = Y_\theta(\lastE_\theta(\bm{x}))$ where $\bm{x}$ is the input instance, and
$Y_\theta$ is a softmax layer to get the predicted label $\hat{\bm{y}}$.
$\lastE_\theta$ is a representation learning layer that may take diverse forms depending on the task; of late, BERT \citep{devlin2018bert} is used to design $\lastE_\theta$ for many tasks.

We augment the server network to accept, with
each input $\bm{x}$, a client-specific sketch~$S_c$ as shown in \figurename~\ref{fig:kyc:overview}.     
We discuss possible forms of $S_c$ in the next subsection.  (The dotted arrow represents a generative influence of $S_c$ on $\bm{x}$.)  The server implements an auxiliary network $\bm{g} = G_\phi(S_c)$.  Here $\bm{g}$ can be regarded as a neural digest of the client sketch.
Module $\bigoplus$ combines $\lastE_\theta(\bm{x})$ and $\bm{g}$; concatenation was found adequate on the tasks we evaluated but we also discuss other options in Section~\ref{sec:kyc:Expt}.  
When the $\bigoplus$ module is concatenation we are computing a client-specific per-label bias, and even that provides significant gains, as we show in Section~\ref{sec:kyc:Expt}.

\subsection{Corpus Sketch Design}
\paragraph{Client sketch:}
The design space of client sketch $S_c$ is infinite.  We initiate a study of designing $S_c$ from the perspective of term weighting and salience in Information Retrieval \citep{paik2013novel}.  $S_c$ needs to be computed once by each client, and thereafter reused with 
every input instance~$\bm{x}$.  
Ideally, $S_c$ and $G_\phi$ should be locality preserving, in the sense that clients with similar corpora and tasks should lead to similar~$\bm{g}$s. Suppose the set of clients already registered is~$C$.

A simple client sketch is just a vector of counts of all words in the client corpus.  Suppose word $w$ occurs $n_{c,w}$ times in a client $c$, with $\sum_w n_{c,w}=N_c$.  Before input to $G_\phi$, the server normalizes these counts using counts of other clients as follows:
From all of $C$, the server will estimate a background unigram rate of word. 
Let the estimated rate for word $w$ be~$p_w$, which is calculated as:
\begin{align}
p_w &= \textstyle (\sum_{c\in C}n_{c,w})\left/\left(\sum_w \sum_{c\in C}n_{c,w}\right)\right..
\end{align}
The input into $G_\phi$ will encode, for each word $w$, how far the occurrence rate of $w$ for client $c$ deviates from the global estimate.  Assuming the multinomial word event distribution, the marginal probability of having $w$ occur $n_{c, w}$ times at client $c$ is proportional to $p_w^{n_{c,w}} (1 - p_w)^{(N_c - n_{c,w})}$.  We finally pass a vector containing the normalized negative log probabilities as input to the model:
\begin{align}
S_c \propto \Bigl( - n_{c,w}\log p_w 
- (N_c - n_{c,w})\log(1-p_w): \forall w \Bigr).
\label{eq:kyc:salience}
\end{align}
We call this the {\bf term-saliency} sketch.  We discuss other sketches like TF-IDF and corpus-level statistics like average instance length in Sec.~\ref{sec:kyc:ablation}.  

\section{Experiments}
\label{sec:kyc:Expt}
We evaluate \our\ on three NLP tasks as services: NER, sentiment classification, and auto-completion based on predictive language modeling. 
We compare \our\ against the baseline model (without the $G_\phi$ network in \figurename~\ref{fig:kyc:overview}) and the mixture of experts (MoE) model \citep{GuoSB18} (see Appendix~\ref{sec:kyc:appendix:lm}).  For all three models, the $\lastE_\theta$ network is identical in structure.  
In \our,  $G_\phi$ has two linear layers with ReLU giving a 128-dim vector~$\bm{g}$, with slight exceptions (see Appendix~\ref{sec:kyc:appendix:lm}). 
We choose datasets that are partitioned naturally across domains, used to simulate clients.  We evaluate in two settings: in-distribution (ID) on test instances from clients seen during training, and out-of-distribution (OOD) on instances from unseen clients.  For this, we perform a leave-k-client-out evaluation where given a set $D$ of clients, we remove $k$ clients as OOD test and use remaining $D - k$ as the training client set $C$. 

\noindent{\bf Named Entity Recognition (NER):} We use Ontonotes~\citep{ontonotes} which has 18 entity classes 
from 31 sources which forms our set $D$ of clients.  We perform leave-2-out test five times with 29 training clients as $C$. 
We train a cased BERT-based NER model~\citep{devlin2018bert} 
and report F-scores.   
%
\setlength\tabcolsep{2.0pt}
\begin{table}
\centering
\begin{tabular}{|l | c c c | c c c |}
\hline
& \multicolumn{3}{|c|}{\small{OOD}} & \multicolumn{3}{|c|}{\small{ID}}  \\
OOD Clients & \small{Base} & \small{MoE} & \small{{\our}} & \small{Base}& \small{MoE} & \small{{\our}} \\
\hline
\small{BC/CCTV+Phoenix} & 63.8 & 66.9 & 71.8 & 86.0 & 83.8 & 86.7 \\
\small{BN/PRI+BN/VOA} & 88.7 & 87.9 & 90.7 & 84.5 & 83.0 & 86.0\\
\small{NW/WSJ+Xinhua} &73.9 & 78.9 & 80.9 & 80.8 & 77.2 & 82.5\\
\small{BC/CNN+TC/CH} & 78.3 & 75.2 & 78.7 & 85.6 & 82.7 & 87.4\\
\small{WB/Eng+WB/a2e} & 76.2 & 69.9 & 78.4 & 86.4 & 82.6 & 87.3 \\
\hline
Average & 76.2 & 75.8 & \textbf{80.1} & 84.7 & 81.9 & \textbf{86.0}\\
\hline
\end{tabular}
\vspace{0.3em}
\caption{Test F1 on Ontonotes NER. OOD numbers are on the two listed domains whereas ID numbers are on test data of clients seen during training.}
\label{table:kyc:ner_numbers}
\end{table}
Table \ref{table:kyc:ner_numbers} shows that \our\ provides substantial gains for OOD clients. For the first two OOD clients (BC/CCTV,Phoenix), the baseline F1 score jumps from 63.8 to 71.8. MoE performs worse than baseline. We conjecture this is because separate softmax parameters over the large NER label space is not efficiently learnable. 
    
\noindent{\bf Sentiment Classification:} 
We use the popular Amazon dataset~\citep{amazon_dataset} with each product genre simulating a client.  
We retain genres with more than 1000 positive and negative reviews each and randomly sample 1000 positive and negative reviews from these 22 genres. We perform leave-2-out evaluation five times and Table~\ref{table:kyc:sentiment_numbers} shows the five OOD genre pairs. 
We use an uncased BERT model for classifcation~\citep{sentiment_repo}. 
\begin{table}
\centering
    \begin{tabular}{|l |c c c |c c c |}
    \hline
    & \multicolumn{3}{|c|}{\small{OOD}} & \multicolumn{3}{|c|}{\small{ID}}  \\
    OOD Clients & \small{Base }& \small{MoE} & \small{{\our}} & \small{Base}& \small{MoE} & \small{{\our}} \\
    \hline
    
    \small{Electronics+Games} & 86.9 & 87.4 & 87.7 & 88.6 & 88.7 &  89.0 \\
    \small{Industrial+Tools} & 87.6 & 88.3 &  87.7 & 88.4 & 88.8 &  88.9 \\
    \small{Books+Kindle Store} & 83.4 & 84.6 &  84.1 & 88.2 & 88.8 &  88.7 \\
    \small{CDs+Digital Music} & 82.4 & 83.0 &  83.2 & 89.0 & 88.9 &  88.9 \\
    \small{Arts+Automotive} & 90.2 & 90.6 &  90.4 & 88.4 & 88.6 &  88.6 \\
    \hline
    Average & 86.1 & 86.8  & 86.6 & 88.5 & 88.8 & \textbf{88.9} \\ \hline
    \end{tabular}
    \caption{Test Accuracy on Amazon Sentiment Data.}
    \label{table:kyc:sentiment_numbers}
\end{table}

Table~\ref{table:kyc:sentiment_numbers} shows that average  OOD client accuracy increases from 86.1  to 86.8 with \our. 
    
\noindent{\bf Auto-complete Task:}
We model this task as a forward language model 
and measure perplexity. We used the 20 NewsGroup dataset and treat each of the twenty topics as a client. Thus $D$ is of size 20. 
We use the state-of-art Mogrifier LSTM~\citep{MelisTP2020}. We perform leave-1-topic-out evaluation six times and OOD topics are shown in Table~\ref{tab:kyc:expts:lm}. For MoE, the client-specific parameter is only the bias and not the full softmax parameters, since that would otherwise blow up the number of trainable parameters and performed worser.
\begin{table}[htb]
\centering
    \begin{tabular}{|l|rrr|rrr|}
    \hline
    OOD & \multicolumn{3}{|c|}{OOD} & \multicolumn{3}{|c|}{ID}  \\
    Clients & Base & MoE & {\our} & Base & MoE & {\our}\\ \hline
    sci.space & 29.6 & 30.9 & 29.0 & 28.8 & 30.7 & 28.1\\
    comp.hw & 26.5 & 28.6 & 26.4 & 28.1 & 28.7 & 27.6\\
    sci.crypt & 29.7 & 29.8 & 29.6 & 27.8 & 28.1 & 27.7\\
    atheism & 28.3 & 28.1 & 28.1 & 27.9 & 28.2 & 28.0\\
    autos & 28.0 & 28.4 & 27.9 & 27.7 & 28.2 & 27.7\\
    mideast & 27.4 & 26.7 & 27.3 & 28.4 & 27.9 & 27.7\\ \hline
    Average & 28.2 & 28.7 & {\bf 27.9} & 28.0 & 28.8 & {\bf 27.7}\\ \hline
    \end{tabular}
\caption{Perplexity comparison between the baseline and {\our} on 20-NewsGroup dataset.}
\label{tab:kyc:expts:lm}
\end{table}
Table~\ref{tab:kyc:expts:lm} shows 
that \our\ performs consistently better than the baseline with average perplexity drop from 28.2 to 27.9. This drop is particularly significant because the Mogrifier LSTM is a strong baseline to start with. 
MoE is worse than baseline. 

\begin{figure}[htb] 
  \begin{subfigure}[b]{\linewidth}
    \centering
    \includegraphics[width=0.85\linewidth]{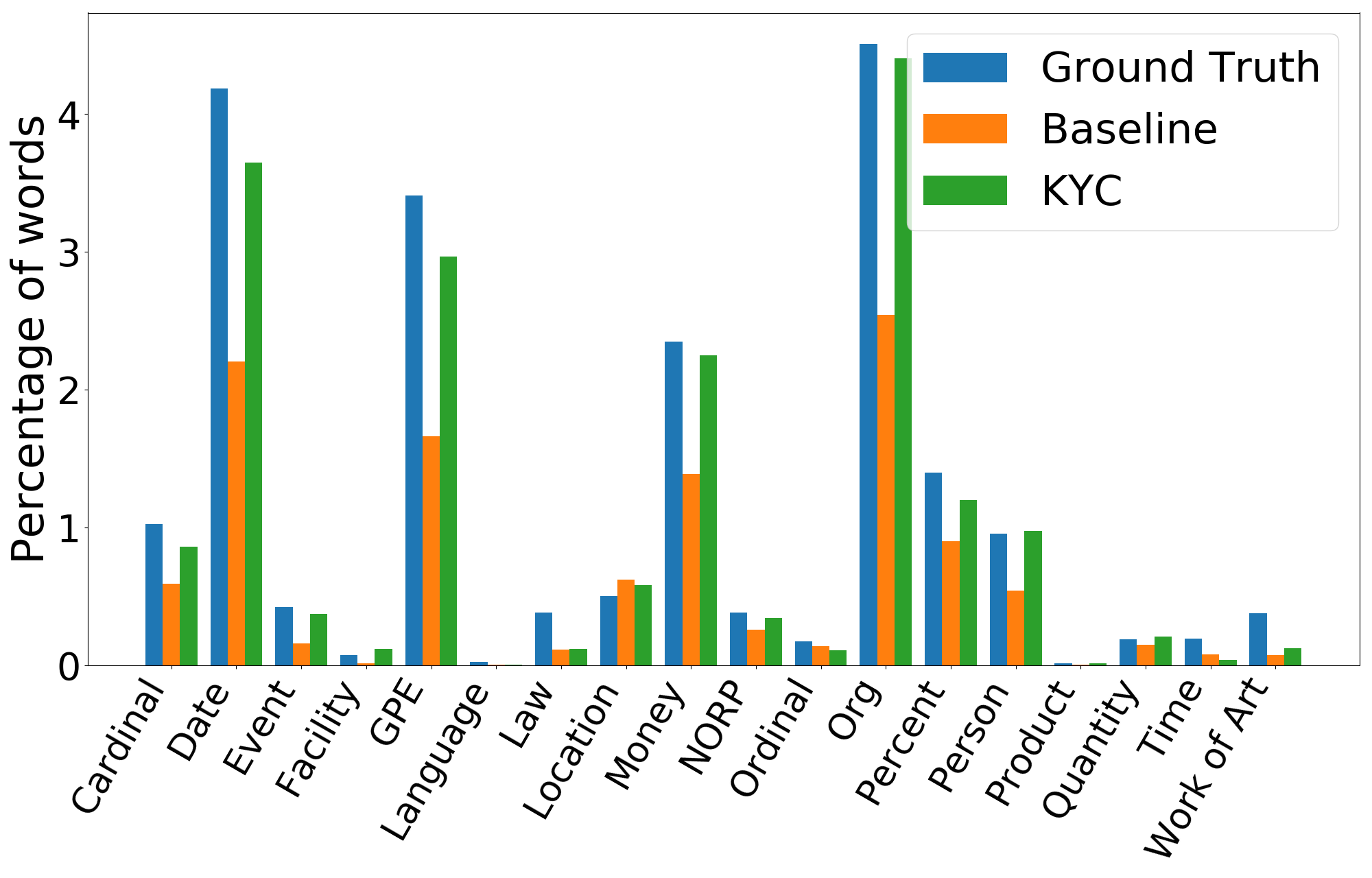} 
    \label{fig:kyc:bar_graph_xinhua} 
  \end{subfigure}

\caption{Proportion of true and predicted entity labels on OOD client NW/Xinhua.  Similar trends observed on other OOD domains~(Figure~\ref{fig:kyc:bar_graphs_more} of Appendix).}
  \label{fig:kyc:bar_graphs}
\end{figure}

\begin{figure}[htb] 
    \centering
    \includegraphics[width=.85\hsize]{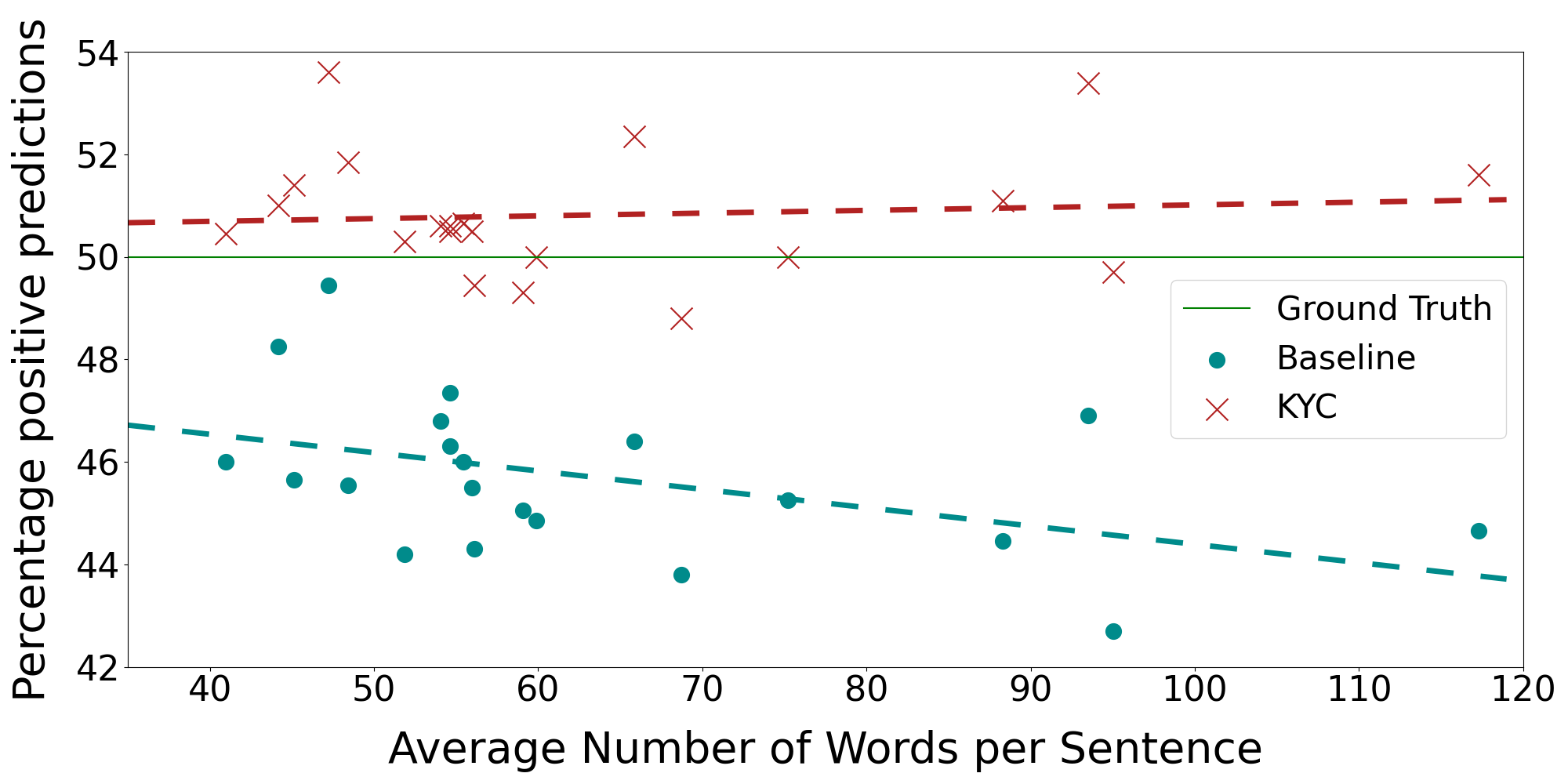} 
    \caption{Fraction Positive Predicted versus average review length by baseline and \our. Each dot/cross is a domain and the dotted lines indicate the best fit lines.} 
    \label{fig:kyc:sentiment_observation} 
\end{figure}

\paragraph{Statistical Significance: }
We verify the statistical significance of the gains obtained for the Sentiment Analysis and Auto-complete tasks; the gains in the case of NER are much larger than statistical variation. Shown in Tables \ref{table:kyc:sentiment_stat_significance} and \ref{table:kyc:auto_comp_stat_significance} are the sample estimate and standard deviation for three runs along with the p value corresponding to the null hypothesis of significance testing. In both cases, we see that the gains of {\our} over the baseline are statistically significant.

\begin{table}[htb]
\centering
\begin{tabular}{|l | c c | c|}
\hline
    OOD Clients & Base & {\our} & p-value \\
    \hline
    \small{Electronics+Games} & 86.9(0.39) & 87.7(0.33) & 0.05\\
    \small{Industrial+Tools} & 87.6(0.19) & 87.7(0.09) & 0.14\\
    \small{Books+Kindle Store} & 83.4(0.03) & 84.1(0.14) & 0.01 \\
    \small{CDs+Digital Music} & 82.4(0.24) & 83.2(0.08) & 0.02 \\
    \small{Arts+Automotive} & 90.2(0.21) & 90.4(0.31) & 0.20\\
    \hline
    Average & 86.1(0.16) & 86.6(0.13) & 0.02\\ \hline
\end{tabular}
\vspace{0.3em}
\caption{Statistical significance of results on the OOD clients by {\our} for Sentiment Classification. For every entry contains the mean with the standard deviation in parenthesis}
\label{table:kyc:sentiment_stat_significance}
\end{table}

\begin{table}[htb]
\centering
\begin{tabular}{|l | c c | c|}
\hline
    OOD Clients & Base & {\our} & p-value \\
    \hline
    sci.space & 26.5(0.4) & 26.4(0.2) & 0.39\\
    comp.hw & 29.6(0.4) & 29.0(0.3) & 0.07\\
    sci.crypt & 29.7(0.4) & 29.6(0.7) & 0.46 \\
    atheism & 28.3(0.2) & 28.1(0.2) & 0.14 \\
    autos & 28.0(0.5) & 27.9(0.4) & 0.34\\
    mideast & 27.4(0.4) & 27.3(0.4) & 0.37\\
    \hline
    Average & 28.2(0.2) & 27.9(0.0) & 0.04\\ \hline
\end{tabular}
\vspace{0.3em}
\caption{Statistical significance of results on the OOD clients by {\our} for the Auto Complete task. For every entry contains the mean with the standard deviation in parenthesis}
\label{table:kyc:auto_comp_stat_significance}
\end{table}

\subsection{Diagnostics}
We provide insights on why \our's simple method of learning per-client label biases from client sketches is so effective.  
One explanation is that the baseline had large discrepancy between the true and predicted class proportions for several OOD clients. \our\  corrects this discrepancy via {\em computed} per-client biases.  
Figure~\ref{fig:kyc:bar_graphs} shows true, baseline, and \our\ predicted class proportions for one OOD client on NER. Observe how labels like {\tt date}, {\tt GPE},  {\tt money} and {\tt org} are under-predicted by baseline and corrected by \our.
Since \our\ only corrects label biases, instances most impacted are those close to the shared decision boundary, and exhibiting properties correlated with labels but diverging across clients. We uncovered two such properties:

\noindent{\bf Ambiguous Tokens:} In NER the label of several tokens changes across clients, E.g. tokens like {\tt million,} {\tt billion} in finance clients like NW/Xinhua are {\tt money} 92\% of the times whereas in general only 50\% of the times. 
Based on client sketches, it is easy to spot finance-related topics and increase the bias of {\tt money} label. This helps \our\ correct labels of borderline tokens.  

\noindent{\bf Instance Length:}
For sentiment labeling, review length is another such property.  Figure~\ref{fig:kyc:sentiment_observation} is a scatter plot of the average review length of a client versus the fraction predicted as positive by the baseline. For most clients, review length is clustered around the mean of 61, but four clients have length $> 90$.  Length of review is correlated with label: on average, negative reviews contain 20 words more than positive ones.  This causes baseline to under-predict positives on the few clients with longer reviews.  The topics of the four outlying clients (video games, CDs, Toys\&Games) are related so that the client sketch is able to shift the decision boundary to correct for this bias. Using only normalized average sentence length as the client sketch bridges part of the improvement of {\our} over the baseline (details in Appendix C) implying that average instance length should be part of client sketch for sentiment classification tasks. 

\begin{table}[htb]
\centering
\begin{tabular}{| l | c c c c | c c c|}
\hline
 & Salience & TF-IDF & Binary-BOW & Summary & \multicolumn{3}{|c|}{Salience} \\
 Architecture$\rightarrow$& \multicolumn{4}{|c|}{Concat} & Deep & Decomp & MoE-$\bm{g}$ \\
\hline
OOD & 80.1 & 80.0 & 81.0 & 75.4 & 80.9 & 76.0 & 74.9  \\
ID & 86.0 & 85.9 & 77.8 & 81.8 & 85.9 & 85.0 & 79.8 \\
\hline
\end{tabular}
\vspace{0.3em}
\caption{Comparing variant client sketches ($S_c$) and network architectures ($\bigoplus$ and $Y_\theta$) of \our\ in Fig~\ref{fig:kyc:overview}. First row shows the client sketch type type and the second row shows the network architecture. We show F1 values for the NER task. Our saliency features perform best in summarizing target corpus. Using $\bm{g}$ in Concat architecture is the simplest and performs better or as well as other architectures.} 
\label{table:kyc:ner_method_comparison}
\end{table}


\subsection{Ablation Studies}
\label{sec:kyc:ablation}
We explored a number of alternative client sketches  and models for harnessing them.  We present a summary here; details are in the Appendix~\ref{kyc:appendix:domain_features} and~\ref{kyc:appendix:network_arch}.
Table~\ref{table:kyc:ner_method_comparison} shows average F1 on NER for three other sketches: TF-IDF, Binary bag of words, and a 768-dim pooled BERT embedding of ten summary sentences extracted from client corpus as suggested in~\citet{gensim_summarizer}.  \our's default term saliency features provides  best accuracy with TF-IDF a close second, and embedding-based sketches the worst.
Next, we compare three other architectures for harnessing $\bm{g}$ in Table~\ref{table:kyc:ner_method_comparison}:
\textbf{Deep}, where module $\bigoplus$ after concatenating $\bm{g}$ and $\lastE$ adds an additional non-linear layer so that now the whole decision boundary, and not just bias, is client-specific. \our's OOD performance increases a bit over plain concat. 
\textbf{Decompose}, which mixes two softmax matrices with a client-specific weight $\alpha$ learned from $\bm{g}$.
\textbf{MoE-$\bm{g}$}, which is like MoE but uses the client sketch for expert gating.
We observe that the last two options are worse than \our. 

\section{Further related work}
Our method addresses the mismatch between a client's data distribution and the server model. The extensive domain adaptation literature~\citep{Daume2007,BlitzerMP06,Ben-David:2006} is driven by the same goal but most of these update model parameters using labeled or unlabeled data from the target domain (client). Unsupervised Domain Adaptation summarized in \citet{unsupervised_domain_adaptation} relaxes the requirement of labelled client data, but still demands target-specific fine-tuning which inhibits scalability. Some recent approaches attempt to make the adaptation light-weight \citep{LinL2018,LiSS2020,JiaLZ2019,CaiW2019,LiuWF2020} while  others propose to use entity description~\citep{BapnaTH2017,ShahGF2019} for zero-shot adaptation.  Domain generalization is another relevant technique~\citep{ChenC2018,GuoSB18,Li2018DomainGW,WangZZ2019,VihariSSS18,CarlucciAS2019,DouCK19,VihariNS2020} where multiple domains during training are used to train a model that can generalize to new domains.  Of these, the method that seems most relevant to our setting is the mixture of experts network of \citet{GuoSB18}, with which we present empirical comparison.
Another option is to transform the client data style so as to match the data distribution used to train the server model.  Existing style transfer techniques~\citep{YangZC2018, ShenLB2017,PrabhumoyeTS2018, FuTP2018,lample2018multipleattribute,LiJH2018,GongBW2019} require access to server data distribution.

\section{Discussion}
\noindent
We presented approaches for two scenarios of unlabeled domain adaptation.

We introduced one regularization and one source-selection method for adapting word embeddings to a target topic. Both the approaches fared better than embedding transfer methods on down-stream language tasks. More interestingly, we showed (a) non-dominant sense information is either hard to recover or lost from pretrained word embeddings (b) contextual word representations still gain from adaptation methods despite their long range dependencies and extensive training on large diverse corpora. 

We then motivated and presented a lightweight client adaption for NLP service settings.  This is a promising area, ripe for further research on more complex tasks like translation.  We proposed client sketches and \our: an early prototype server network for on-the-fly adaptation.  Three NLP tasks showed considerable benefits from simple, per-label bias correction. Alternative architectures and ablations provide additional insights.


\noindent
{\bf Subsequent work: }
Language modeling has garnered immense interest and rapid developments in the recent past. Several smaller, faster, robust language models have been proposed in the BERT family of transformer based models~\citep{distilbert,roberta,albert}. It was expected and long been believed that extensive training, long-range dependencies and contextual representations of mega language models could handle any text sequence from any domain without adaptation. However, as we have shown in our work, contextual models continue to gain from further adaptation, which is confirmed further through extensive studies on various tasks by~\citet{Han19,Gururangan20} and many others. On hindsight, the utility of adaptation to a target domain is not so surprising. Language models process at the level of sentences, and could miss the broader context of the corpus, which is fixed when adapted.   

Several other exciting new ways of adaptation are emerging. \citet{Schick20} demonstrated surprisingly good results (beating LMs with 1000$\times$ parameters) with only access to very limited labeled data (32 examples) and unlabeled data from the target task. They frame any natural language understanding task as cloze-style questions---the label is verbalized as a text segment and posed as a question answering task---followed by fine-tuning with masked language model loss called Pattern Exploitative Training (PET). Succeeding work~\citep{TamRM21} further extended PET for adaptation only with labeled data (without unlabeled). 

Another recent trend with large language models~\citep{gpt3,FLAN} is parameter-update-free task adaptation method called prompting. \citet{gpt3} presented an extremely large language model with 175 billion parameters called GPT-3. They demonstrated that it can be task tuned by simply describing and presenting some examples of any task, called prompting. Prompting is appealing since we can adapt to any task/domain by simply changing the prompt content, however, these giant language models are known to be sensitive to how the task is described~\citep{Zhao21}. Since GPT-3 is unstable to prompt, few others attempted engineering prompt either in the vocabulary or soft embedding space for a given task~\citep{prompt-survey}. Prompt engineering is still appealing because we can tune without overfitting the small set of prompt parameters. 

Nevertheless, we are yet to understand how large language models can adapt quickly to new tasks/domains. Their success may have partly been because prompts help locate task data from the massive train corpus~\citep{Reynolds21}. Under such interpretation of a prompt, a task under-represented in the train data cannot be solved simply through prompting. We believe alternate adaptation methods continue to be relevant on domains under-represented or absent from the training corpus. 

\chapter{Conclusion}
\label{chap:conclusion}

Humans have evolved to extract essential features from sensory inputs, learn to ignore spurious signals, recognize patterns from the essential features, and adapt quickly and effectively to unfamiliar circumstances.  For many cognitive tasks, machine learning is not yet close to such natural capabilities.  In this dissertation, we have identified three key directions in which we have made progress toward making ML systems adapt better to domain shifts.


In Chapter~\ref{chap:dg}, we considered a natural multi-domain setting and looked at how a conventional classifier could overfit on domain-correlated signals.
We proposed \crossgrad{}, which provides a new data augmentation scheme through perturbations from a jointly trained domain predictor, and we argued the role of augmentations in improving domain generalization.
Instead of regularizing domain overfitting components implicitly through data augmentation, we developed a new algorithm called CSD that explicitly recovers the domain generalizing classifier. 
CSD decomposes classifier parameters into a common part and a low-rank domain-specific part.  
We presented a principled analysis to provide identifiability results of CSD and analytically studied the effect of rank in trading off domain-specific noise suppression.  
Both {\crossgrad} and CSD are most useful when the number of training domains is small and do not directly cover test domains well. 

In Chapter~\ref{chap:cgd}, we looked at a simpler form of domain generalization called subpopulation shift, which concerns generalization to any mixture of training domains. 
We deliberated on why our proposed domain generalization algorithms are not suitable for subpopulation shift where the datasets are characterized by high population skew.
We then presented a simple, new algorithm {\cg} that models inter-domain interactions for minority domain generalization. 
For all the proposed algorithms, we demonstrated their qualitative and empirical effectiveness through extensive evaluation using simple and real-world datasets. 

Training for domain robustness is usually triggered by a diverse user population trying to share a model, which is a common scenario with the increasing relevance of machine learning as a service.  Clients need support to estimate the accuracy they can expect from the service in their diverse settings, not aggregated performance on a few benchmarks. In Chapter~\ref{chap:aaa}, we presented AAA, a new approach to estimate the accuracy of a pretrained model, not as a single number, but as a surface over a space of attributes (arms).
AAA models accuracy surface with a Beta distribution at each arm and regresses these parameters using two Gaussian Processes (GP) to capture smoothness and generalize to unseen arms.
We introduced further innovations to GP for improved modeling of heteroscedastic uncertainty due to varying supervision across arms. 
Evaluation on real-life datasets and pretrained models
showed the efficacy of AAA, both in estimation and exploration quality.

In Chapter~\ref{chap:adaptation}, we presented approaches for two scenarios of unlabeled domain adaptation.
In the first part, we evaluated methods for adapting word embeddings to a target topic and reached the surprising conclusion that even limited corpus augmentation of target unlabeled data is more useful than adapting embeddings.
Our experiments led to the following takeaways: (a)~non-dominant sense information is either hard to recover or easily lost from pretrained word embeddings, (b)~contextual word representations from giant language models still gain from adaptation methods despite extensive training on large diverse corpora. 
We then motivated and designed a lightweight client adaption model for NLP service settings.  
Our early prototype server network called KYC for on-the-fly adaptation showed immediate gains on three NLP tasks.
This is a promising area, ripe for further research on more complex tasks like translation. 

Overall, our work explored multiple paths to improve reliability of deployed ML models in diverse real-world applications.  We began with a study of training algorithms that generalize to new domains in zero-shot mode, then proposed methods to estimate accuracy surface that identifies non-robust data regions, and finally introduced light-weight adaptation algorithms that do not require any labeled data. Our contribution lies in the insights behind our algorithms, and in the proof of their efficacy through empirical evaluation. We hope our contributions leads to further work toward building trustworthy ML systems.

\section*{Future Work}

Machine learning is rapidly changing the landscape of cognitive tasks, but there is still a long way to go before ML achieves the levels of safety and trustworthiness enjoyed by more classical engineering disciplines like mechanical or chemical engineering. 
We discuss some directions for future research below. 

\noindent
{\bf The role of data.} The root cause and bottleneck for the lack of domain generalization is because training data does not densely cover the domain space. Results in Chapter~\ref{chap:dg} corroborate this claim, where we observed consistent improvement in domain generalization with increasing number of training domains. Further, the gains from training on diverse data often exceed gains from a more sophisticated domain generalization algorithm. Promise of diverse domain data therefore calls for further study. Some exciting research questions are summarized below.
\begin{enumerate}
\item {\it Can guided acquisition of labeled data from new domains help improve gains on domain generalization?} In other words, do we gain from careful inclusion of domains in training data in order to increase training domain diversity? If so, how should the inclusion of the next domain be guided? We could use Bayesian methods to track and guide the acquisition of sparsely represented domains from a pool of unlabeled data, which would require further study to ensure a reasonable uncertainty estimate over large domain shifts.
    
\item {\it Can pretraining with self-supervision on unlabeled data help?} Existing literature, including some of our work, demonstrated that pretraining on unlabeled data can improve target-domain generalization in text~\citep{Piratla19,Gururangan20} and images~\citep{tent,arm,ttt++}. The promise of pretraining for improving target-domain generalization is exciting, but requires addressing the following questions: (i)~what is the relation between task and self-supervision loss?  (ii)~what is the relation between unlabeled data and the target domain? (iii)~how can we scale the additional pretraining step for deployed ML application that cannot afford subsequent task training on labeled data?
\end{enumerate}   

\noindent
{\bf Explicit label supervision}. When training on example-label pairs, models could learn spurious/incidental correlations across any dimension of input distribution with a low support, thereby hurting its domain robustness. There is mounting evidence~\citep{DegraveRadiology21,Geirhos20,Gururangan18,Lapuschkin19} on how models exploit shortcuts in data for classification.   These works raise alarm over the difficulty in detecting and avoiding such shortcuts.
Such shortcuts exist because the training data under-specifies the task. However, fixing the training data by adding new examples in order to specify the task well can be inefficient.  
Consider this thought experiment for the task of one-class classification of {\it butterflies} illustrated in Figure~\ref{fig:conclusion:butterflies}. The first column shows the potential spurious correlation that the model can exploit if a significant fraction of the training images look like the second column image. If the training data is improved to include images of the kind shown in the third column, we could avoid learning the spurious correlation shown in the first column. As we move down in the figure~\ref{fig:conclusion:butterflies}, we are tightening the task definition, albeit at a great expense of data-curation.

This observation calls for discovering new forms of defining tasks beyond labeled data. 
Humans, on the other hand, do not simply learn from a notebook of examples. Instead, they also learn from explanations on what constitutes a label and constant feedback from interactions that confirm their reality with the truth. Exploration of explicit forms of label supervision is a promising direction for training robust models. The ML community so far has invested in annotation pipelines for obtaining quick and dirty labeling of a large number of examples. We should consider the question: {\it can we learn models with better robustness from data with detailed annotations on relatively fewer examples?} 

\begin{figure}
    \centering
    \begin{tabular}{ccc}
    Spurious correlation & Sample image & Contradicting image \\
    Green background & \includegraphics[height=3cm]{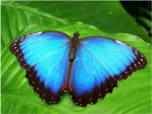} & \includegraphics[height=3cm]{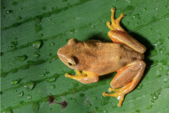} \\
    Flower & \includegraphics[height=3cm]{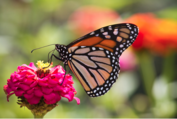} & \includegraphics[height=3cm]{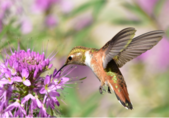} \\
    Wings & \includegraphics[height=3cm]{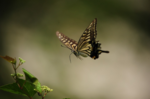} & \includegraphics[height=3cm]{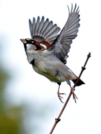} \\
    Whiskers & \includegraphics[height=3cm]{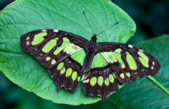} & \includegraphics[height=3cm]{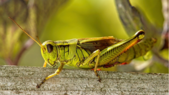} \\
    Spotted pattern & \includegraphics[height=3cm]{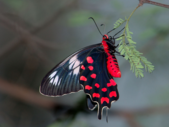} & \includegraphics[height=3cm]{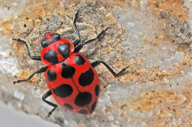} \\
    \end{tabular}
    \caption{Visual example to illustrate that it is hard to avoid spurious correlations with training data alone: a case-study of one-class butterfly classification task. The first column shows the potential spurious correlation that the model can exploit if a significant portion of training images look like the image in the second column. If the training data is improved to also include images of the kind shown in the third column, we could avoid learning the spurious correlation shown in the first column. Images are sourced from the Web and are all Creative Commons licensed.}
    \label{fig:conclusion:butterflies}
\end{figure}


\noindent
{\bf Flexible objectives.} In Chapter~\ref{chap:cgd}, we started our study of subpopulation shift robustness with the observation that ERM improves average accuracy, but hurts worst domain accuracy. We then discussed methods for improving worst domain accuracy, which were then found to hurt average accuracy to some extent. Both extremes either compromise accuracy on well-represented or under-represented training domains, and could be undesired in practice. Ideally, stakeholders should be able to specify their desired constraints on the solution flexibly, some examples are: (1) {\it the worst domain accuracy cannot be x\% worse than the best}, or (2) {\it there should be no more than p\% of the domains with accuracy worse than q\%}. Current ML systems, however, cannot accommodate such additional solution constraints. In order to make progress, we require further study on (a)~qualitative measures that predict generalization error of a hypothesis, and (b)~a suitable optimization algorithm that encodes the constraint and finds a hypothesis that satisfies it. 



\begin{appendices}
    \chapter{Details for Chapter~\ref{chap:dg}}
\label{chap:appendix:dg}

\section{Proof of Theorem~\ref{thm:csd:main}}\label{app:csd:proof}
\begin{algorithm}
\caption{$\min_{w_c, W_s, \Gamma} f(w_c, W_s, \Gamma) = \frob{W - w_c \trans{\ones} - W_s \trans{\Gamma}}^2$, s.t. $W_s \in \R^{m \times k}$ and $w_c \perp \textrm{Span}\left(W_s\right)$}\label{alg:csd:svdbased}
\begin{algorithmic}[1]
\State \textbf{Input} $W, k$
\State $w_c \leftarrow \frac{1}{D} W \cdot \ones$
\State $W_s, \Gamma \leftarrow \textrm{Top-} k \textrm{ SVD}\left(W - w_c \trans{\ones}\right)$
\State $w_c^{\textrm{new}} \leftarrow \frac{1}{\norm{\left( w_c \trans{\ones} + W_s \trans{\Gamma} \right)^+ \ones}^2} \left( w_c \trans{\ones} + W_s \trans{\Gamma} \right)^+ \ones$
\State $W_s^{\textrm{new}} \trans{\Gamma^{\textrm{new}}} \leftarrow w_c \trans{\ones} + W_s \trans{\Gamma} - w_c^{\textrm{new}} \trans{\ones}$
\State \textbf{Return} $w_c^{\textrm{new}}, W_s^{\textrm{new}}, {\Gamma^{\textrm{new}}}$
\end{algorithmic}
\end{algorithm}

\begin{proof}[Proof of Theorem 1] The high level outline of the proof is to show that the first two steps obtain the best rank-$(k+1)$ approximation of $W$ such that the row space contains $\ones$. The last two steps retain this low rank approximation while ensuring that $w_c^{\textrm{new}} \perp \Span{W_s^{\textrm{new}}}$.

The proof that the first two steps obtain the best rank-$(k+1)$ approximation follows that of the classical low rank approximation theorem of Eckart-Young-Mirsky. We first note that the minimization problem of $f$ under the given constraints, can be equivalently written as:
\begin{align}
    & \min_{\Wtilde} \frob{W - \Wtilde}^2 \nonumber \\
    \mbox{ s.t. } & \textrm{rank}(\Wtilde)\leq k+1 \mbox{ and } \ones \in \textrm{Span}\left(\trans{\Wtilde}\right). \label{eqn:csd:opt}
\end{align}
Let $\wtilde \defeq \frac{1}{D} W \ones$ and $W - \wtilde \trans{\ones} = \Utilde \Stilde \trans{\Vtilde}$ be the SVD of $W - \wtilde \trans{\ones}$. Since $\left(W - \wtilde \trans{\ones}\right)\ones = 0$, we have that $\ones \perp \textrm{Span}\left(\Vtilde\right)$. We will first show that $\Wtilde^* \defeq \wtilde \trans{\ones} + \Utilde_k \Stilde_k \trans{\Vtilde_k}$, where $\Utilde_k \Stilde_k \trans{\Vtilde_k}$ is the top-$k$ component of $\Utilde \Stilde \trans{\Vtilde}$, minimizes both $\frob{W - \Wtilde}^2$ and $\norm{W - \Wtilde}_2^2$ among all matrices $\Wtilde$ satisfying the conditions in~\eqref{eqn:csd:opt}. Let $\sigma_i$ denote the $i^{\textrm{th}}$ largest singular value of $\Utilde \Stilde \trans{\Vtilde}$.

\textbf{Optimality in operator norm}: Fix any $\Wtilde$ satisfying the conditions in~\eqref{eqn:csd:opt}. Since $\ones \in \textrm{Span}\left(\trans{\Wtilde}\right)$ and $\textrm{rank}\left(\Wtilde\right) = k+1$, there is a unit vector $\vtilde \in \Span{\Vtilde_{k+1}}$ such that $\vtilde \perp \Span{\trans{\Wtilde}}$. Let $\vtilde = \Vtilde_{k+1} x$. Since $\norm{\vtilde}=1$ and $\Vtilde$ is an orthonormal matrix, we have $\sum_i x_i^2 = 1$. We have:
\begin{align*}
    \norm{W - \Wtilde}_2^2 &\geq \norm{(W - \Wtilde) \vtilde}^2
    = \norm{W \vtilde}^2 = \norm{W \Vtilde_{k+1} x}^2 \\
    &= \norm{\Utilde_{k+1} \Stilde_{k+1} x}^2 = \sum_{i=1}^{k+1} \widetilde{\sigma}_i^2 \cdot x_i^2 \geq \tilde{\sigma}_{k+1}^2 \\ &= \norm{W - \wtilde \trans{\ones}}^2.
\end{align*}
This proves the optimality in operator norm.

\textbf{Optimality in Frobenius norm}:
Fix again any $\Wtilde$ satisfying the conditions in~\eqref{eqn:csd:opt}. Let $\sigma_i(A)$ denote the $i^\textrm{th}$ largest singular value of $A$ and $A_i$ denote the best rank-$i$ approximation to $A$. Let $W'\defeq W - \Wtilde$. We have that:
\begin{align*}
    \sigma_i(W') & = \norm{W' - W'_{i-1}}
    = \norm{W' - W'_{i-1}} + \norm{\Wtilde - \Wtilde} \\
    &\leq \norm{W' + \Wtilde - W'_{i-1} - \Wtilde} \\
    &= \norm{W - W'_{i-1} - \Wtilde} \\
    &\geq \min_{\What} \norm{W - \What},
\end{align*}
where the minimum is taken over all $\What$ such that $\Rank{\What} \leq i+k, \; \ones \in \Span{\trans{\What}}$. Picking $\What = \wtilde \trans{\ones} + \Utilde_{k+i-1} \Stilde_{k+i-1}\trans{\Vtilde}_{k+i-1}$, we see that $\sigma_i(W - \Wtilde) \geq \sigma_{k+i}$. It follows from this that $\frob{W - \Wtilde} \geq \frob{W - \wtilde \trans{\ones} - \Utilde_k \Stilde_k \trans{\Vtilde}_k}$.

For the last two steps, note that they preserve the matrix $w_c \trans{\ones} + W_s \trans{\Gamma}$. If $\overline{w}_c \trans{\ones} + \overline{W}_s \trans{\overline{\Gamma}}$ is the unique way to write $w_c \trans{\ones} + W_s \trans{\Gamma}$ such that $\overline{w}_c \perp \Span{\overline{W}_s}$, then we see that $\trans{\overline{w}_c}\left(\overline{w}_c \trans{\ones} + \overline{W}_s \trans{\overline{\Gamma}}\right) = \norm{\overline{w}_c}^2 \trans{\ones}$, meaning that $\trans{\overline{w}_c} = \norm{\overline{w}_c}^2 \left({w}_c \trans{\ones} + {W}_s \trans{\Gamma}\right)^+ \ones$. This proves the theorem.
\end{proof}

    \chapter{Details for Chapter~\ref{chap:cgd}}
\label{chap:appendix:cgd}

\section{Scaling Rule For Gradients Using Local Quadratic Approximation}
\label{appendix:cgd:grad_approx}
The gradient norms can vary widely even for a given task depending on where we are in the loss landscape.
We will now illustrate this issue. Consider any domain $i$. Since the training loss $\ell_i(\theta) \geq 0$ for all $\theta$, any $\hat{\theta}$ satisfying $\ell_i(\hat{\theta}) = 0$ is an approximate global minimizer. For any such global minimizer with $\ell_i(\hat{\theta}) = 0$ and $\norm{\nabla \ell_i(\hat{\theta})} = 0$, the local approximation of $\ell_i$ given by the second order Taylor expansion at $\hat{\theta}$ is:
\begin{align*}
{\ell_i}(\theta) \approx \frac{1}{2} \iprod{\theta - \hat{\theta}}{\nabla^2 \ell_i(\hat{\theta}) \left(\theta - \hat{\theta}\right)} \mbox{ and }
\nabla {\ell_i}(\theta) \approx {\nabla^2 \ell_i(\hat{\theta}) \left(\theta - \hat{\theta}\right)}.
\end{align*}

Denoting $\underline{\sigma}=\sigma_\textrm{min}(\nabla^2 \ell_i(\hat{\theta})) \geq 0$ and $\overline{\sigma}=\sigma_\textrm{max}(\nabla^2 \ell_i(\hat{\theta})) \geq 0$ to be the smallest and the largest eigenvalue of $\nabla^2\ell_i(\hat\theta)$, we have that $\|\nabla\ell_i(\theta)\|$ bounded between $\sqrt{\underline{\sigma}}\cdot \sqrt{\ell_i(\theta)}$ and $\sqrt{\overline{\sigma}}\cdot \sqrt{\ell_i(\theta)}$. Consequently, the gradient norm can vary widely between smallest and largest eigenvalues. Besides, the gradient scale also depends on the number of parameters, task, dataset and architecture. 

To summarize, we show that gradients are of the order of the loss, but can take a large spectrum of values, which makes the $\alpha$ updates unstable and $\eta_\alpha$ tuning difficult if used as is in~\eqref{eqn:cgd:alpha_update}. Since we are only interested in capturing if a domain transfers positively or negatively, we retain the cosine-similarity of the gradient dot-product but control their scale through $\ell(\theta)^p$, for some $p>0$. That is, we set the gradient $\nabla\ell_i(\theta)$ to $\frac{\nabla\ell_i(\theta)}{\|\nabla \ell_i(\theta)\|}\ell_i(\theta)^p$ leading to the final $\alpha$ update shown in~\eqref{eqn:cgd:final_alpha_update}.

When we set $p$ to a very large value, the gradient inner products ${\ell_i}^p {\ell_j}^p \cos(\nabla \ell_i({\theta}),\nabla \ell_j({\theta}))$, and hence the $\alpha$ value are largely influenced by the loss values. On the other hand, when we set $p$ to 0, we could get stuck in picking low loss train domains repeatedly without significant parameter update and hence not converge. In practice, neither of the extremes are ideal. We tried $p=\{0.25, 0.5, 1, 2\}$ on our synthetic setup, WaterBirds, CelebA, and found $p=0.5$ to perform well empirically.

\section{Proof of Theorem~\ref{thm:main}}\label{app:cgd:proof}
\begin{proof}[Proof of Theorem~\ref{thm:main}]
Let us consider the $t^\textrm{th}$ iteration where the iterate is $\theta^t$ and mixing weights are $\alpha^t$.
Using the notation in Section~\ref{sec:cgd:method}, we denote $\Rcal(\theta)= \frac{1}{k} \sum_i \ell_i(\theta)$, $g_i = \nabla \ell_i(\theta^t)$ and $g = \frac{1}{k} \sum_i g_i$. The update of our algorithm is given by:
\begin{align}
    \alpha_i^{t+1} &= \frac{\alpha_i^t \cdot \exp\left(\eta_\alpha \iprod{g_i}{g}\right)}{Z} \label{eqn:cgd:algo1} \\
    \theta^{t+1} &= \theta^t - \eta \sum_i \alpha_i^{t+1} g_i, \label{eqn:cgd:algo2}
\end{align}
where $Z = \sum_j \alpha_j^t \cdot \exp\left(\eta_\alpha \iprod{g_j}{g}\right)$.
Let us fix $\alpha^* = \left(1/k,\cdots,1/k\right) \in \R^k$ and use $KL(p,q) = \sum_i p_i \log \frac{p_i}{q_i}$ to denote the KL-divergence between $p$ and $q$. We first note that the update~\eqref{eqn:cgd:algo1} on $\alpha$ corresponds to mirror descent steps on the function $\iprod{g}{\sum_i \alpha_i g_i}$ and obtain:
\begin{align}
    KL(\alpha^*, \alpha^{t+1}) &= \sum_i \alpha_i^{*} \log \frac{\alpha_i^{*}}{\alpha_i^{t+1}}
    = {\sum_i \alpha_i^{*} \log \frac{\alpha_i^{*} Z}{\alpha_i^t \cdot \exp\left(\eta_\alpha \iprod{g_i}{g}\right)}} \nonumber \\
    &= \sum_i \alpha_i^{*} \log \frac{\alpha_i^{*}}{\alpha_i^t} -  \eta_\alpha \alpha_i^* \iprod{g_i}{g} + \alpha_i^* \log Z \nonumber \\
    &= KL(\alpha^*, \alpha^t) - \eta_\alpha \norm{g}^2 + \log \left(\sum_i \alpha_i^{t} \exp\left(\eta_\alpha \iprod{g_i}{g}\right)\right). \label{eqn:cgd:KL}
\end{align}
Since each $\ell_i(\cdot)$ is $G$-Lipschitz, we note that $\norm{g_i}\leq G$ for every $i$. Consequently, $\norm{g} \leq G$ and $\abs{\iprod{g_i}{g}} \leq G^2$. Using the fact that $\exp(x) \leq 1+x+x^2$ for $\abs{x} < 1$, and since $\eta_\alpha \leq \frac{1}{G^2}$, we see that:
\begin{align*}
    \log\left(\sum_i \alpha_i^{t} \exp\left(\eta_\alpha \iprod{g_i}{g}\right)\right) &\leq \log\left(\sum_i \alpha_i^{t} \left(1+ \eta_\alpha \iprod{g_i}{g} + \left(\eta_\alpha G^2\right)^2\right)\right) \\
    &=  \log\left(1+ \left(\sum_i \alpha_i^{t} \eta_\alpha \iprod{g_i}{g}\right) + \left(\eta_\alpha G^2\right)^2\right) \\
    &\leq \left(\sum_i \alpha_i^{t} \eta_\alpha \iprod{g_i}{g}\right) + \left(\eta_\alpha G^2\right)^2,
\end{align*}
where we used $\log(1+x) \leq x$ in the last step. Plugging this back into~\eqref{eqn:cgd:KL}, we obtain:
\begin{align}
    KL(\alpha^*, \alpha^{t+1}) &\leq KL(\alpha^*, \alpha^{t}) + \left(\sum_i \alpha_i^{t} \eta_\alpha \iprod{g_i}{g}\right) + \left(\eta_\alpha G^2\right)^2 - \eta_\alpha \norm{g}^2 \nonumber \\
    \Rightarrow - \sum_i \alpha_i^{t} \iprod{g_i}{g} &\leq - \norm{g}^2 + \frac{KL(\alpha^*, \alpha^{t}) - KL(\alpha^*, \alpha^{t+1})}{\eta_\alpha} + \eta_\alpha G^4. \label{eqn:cgd:KL-2}
\end{align}
We will now argue that $\sum_i \alpha_i^{t} \iprod{g_i}{g} \leq \sum_i \alpha_i^{t+1} \iprod{g_i}{g}$. To show this, it suffices to show that:
\begin{align*}
    \sum_i \alpha_i^{t} \iprod{g_i}{g} &\leq \sum_i \alpha_i^{t+1} \iprod{g_i}{g} \\
    \Leftrightarrow \left(\sum_i \alpha_i^{t} \iprod{g_i}{g}\right) \left(\sum_i \alpha_i^{t} \exp\left(\eta_\alpha \iprod{g_i}{g}\right)\right) &\leq \sum_i \alpha_i^{t} \iprod{g_i}{g} \exp\left(\eta_\alpha \iprod{g_i}{g}\right)\\
    \Leftrightarrow \sum_i \alpha_i^{t} \left(\iprod{g_i}{g}-\ipbar\right) \left(\exp\left(\eta_\alpha \iprod{g_i}{g}\right)-\expbar\right) &\geq 0,
\end{align*}
where $\ipbar = \sum_i \alpha_i^{t} \iprod{g_i}{g}$ and $\expbar = \sum_i \alpha_i^{t} \iprod{g_i}{g} \exp\left(\eta_\alpha \iprod{g_i}{g}\right)$. The last inequality follows from the fact that $\exp(\cdot)$ is an increasing function and hence for any random variable $X$, covariance of $X$ and $\exp(X)$ is greater than or equal to zero.

Now coming back to the update~\eqref{eqn:cgd:algo2}, we can argue its descent property as follows:
\begin{align*}
    \Rcal(\theta^{t+1}) &\leq \Rcal(\theta^t) - \eta \iprod{g}{\sum_i \alpha_i^{t+1}g_{i}} + \frac{\eta^2 L}{2} \norm{\sum_i \alpha_i^{t+1}g_{i}}^2 \\
    &\leq \Rcal(\theta^t) + \eta \left( - \norm{g}^2 + \frac{KL(\alpha^*, \alpha^{t}) - KL(\alpha^*, \alpha^{t+1})}{\eta_\alpha} + \eta_\alpha G^4 \right) \\
    & + \frac{\eta^2 L}{2} \norm{\sum_i \alpha_i^{t+1}g_{i}}^2 \\
    &\leq \Rcal(\theta^t) + \eta \left( - \norm{g}^2 + \frac{KL(\alpha^*, \alpha^{t}) - KL(\alpha^*, \alpha^{t+1})}{\eta_\alpha} + \eta_\alpha G^4 \right)+ \frac{\eta^2 L G^2}{2} \\
    \Rightarrow \norm{g}^2 &\leq \frac{\Rcal(\theta^t) - \Rcal(\theta^{t+1})}{\eta} + \frac{KL(\alpha^*, \alpha^{t}) - KL(\alpha^*, \alpha^{t+1})}{ \eta_\alpha} + {\eta_\alpha G^4}{} + \frac{\eta LG^2}{2}.
\end{align*}
Summing the above inequality over timesteps $t=1,\cdots,T$, we obtain:
\begin{align*}
    \frac{1}{T} \sum_{t=0}^{T-1} \norm{\nabla \Rcal(\theta^t)}^2 &\leq \frac{\Rcal(\theta^0) - \Rcal(\theta^{T})}{\eta T} + \frac{KL(\alpha^*, \alpha^{0}) - KL(\alpha^*, \alpha^{T})}{\eta_\alpha T} + {\eta_\alpha G^4}{} + \frac{\eta LG^2}{2} \\
    &\leq \frac{2B}{\eta T} + {\eta_\alpha G^4}{} + \frac{\eta LG^2}{2},
\end{align*}
where we used $KL(\alpha^*, \alpha^{t+1}) \geq 0$, $KL(\alpha^*,\alpha^0)=0$ and $\Rcal(\theta^0) - \Rcal(\theta^T) \leq 2B$ in the last step. Using the parameter choices $\eta = 2 \sqrt{\frac{B}{LG^2T}}$ and $\eta_\alpha = \sqrt{\frac{BL}{G^6T}}$, we obtain that:
\begin{align*}
    \frac{1}{T} \sum_{t=0}^{T-1} \norm{\nabla \Rcal(\theta^t)}^2 &\leq 3 \sqrt{\frac{BLG^2}{T}}.
\end{align*}
This proves the theorem.
\end{proof}

\section{Synthetic Setting Illustrated}
\label{appendix:cgd:synth}
\begin{table}[htb]
    \centering
    \begin{tabular}{c}
         \includegraphics[width=0.98\linewidth]{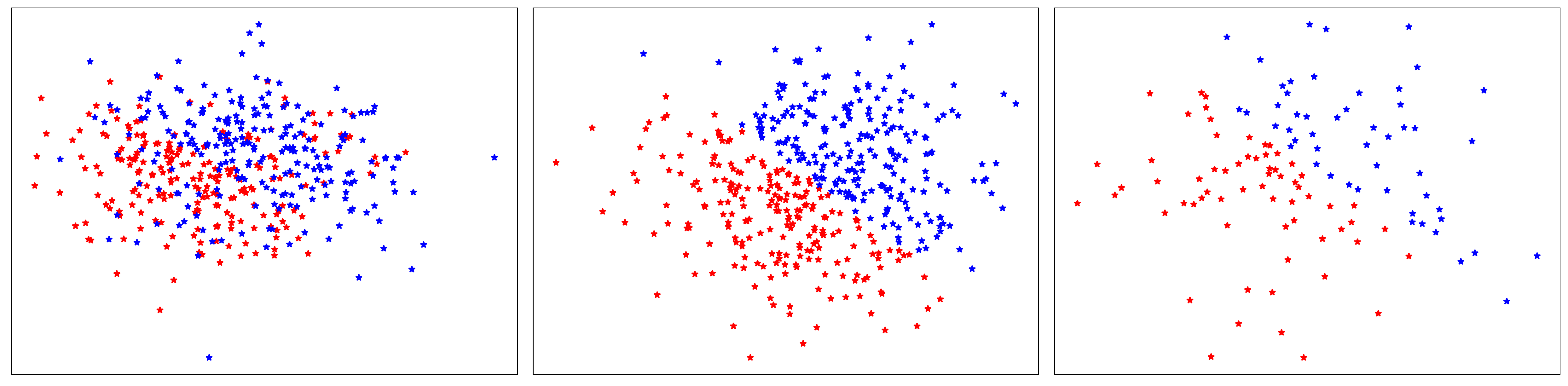}\\
         \includegraphics[width=0.98\linewidth]{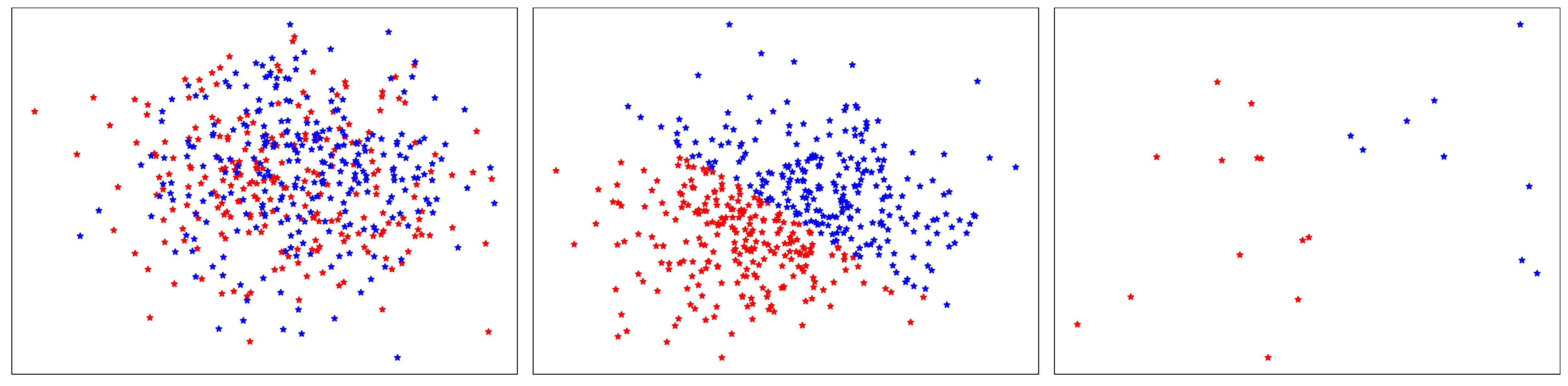}\\
         \includegraphics[width=0.98\linewidth]{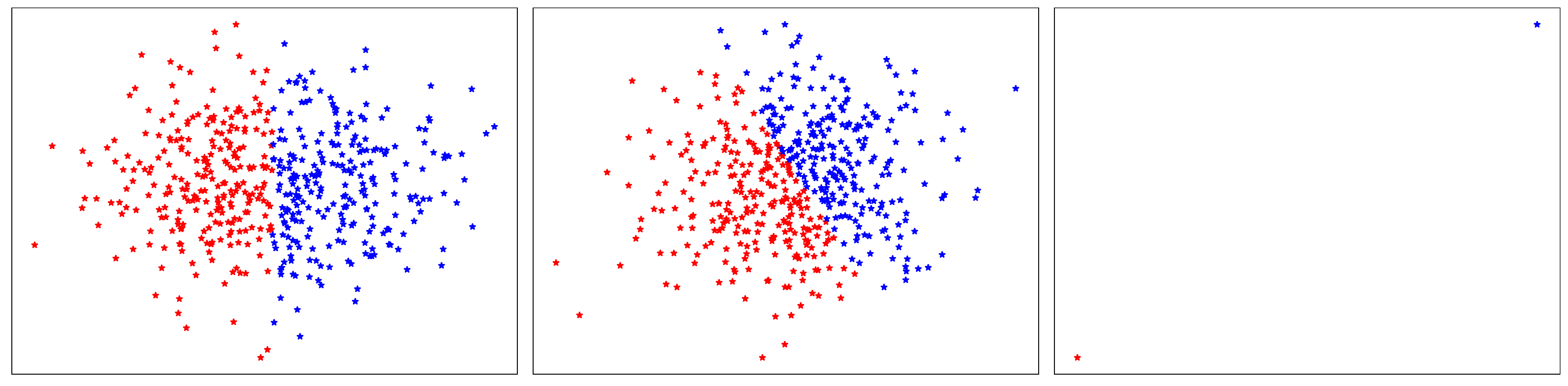}
    \end{tabular}
    \caption{Plot of data from first two features for the three synthetic settings. The three columns show the distribution of examples from the three domains in the order. The rows from top to bottom show the plots for Noisy (Section~\ref{sec:cgd:qua:noise}), Spurious (Section~\ref{sec:cgd:qua:neg}) and Rotation (Section~\ref{sec:cgd:qua:rot}) simple settings. The minority domain for Rotation setting (right bottom) has only two examples. We omitted from the plot the third feature of the spurious setting (second row). }
    \label{tab:cgd:synth_figures}
\end{table}

In Table~\ref{tab:cgd:synth_figures}, we show the data for each domain and each simple setting we considered in Section~\ref{sec:cgd:qua}. In the noise setting of the first row, the noise is only introduced in one of the domains. For the spurious case of second row, the first two features perform poorly on the first domain while the third spurious feature (not shown in the figure) performs poorly on both the second and the third domain. Shown in the last row, the classifier is gradually rotated for the rotation case and the minority domain is extremely sparse with only two examples.

\section{More Technical Details}
\label{appendix:cgd:technical}

{\bf Predictive Group Invariance}~\citep{Ahmed21} proposed an algorithm that was shown to generalize to various kinds of shifts including in-distribution shifts. The algorithm penalizes discrepancy in predictive probability distribution across domains with the same label as shown below. 

$$
\Rcal(\theta)=\sum_i \frac{n_i}{\sum_i n_i} \ell_i + \lambda\sum_c \mathbb{E}_{Q^c}[p_\theta(y\mid x)]\log \frac{\mathbb{E}_{Q^c}[p_\theta(y\mid x)]}{\mathbb{E}_{P^c}[p_\theta(y\mid x)]}
$$

Where `c' is the class label and the rest of terms in the equation follow the notation of this paper. For a given class label `c', the distributions: $Q^c, P^c$, represent the examples of the same label but belong to two different inferred domains. They used an existing work~\citep{Creager21} to stratify the examples in to domains. Here, we used the available domain annotations for creating the example splits: $Q^c, P^c$, for a fair comparison. We picked the value of $\lambda$ from \{1e-3, 1e-2, 1e-1, 1\}. 

The original paper noted that the second distribution of the KL divergence is an ``easy'' domain such that its average predictive distribution is mostly correct. In our case, since all the domains contain bias, they are all ``easy'', and hence the direction of divergence is unclear. Since {\erm} tends to learn the biases aligned with the majority domain, we set the $P^c$ distribution from the majority and $Q^c$ from the minority. 

\section{Performance at varying levels of heterogeneity}
\label{appendix:cgd:hetero}

We study performance on the simple synthetic setting of Colored-MNIST under varying levels of heterogeneity, i.e. ratio of majority to the minority domain. As discussed in the datasets section, ColoredMNIST dataset has two domains: a majority domain where the label and the digit's color match and the minority domain where they do not match. The examples are grouped in to majority and minority in the ratio of r to 1. As the value of r increases, the net (spurious) correlation of the color with the label increases, furthering the gap between the best and worst domains. The train split has 50,000 examples and validation, test split have 10,000 examples each. The validation, test split contain all the domains in equal proportions. We follow the same optimization procedure as described for the ColoredMNIST as described in the main content, with the exception that we set the weight decay to 0. 

\begin{wrapfigure}{r}{0.5\textwidth}
  \centering
  \includegraphics[width=\linewidth]{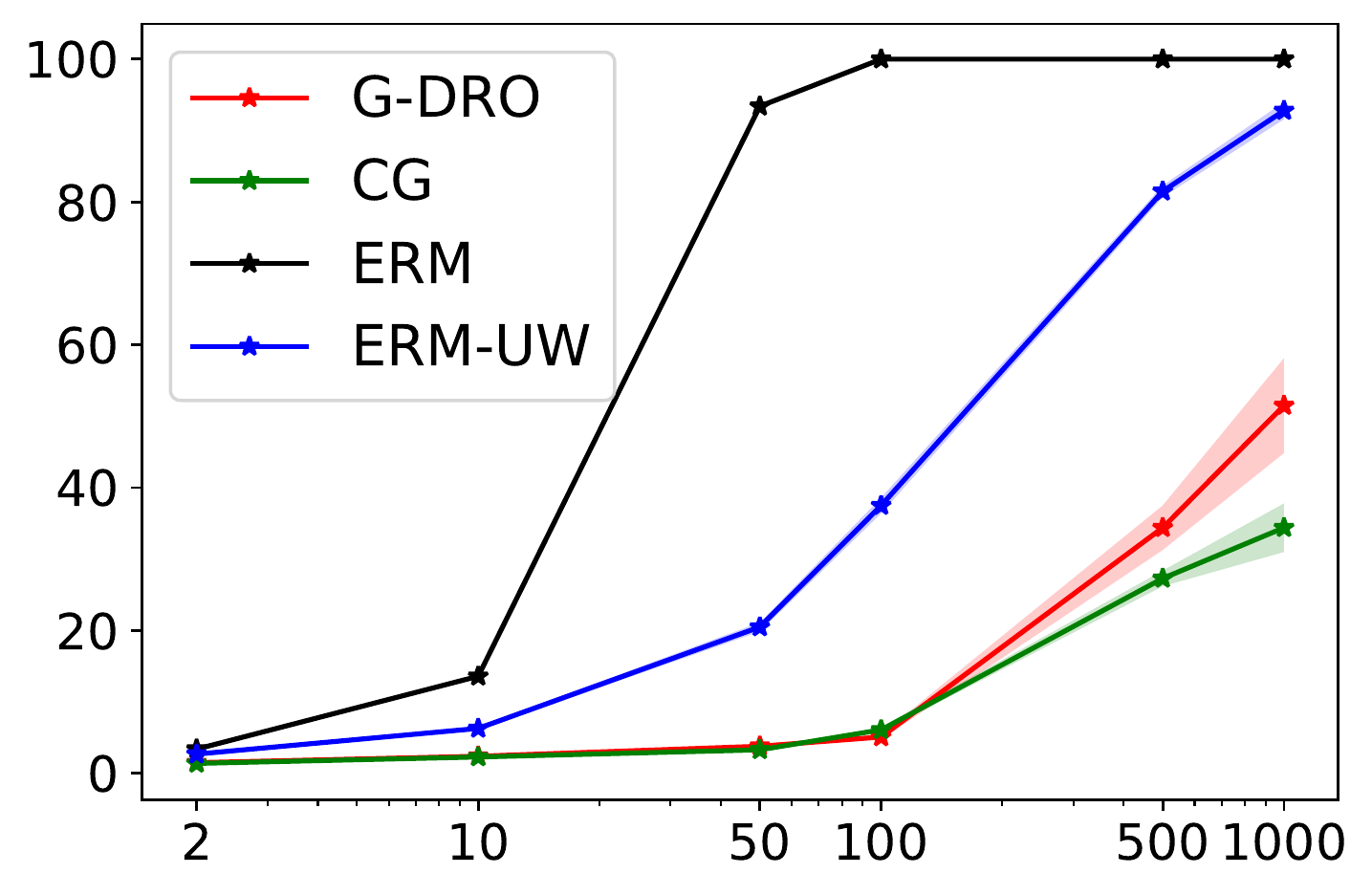}
  \caption{Worst domain error rate with increasing ratio of majority to minority. Shaded region shows one standard error. All estimates are aggregated over three runs.}
  \label{fig:cmnistd}
  \vspace{-30pt}
\end{wrapfigure}
We trained and evaluated different algorithms for the ratio (r) going from 2 to 1000. Figure~\ref{fig:cmnistd} shows the worst-domain error rate with the value of r for various algorithms. 
{\cg} performed well across all the values of r, and for the extreme value of r=1000: {\cg}'s worst error rate of 34.4\% is far better than the 51.5\% of {\dro}.

	\chapter{Details for Chapter~\ref{chap:aaa}}
\label{chap:appendix:aaa}

\section{Parametric Form of \BetaGP{}}
\label{sec:aaa:appendix:betaab}

In Section~\ref{sec:aaa:betagp}, we claimed that \BetaGP{} with (mean, scale) parameterization is better than \BetaAB{} with the standard $(\alpha, \beta)$ parameterization of the Beta distribution. In this section, we present some empirical evidence corroborating the claim.

We compare between the two parametric forms with two service models in Table~\ref{tab:aaa:betaab}. For \BetaAB, we use two GPs, one to approximate the latent value corresponding to $\alpha$, and other for $\beta$. We use soft-plus operation  to transform the latent values to their admissible positive $\alpha$, $\beta$ values.

We report macro-averaged mean square errors on two tasks in Table~\ref{tab:aaa:betaab}, when fitting on varying number of examples --- similar to the setting of Section~\ref{sec:aaa:expt:est}. We found the {\BetaAB} estimates unstable and far worse, perhaps because smoothness is not expected in either of $\alpha, \beta$ parameters across arms making the GP's bias ineffective.  

\begin{table}[H]
\centering
\begin{adjustbox}{max width=\textwidth}
\begin{tabular}{|l|c|c|c|c|} 
\hline
Service$\rightarrow$ & \multicolumn{2}{c}{\bf MF-CelebA}  &  \multicolumn{2}{c|}{\bf AC-COCOS} \\ 
Size $\downarrow$ & \BetaGP\ & \BetaAB\ & \BetaGP\ & \BetaAB\ \\ \hline
1000 & 5.4 / 0.5 & 6.6 / 0.1 & 3.7 / 0.2 & 5.6 / 0.6\\
2000 &  4.6 / 0.8 & 6.2 / 0.1 & 3.3 / 0.2 & 4.8 / 0.1\\
3500 &  4.6 / 0.3 & 6.1 / 0.1 & 3.2 / 0.2 & 5.2 / 0.4\\\hline
\end{tabular}
\end{adjustbox}
\caption{Comparison of estimation error between \BetaGP{} with (mean, scale) parameterization vs.\ \BetaAB.  \BetaAB{} is worse than {\BetaGP}. }
\label{tab:aaa:betaab}
\end{table}

\section{More Details of Gaussian Process (GP) Setup}
\label{sec:aaa:appendix:gp}
In this section, we give further details on GP training,  posterior approximation and computational cost. This section elaborates on Section~\ref{sec:aaa:betagp}. 

In all our proposed estimators, the data likelihood is modeled either by a Bernoulli or a Beta distribution. Data likelihood term of ~\BernGP{}, ~\BetaGP{}, are shown in Eqn.~\eqref{eq:bernLL}, Eqn.~\eqref{eq:pointLLSimple}, respectively. Due to the non-Gaussian nature of the data likelihood, the posterior on parameters cannot be expressed in a closed form. Several approximations exist for fitting the posterior especially for the more standard \BernGP{}, we will discuss one such method in what follows. Recall that we model two latent values $f, g$ each modeled by an independent GP. They can be seen to have been drawn from a single GP with even larger dimension and with appropriately defined kernel matrix. For the sake of explanation and with a slight abuse of notation, we denote by $\vf$, the concatenation of $f$ and~$g$.  The corresponding kernel for the concatenated vector is appropriately made by combining the kernels of either of the latent values with kernel entries corresponding to interaction between $f$ and $g$ set to~0.

Variational methods are popular for dealing with non-Gaussian likelihoods in GP. In this method, we fit a multi-variate Gaussian that closely approximates the posterior, i.e. minimizes $\mathcal{D}_\text{KL}(q(\vf)\|P(\vf|D)$. GPs are often used in their sparse avatars using \emph{inducing points}~\citep{WilsonHS16,NickischR08} that provide approximations to the full covariance matrix with large computation benefits. As a result, q(f) is parameterized by the following trainable parameters (let `d', `m' denote the input dimension and number of inducing points resp.): (a) Z, a matrix of size $d\times m$, of locations of 'm' inducing points (b) $\mu\in \mathbb{R}^m, \Sigma\in \mathbb{R}^{m\times m}$, denoting mean and covariance of the inducing points. 
%
In order to minimize $\mathcal{D}_\text{KL}(q(\vf)\|P(\vf|D)$, a pseudo objective called Evidence Lower Bound (ELBO), shown below, is employed:
\begin{equation}
  q^*(\vf) = \argmax_{\vf \sim q(\vf)} \mathbb{E}_q[\log P(D|\vf)]-\mathcal{D}_\text{KL}(q(\vf)|| P(\vf))
  \label{eqn:elbo}
\end{equation}

The first term above in Eqn.~\eqref{eqn:elbo} maximizes data likelihood, which is Equation~\ref{eq:pointLLSimple} in our case.
The second term is a regularizer that regresses the posterior fit $q(\vf)$ close to the prior distribution $P(\vf)$ which is set to standard Normal. 
We optimize using this objective over all the parameters involved through gradient descent. The required integrals in \eqref{eqn:elbo} can be computed using Monte Carlo methods~\citep{HensmanMG15}. We describe further implementation details in the next section.

\noindent
{\bf Implementation Details} \\
We use routines from GPytorch\footnote{\url{https://gpytorch.ai/}} library to implement the variational objective. Specifically, we extend \href{https://docs.gpytorch.ai/en/stable/models.html#gpytorch.models.ApproximateGP}{ApproximateGP} with \href{https://docs.gpytorch.ai/en/stable/variational.html?highlight=variationalstrategy#id1}{VariationalStrategy}, both of which are GPytorch classes, and set them to learn inducing point locations.

Number of \emph{inducing points} when set to a very low value could overly smooth the surface and can have high computation overhead when set to a large value. We set the number of inducing points to 50 for all the tasks. The choice of 50 over a larger number is only to ensure reasonable computation speed. 

Since we keep getting more observations as we explore, we use the following strategy for scheduling the parameter updates. We start with the examples in the seed set $\dseed$ and update for 1,000 steps. We explore using the variance of the estimated posterior. We pick 12 arms with highest variance and label one example for each arm. After every new batch of observations, we make 50 update steps on all the data. As a result, we keep on updating the parameters as we explore more. We use Adam Optimizer with learning rate $10^{-3}$. At each step, we update over observations from all the arms. 

We use the feature representations of the network used to model joint potentials described in Section~\ref{sec:aaa:AttribNoise} to also initialize the deep kernel induced by~$\Emb$. A final new linear layer of default output size 20 is added to project the feature representations. 

In our proposed method~\DirGPR{}, described in Section~\ref{sec:aaa:pool}, we take the kernel average of three neighbours for any arm with fewer than five observations. 

\section{More Task and Dataset Details}
\label{sec:aaa:appendix:task}

\subsection{MF-CelebA, MF-IMDB}
We hand-picked 12 binary attributes relevant for gender classification of the 40 total available attributes in the CelebA dataset. 
The twelve binary attributes are listed in Table~\ref{tab:aaa:celeba:attrs}, these constitute the $\attlist$. $\attspace$ is the combination of twelve binary attributes and is $2^{12}=4,096$ large. 
The attributes related to hair color are retained in this list due to the recent finding that hair-color is spuriously correlated with the gender in CelebA~\citep{Sagawa19}. We ignored several other gender-neutral or rare attributes. 

\begin{table}[htb]
    \centering
    \begin{tabular}{|l|r|r|}\hline
    \thead{Index} & \thead{Name} & \thead{Num. labels} \\\hline
    1 & Black Hair? & 2\\\hline
    2 & Blond Hair? & 2\\\hline
    3 & Brown Hair? & 2\\\hline
    4 & Smiling? & 2\\\hline
    5 & Male? & 2\\\hline
    6 & Chubby? & 2\\\hline
    7 & Mustache? & 2\\\hline
    8 & No Beard? & 2\\\hline
    9 & Wearing Hat? & 2\\\hline
    10 & Blurry? & 2\\\hline
    11 & Young? & 2\\\hline
    12 & Eyeglasses? & 2\\\hline
    \end{tabular}
    \caption{Attribute list of MF-CelebA, MF-IMDB.}
    \label{tab:aaa:celeba:attrs}
\end{table}

\subsection{AC-COCOS, AC-COCOS10K}

COCOS is a scene classification dataset, where pixel level supervision is provided. Methods are usually evaluated on pixel level classification accuracy. For simplicity, we cast it in to an object recognition task. The subset of ten animal labels we consider is shown in Table~\ref{tab:aaa:cocos:primary}. We consider in our task five coarse background (stuff) labels by collapsing the fine labels to coarse using the label hierarchy shown in Table~\ref{tab:aaa:cocos:stuff} and is the same as the official hierarchy~\citep{COCOS}.

We now describe how we cast the scene label classification task to an animal classification task. We first identify the subset of images in the train and validation set of the COCOS dataset which contain only one animal label, it could contain multiple background labels. If the image contains multiple animals, we exclude it, leaving around 23,000 images in the dataset. This is implemented in the routine: \texttt{filter\_ids\_with\_single\_object} of \texttt{cocos3.py} in the attached code.  We also retrofit the scene classification models for animal classification. When the model (service model) labels pixels with more than one animal label, we retain the label associated with the largest number of pixels. This is implemented in \texttt{fetch\_preds} routine of \texttt{cocos3.py}. Recall our calibration method makes use of the logit scores given by the attribute predictors, since we are aggregating prediction from multiple pixels, we do not have access to the logit scores. We simply set the logit score to +1 if a label is found in the image and -1 otherwise. 

We follow the same protocol for both the tasks: AC-COCOS, AC-COCOS10K. The only difference between the two is the service model, AC-COCOS is a stronger service model trained on 164K size training data compared to AC-COCOS10K which is a model trained on a previous version of the dataset that is only 10K large. In these tasks, we use the same model for predicting the attributes and task labels since the pre-trained model we use is a scene label classifier. Both the pretrained\footnote{\url{https://github.com/kazuto1011/deeplab-pytorch}} models were trained using ResNet101 architecture.  

\begin{minipage}{0.2\textwidth}
    \centering
    \begin{tabular}{|l|}\hline
    \thead{Name}\\\hline
    bird\\\hline
    cat\\\hline
    dog\\\hline
    horse \\\hline
    sheep \\\hline 
    cow \\\hline
    elephant \\\hline 
    bear \\\hline 
    zebra \\\hline
    giraffe \\\hline
    \end{tabular}
    \captionof{table}{List of ten animals in AC tasks}
    \label{tab:aaa:cocos:primary}
\end{minipage}
\hfill
\begin{minipage}{0.8\textwidth}
    \centering
    \begin{tabular}{|l|l|}\hline
         \thead{Coarse label} & Example of constituent stuff classes \\\hline
         water-other & sea, river \\\hline
        ground-other & ground-other, playingfield, platform, railroad, pavement \\\hline
        sky-other & sky-other, clouds \\\hline
        structural-other & structural-other, cage, fence, railing, net \\\hline
        furniture-other & furniture-other, stairs, light, counter, mirror-stuff \\\hline
    \end{tabular}
    \captionof{table}{80 stuff labels in the COCOS dataset are collapsed in to five coarse labels. Few examples are shown for each coarse label in the right column.}
    \label{tab:aaa:cocos:stuff}
\end{minipage}


\section{Statistics of Accuracy Surface}
\label{sec:aaa:appendix:surface_stats}
We show in Table~\ref{tab:aaa:dataset:stats}, details about our two data sets such as the number of attributes, number of arms and number of active arms. Active arms are arms with a support of at least five and are the ones used for evaluation. Large number of arms as shown in the table exclude the possibility of manual supervision, since it is hard to obtain and label data that covers all the arms.  

In Figures~\ref{fig:aaa:task:stats1} and~\ref{fig:aaa:task:stats2}, we show some statistics that illustrate the shape of the accuracy surface. We note that, although the service model's mean accuracy is high, the accuracy of the arms in the 10\% quantile is abysmally low while arms in the top-quantiles have near perfect accuracy. This further motivates for why we need an accuracy surface instead of single accuracy estimate.  

\begin{table}[H]
\begin{center}
\begin{tabular}{|l|r|r|r|}
\hline
Dataset & \# attributes & \# arms & \# active arms \\\Xhline{4\arrayrulewidth}
CelebA & 12 & 4096 & 398\\\hline
COCOS & 6 & 320 & 176 \\\hline
\end{tabular}
\end{center}
\caption{Attribute statistics per dataset. First and second column show number of attributes and total possible combinations of the attributes. Third column shows number of attribute combinations (arms) with at least a support of five in the unlabeled data. These are the arms on which accuracy surface is evaluated.}
\label{tab:aaa:dataset:stats}
\end{table}

\begin{figure}
    \centering
    \begin{subfigure}[t]{0.48\textwidth}
    \vspace{0pt}
        \includegraphics[width=\textwidth]{images/aaa/quantiles_new.pdf}
        \caption{We show the mean and ten quantiles of per-arm accuracy: 0, 0.1, 0.3, 0.5, 0.7, 0.9, 1. for each task when evaluated on their corresponding dataset (quantile 0 corresponds to the worst value). Observe the disparity between the best and the worst arms in terms of accuracy. In all the cases, also note how the large mean accuracy (macro-averaged over arms) does not do justice to explaining the service model's vulnerabilities.}
    \label{fig:aaa:task:stats1}
    \end{subfigure}
    \hfill
    \begin{subfigure}[t]{0.48\textwidth}\vspace{0pt}
        \includegraphics[width=\textwidth]{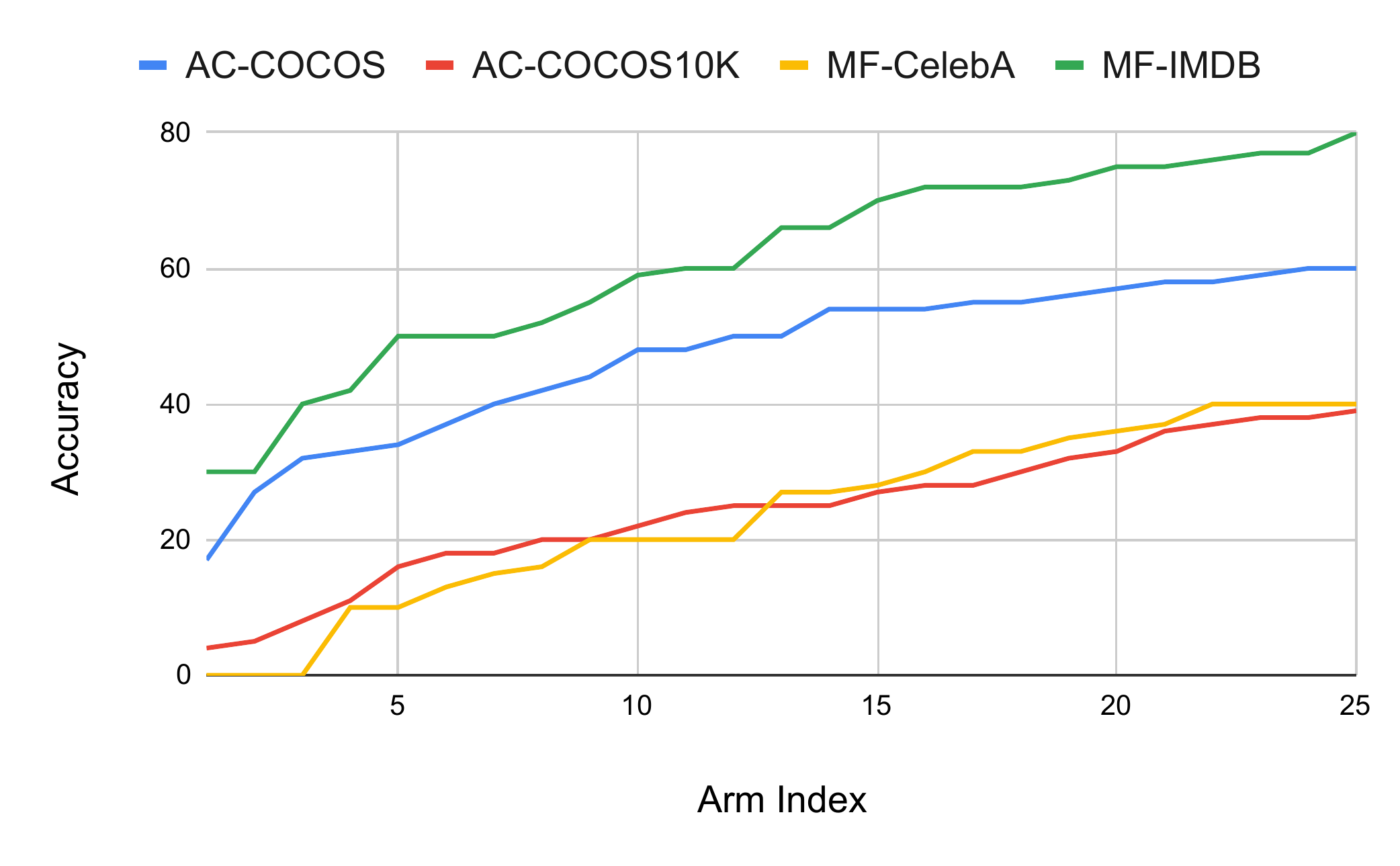}
        \caption{We show here the 25 worst arm accuracies for each of the service models. The large number of arms with accuracies much worse than mean accuracy further illustrates our argument for why we need accuracy surfaces.}
        \label{fig:aaa:task:stats2}
    \end{subfigure}
    \caption{Arm accuracies.}
    \label{fig:aaa:task}
\end{figure}

\section{Standard Deviation and More Evaluation Metrics}
\label{sec:aaa:appendix:metrics}
We include standard deviation accompanying numbers in Table~\ref{tab:aaa:est} for macro MSE in Table~\ref{tab:aaa:macromse} and for worst MSE in Table~\ref{tab:aaa:worstmse}.

In the main content of the paper, we gave results using macro and worst MSE. In this following section, we show results using two other metrics.  We follow the same setup as in Section~\ref{sec:aaa:expt:est}.

{\bf Micro-averaged MSE:} We assign importance to each arm based on its support. The error per arm is multiplied by its support (in $U$). Results with this error are shown in Table~\ref{tab:aaa:micromse}. The best predictor with this metric is the point estimate given by the \cpred{} estimator which is not surprising since very few arms with high frequency dominate this metric.\\
{\bf Infrequent MSE:} In Table~\ref{tab:aaa:infreqmse}, we show MSE evaluated only on the 50 arms that are least frequent in $U$. \\
For each of the above evaluation metrics, the trend between \BetaGP\, \DirGP{}, \DirGPR\ is statistically significant. 

\begin{table}
\centering
\begin{tabular}{|l|llllll|}\hline
Service $\downarrow$ &\thead{\cpred} & \thead{\PerArmBeta} & \thead{\BernGP{}} & \thead{\BetaGP{}} & \thead{\DirGP{}} & \thead{\DirGPR{}} \\\hline
AC-COCOS10K & 5.4 / 0.2 & 7.0 / 0.6 & 7.0 / 0.7 & 7.1 / 0.3 & 5.3 / 0.2 & 4.7 / 0.1\\
AC-COCOS & 3.2 / 0.1 & 4.3 / 0.3 & 3.5 / 0.3 & 3.3 / 0.2 & 2.8 / 0.0 & 2.8 / 0.1\\
MF-IMDB & 1.2 / 0.0 & 1.6 / 0.1 & 1.7 / 0.2 & 2.2 / 0.2 & 1.4 / 0.1 & 1.4 / 0.1\\
MS-CelebA & 5.2 / 0.1 & 4.7 / 0.2 & 4.9 / 0.4 & 4.6 / 0.8 & 4.1 / 0.1 & 4.3 / 0.1\\\hline
\end{tabular}
\caption{\emph{Macro-averaged} MSE along with standard deviation on all tasks. Shown after trailing '/' is the standard deviation.}
\label{tab:aaa:macromse}
\end{table}

\begin{table}
\centering
\begin{tabular}{|l|llllll|}\hline
Service $\downarrow$ &\thead{\cpred} & \thead{\PerArmBeta} & \thead{\BernGP{}} & \thead{\BetaGP{}} & \thead{\DirGP{}} & \thead{\DirGPR{}} \\\hline
AC-COCOS10K & 3.0 / 0.0 & 3.4 / 0.2 & 3.6 / 0.2 & 3.5 / 0.1 & 3.2 / 0.1 & 3.3 / 0.4\\
AC-COCOS & 1.4 / 0.0 & 1.8 / 0.2 & 1.6 / 0.1 & 1.6 / 0.1 & 1.7 / 0.1 & 2.1 / 0.3\\
MF-IMDB & 0.2 / 0.0 & 0.3 / 0.1 & 0.2 / 0.0 & 0.3 / 0.0 & 0.3 / 0.1 & 0.8 / 0.1\\
MF-CelebA & 0.8 / 0.0 & 0.7 / 0.1 & 0.7 / 0.1 & 0.7 / 0.2 & 0.9 / 0.1 & 1.2 / 0.1\\\hline
\end{tabular}
\caption{\emph{Micro-averaged} MSE along with standard deviation on all tasks. Shown after trailing '/' is the standard deviation.}
\label{tab:aaa:micromse}
\end{table}

\begin{table}
\centering
\begin{tabular}{|l|llllll|}\hline
Service $\downarrow$ &\thead{\cpred} & \thead{\PerArmBeta} & \thead{\BernGP{}} & \thead{\BetaGP{}} & \thead{\DirGP{}} & \thead{\DirGPR{}} \\\hline
AC-COCOS10K & 15.0 / 0.8 & 15.6 / 0.3 & 13.2 / 2.2 & 14.3 / 3.0 & 11.7 / 1.7 & 10.4 / 1.5\\
AC-COCOS & 9.4 / 0.4 & 10.0 / 0.4 & 8.6 / 0.7 & 7.9 / 0.9 & 6.8 / 0.5 & 5.7 / 0.5\\
MF-CelebA & 8.2 / 0.2 & 8.4 / 0.7 & 7.6 / 1.3 & 6.6 / 0.7 & 4.4 / 0.6 & 3.9 / 0.7\\
MF-IMDB & 35.9 / 0.6 & 30.3 / 1.2 & 28.1 / 2.7 & 25.9 / 2.7 & 22.6 / 1.4 & 23.3 / 2.3\\\hline
\end{tabular}
\caption{\emph{Worst} MSE along with standard deviation on all tasks. Shown after trailing '/' is the standard deviation.}
\label{tab:aaa:worstmse}
\end{table}

\begin{table}
\centering
\begin{tabular}{|l|llllll|}\hline
Service $\downarrow$ &\thead{\cpred} & \thead{\PerArmBeta} & \thead{\BernGP{}} & \thead{\BetaGP{}} & \thead{\DirGP{}} & \thead{\DirGPR{}} \\\hline
AC-COCOS10K & 7.3 / 0.3 & 11.8 / 1.5 & 10.8 / 1.9 & 12.4 / 0.1 & 7.3 / 0.2 & 6.4 / 0.4\\
AC-COCOS & 4.0 / 0.1 & 6.9 / 1.2 & 4.8 / 0.3 & 4.9 / 0.3 & 3.8 / 0.1 & 3.8 / 0.3\\
MF-CelebA & 2.9 / 0.0 & 2.8 / 0.0 & 3.3 / 1.2 & 3.9 / 0.4 & 3.2 / 0.2 & 2.9 / 0.3\\
MF-IMDB & 11.7 / 0.2 & 11.3 / 0.3 & 11.4 / 0.6 & 11.0 / 0.7 & 9.3 / 1.1 & 9.4 / 0.6\\\hline
\end{tabular}
\caption{\emph{Infrequent} MSE along with standard deviation on all tasks. Shown after trailing '/' is the standard deviation.}
\label{tab:aaa:infreqmse}
\end{table}

	\chapter{Details for Chapter~\ref{chap:adaptation}}
\label{chap:appendix:adaptation}

\section{Additional Results for Word Embedding Adaptation}


\paragraph{Ablation studies on \srcsel}
We compare variants in the design of \srcsel\ in Tables~\ref{tab:srcsel:ablation:class}, \ref{tab:srcsel:wect:ablation_lm} and ~\ref{tab:srcsel:wect:ablation_auc}. 
In \srcselR\ we run the SrcSel without weighting the source snippets by the $Q(w,C)$ score in \eqref{eq:srcsel:SnippetSelect}.  
We observe that the performance is worse than with the $Q(w,C)$ score. Next, we check if the score would suffice in down-weighting irrelevant snippets without help from our IR based selection.  
In \srcselC\ we include 5\% random snippets from $\DS$ in addition to those in \srcsel\ and weigh them all by their $Q(w,C)$ score.  
We find in Table~\ref{tab:srcsel:wect:ablation_auc} that the performance drops compared to \srcsel.  Thus, both the $Q(w,C)$ weighting and the IR selection are important components of our source selection method.

\begin{table}[htb]
\centering
\begin{tabular}{|l|l|l|} \hline
 & Micro Accuracy & Macro Accuracy \\
\hline
\tgt & 26.3$_{\pm 0.5}$ & 14.7$_{\pm 1.2}$ \\
\hline
\srcselR & 27.3$_{\pm 0.3}$ & 16.1$_{\pm 1.6}$ \\
\srcsel & 28.3$_{\pm 0.4}$ & 17.3$_{\pm 0.7}$ \\ \hline
\end{tabular}
\caption{Ablation studies on \srcsel\ for classification on the Medical dataset. }
\label{tab:srcsel:ablation:class}
\end{table}

\begin{table}[htb]
\centering
\begin{tabular}{|l|l|l|l|l|} \hline
 & \multicolumn{4}{|c|}{LM Perplexity}  \\ \hline
 & Physics & Gaming & Android & Unix \\
\hline
\tgt & 121.9$_{\pm 0.6}$ & 185.0$_{\pm 0.3}$ & 142.7$_{\pm 2.7}$ & 159.5$_{\pm 1.2}$ \\ \hline
SSR & 114.8$_{\pm 0.2}$ & 172.7$_{\pm 1.5}$ & 131.6$_{\pm 0.7}$ & 151.8$_{\pm 1.1}$ \\
\srcsel & 116.1$_{\pm 0.9}$ & 173.3$_{\pm 0.6}$ & 136.7$_{\pm 1.1}$ & 153.1$_{\pm 0.1}$ \\
\hline
\end{tabular}
\caption{Ablation studies on \srcsel\ over LM perplexity. SSR refers to \srcselR\ .}
\label{tab:srcsel:wect:ablation_lm} 
\end{table}

\begin{table}[htb]
\centering
\begin{tabular}{|l|l|l|l|l|} \hline
 & \multicolumn{4}{|c|}{Question Dedup: AUC} \\ \hline
 & Physics & Gaming & Android & Unix \\
\hline
\tgt & 86.7$_{\pm 0.4}$ & 82.6$_{\pm 0.4}$ & 86.8$_{\pm 0.5}$ & 85.3$_{\pm 0.3}$ \\ \hline
\srcselR & 89.2$_{\pm 0.2}$ & 85.6$_{\pm 0.4}$ & 87.5$_{\pm 0.3}$ & 86.8$_{\pm 0.2}$ \\
\srcselC & 88.7$_{\pm 0.3}$ & 84.8$_{\pm 0.3}$ & 87.0$_{\pm 0.5}$ & 85.8$_{\pm 0.3}$ \\
\srcsel & 90.4$_{\pm 0.2}$ & 85.4$_{\pm 0.5}$ & 87.4$_{\pm 0.4}$ & 87.5$_{\pm 0.1}$ \\
\hline
\end{tabular}
\caption{Ablation studies on \srcsel\ over AUC.}
\label{tab:srcsel:wect:ablation_auc}
\end{table}

\begin{table}[htb]
\centering
\begin{tabular}{|l|l|l|l|l|l|}
\hline
 &
Sci & Com & Pol & Rel & Rec 
\\ \hline
\tgt & 92.2 & 79.9 & 94.8 & 87.3 & 90.3  \\\hline
\src
 & -0.1$_{\pm \text{0.8}}$ & -9.1$_{\pm \text{0.7}}$ & -3.3$_{\pm \text{0.9}}$ & -1.0$_{\pm \text{2.3}}$ & -6.0$_{\pm \text{0.5}}$ 
\\
ST & 0.0$_{\pm \text{1.1}}$ & 0.0$_{\pm \text{1.3}}$ & -0.1$_{\pm \text{1.5}}$ & 0.1$_{\pm \text{2.5}}$ & 0.2$_{\pm \text{1.1}}$  \\
RS & 0.9$_{\pm \text{0.9}}$ & -0.2$_{\pm \text{1.3}}$ & 0.2$_{\pm \text{1.3}}$ & 1.2$_{\pm \text{2.0}}$ & 0.1$_{\pm \text{0.8}}$  \\
\srcsel & 1.2$_{\pm \text{0.8}}$ & 0.1$_{\pm \text{1.4}}$ & 0.5$_{\pm \text{1.2}}$ & 0.5$_{\pm \text{2.6}}$ & 0.3$_{\pm \text{1.0}}$  \\\hline
\end{tabular}
\caption{Average accuracy gains over \tgt\ and $\pm$std-dev on five classification domains: Science, Computer, Politics, Religion, Recreation, from 20 NG dataset. ST and RS abbreviate to \srcinit\ and \srcselReg\ respectively.}
\label{tab:srcsel:classify:ng_detailed}
\end{table}

\paragraph{Critical hyper-parameters}
The number of neighbours $K$ used for computing embedding based stability score as shown in~(\ref{eqn:srcsel:sim_score}) is set to 10 for all the tasks. We train each of the different embedding methods for a range of different epochs: \{5, 20, 80, 160, 200, 250\}. The $\lambda$ parameter of \srcselReg\ and \yangC\ is tuned over \{0.1, 1, 10, 50\}. Pre-trained embeddings $\ES$ are obtained by training a CBOW model for 5 epochs on a cleaned version of 20160901 dump of Wikipedia. All the embedding sizes are set to $300$. 


\begin{table}[htb]
\centering
\setlength\tabcolsep{3.0pt}
\begin{tabular}{|l|r|r|r|r|r|r|r|r|} \hline
  & \multicolumn{4}{|c|}{Perplexity} & \multicolumn{4}{|c|}{AUC}  \\
\hline
 & Physics & Gaming & Android & Unix & Physics & Gaming & Android & Unix \\
\hline
\tgt & 121.9  & 185.0 & 142.7 & 159.5 & 86.7 & 82.6 & 86.8 & 85.3 \\
\hline
\yangC & 2.1$_{\pm \text{0.8}}$ & 7.0$_{\pm \text{0.7}}$ & 1.8$_{\pm \text{0.4}}$ & 3.4$_{\pm \text{1.0}}$ & -0.4$_{\pm \text{0.4}}$ & 2.3$_{\pm \text{0.3}}$ & -0.6$_{\pm \text{0.4}}$ & -0.3$_{\pm \text{0.3}}$ \\
\yangC-rinit & -1.6$_{\pm \text{0.9}}$ & 1.2$_{\pm \text{0.6}}$ & 1.6$_{\pm \text{0.2}}$ & 2.6$_{\pm \text{0.8}}$ & -1.2$_{\pm \text{0.2}}$ & 0.$_{\pm \text{0.3}}$ & -0.2$_{\pm \text{0.5}}$ & -0.3$_{\pm \text{0.3}}$ \\
\hline
\srcselReg & 5.0$_{\pm \text{0.1}}$ & 13.8$_{\pm \text{0.3}}$ & 6.7$_{\pm \text{0.7}}$ & 9.7$_{\pm \text{0.8}}$ & -0.3$_{\pm \text{0.3}}$ & 2.1$_{\pm \text{0.4}}$ & -0.6$_{\pm \text{0.3}}$ & -0.3$_{\pm \text{0.5}}$ \\
\srcselReg-rinit & 3.6$_{\pm \text{1.0}}$ & 11.1$_{\pm \text{0.5}}$ & 7.0$_{\pm \text{1.2}}$ & 8.9$_{\pm \text{1.4}}$ & 0.7$_{\pm \text{0.2}}$ & 1.2$_{\pm \text{0.2}}$ & -0.3$_{\pm \text{0.6}}$ & -0.2$_{\pm \text{0.5}}$ \\
\hline
\srcsel & 5.8$_{\pm \text{0.9}}$ & 11.7$_{\pm \text{1.8}}$ & 6.0$_{\pm \text{1.2}}$ & 6.3$_{\pm \text{1.0}}$ & 3.7$_{\pm \text{0.2}}$ & 2.8$_{\pm \text{0.5}}$ & 0.6$_{\pm \text{0.4}}$ & 2.2$_{\pm \text{0.1}}$ \\
\srcselR-rinit & 5.8$_{\pm \text{1.0}}$ & 12.5$_{\pm \text{0.4}}$ & 10.4$_{\pm \text{1.4}}$ & 7.9$_{\pm \text{1.0}}$ & 2.5$_{\pm \text{0.1}}$ & 2.$_{\pm \text{0.5}}$ & 0.4$_{\pm \text{0.4}}$ & 1.5$_{\pm \text{0.2}}$ \\
\hline
\end{tabular}
\caption{Source initialization vs random initialization (with suffix {\it -rinit}) on \yangC\ , \srcselReg\ , \srcsel\ . Shown in the table is the average gain over \tgt\ for each method.}
\label{tab:srcsel:init:all}
\end{table}

\begin{table}[htb]
\centering
\begin{tabular}{|l|l|l|l|l|l|l|}
\hline
Method & Physics & Gaming & Android & Unix & Med (Micro) & Med (Rare) \\
\hline
\tgt & 89.7$_{\pm \text{0.3}}$ & 88.4$_{\pm \text{0.2}}$ & 89.4$_{\pm \text{0.3}}$ & 89.2$_{\pm \text{0.3}}$ & 31.4$_{\pm \text{0.9}}$ & 9.4$_{\pm \text{3.0}}$ \\
\srcinit & 89.5$_{\pm \text{0.2}}$ & 89.0$_{\pm \text{0.2}}$ & 89.0$_{\pm \text{0.2}}$ & 89.0$_{\pm \text{0.2}}$ & 31.3$_{\pm \text{0.4}}$ & 7.3$_{\pm \text{2.6}}$ \\
\srcsel & 91.6$_{\pm \text{0.3}}$ & 88.9$_{\pm \text{0.8}}$ & 89.4$_{\pm \text{0.2}}$ & 89.0$_{\pm \text{0.2}}$ & 31.3$_{\pm \text{0.9}}$ & 10.5$_{\pm \text{1.8}}$ \\
\hline
\end{tabular}
\caption{Performance with a larger target corpus size of 10MB on the four deduplication tasks (AUC score) and one classification task (Accuracy) shown in the last two columns.}
\label{tab:srcsel:large}
\end{table}
\section{Additional results for Immediate Adaptation}
\subsection{Reproducibility/Implementation Details}
\label{sec:kyc:appendix:lm}
In this section we provide details about the dataset, architecture and training procedures used for each of the three tasks.

\paragraph{NER:}
We use the standard splits provided in the Ontonotes dataset \cite{ontonotes}. Our codebase builds on the official PyTorch implementation released by \cite{devlin2018bert}. We finetune a cased BERT base model with a maximum sequence length of 128 tokens for 3 epochs which takes 3 hours on a Titan X GPU. 

\paragraph{Sentiment Classification:}
As described previously, we use the Amazon dataset \cite{amazon_dataset}. For each review, we use the standard protocol to convert the rating to a binary class label by marking reviews with 4 or 5 stars as positive, reviews with 1 or 2 stars as negative and leaving out reviews with 3 stars. We randomly sample data points from each domain to select 1000, 200 and 500 positive and negative reviews each for the train, validation and test splits, respectively. We leave out the domains that have insufficient examples, leaving us with 22 domains.
\indent We use the finetuning protocol provided by the authors of \cite{sentiment_repo} and use the uncased BERT base model with a maximum sequence length of 256 for this task. We train for 5 epochs (which takes 4 hours on a Titan X GPU) and use the validation set accuracy after every epoch to select the best model. 

\paragraph{Auto Complete Task:}
We use 20NewsGraoup dataset while regarding each content class label as a client. We remove header, footer from the content of the documents and truncate the size of each client to around 1MB. We use word based tokenizer with a vocabulary restricted to top 10,000 tokens and demarcate sentence after 50 tokens. The reported numbers in Table~\ref{tab:kyc:expts:lm} are when using TF-IDF vector for domain sketch. We diIn this section, we reportd not evaluate other kinds of domain sketch on this task. We train all the methods for 40 epochs with per epoch train time of 4 minutes on a Titan X GPU. 

We adopt the tuned hyperparameters corresponding to PTB dataset to configure the baseline~\citet{MelisTP2020}. Since the salience information from the client sketch can be trivially exploited in  perplexity reduction and thereby impede learning desired hypothesis beyond trivially copying the salience information, we project the sketch vector to a very small dimension of 32 before fanning it out to the size of vocabulary. We did not use any non-linearity in $G_\phi$ and also employ dropout on the sketches.

\paragraph{Details of MoE method:}
MoE~\citep{GuoSB18} employs a shared encoder and a client specific classifier.  We implemented their proposal to work with our latest encoder networks. Our implementation of their method is to the best of our efforts faithful to their scheme. The only digression we made is in the design of discriminator: we use a learnable discriminator module that the encoder fools while they adopt MMD based metric to quantify and minimize divergence between clients. This should, in our opinion, only work towards their advantage since MMD is not sample efficient especially given the small size of our clients.

\begin{figure*}[tb]
  \begin{subfigure}{0.5\linewidth}
   \centering
    \includegraphics[width=0.9\linewidth]{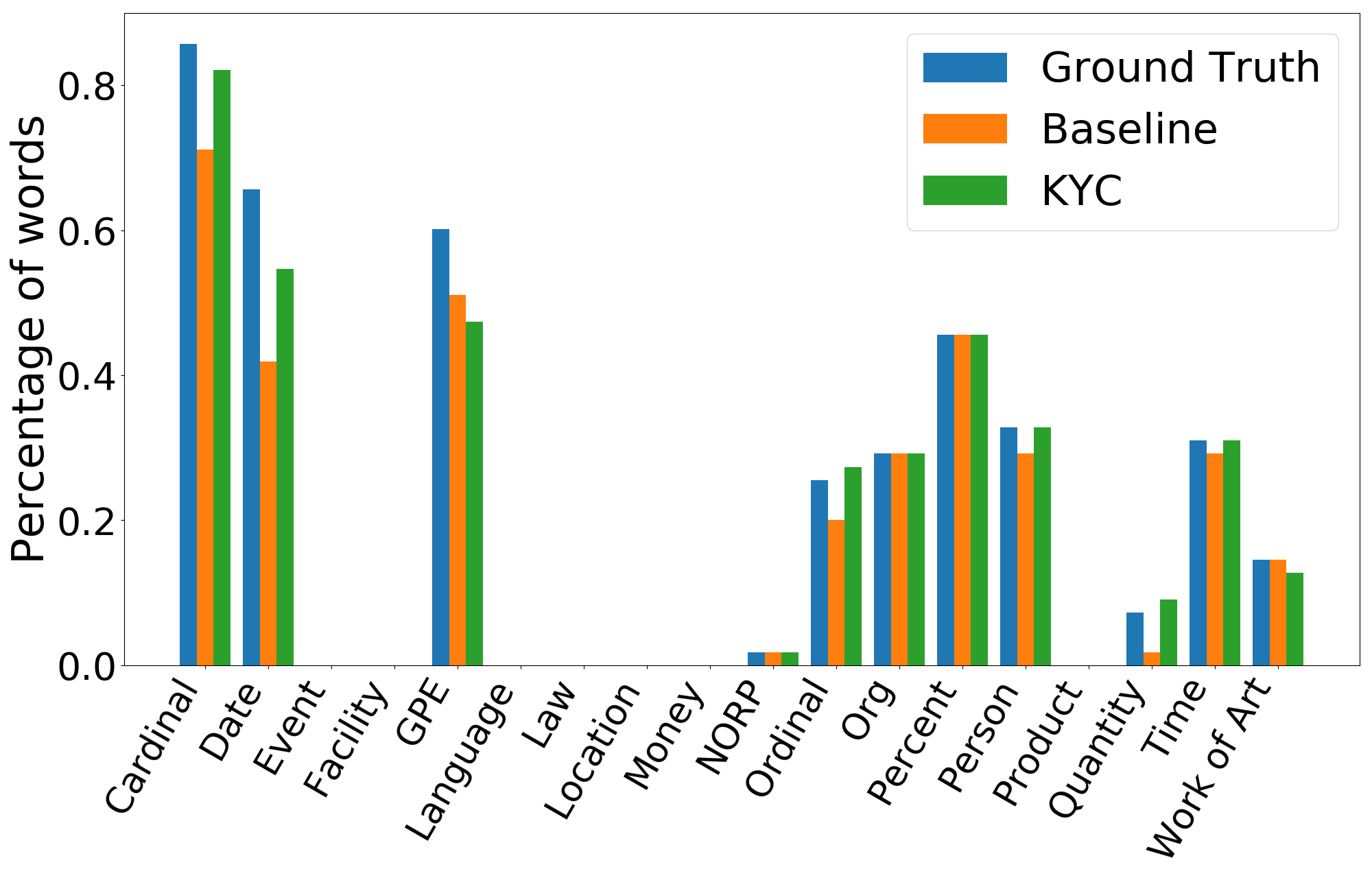} 
    \end{subfigure}
  \begin{subfigure}{0.5\linewidth}
  \centering
    \includegraphics[width=0.9\linewidth]{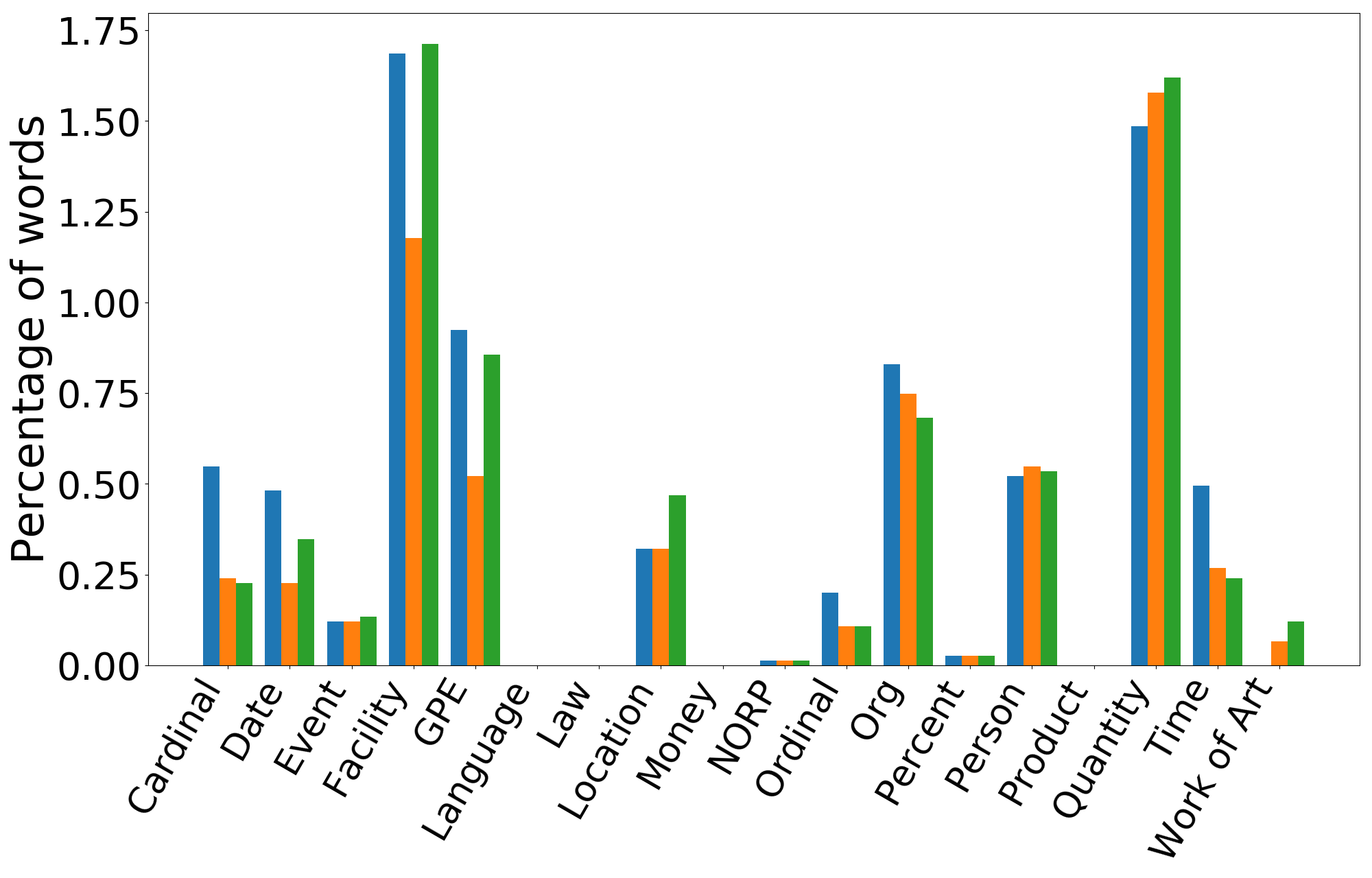} 
\end{subfigure}
\caption{Proportion of true and predicted entity labels for different OOD clients (left) BC/Phoenix (right) BC/CCTV.}
  \label{fig:kyc:bar_graphs_more}
\end{figure*}

\begin{table}[!htbp]
\centering
\begin{tabular}{|l | c c | c c|}
\hline
& \multicolumn{2}{|c|}{\small{OOD}} & \multicolumn{2}{|c|}{\small{ID}}  \\
OOD Clients & \small{Base}  & \small{{\our}} & \small{Base} & \small{{\our}} \\
\hline
\small{BC/CCTV + BC/Phoenix} & 63.8 & 70.1 & 86.00 & 86.7 \\
\small{BN/PRI + BN/VOA} & 88.7 & 91.6 & 84.5 & 86.2\\
\small{NW/WSJ + NW/Xinhua} & 73.9 & 79.2 & 80.8 & 82.2\\
\small{BC/CNN + TC/CH} & 78.3 & 80.4 & 85.6 & 87.1\\
\small{WB/Eng + WB/a2e} & 76.2 & 78.9 & 86.4 & 87.5 \\
\hline
Average & 76.2 & 80.0 & 84.7 & 85.9\\ \hline
\end{tabular}
\vspace{0.3em}
\caption{Performance on the NER task on the Ontonotes dataset when using TF-IDF as the client sketch.}
\label{table:kyc:ner_tfidf}
\end{table}

\begin{table}[!htbp]
\centering
\begin{tabular}{|l | c c | c c|}
\hline
& \multicolumn{2}{|c|}{\small{OOD}} & \multicolumn{2}{|c|}{\small{ID}}  \\
OOD Clients & \small{Base}  & \small{{\our}} & \small{Base} & \small{{\our}} \\
\hline
\small{BC/CCTV + BC/Phoenix} & 63.8 & 75.3 & 86.0 & 79.3 \\
\small{BN/PRI + BN/VOA} & 88.7 & 90.5 & 84.5 & 78.7\\
\small{NW/WSJ + NW/Xinhua}& 73.9 & 82.7 & 80.8 & 71.4\\
\small{BC/CNN + TC/CH} & 78.3 & 80.3 & 85.6 & 79.9\\
\small{WB/Eng + WB/a2e} & 76.2 & 76.4 & 86.4 & 79.6 \\
\hline
Average & 76.2 & 81.0 & 84.7 & 77.8 \\\hline
\end{tabular}
\vspace{0.3em}
\caption{Performance on the NER task on the Ontonotes dataset when using Binary Bag of Words as the client sketch.}
\label{table:kyc:ner_bloom}
\end{table}

\begin{table}
\centering
\begin{tabular}{|l | c c | c c|}
\hline
& \multicolumn{2}{|c|}{\small{OOD}} & \multicolumn{2}{|c|}{\small{ID}}  \\
OOD Clients & \small{Base}  & \small{{\our}} & \small{Base} & \small{{\our}} \\
\hline
\small{BC/CCTV + BC/Phoenix} & 63.8 & 61.5 & 86.0 & 83.0\\
\small{BN/PRI + BN/VOA} & 88.7 & 82.3 & 84.5 & 85.2 \\
\small{NW/WSJ + NW/Xinhua} & 73.9 & 82.3 & 80.8 & 75.0\\
\small{BC/CNN + TC/CH} & 78.3 & 72.5 & 85.6 & 83.2\\
\small{WB/Eng + WB/a2e} & 76.2 & 78.3 & 86.4 & 82.5\\
\hline
Average & 76.2 & 75.4 & 84.7 & 81.8\\\hline
\end{tabular}
\vspace{0.3em}
\caption{Performance on the NER task on the Ontonotes dataset when using sentence embddings averaged over an extracted summary.}
\label{table:kyc:ner_summary}
\end{table}

\begin{table}[!htbp]
\centering
    \begin{tabular}{|l |c c c |c c c |}
    \hline
    & \multicolumn{3}{|c|}{\small{OOD}} & \multicolumn{3}{|c|}{\small{ID}}  \\
    OOD Clients & \small{Base } & \small{Sali-} & \small{Avg} & \small{Base} & \small{Sali-} & \small{Avg} \\
     &  & \small{ence} & \small{Len} & & \small{ence} & \small{Len} \\
    \hline
    \small{Electronics+Games} & 86.4 & 88.1 & 86.9 & 88.5 & 89.0 & 88.6 \\
    \small{Industrial+Tools} & 87.4 & 87.6 & 88.3 & 88.2 & 88.9 & 88.8 \\
    \small{Books+Kindle Store} & 83.5 & 84.6 & 84.5 & 88.0 & 88.9 & 89.0 \\
    \small{CDs+Digital Music} & 82.5 & 83.0 & 83.1 & 89.0 & 89.0 & 89.0 \\
    \small{Arts+Automotive} & 89.9 & 90.6 & 90.2 & 88.2 & 88.6 & 88.5 \\
    \hline
    Average & 86.0 & 86.8  & 86.6 & 88.4 & 88.8 & 88.8 \\ \hline
    \end{tabular}
    \caption{Accuracy on the Sentiment Analysis task when using average review length as the client sketch. Columns ``Saliency" and ``Avg Len" refer to using {\our} with the default saliency features and normalized review lengths as client sketches, respectively.}
    \label{table:kyc:sentiment_length_numbers}
\end{table}

\subsection{Results with Different Client Sketches}
\label{kyc:appendix:domain_features}
In this section we provide results on every OOD split for the different client sketches described in Section \ref{sec:kyc:ablation} along with more details.
\begin{itemize}
    \item \textbf{TF-IDF}: This is a standard vectorizer used in Information Retrieval community for document similarity. We regard all the data of the client as a document when computing this vector.  
    The corresponding numbers using this sketch are shown in Table~\ref{table:kyc:ner_tfidf} and are only slightly worse than the salience features.
    \item \textbf{Binary Bag of Words (BBoW)}: A binary vector of the same size as vocabulary is assigned to each client while setting the bit corresponding to a word on if the word has occurred in the client's data.  
    We notice an improvement on the OOD set but a significant drop in ID numbers as seen in Table~\ref{table:kyc:ner_bloom},~\ref{table:kyc:ner_method_comparison}. We attribute this to the strictly low representative power of BBoW sketches compared to the other sketches. The available train data for NER is laced with rogue clients which are not labeled and are instead assigned the default tag: ``O''. Proportion of {\our}'s improvement on this task comes from the ability to distinguish bad clients and keeping their parameters from not affecting other clients. This, however, is not possible when the representative capacity of the sketch is compromised. Thereby we do worse on ID using this sketch but not on OOD meaning the model does worse on the bad clients (which are only part of ID, and not OOD). 
    \item \textbf{Contextualized Embedding of Summary}: We also experiment with using deep-learning based techniques to extract the topic and style of a client by using the ``pooled" BERT embeddings averaged over sentences from the client. Since the large number of sentences from every client would lead to most useful signals being killed upon averaging, we first use a Summary Extractor \cite{gensim_summarizer} to extract roughly 10 sentences per client and average the sentence embeddings over these sentences only. This method turns out to be ineffective in comparison to the other client sketches, indicating that sentence embeddings do not capture all the word-distribution information needed to extract useful correction.
    \item \textbf{Average Instance Length:} For the task of Sentiment Analysis, we also experiment with passing a single scalar indicating average review length as the client sketch in order to better understand and quantify the importance of average review length on the performance of {\our}. We linearly scale the average lengths so that all train clients have values in the range $[-1, 1]$. As can be seen in Table \ref{table:kyc:sentiment_length_numbers}, this leads to a significant improvement over the baseline.  In particular, the OOD splits  CDs + Digital Music and Books + Kindle Store have reviews that are longer than the average and consequently result in improvements when augmented with average length information. The gains from review length alone are not higher than our default term-saliency sketch indicating that term frequency captures other meaningful properties as well. 
\end{itemize}

\subsection{Results with Different Model Architectures}
\label{kyc:appendix:network_arch}
In this section we provide results for the different network architecture choices described in Section~\ref{sec:kyc:ablation}
\begin{itemize}
    \item \textbf{Deep}: 
    The architecture used is identical to that shown in Figure~\ref{fig:kyc:overview} barring $\bigoplus$, which now consists of an additional 128-dimensional non-linear layer before the final softmax transform $Y_\theta$.
    \item \textbf{Decompose}: The final softmax layers is decomposed in to two. A scalar $\alpha$ is predicted from the client sketch using $G_\phi$ similar to {\our}. The final softmax layer then is obtained through convex combination of the two softmax layers using $\alpha$. Figure~\ref{fig:kyc:overview_decompose} shows the overview of the architecture.
    \item \textbf{MoE-$\bm{g}$}: We use the client sketch as the drop-in replacement for encoded instance representation employed in \citet{GuoSB18}. 
    The architecture is sketched in  Figure~\ref{fig:kyc:overview_moe}. 
    As shown in Table~\ref{table:kyc:ner_moe}, this method works better than the standard MoE model, but worse than {\our}.
\end{itemize}

\begin{table}[!htbp]
\centering
\begin{tabular}{|l | c c | c c|}
\hline
& \multicolumn{2}{|c|}{\small{OOD}} & \multicolumn{2}{|c|}{\small{ID}}  \\
OOD Clients & \small{Base}  & \small{{\our}} & \small{Base} & \small{{\our}} \\
\hline
\small{BC/CCTV + BC/Phoenix} & 64.8 & 74.5 & 85.6 & 86.8 \\
\small{BN/PRI + BN/VOA} & 89.5 & 90.0 & 84.1 & 85.6\\
\small{NW/WSJ + NW/Xinhua} & 74.4 & 80.6 & 80.2 & 92.8\\
\small{BC/CNN + TC/CH} & 78.0 & 79.6 & 86.1 & 87.5 \\
\small{WB/Eng + WB/a2e}&   75.6 & 79.9 & 85.8 & 87.1\\
\hline
Average & 76.5 & 80.9 & 84.4 & 86.0 \\\hline
\end{tabular}
\vspace{0.3em}
\caption{Performance on the NER task on the Ontonotes dataset using KYC-Deep.}
\label{table:kyc:ner_deep}
\end{table}

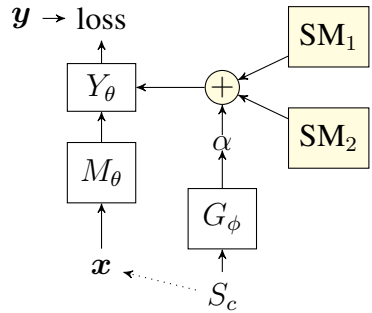
\begin{figure}
\begin{center}
\begin{tikzpicture}[>=stealth',align=center]
\node (loss) {loss};
\node [left=3mm of loss] (y) {$\bm{y}$};

\node [rectangle, draw,
minimum width=9mm, below=3mm of loss] (Ytheta) {$Y_\theta$};
\node [circle,draw, 
inner sep=.1mm, right=9mm of Ytheta, fill=yellow!15, ] (combination) {$+$};
\node [rectangle, draw, anchor=center,
minimum width=9mm, minimum height=8mm,
below=4mm of Ytheta] (Etheta) {$\lastE_\theta$};
\node [anchor=center,below=6mm of Etheta] (x) {$\bm{x}$};
\node [anchor=center,
below = 4mm of combination,
outer sep=.1mm, inner sep=0mm] (alpha) {$\alpha$};
\node [rectangle, draw, anchor=center,
minimum width=9mm, minimum height=8mm,
below=5mm of alpha] (Gphi) {$G_\phi$};
\node [rectangle, draw, anchor=center,
minimum width=9mm, minimum height=8mm,
above right=1mm and 7mm  of combination, fill=yellow!15, ] (sm1) {SM$_1$};
\node [rectangle, draw, anchor=center,
minimum width=9mm, minimum height=8mm,
below right=1mm and 7mm  of combination, fill=yellow!15, ] (sm2) {SM$_2$};
\node [anchor=center,below=3mm of Gphi] (sc) {$S_c$};
\draw [->] (y) to (loss);
\draw [->] (combination) to (Ytheta);
\draw [->] (sm2) to (combination);
\draw [->] (sm1) to (combination);
\draw [->] (alpha) to (combination);
\draw [->] (Ytheta) to (loss);
\draw [->] (Etheta) to (Ytheta);
\draw [->] (Gphi) to (alpha);
\draw [->] (x) to (Etheta);
\draw [->] (sc) to (Gphi);
\draw [->, dotted] (sc) to (x);
\end{tikzpicture}
\end{center}
\caption{Decompose overview: $\bigoplus$ indicates a weighted linear combination. SM$_i$, $i \in \{1,2\}$ represent the softmax matrices which are combined using weights $\alpha$.}  \label{fig:kyc:overview_decompose}
\end{figure}

\begin{table}[!htbp]
\centering
\begin{tabular}{|l | c c | c c|}
\hline
& \multicolumn{2}{|c|}{\small{OOD}} & \multicolumn{2}{|c|}{\small{ID}}  \\
OOD Clients & \small{Base}  & \small{{\our}} & \small{Base} & \small{{\our}} \\
\hline
\small{BC/CCTV + BC/Phoenix} & 64.1 & 56.0 & 85.6 & 86.3  \\
\small{BN/PRI + BN/VOA} & 89.6 & 89.9 & 84.6 & 85.5\\
\small{NW/WSJ + NW/Xinhua} & 72.3 & 68.2 & 81.2 & 80.0\\
\small{BC/CNN + TC/CH} & 78.5 & 77.5 & 85.9 & 86.6\\
\small{WB/Eng + WB/a2e} & 75.5 & 71.0 & 86.1 & 86.7 \\
\hline
Average & 76.0 & 72.5 & 84.7 & 85.2\\ \hline
\end{tabular}
\vspace{0.3em}
\caption{Performance on the NER task on the Ontonotes dataset using Decompose.}
\label{table:kyc:ner_softmax}
\end{table}

\begin{figure}[ht]
\begin{center}
\begin{tikzpicture}[>=stealth',align=center]

\node (loss) {loss};
\node [left=3mm of loss] (y) {$\bm{y}$};
\node [circle,draw, 
inner sep=.1mm, below = 4mm of loss] (combination) {$+$};

\node [below left=5mm and 15mm of combination] (pred1) {p$_1(y|x)$};
\node [right=3mm of pred1] (pred2) {p$_2(y|x)$};
\node [right = 2mm of pred2] (dots) {$\ldots$};
\node [right=2mm of dots] (predn) {p$_n(y|x)$};
\node [rectangle, draw,
minimum width=9mm, below=5mm of pred1] (Ytheta1){$Y_{\theta_1}$};
\node [rectangle, draw,
minimum width=9mm, below=5mm of pred2] (Ytheta2){$Y_{\theta_2}$};
\node [rectangle, draw,
minimum width=9mm, below=5mm of predn] (Ythetan){$Y_{\theta_n}$};
\node [rectangle, draw, anchor=center,
minimum width=9mm, minimum height=8mm,
below=35mm of loss] (Etheta) {$\lastE_\theta$};
\node [anchor=center,
below right=7mm and 30mm of combination,
outer sep=.1mm, inner sep=0mm, fill=yellow!15, ] (alpha) {$\alpha$};
\node [rectangle, draw, anchor=center,
minimum width=9mm, minimum height=8mm,
below=10mm of alpha, fill=yellow!15, ] (Gphi) {$G_\phi$};
\node [anchor=center,below=10mm of Gphi, fill=yellow!15, fill=yellow!15, ] (sc) {$S_c$};
\node [anchor=center,below=6mm of Etheta] (x) {$\bm{x}$};
\draw [->] (x) to (Etheta);
\draw [->] (Etheta) to (Ytheta1);
\draw [->] (Etheta) to (Ytheta2);
\draw [->] (Etheta) to (Ythetan);
\draw [->] (Ytheta1) to (pred1);
\draw [->] (Ytheta2) to (pred2);
\draw [->] (Ythetan) to (predn);
\draw [->] (pred1) to (combination);
\draw [->] (pred2) to (combination);
\draw [->] (predn) to (combination);
\draw [->] (alpha) to (combination);
\draw [->] (Gphi) to (alpha);
\draw [->] (sc) to (Gphi);
\draw [->] (combination) to (loss);
\draw [->] (y) to (loss);
\draw [->, dotted] (sc) to (x);
\end{tikzpicture}
\end{center}
\caption{ MoE-$\bm{g}$ overview: $\bigoplus$ indicates a weighted linear combination. p$_i(y|x)$ represents the $i^{th}$ expert's predictions and $\alpha$ represents weights for expert gating.}  \label{fig:kyc:overview_moe}
\end{figure}
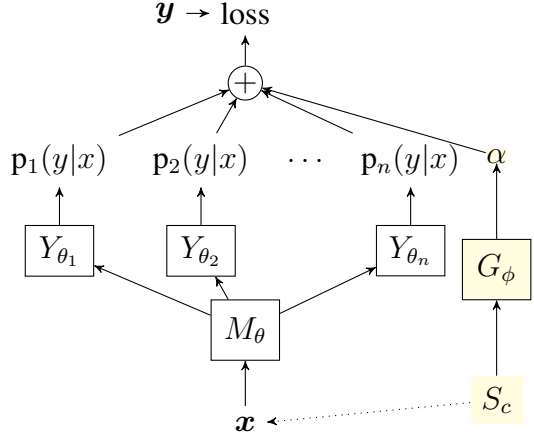

\begin{table}[!htbp]
\centering
\begin{tabular}{|l | c c | c c|}
\hline
& \multicolumn{2}{|c|}{\small{OOD}} & \multicolumn{2}{|c|}{\small{ID}}  \\
OOD Clients & \small{Base}  & \small{{\our}} & \small{Base} & \small{{\our}} \\
\hline
\small{BC/CCTV + BC/Phoenix} & 64.8 & 74.7 & 85.6 & 84.0 \\
\small{BN/PRI + BN/VOA} & 89.5 & 88.3 & 84.1 & 83.6\\
\small{NW/WSJ + NW/Xinhua} & 74.4 & 61.6& 80.2 & 64.8\\
\small{BC/CNN + TC/CH} & 78.0 & 73.7 & 86.1 & 82.1 \\
\small{WB/Eng + WB/a2e}&   75.6 & 76.3 & 85.8 & 84.4\\
\hline
Average & 76.5 & 74.9 & 84.4 & 79.8 \\\hline
\end{tabular}
\vspace{0.3em}
\caption{Performance on the NER task on the Ontonotes dataset using MoE-$\bm{g}$.}
\label{table:kyc:ner_moe}
\end{table}

\end{appendices}

\newpage
\setlength{\parskip}{5mm}
\titlespacing{\chapter}{0cm}{0mm}{0mm}
\titleformat{\chapter}[display]
  {\normalfont\huge\bfseries}
  {\chaptertitlename\ \thechapter}{20pt}{\Huge}

\bibliographystyle{unsrtnat}
\bibliography{main}

\begin{center}
{\huge Publications}
\end{center}
\begin{adjustwidth}{-5pt}{0pt}
{\Large \bf Conference}
\end{adjustwidth}
\begin{enumerate}
\item \href{https://arxiv.org/pdf/2110.02619.pdf}{Focus on the Common Good: Group Distributional Robustness Follows}\\
{\bf V Piratla}, P Netrapalli, S Sarawagi
\\\textit{International Conference on Learning Representations (ICLR) 2022}.


\item \href{https://arxiv.org/pdf/2108.06514.pdf}{Active Assessment of Prediction Services as Accuracy Surface Over Attribute Combinations}\\
{\bf V Piratla}, S Chakrabarty, S Sarawagi \\\textit{Neural Information Processing Systems (NeurIPS) 2021}.
\item \href{https://arxiv.org/pdf/2108.06721.pdf}{Training for the Future: A Simple Gradient Interpolation Loss to Generalize Along Time}\\
A Nasery, S Thakur, {\bf V Piratla}, A De, S Sarawagi\\ \textit{Neural Information Processing Systems (NeurIPS) 2021}.
\item \href{https://openreview.net/forum?id=lE9zA0yGygK}{NLP Service APIs and Models for Efficient Registration of New Clients}\\
S Shah, \textbf{V Piratla}, S Sarawagi, S Chakrabarti\\ \textit{Findings at Empirical Methods in Natural Language Processing (EMNLP), 2020.}
\item \href{https://arxiv.org/pdf/2003.12815.pdf}{Efficient Domain Generalization via Common-Specific Low-Rank Decomposition}\\
\textbf{V Piratla}, P Netrapalli, S Sarawagi\\ \textit{International Conference on Machine Learning (ICML) 2020}.
\item \href{https://arxiv.org/abs/1906.02688}{Topic Sensitive Attention on Generic Corpora Corrects Sense Bias in Pretrained Embeddings}\\
\textbf{V Piratla}, S Sarawagi, S Chakrabarti\\ 
\textit{Annual Meeting of the Association for Computational Linguistics (ACL) 2019 (Oral)}.
\item  \href{https://www.aclweb.org/anthology/D19-1435.pdf}{Parallel iterative edit models for local sequence transduction}\\
A Awasthi, S Sarawagi, R Goyal, S Ghosh, {\bf V Piratla}\\ \textit{Empirical Methods in Natural Language Processing (EMNLP), 2019}.
\item \href{https://openreview.net/pdf?id=r1Dx7fbCW}{Generalizing Across Domains via Cross-Gradient Training}\\
S Shankar*, \textbf{V Piratla}*, S Chakrabarti, S Chaudhuri, P Jyothi, S Sarawagi [Shared first author]\\
\textit{International Conference on Learning Representations (ICLR) 2018}.

\begin{adjustwidth}{-20pt}{0pt}
{\Large \bf Workshop}
\end{adjustwidth}
    \item \href{https://vihari.github.io/assets/dg_for_dr.pdf}{Untapped Potential of Data Augmentation: A Domain Generalization Viewpoint}\\
    \textbf{V Piratla}, S Shankar\\ \textit{ICML 2020 Workshop on 
Uncertainty \& Robustness in Deep Learning.} 
\end{enumerate}


\newpage
\setlength{\parskip}{5mm}
\titlespacing{\chapter}{0cm}{0mm}{0mm}
\titleformat{\chapter}[display]
  {\normalfont\huge\bfseries \centering}
  {\chaptertitlename\ \thechapter}{20pt}{\Huge}

\chapter*{Acknowledgments}
\label{ch:Acknowledgments}
\addcontentsline{toc}{chapter}{\nameref{ch:Acknowledgments}}
\vspace{10mm}
I am extremely fortunate to have been advised by Prof. Sunita Sarawagi and Prof. Soumen Chakrabarti. They were immensely patient and supportive throughout. Apart from their high-quality research, I admire their way of life and outlook towards research. I enrolled for a PhD partly because I was impressed by their clear and sharp thinking, which I hoped to emulate. 
I learned a lot from writing papers with them, which felt more like an artistic expression of scientific thoughts with chiselled ablations and experiments. Under their supervision, I developed a good taste in the choice of problems and a strong scientific temper, such as not ignoring unexpected results or not accepting an unsatisfying explanation.
I used to come out of our meetings mostly feeling optimistic and empowered, which I later realised is a sign of positive supervision. Sunita and Soumen helped me focus completely on my research by handling many non-technical issues regarding my leave, allocation of funds or conference travel. I am very proud of our well-crafted and slow-baked work that reflected our collective contemplative thoughts.

In 2019, I started working with Dr Praneeth Netrapalli as an intern at Microsoft Research. Our collaboration has drastically improved my research approach. I started to devise and explain algorithms through simple synthetic settings, which often also helped improve confidence and clarity in understanding the merits of our proposed algorithm. He once reminded me that we are only reporters of nature, which helped me understand his refreshing perspective towards research. I am very grateful for his time, which made me a better researcher and led to two publications.  

I have been lucky to find great mentors through my RPC committee and Google PhD Fellowship: Prof. Preethi Jyothi, Prof. Shivaram Kalyanakrishnan and Dr Pradeep Shenoy. They provided much-needed and timely feedback on past research and advised on future work. I am also grateful for the insightful comments from my external reviewers: Prof. Vineeth Balasubramanian and Prof. Soma Biswas, which further improved the quality of this thesis. I would also like to thank IITB and the Department of Computer Science for hosting me as a PhD student and Google for providing financial stability. Mr Vijay Ambre of the Computer Science Department needs a special mention for being easily accessible and often going above and beyond to resolve our administrative issues. 

Several others have contributed significantly to placing me on a PhD journey and several others in helping me sustain it. I will always be grateful to Dr Sudheendra Hangal for trusting a second-year undergraduate student in the Himalayas to do research, which gave me my first glimpse into the world of research. I am indebted to my parents for imbibing in me a sense of curiosity and the value of education. I am very glad for the emphasis they placed very early in my childhood towards high-quality education over any other material comforts. 

Five years of my PhD have passed like a breeze in the warm company of my friends and lab-mates: Anjali, Rasna, Shiv, Karthik, Tikaram, Abhijeet, Prathamesh and Lokesh. They always listened patiently to my venting stories related to research or otherwise. I will continue to cherish the memorable time I spent with my friends in the hills, restaurants, malls and yoga centres.



\end{document}